\def\eqref#1{equation~\ref{#1}}
\def\1{\mathbbm{1}}
\DeclareMathAlphabet{\mathsfit}{\encodingdefault}{\sfdefault}{m}{sl}
\SetMathAlphabet{\mathsfit}{bold}{\encodingdefault}{\sfdefault}{bx}{n}
\theoremstyle{plain}
\newtheorem{theorem}{Theorem}[section]
\newtheorem{corollary}[theorem]{Corollary}
\theoremstyle{definition}
\newtheorem{hypothesis}[theorem]{Hypothesis}
\theoremstyle{remark}
\newtheorem{remark}[theorem]{Remark}
\renewenvironment{proof}[1][\proofname]{\par
	\vspace{-\topsep}
	\pushQED{\qed}%
	\normalfont
	\topsep0pt \partopsep0pt 
	\trivlist
	\item[\hskip\labelsep
	\itshape
	#1\@addpunct{.}]\ignorespaces
}{%
	\popQED\endtrivlist\@endpefalse
	\addvspace{6pt plus 6pt} 
}
\icmltitlerunning{How to Leverage Diverse Demonstrations in Offline Imitation Learning}
\begin{document}

\twocolumn[
\icmltitle{How to Leverage Diverse Demonstrations in Offline Imitation Learning}



\icmlsetsymbol{equal}{*}

\begin{icmlauthorlist}
\icmlauthor{Sheng Yue}{thu}
\icmlauthor{Jiani Liu}{thu}
\icmlauthor{Xingyuan Hua}{thu}
\icmlauthor{Ju Ren}{thu,zgc}
\icmlauthor{Sen Lin}{hu}
\icmlauthor{Junshan Zhang}{ucd}
\icmlauthor{Yaoxue Zhang}{thu,zgc}
\end{icmlauthorlist}

\icmlaffiliation{thu}{Department of Computer Science and Technology, Tsinghua University, Beijing, China}
\icmlaffiliation{zgc}{Zhongguancun Laboratory, Beijing, China}
\icmlaffiliation{hu}{Department of Computer Science, University of Houston, Texas, US}
\icmlaffiliation{ucd}{Department of Electrical and Computer Engineering, University of California, Davis, US}

\icmlcorrespondingauthor{Ju Ren}{renju@tsinghua.edu.cn}

\icmlkeywords{imitation learning, imperfect demonstrations}

\vskip 0.3in
]



\printAffiliationsAndNotice{}  

\begin{abstract}

Offline Imitation Learning (IL) with imperfect demonstrations has garnered increasing attention owing to the scarcity of expert data in many real-world domains. A fundamental problem in this scenario is \emph{how to extract positive behaviors from noisy data}. In general, current approaches to the problem select data building on state-action similarity to given expert demonstrations, neglecting precious information in (potentially abundant) \textit{diverse} state-actions that deviate from expert ones. In this paper, we introduce a simple yet effective data selection method that identifies positive behaviors based on their \emph{resultant states} -- a more informative criterion enabling explicit utilization of dynamics information and effective extraction of both expert and beneficial diverse behaviors. Further, we devise a lightweight behavior cloning algorithm capable of leveraging the expert and selected data correctly. In the experiments, we evaluate our method on a suite of complex and high-dimensional offline IL benchmarks, including continuous-control and vision-based tasks. The results demonstrate that our method achieves state-of-the-art performance, outperforming existing methods on \textbf{20/21} benchmarks, typically by \textbf{2-5x}, while maintaining a comparable runtime to Behavior Cloning (\texttt{BC}). 

\end{abstract}

\section{Introduction}
\label{sec:introduction}

Offline Imitation Learning (IL) is the study of learning from demonstrations with no reinforcement signals or interaction with the environment. It has been deemed as a promising solution for safety-sensitive domains like healthcare and autonomous driving, where manually formulating a reward function is challenging but historical human demonstrations are readily available~\citep{bojarski2016end}. Conventional offline IL methods, such as Behavior Cloning~(\texttt{BC})~\citep{pomerleau1988alvinn}, often necessitate an expert dataset with sufficient coverage over the
state-action space to combat error compounding \citep{rajaraman2020toward}, which is prohibitively expensive for many real-world applications. Instead, a more realistic scenario might allow for a limited expert dataset, coupled with substantial imperfect demonstrations sampled from unknown policies~\citep{wu2019imitation,xu2022discriminator,li2023imitation}. For example, autonomous vehicle companies may possess modest high-quality data from experienced drivers but can amass a wealth of mixed-quality data from ordinary drivers. Effective utilization of the imperfect demonstrations would significantly enhance the robustness and generalization of offline IL.

A fundamental question raised in this scenario is: \emph{how can we extract good behaviors from noisy data?} To address this question, several prior studies have attempted to explore and imitate the imperfect behaviors that resemble expert ones~\citep{sasaki2021behavioral,xu2022discriminator,li2023imitation}. Nevertheless, due to the scarcity of expert data, such approaches are ill-equipped to harness valuable information in (potentially abundant) \emph{diverse} behaviors that deviate from limited expert demonstrations (see \cref{sec:preliminaries} for details). Of course, a natural solution to incorporate these behaviors is inferring a reward function and labeling all imperfect data, subsequently engaging in an offline Reinforcement Learning (RL) process~\citep{zolna2020offline,chang2022mitigating,yue2023clare,zeng2023demonstrations,cideron2023get}. Unfortunately, it is highly challenging to define and learn meaningful reward functions without environmental interaction. As a consequence, current offline reward learning methods typically rely on complex adversarial optimization using a learned world model. They easily suffer from hyperparameter sensitivity, learning instability, and limited scalability in practical and high-dimensional environments.

\begin{figure}[t]
    \centering
    \includegraphics[width=.95\columnwidth]{./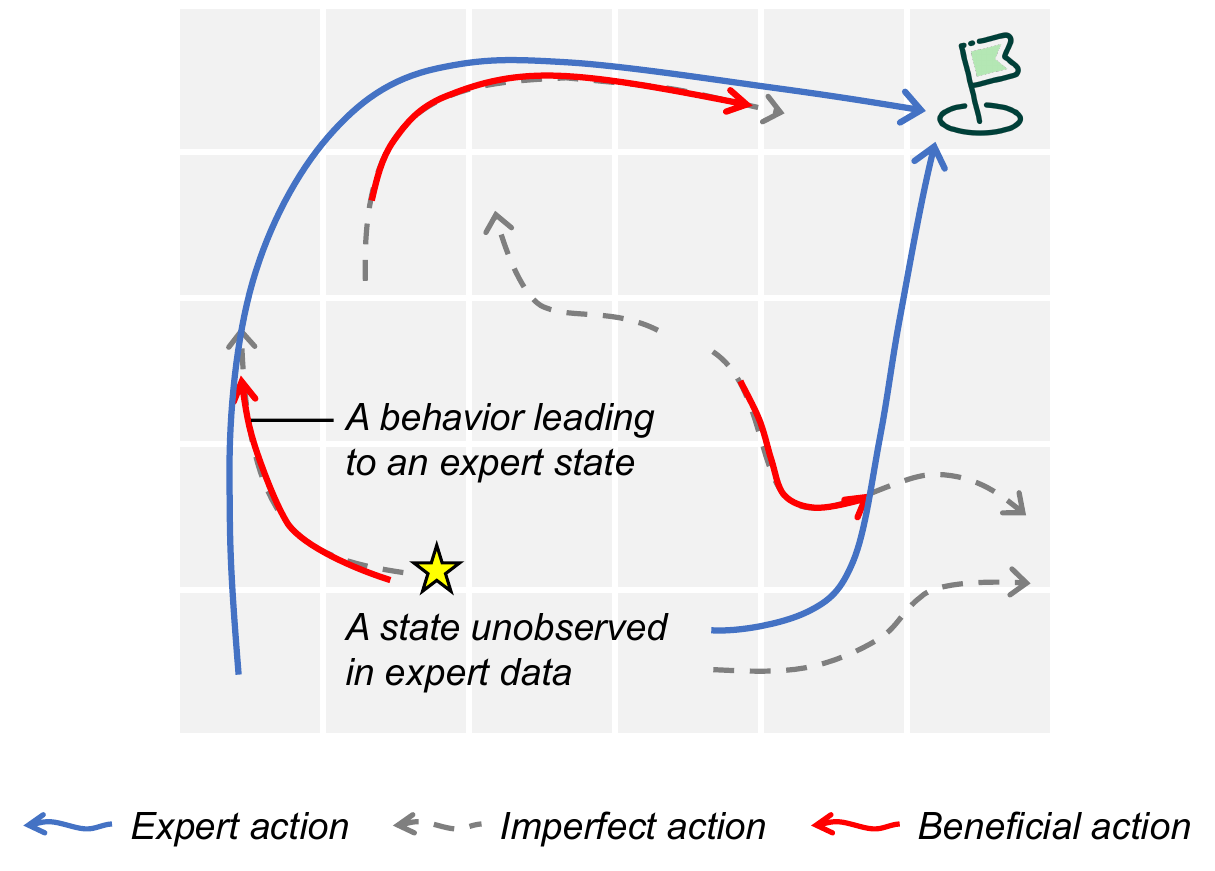}
    \vskip -0.1in
    \caption{A cartoon illustration of potential beneficial behaviors in a navigation task, with the goal of reaching the target state (marked by the flag) from arbitrary initial states. With no other prior information, in a state out of the expert observations, a reasonable and often safe choice is to get back to given expert states.}
    \label{fig:motivating}
\end{figure}

In this paper, we introduce a simpler data selection principle to fully exploit positive diverse behaviors in imperfect demonstrations without indirect reward learning procedures. Specifically, instead of examining a behavior's similarity to expert demonstrations in and of itself, we assess its value based on whether its \emph{resultant states}, to which environment transitions after performing that behavior, fall within the expert data manifold. In other words, we properly select the state-actions that can lead to expert states, even if they bear no resemblance to expert demonstrations. As depicted in \cref{fig:motivating} and supported by the theoretical results in \cref{sec:data_selection}, the underlying rationale is that when the agent encounters a state unobserved in expert demonstrations, opting to return to the expert states is more sensible than taking a random action. Otherwise, it may persist in making mistakes and remain out-of-expert-distribution for subsequent time steps. Of note, the resultant state is a more informative criterion than the state-action similarity, as it explicitly utilizes the dynamics information, enabling the identification of both expert and beneficial diverse state-actions in noisy data.

Drawing upon this principle, we first train a \emph{state-only} discriminator to distinguish expert and non-expert states in imperfect demonstrations. Leveraging the identified expert states, we appropriately extract their \emph{causal state-actions} and build a complementary training dataset. In light of the suboptimality of the complementary data, we further devise a lightweight weighted behavior cloning algorithm to mitigate the potential interference among behaviors. We term our method \emph{offline Imitation Learning with Imperfect Demonstrations} (\texttt{ILID}) and evaluate it on a suite of offline IL benchmarks, including 14 continuous-control tasks and 7 vision-based tasks. Our method achieves state-of-the-art performance, consistently surpassing existing methods by \textbf{2-5x} while maintaining a comparable runtime to \texttt{BC}. Our main contributions are summarized as follows:
\begin{itemize}[leftmargin=*,topsep=0pt,itemsep=0pt]
    \item We introduce a simple yet effective method that can explicitly exploit the dynamics information and extract beneficial behaviors from imperfect demonstrations;
    \item We devise a lightweight weighted behavior cloning algorithm capable of correctly learning from the extracted behaviors, which can be easily implemented on top of \texttt{BC}; 
    \item We conduct extensive experiments that corroborate the superiority of our method over state-of-the-art baselines in terms of performance and computational cost.
\end{itemize}

\section{Related Work}
\label{sec:related_work}

Offline IL deals with training an agent to mimic the actions of a demonstrator in an entirely offline fashion. The simplest approach to offline IL is \texttt{BC}~\citep{pomerleau1988alvinn} that directly mimics the behavior using supervised learning, but it is prone to covariate shift and inevitably suffers from error compounding, i.e., there is no way for the policy to learn how to recover if it deviates from the expert behavior to a state not seen in the expert demonstrations \citep{rajaraman2020toward}. Considerable research has been devoted to developing new offline IL methods to remedy this problem~\citep{klein2011batch,klein2012inverse,piot2014boosted,herman2016inverse,kostrikov2020imitation,jarrett2020strictly,swamy2021moments,chan2021scalable,garg2021iq,florence2022implicit}. However, since these methods imitate all given demonstrations, they typically require a large amount of clean expert data, which is expensive for many real-world tasks.



Recently, there has been growing interest in exploring how to effectively leverage imperfect data in offline IL~\citep{sasaki2021behavioral,kim2022demodice,xu2022discriminator,yu2022how,li2023imitation}. \citet{sasaki2021behavioral} analyze why the imitation policy trained by \texttt{BC} deteriorates its performance when using noisy demonstrations. They reuse an ensemble of policies learned from the previous iteration as the weight of the original \texttt{BC} objective to extract the expert behaviors. Nevertheless, this requires that expert data occupy the majority proportion of the offline dataset; otherwise, the policy will be misguided to imitate the suboptimal data. \citet{kim2022demodice} retrofit the \texttt{BC} objective with an additional KL-divergence term to regularize the learned policy to stay close to the behavior policy. Albeit with enhanced offline data support, it may fail to achieve satisfactory performance when the imperfect data is highly suboptimal. \citet{xu2022discriminator} cope with this issue by introducing an additional discriminator, the outputs of which serve as the weights of the original \texttt{BC} loss, to imitate demonstrations selectively. Analogously, \citet{li2023imitation} weight the \texttt{BC} objective by the density ratio of empirical expert data and union offline data, implicitly extracting the imperfect behaviors resembling expert ones. Unfortunately, the criterion of state-action similarity neglects the dynamics information and does not suffice to leverage the diverse behaviors in imperfect demonstrations. In offline RL, \citet{yu2022how} propose to utilize unlabeled data by applying zero rewards, but this method necessitates massive labeled offline data. In contrast, this paper focuses on the setting with no access to reward signals.


Offline Inverse Reinforcement Learning (IRL) explicitly learns a reward function from offline datasets, aiming to comprehend and generalize the underlying intentions behind expert actions \citep{lee2019truly}. \citet{zolna2020offline} propose \texttt{ORIL} that constructs a reward function that discriminates expert and exploratory trajectories, followed by an offline RL progress. \citet{chan2021scalable} use a variational method to jointly learn an approximate posterior distribution over the reward and policy. \citet{garg2021iq} propose to learn a soft $Q$-function that implicitly represents both the reward function and policy. \citet{watson2023coherent} develop \texttt{CSIL} that exploits a \texttt{BC} policy to define an estimate of a shaped reward function that can then be used to finetune the policy using online interactions. However, the heteroscedastic parametric reward functions have undefined values beyond the offline data manifold and easily collapse to the reward limits due to the tanh transformation and network extrapolation. The reward extrapolation errors may cause the learned reward functions to incorrectly explain the task and misguide the agent in unseen environments~\citep{yue2023clare,yue2024federated}. To tackle the issue, \citet{chang2022mitigating} introduce a model-based offline IRL algorithm that uses a model inaccuracy estimate to penalize the learned reward function on out-of-distribution state-actions. \citet{yue2023clare} propose to compute a conservative element-wise weight function that implicitly penalizes out-of-distribution behaviors. \citet{zeng2022maximum} propose \texttt{MLIRL} that can recover the reward function, whose corresponding optimal policy maximizes the likelihood of observed expert demonstrations under a learned conservative world model. However, the model-based approaches struggle to scale in high-dimensional environments, and their min-max optimization usually renders training unstable and inefficient.

\section{Background and Challenge}
\label{sec:preliminaries}

In this section, we first provide the necessary preliminaries and then elaborate on the challenges of our problem.

\textbf{Episodic Markov decision process.} Episodic MDP can be specified by $M\doteq\langle\mathcal{S},\mathcal{A},T,{R},H,\mu\rangle$, with state space $\mathcal{S}$, action space $\mathcal{A}$, transition dynamics $T:\mathcal{S}\times\mathcal{A}\rightarrow\mathcal{P}(\mathcal{S})$, reward function ${R}:\mathcal{S}\times\mathcal{A}\rightarrow[0,1]$, horizon $H$, and initial state distribution $\mu:\mathcal{S}\rightarrow\mathcal{P}(\mathcal{S})$, where $\mathcal{P}(\mathcal{S})$ represents the set of distributions over $\mathcal{S}$. A stationary stochastic policy maps states to distributions over actions, $\pi:\mathcal{S}\rightarrow\mathcal{P}(\mathcal{A})$. The value function of $\pi$ is defined as the expected cumulative reward, $V^\pi\doteq\mathbb{E}_\pi[\sum^H_{h=1}R(s_h,a_h)]$, with the expectation taken w.r.t. trajectories generated by rolling out $\pi$ with $M$. The average state visitation and state-action visitation of $\pi$ are denoted as $\rho^{\pi}(s)\doteq\frac{1}{H}\sum^{H}_{h=1}\Pr(s_h=s\mid\pi)$ and $\rho^{\pi}(s,a)\doteq\rho^{\pi}(s)\pi(a|s)$ respectively, where $\Pr(s_h=s\mid\pi)$ represents the probability of visiting state $s$ at step $h$. The objective of RL can be expressed as $\max_{\pi}V^\pi$. 

\textbf{Offline imitation learning}. Offline IL is the setting where the algorithm is neither allowed to interact with the environment nor provided ground-truth rewards. Rather, it has access to an expert dataset and a mix-quality imperfect dataset, collected from unknown expert policy $\pi_e$ and (potentially highly suboptimal) behavior policy $\pi_b$, respectively. We represent the expert and imperfect datasets as $\mathcal{D}_e\doteq\{\tau_i\}^{n_e}_{i=1}$ and $\mathcal{D}_b\doteq\{\tau_i\}^{n_b}_{i=1}$, where $\tau_i\doteq(s_{i,1},a_{i,1},\dots,s_{i,H},a_{i,H})$ denotes a trajectory. 



\textbf{Behavior cloning.} \texttt{BC} is a classical offline IL algorithm, which seeks to learn an imitation policy using supervised learning~\cite{pomerleau1988alvinn}. The standard objective of \texttt{BC} is to maximize the log-likelihood over expert demonstrations:
\begin{align}
    \label{eq:classical_bc}
    \max_{\pi} \mathbb{E}_{(s,a) \sim \mathcal{D}_e}\big[\log(\pi(a|s))\big].
\end{align}
Recent studies consider a more generalized objective~\citep{xu2022discriminator,li2023imitation}, incorporating additional yet imperfect demonstrations:
\begin{align}
    \min_\pi \mathbb{E}_{(s,a)\sim\mathcal{D}_u}\left[f(s,a)\log\pi(a|s)\right]
\end{align}
where $\mathcal{D}_u \doteq \mathcal{D}_e\cup\mathcal{D}_b$ represents the union offline dataset comprised of both expert and imperfect demonstrations, and $f:\mathcal{S}\times\mathcal{A}\rightarrow[0,1]$ is a weighting function aiming to discard low-quality behaviors and only imitate the beneficial ones. For example, \texttt{DWBC}~\citep{xu2022discriminator} pick $f$ as
\begin{align}
    \label{eq:dwbc_weight}
    f(s,a)=
    \begin{cases}
        \alpha - \frac{\eta}{d_\pi(s,a)(1-d_\pi(s,a))},&(s,a)\in\mathcal{D}_e\\
        \frac{1}{1-d_\pi(s,a)},&(s,a)\in\mathcal{D}_b
    \end{cases}
\end{align}
where $\alpha,\eta>0$ are hyperparameters. $d_\pi(s,a)$ is the output of a discriminator that is jointly trained with $\pi$ to distinguish the expert and diverse state-actions:
\begin{align}
    \label{eq:dwbc_discriminator}
    \max_{d_\pi}\;& \mathbb{E}_{\mathcal{D}_e}\left[\log{d}_\pi(s,a)\right] 
    + \frac{1}{\eta}\mathbb{E}_{\mathcal{D}_b}\left[\log (1-{d}_\pi(s,a))\right] \nonumber\\
    &- \mathbb{E}_{\mathcal{D}_e}\left[\log (1-{d}_\pi(s,a))\right].
\end{align}
\cref{eq:dwbc_discriminator,eq:dwbc_weight} indicate that \texttt{DWBC} assigns high values to $(s,a)\in\mathcal{D}_e$ and low values to $(s,a)\in\mathcal{D}_b\backslash\mathcal{D}_e$. In addition, \texttt{ISWBC}~\citep{li2023imitation} let $f$ denote the importance weight $f(s,a)=\tilde\rho^e(s,a)/\tilde\rho^u(s,a)$ where $\tilde\rho^e$ and $\tilde\rho^u$ are the empirical distributions of $\mathcal{D}_e$ and $\mathcal{D}_u$, respectively. In the same spirit as \citet{xu2022discriminator}, the weight assigns positive values to $(s,a)\in\mathcal{D}_e$ and close-to-zero values to $(s,a)\in\mathcal{D}_b\backslash\mathcal{D}_e$. 


\textbf{Challenge.} The above-mentioned weighting functions can extract $(s,a)\in\mathcal{D}_e$ from $\mathcal{D}_b$, (implicitly) filtering out the state-actions in $\mathcal{D}_b\backslash\mathcal{D}_e$. However, the limited \textit{state} coverage of expert data would render these learned policies still brittle to covariate shift due to their inability to get back on track if encountering a state not observed in the expert demonstrations (see \cref{fig:fourrooms} for an illustrative example). Moreover, considering that offline (forward) RL can learn effective policies from highly diverse behavioral data~\citep{fu2020d4rl,rashidinejad2021bridging}, these methods neglect potentially substantial \textit{beneficial} behaviors in $\mathcal{D}_b\backslash\mathcal{D}_e$ that deviate the expert demonstrations. Thus, there is a clear need for new offline IL methods capable of capitalizing on the diverse behaviors of imperfect demonstrations.

\begin{figure*}[ht]
    \centering

    \subfigure[Expert data]{
    \label{fig:expert_data}\includegraphics[width=0.19\textwidth]{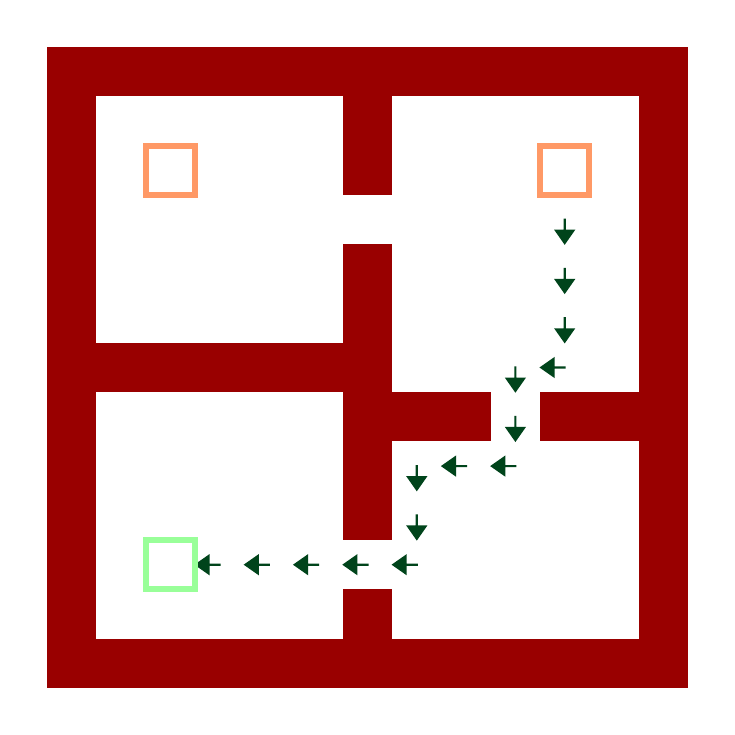}}
    \hspace{-.1in}
    \subfigure[Imperfect data]{
    \label{fig:imperfect_data}\includegraphics[width=0.19\textwidth]{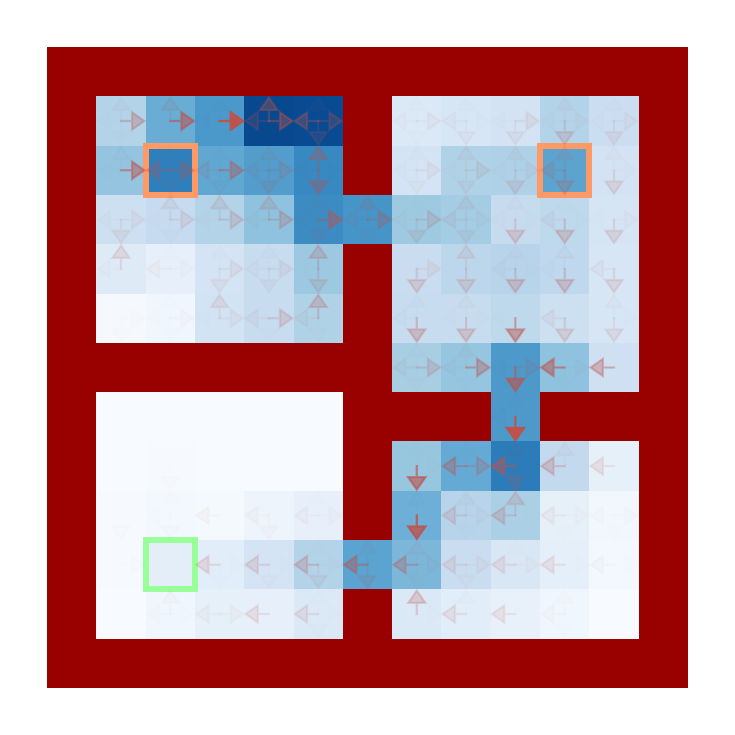}}
    \hspace{-.1in}
    \subfigure[\texttt{ILID} trajectories]{
    \label{fig:ilid_trajs}\includegraphics[width=0.19\textwidth]{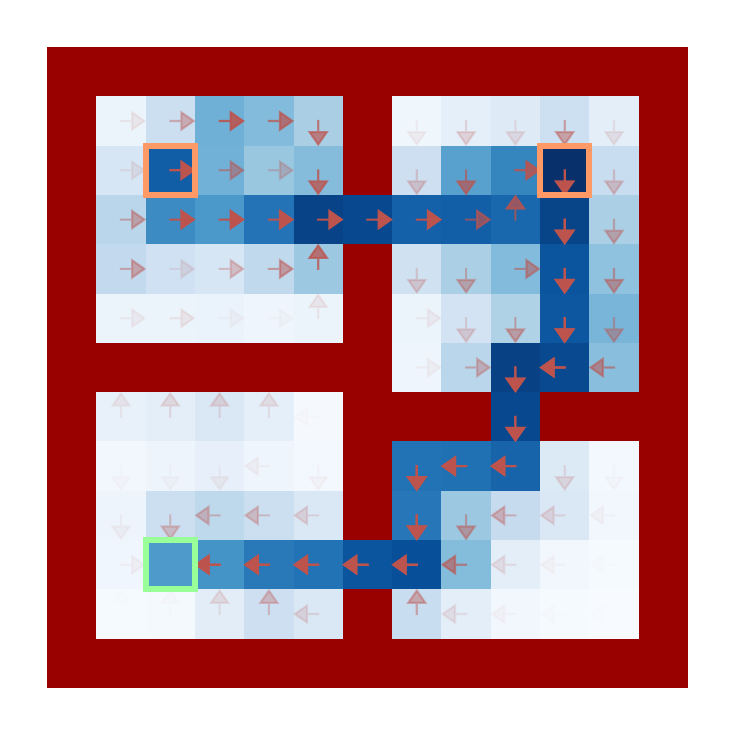}}
    \hspace{-.1in}
    \subfigure[\texttt{DWBC} trajectories]{
    \label{fig:dwbc_trajs}\includegraphics[width=0.19\textwidth]{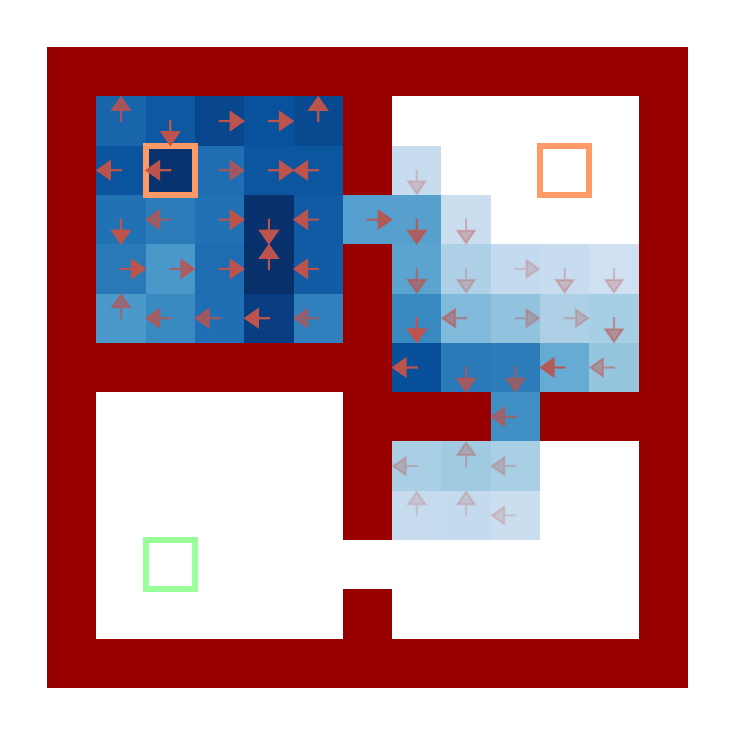}}
    \hspace{-.1in}
    \subfigure[\texttt{ISWBC} trajectories]{
    \label{fig:iswbc_traj}\includegraphics[width=0.19\textwidth]{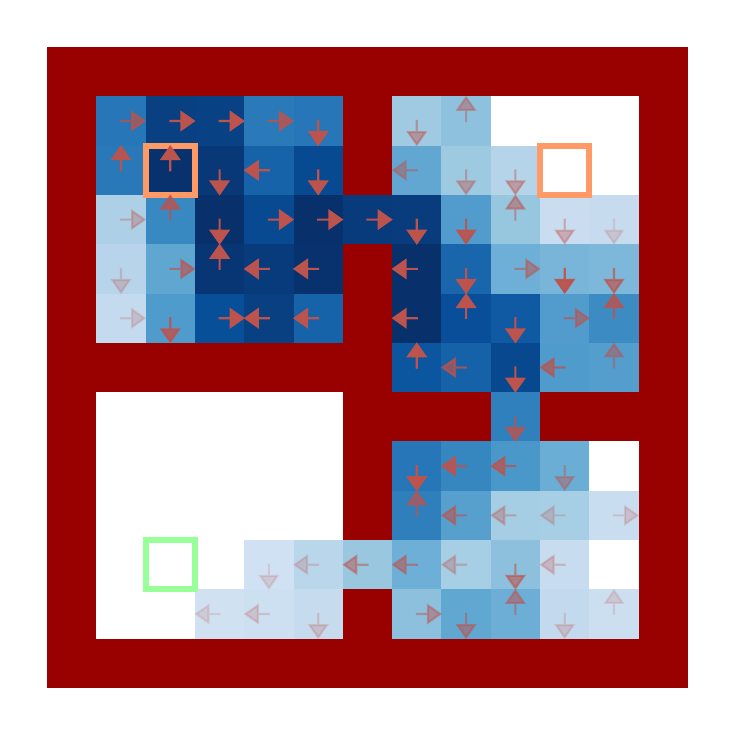}}
    \vskip -0.1in
    \caption{An illustration on the impact of limited expert state coverage in the Four Rooms domains~\citep{sutton1999between,lee2021optidice}. The initial and goal states are represented as orange and green squares, respectively. The maximum trajectory length is 50. (a) depicts the given expert demonstration, which only covers one initial state. (b) shows an imperfect dataset, where the opacity of each square is determined by the empirical state marginal of imperfect data, and the opacity of each arrow represents the empirical action density in a state. (c)-(e) show the empirical trajectory distributions induced by  
    rolling out the policies in the environment from the \textit{left} initial state (beyond expert data). The policies are learned by \texttt{ILID}, \texttt{DWBC}, and \texttt{ISWBC} using both the expert and imperfect data, respectively. In (c)-(e), an arrow denotes the action with the maximum frequency in each state.
    }
    \label{fig:fourrooms}
\end{figure*}

\section{Offline Imitation Learning with Imperfect Demonstrations}
\label{sec:methodology}

This section elaborates on our proposed method. We begin by presenting a hypothesis on behavior selection and providing it with theoretical justification. Building on the hypothesis and theoretical insights, we then delineate our data selection and policy learning methods.

\subsection{Selection of Imperfect Behaviors}
\label{sec:data_selection}

In contrast to the existing works that select data building on state-action resemblance to given expert demonstrations, we propose to access an imperfect behavior by its \textit{resultant states}, to which the environment transitions after performing the behavior. Formally, we present the following hypothesis.
\begin{hypothesis}
	\label{hypo:resultant_state}
    With no other prior knowledge, if a state $s$ lies \textit{beyond} given expert data ($s\notin\mathcal{D}_e$), then, in $s$, taking the action that can transition to a known expert state is more beneficial than selecting actions at random.
\end{hypothesis}
To support this hypothesis, we provide the following theoretical results under deterministic dynamics.\footnote{The setting covers many practical environments like MuJoCo.} 
Represent $\mathcal{D}$ as a demonstration dataset, $\mathcal{S}(\mathcal{D})$ as the set of states in $\mathcal{D}$, and $\mathcal{S}_h(\mathcal{D})$ as the set of $h$-step visited states in $\mathcal{D}$.
Suppose that $\pi_e$ is optimal and deterministic~\citep{sutton2018reinforcement}, and there exists a supplementary dataset consisting of transitions \textit{from initial states to given expert states}, ${\mathcal{D}_s}\doteq\{(s_i,a_i,s'_i)\mid s_i\sim\mu,T(s_i,a_i)=s'_i,s'_i\in\mathcal{S}_1(\mathcal{D}_e),i=1,\dots,n_s\}$. According to \cref{hypo:resultant_state}, we consider the policy $\tilde{\pi}$ that takes the logging actions in ${\mathcal{D}_s}$ at states $\mathcal{S}_1({\mathcal{D}_s})\backslash\mathcal{S}_1(\mathcal{D}_e)$ and takes the expert actions in $\mathcal{D}_e$ at expert states $\mathcal{S}(\mathcal{D}_e)$:
\begin{align}
    \label{eq:ideal_policy}
    \tilde{\pi}(a|s)\doteq
    \begin{cases}
    	\frac{n((s,a)\in{\mathcal{D}_s})}{n(s\in\mathcal{S}_1({\mathcal{D}_s}))},&\text{if}~s\in\mathcal{S}_1({\mathcal{D}_s})\backslash\mathcal{S}_1(\mathcal{D}_e)\\
        \frac{n((s,a)\in\mathcal{D}_e)}{n(s\in\mathcal{S}(\mathcal{D}_e))},&\text{if}~s\in\mathcal{S}(\mathcal{D}_e)\\
        \frac{1}{\vert\mathcal{A}\vert},&\text{otherwise}
    \end{cases}
\end{align}
where $n(s\in\mathcal{D}) = \sum_{s'\in\mathcal{D}}\1(s' = s)$ denotes the number of element $s$ in set $\mathcal{D}$, and $\vert\mathcal{A}\vert$ denotes the cardinality of $\mathcal{A}$.\footnote{Throughout this paper, we use $(s,a,\dots,(s'),(a'))\in\mathcal{D}$ to denote that dataset $\mathcal{D}$ contains sub-trajectory $(s,a,\dots,(s'),(a'))$.} Denote $\delta\doteq\max\{V^{\pi_e}(s_1)-V^{\pi_e}(s_2)\mid\mu(s_1),\mu(s_2)>0\}$ as the maximum return difference among expert trajectories, with $V^\pi(s)\doteq\mathbb{E}_\pi[\sum^H_{h=1}R(s_h,a_h)\mid s_1 = s]$. Next, we characterize the suboptimality of $\tilde{\pi}$ in \cref{thm:det_dyna_gap}.
\begin{theorem}
    \label{thm:det_dyna_gap}
    For any finite and episodic MDP with deterministic transition dynamics, the following fact holds:
    \begin{align}
        V^{\pi_e} - \mathbb{E}[V^{\tilde{\pi}}] \le H\epsilon_o +  (\delta + 1) \sqrt{\epsilon_e(1-\epsilon_s)}
    \end{align}
    where $\epsilon_o$, $\epsilon_e$, and $\epsilon_s$ are the missing mass, defined as
    \begin{align}
    	\epsilon_o&\doteq\mathbb{E}_{\mathcal{D}_e,\mathcal{D}_s}\big[\mathbb{E}_{s\sim\mu}\big
    	[\1(s\notin\mathcal{S}_1(\mathcal{D}_e)\cup\mathcal{S}_1(\mathcal{D}_s))\big]\big]\\
    	\epsilon_e&\doteq\mathbb{E}_{\mathcal{D}_e}\big[\mathbb{E}_{s\sim\mu}\big[\1(s\notin\mathcal{S}_1({\mathcal{D}_e}))\big]\big]\\
    	\epsilon_s&\doteq\mathbb{E}_{\mathcal{D}_s}\big[\mathbb{E}_{s\sim\mu}\big[\1(s\notin\mathcal{S}_1({\mathcal{D}_s}))\big]\big].
    \end{align}
\end{theorem}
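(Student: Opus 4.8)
The plan is to bound the suboptimality pointwise in the initial state and then average over the independent draws of $\mathcal{D}_e$ and $\mathcal{D}_s$. Since $V^{\pi_e}$ does not depend on the sampled data, write $V^{\pi_e}-\mathbb{E}[V^{\tilde{\pi}}]=\mathbb{E}_{\mathcal{D}_e,\mathcal{D}_s}\mathbb{E}_{s\sim\mu}[V^{\pi_e}(s)-V^{\tilde{\pi}}(s)]$, and partition the initial state $s$ into the three events (A) $s\in\mathcal{S}_1(\mathcal{D}_e)$, (B) $s\in\mathcal{S}_1(\mathcal{D}_s)\setminus\mathcal{S}_1(\mathcal{D}_e)$, and (C) $s\notin\mathcal{S}_1(\mathcal{D}_e)\cup\mathcal{S}_1(\mathcal{D}_s)$. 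These match exactly the three branches in the definition of $\tilde{\pi}$, so the policy's behavior can be read off on each piece.

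Next I would establish the per-event value gap using determinism. In case (A), since $\pi_e$ is deterministic and $\mathcal{D}_e$ records the full expert trajectory from $s$, rolling out $\tilde{\pi}$ reproduces that trajectory (at each visited expert state $\tilde{\pi}(\cdot\mid s)$ is a point mass at $\pi_e(s)$), so the gap is $0$. In case (B), $\tilde{\pi}$ first plays a logged action that, by construction of $\mathcal{D}_s$ and determinism, lands in some $s'\in\mathcal{S}_1(\mathcal{D}_e)$, after which it follows $\pi_e$; hence $V^{\tilde{\pi}}(s)\ge V^{\pi_e}(s')-1$ (losing at most one unit of reward from the spent first step), and since $\mu(s),\mu(s')>0$ the return-difference definition of $\delta$ yields $V^{\pi_e}(s)-V^{\tilde{\pi}}(s)\le (V^{\pi_e}(s)-V^{\pi_e}(s'))+1\le \delta+1$. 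In case (C) I would use only the crude bound $V^{\pi_e}(s)-V^{\tilde{\pi}}(s)\le V^{\pi_e}(s)\le H$.

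Combining these and averaging over $s\sim\mu$ gives $\mathbb{E}_{s}[V^{\pi_e}(s)-V^{\tilde{\pi}}(s)]\le (\delta+1)\Pr(B)+H\Pr(C)$. The event-(C) term averages to exactly $H\epsilon_o$ by definition of $\epsilon_o$. For the remaining term I would exploit that $\mathcal{D}_e$ and $\mathcal{D}_s$ are independent with i.i.d.\ initial states from $\mu$: by Fubini and independence, $\mathbb{E}_{\mathcal{D}_e,\mathcal{D}_s}[\Pr(B)]=\mathbb{E}_{s\sim\mu}[p_s(s)(1-p_e(s))]$, where $p_s(s)=\Pr(s\in\mathcal{S}_1(\mathcal{D}_s))$ and $p_e(s)=\Pr(s\in\mathcal{S}_1(\mathcal{D}_e))$. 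Since $\epsilon_e=\mathbb{E}_s[1-p_e(s)]$ and $1-\epsilon_s=\mathbb{E}_s[p_s(s)]$, the pointwise inequality $p_s(1-p_e)\le\sqrt{p_s(1-p_e)}=\sqrt{p_s}\sqrt{1-p_e}$ (valid as both factors lie in $[0,1]$) followed by Cauchy--Schwarz gives $\mathbb{E}[\Pr(B)]\le\sqrt{\mathbb{E}_s[p_s]\,\mathbb{E}_s[1-p_e]}=\sqrt{\epsilon_e(1-\epsilon_s)}$, which completes the bound.

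I expect the main obstacle to be this last step: translating the missing-mass quantities into the per-state survival probabilities $p_e,p_s$, invoking independence of the two datasets to factor $\Pr(B)$, and choosing the right pointwise relaxation so that Cauchy--Schwarz produces precisely $\sqrt{\epsilon_e(1-\epsilon_s)}$ rather than a looser product. The deterministic-dynamics bookkeeping in case (B) (the ``spend one step, lose at most $1$'' accounting together with the requirement $\mu(s),\mu(s')>0$ so that $\delta$ is applicable) is the other place where care is needed.
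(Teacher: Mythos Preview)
Your proposal is correct and follows essentially the same route as the paper: the same three-way split on the initial state, the same determinism argument giving zero gap on $\mathcal{S}_1(\mathcal{D}_e)$, the same ``spend one step, then follow the expert for $H-1$ steps'' accounting yielding the $\delta+1$ bound on $\mathcal{S}_1(\mathcal{D}_s)\setminus\mathcal{S}_1(\mathcal{D}_e)$, and the crude $H$ bound elsewhere. The only cosmetic difference is in the Cauchy--Schwarz step: the paper applies Cauchy--Schwarz to $\mathbb{E}_{s\sim\mu}[p_s(s)(1-p_e(s))]$ directly and then uses $\mathbb{E}[X]^2\le\mathbb{E}[X^2]$ together with $\1^2=\1$, whereas you first relax pointwise via $xy\le\sqrt{xy}$ for $x,y\in[0,1]$ and then apply Cauchy--Schwarz; both arrive at $\sqrt{\epsilon_e(1-\epsilon_s)}$.
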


\begin{proof}[Sketch of proof]
    The error stems from the initial states that are not covered by $\mathcal{S}_1(\mathcal{D}_e)$. We bound the errors generated from the states not in $\mathcal{S}_1(\mathcal{D}_e)\cup\mathcal{S}_1({\mathcal{D}_s})$ and from the states in $\mathcal{S}_1({\mathcal{D}_s})\backslash\mathcal{S}_1(\mathcal{D}_e)$ by $H\epsilon_o$ and $(\delta + 1) \sqrt{\epsilon_e(1-\epsilon_s)}$, respectively. Combining these two errors yields the result. For a detailed proof, please refer to \cref{sec:thm_proof}.
\end{proof}
\vskip -0.1in
The missing mass means the probability mass contributed by the states never observed in the corresponding set. Recall that $n_e$ and $n_s$ denote the numbers of trajectories and transitions in $\mathcal{D}_e$ and $\mathcal{D}_s$, respectively. Building on \cref{thm:det_dyna_gap}, we have the following result on sample complexity.

\begin{corollary}
    \label{coro:sample_complexity}
    For any finite and episodic MDP with deterministic transition dynamics, the following fact holds:
    \begin{align*}
    	 V^{\pi_e} - \mathbb{E}[V^{\tilde{\pi}}] \le \frac{\vert\mathcal{S}\vert H}{e(n_e+n_s)} +  (\delta + 1) \cdot\sqrt{\frac{|\mathcal{S}|}{en_e}}
    \end{align*}
    where $e$ denotes the Euler's number. Moreover, with a sufficiently large $n_s$, to obtain an $\varepsilon$-optimal policy, $\tilde{\pi}$ requires at most $\mathcal{O}(\min\{|\mathcal{S}|/\varepsilon^2,|\mathcal{S}|H/\varepsilon\})$ expert trajectories.  
\end{corollary}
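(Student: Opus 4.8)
The plan is to reduce the corollary to a single elementary fact about \emph{expected missing mass} and then substitute it into \cref{thm:det_dyna_gap}. First I would observe that each of $\epsilon_o$, $\epsilon_e$, $\epsilon_s$ is precisely the expected missing mass of the initial-state distribution $\mu$ under a finite i.i.d. sample: the inner expectation $\mathbb{E}_{s\sim\mu}[\1(s\notin\mathcal{S}_1(\cdot))]$ is the $\mu$-mass placed on states absent from a fixed sample, and the outer expectation averages over the draw of that sample. The crucial bookkeeping is to count how many independent draws from $\mu$ feed each set: $\mathcal{S}_1(\mathcal{D}_e)$ is generated by the $n_e$ expert initial states, $\mathcal{S}_1(\mathcal{D}_s)$ by the $n_s$ states $s_i\sim\mu$ of $\mathcal{D}_s$, so the union $\mathcal{S}_1(\mathcal{D}_e)\cup\mathcal{S}_1(\mathcal{D}_s)$ is generated by $n_e+n_s$ independent draws. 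Hence $\epsilon_e$ corresponds to $n_e$ samples and $\epsilon_o$ to $n_e+n_s$ samples.

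The key lemma I would invoke is the standard pointwise missing-mass estimate: for any distribution $p$ on a set of cardinality $|\mathcal{S}|$ and $n$ i.i.d. draws, $\mathbb{E}[\text{missing mass}]=\sum_s p(s)(1-p(s))^n\le |\mathcal{S}|/(en)$, which follows term by term from $x(1-x)^n\le x e^{-nx}\le 1/(en)$, the last maximum being attained at $x=1/n$. Applying this with $n=n_e+n_s$ and with $n=n_e$ yields $\epsilon_o\le |\mathcal{S}|/(e(n_e+n_s))$ and $\epsilon_e\le |\mathcal{S}|/(en_e)$. Substituting these into \cref{thm:det_dyna_gap} and bounding $1-\epsilon_s\le 1$ then produces the displayed inequality $V^{\pi_e}-\mathbb{E}[V^{\tilde{\pi}}]\le |\mathcal{S}|H/(e(n_e+n_s))+(\delta+1)\sqrt{|\mathcal{S}|/(en_e)}$ by direct computation.

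For the sample-complexity claim I would argue two branches and take the better of the two. Taking $n_s$ large enough that the first term is negligible leaves the gap controlled by $(\delta+1)\sqrt{|\mathcal{S}|/(en_e)}$; forcing this below $\varepsilon$ gives the first branch $n_e=\mathcal{O}(|\mathcal{S}|/\varepsilon^2)$ (suppressing the problem-dependent factor $(\delta+1)^2$, which does not scale with $|\mathcal{S}|$ or $\varepsilon$). For the second branch I would use a complementary \emph{linear} estimate: since $V^{\pi}\in[0,H]$, $\tilde{\pi}$ loses at most $H$ starting from any non-expert initial state and loses nothing from an expert initial state (under deterministic dynamics it simply replays the expert trajectory), so $V^{\pi_e}-\mathbb{E}[V^{\tilde{\pi}}]\le H\epsilon_e\le H|\mathcal{S}|/(en_e)$, whence $n_e=\mathcal{O}(|\mathcal{S}|H/\varepsilon)$. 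Combining the two sufficient conditions yields the stated $\mathcal{O}(\min\{|\mathcal{S}|/\varepsilon^2,|\mathcal{S}|H/\varepsilon\})$.

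I expect the only non-mechanical step to be this last one: \cref{thm:det_dyna_gap} as stated supplies only the square-root estimate, which alone gives the $\varepsilon^{-2}$ rate, so the $|\mathcal{S}|H/\varepsilon$ branch cannot be read off from it and must be supplied by the separate linear charging argument above (equivalently, by re-running the proof of \cref{thm:det_dyna_gap} while stopping before the concentration step that introduces the square root). Everything else is routine; the one easy-to-miss subtlety is the sample-count bookkeeping that places $n_e+n_s$ in the first term but only $n_e$ inside the square root.
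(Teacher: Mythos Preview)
Your proposal is correct and follows essentially the same route as the paper: bound the missing masses $\epsilon_o$ and $\epsilon_e$ by $|\mathcal{S}|/(e(n_e+n_s))$ and $|\mathcal{S}|/(en_e)$ via the elementary inequality $\sum_s\mu(s)(1-\mu(s))^n\le|\mathcal{S}|/(en)$, substitute into \cref{thm:det_dyna_gap} with $1-\epsilon_s\le1$, and obtain the second branch of the $\min$ from the separate linear estimate $V^{\pi_e}-\mathbb{E}[V^{\tilde\pi}]\le H\epsilon_e$ (the paper derives this as its \cref{eq:eq2,eq:eq6}, exactly the ``loses at most $H$ from a non-expert initial state, loses nothing from an expert one'' argument you sketch). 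The only cosmetic difference is that the paper maximizes $x(1-x)^n$ directly at $x^*=1/(n+1)$ rather than passing through $xe^{-nx}$, but the resulting bound is identical.
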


\begin{proof}[Sketch of proof]
	The result is concluded via quantifying the missing mass in terms of $n_e$ and $n_s$ (see \cref{sec:coro_proof}).
\end{proof}

\begin{remark}
	It is known that the minimax expected suboptimality of \texttt{BC} is limited to $\mathcal{O}(|S|H/n_e)$ in this setting \citep{rajaraman2020toward,xu2021generalization}, a linear dependency on the episode horizon. This is because $\mu$ may largely differ from $\mathcal{S}(\mathcal{D}_e)$; when the \texttt{BC} policy encounters an initial state far outside $\mathcal{S}(\mathcal{D}_e)$, it will be essentially forced to take an arbitrary action in this state, potentially leading to compounding mistakes over $H$ time steps. 
\end{remark}

\begin{remark}
    As stated in \cref{thm:det_dyna_gap,coro:sample_complexity}, with sufficient $\mathcal{D}_s$, $\tilde\pi$ achieves an expected suboptimality of $\mathcal{O}(\min\{\sqrt{|\mathcal{S}|/n_e},|\mathcal{S}|H/n_e\})$, superior to \texttt{BC} especially with large state spaces, long horizons, and limited expert data.\footnote{Due to following expert behaviors in $\mathcal{S}(\mathcal{D}_e)$, the suboptimality of $\tilde\pi$ is also bounded by $|\mathcal{S}|H/n_e$ (see \cref{sec:coro_proof} for details).} Thanks to the independency of $H$ in the first term, $\tilde\pi$ provably alleviates the error compounding and is robust to initial state perturbations. The underlying rationale is that ${\mathcal{D}_s}$ empowers $\tilde\pi$ to recover from `mistakes': in the states beyond $\mathcal{S}(\mathcal{D}_e)$, albeit without expert guidance, the policy could take actions capable of returning to $\mathcal{S}(\mathcal{D}_e)$ where it exactly knows expert behaviors. In fact, this is very similar to human decision-making: when lost, we always want to get back to familiar roads; when a machine malfunctions, we aim to restore it to normalcy as soon as possible.
\end{remark}

\textbf{Practical behavior selection.} \cref{hypo:resultant_state} implies that resultant states can serve as a criterion for selecting imperfect behaviors -- positive behaviors can be identified according to whether their resultant states fall within the expert state manifold. As an example, if there is an imperfect sub-trajectory $(s_1,a_1,s_2,a_2,s_3)\in\mathcal{D}_b$ such that $s_3\in\mathcal{D}_e$, we can treat $(s_1,a_1)$ and $(s_2,a_2)$ as positive behaviors, even without resemblance to any $(s,a)\in\mathcal{D}_e$. Guided by this, we first train a \textit{state-only} discriminator $d:\mathcal{S}\times\mathcal{A}\rightarrow(0,1)$ to contrast expert and non-expert states in $\mathcal{D}_b$:
\begin{align}
    \label{eq:discriminator}
    \max_{d}\mathbb{E}_{s\sim\mathcal{D}_e}\big[\log {d}(s)\big] + \mathbb{E}_{s\sim\mathcal{D}_u}\big[\log (1-{d}(s))\big]
\end{align}
with $\mathcal{D}_u=\mathcal{D}_e\cup\mathcal{D}_b$. From \citet{goodfello2016generative}, the optimal discriminator ${d}^*$ satisfies 
\begin{align}
    \label{eq:optimal_discriminator}
    d^*(s) = \mathcal{D}_e(s)/(\mathcal{D}_e(s) + \mathcal{D}_u(s))
\end{align}
where we overload notation, denoting $\mathcal{D}_e(s)$ and $\mathcal{D}_u(s)$ as the empirical state marginals in $\mathcal{D}_e$ and $\mathcal{D}_u$, respectively. Building on \cref{eq:optimal_discriminator}, given a small positive threshold $\sigma>0$, if $s\in\mathcal{D}_b$ and $d^*(s)>\sigma$, we identify $s$ as an expert state; otherwise, we treat it as a non-expert one. 

Based on the extracted expert states, we in turn select their \emph{causal state-actions} to construct complementary dataset ${\mathcal{D}_s}$. Recall  $\mathcal{D}_b=\{\tau_i\}^{n_b}_{i=1}$ with $\tau_i=(s_{i,1},a_{i,1},\dots,s_{i,H},a_{i,H})$. If there exist $s_{i,h}\in\mathcal{D}_b$ such that ${d}^*(s_{i,h})\ge \sigma$ for $h>1$ and $i\in\{1,\dots,n_b\}$, we include $K$ causal state-action pairs of $s_{i,h}$ into ${\mathcal{D}_s}$ as follows:
\begin{align}
	\label{eq:data_selection}
	{\mathcal{D}_s}\leftarrow {\mathcal{D}_s}\cup\{(k,s_{i,h-k},a_{i,h-k})\}_{k=1:\min\{h-1,K\}}
\end{align}
where $K\in\{1,2,\dots\}$ is termed as the \emph{rollback step}. We iterate the above process for all identified expert states. For clarity, the process is depicted in \cref{fig:data_selection}. 

\begin{figure}[ht]
    \centering
    \includegraphics[width=.95\columnwidth]{./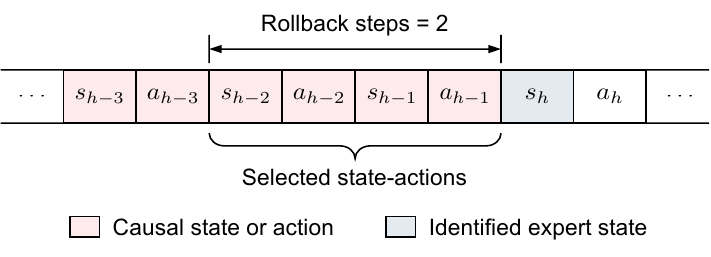}
    \vskip -0.1in
    \caption{An illustration of our behavior selection.}
    \label{fig:data_selection}
\end{figure}

Our behavior selection scheme possesses the following advantages. 1)~The resultant state is informative, capable of effectively identifying both positive diverse behaviors and expert behaviors in $\mathcal{D}_b$. This can be easily seen from the fact that for an expert transition $(s_e,a_e,s'_e)\in\mathcal{D}_e\cap\mathcal{D}_b$, the identification of $s'_e\in\mathcal{S}(\mathcal{D}_e)$ ensures the selection of its causal expert behavior $(s_e,a_e)$. 2)~It explicitly utilizes the dynamics information in $\mathcal{D}_b$, enabling $\mathcal{D}_s$ to cover a relatively large portion of $\mathcal{D}_b$ (with $m$ identified expert states, $\mathcal{D}_s$ can include approximately $mK$ selected state-actions), thus significantly enhancing the utilization of imperfect demonstrations. 3)~The method is easy to implement. Given that the computation in data selection primarily resides in training the discriminator, which is straightforward, it is highly applicable in practical, high-dimensional environments.



\subsection{Learning from Expert and Selected Behaviors}

After obtaining ${\mathcal{D}_s}$, a natural solution to learn an imitation policy is carrying out \texttt{BC} from the union of $\mathcal{D}_e$ and ${\mathcal{D}_s}$. However, due to the suboptimality of ${\mathcal{D}_s}$, this solution may suffer from potential \textit{interference} among actions. That is, for a selected $(s,a,s')$, if $s,s'\in\mathcal{D}_e$ but $a\neq\pi_e(s)$, action $a$ will affect mimicking the expert behavior in expert state $s$ when learning from the union data (see \cref{subfig:weighted_bc}). Thus, it necessitates exactly following the expert in given expert states (it has been implied by the definition of $\tilde\pi$ in \cref{eq:ideal_policy}).

\begin{algorithm}[t]
    \caption{\texttt{ILID}}
    \label{alg:ilid}
    \begin{algorithmic}[1]
    \REQUIRE Expert data $\mathcal{D}_e$, imperfect data $\mathcal{D}_b$, rollback $K$
    \STATE Initialize policy parameter $\theta$
    \STATE Train discriminators $d^*$ and $D^*$ by \cref{eq:discriminator,eq:discriminator_2}
    \STATE \texttt{\color{gray}// Data selection}
    \STATE Build complementary dataset $\mathcal{D}_s$ by \cref{eq:data_selection}
    \STATE \texttt{\color{gray}// Policy extraction}
    \FOR{$i=1$ {\bfseries to} $n$}
    \STATE $\theta\leftarrow\theta + \eta\tilde\nabla J(\pi_\theta)$
    \ENDFOR
    \end{algorithmic}
\end{algorithm}

To this end, we cast the policy learning as the following weighted behavior cloning problem:
\begin{align*}
	\max_\pi \mathbb{E}_{ \mathcal{D}_e}[\log(\pi(a|s))] + \mathbb{E}_{\mathcal{D}_s}[\1(\mathcal{D}_e(s)=0)\log(\pi(a|s))]
\end{align*}
where the expectation is taken w.r.t. state-action $(s,a)$, and $\mathcal{D}_e(s)$ denotes the empirical state marginals in $\mathcal{D}_e$. In the problem, the first term matches \texttt{BC}, and the second term aims to clone the selected behaviors \textit{outside} the expert state manifold, which essentially discards the suboptimal actions in expert states. Of note, albeit with a Dirichlet function in the second term, based on \cref{eq:optimal_discriminator}, it can be well approximated via the output of $d^*$. In practice, we instantiate the above objective as follows:
\begin{align}
	\label{eq:ilid_objective}
	\max_\pi J(\pi)&\doteq\mathbb{E}_{ \mathcal{D}_u}[\alpha(s,a)\log(\pi(a|s))] \nonumber\\
	&\,+ \mathbb{E}_{\mathcal{D}_s}[\beta(s,a)\log(\pi(a|s))]
\end{align}
with $\mathcal{D}_u=\mathcal{D}_e\cup\mathcal{D}_b$. In \cref{eq:ilid_objective}, we exploit the trick of importance sampling (which is unbiased) to enhance the expert data support, as in \citet{li2023imitation}:
\begin{align}
	\label{eq:alpha}
	\alpha(s,a) \doteq \frac{\mathcal{D}_e(s,a)}{\mathcal{D}_u(s,a)} = \frac{D^*(s,a)}{1 - D^*(s,a)}
\end{align}
where another discriminator $D^*$ is obtained by solving
\begin{align}
	\label{eq:discriminator_2}
	\max_D \mathbb{E}_{\mathcal{D}_e}[\log D(s, a)]+\mathbb{E}_{\mathcal{D}_u}[\log (1-D(s, a))].
\end{align}
In addition, $\beta(s,a)$ approximates the Dirichlet function by
\begin{align}
    \label{eq:beta}
    \beta(s,a)\doteq \1(d^*(s)\le\sigma).
\end{align}
In summary, we term our algorithm \emph{offline Imitation Learning with Imperfect Demonstrations} (\texttt{ILID}) with the pseudocode outlined in \cref{alg:ilid}, which can be easily implemented on top of \texttt{BC} and enjoys fast convergence speed and training stability (see \cref{sec:experiment}).

\section{Experiments}
\label{sec:experiment}

In this section, we carry out extensive experiments to evaluate our proposed method and answer the following key questions: 
\textbf{{1)}}~Can \texttt{ILID} effectively utilize imperfect demonstrations, especially in complex, high-dimensional environments?
\textbf{{2)}}~How does \texttt{ILID} perform given different numbers of expert demonstrations or varying qualities of imperfect demonstrations?
\textbf{{3)}}~What are the effects of components and hyperparameters such as $\alpha(s,a)$, $\beta(s,a)$, and $K$?  Experimental details are elaborated in \cref{sec:expertimental_details}.\footnote{The code is available at \href{https://github.com/HansenHua/ILID-offline-imitation-learning}{https://github.com/HansenHua/ILID-offline-imitation-learning}.}

\textbf{Baselines.} We evaluate our method against six strong baseline methods in offline IL:
\textbf{1)} \texttt{BCE}, the standard \texttt{BC} trained only on expert demonstrations; 
\textbf{2)} \texttt{BCU}, \texttt{BC} trained on union data;
\textbf{3)} \texttt{DWBC} \citep{xu2022discriminator}, an offline IL method that leverages suboptimal demonstrations by jointly training a discriminator to re-weight the \texttt{BC} objective;
\textbf{4)} \texttt{ISWBC} \citep{li2023imitation}, an offline IL method that adopts importance sampling to enhance \texttt{BC};
\textbf{5)} \texttt{CSIL} \citep{watson2023coherent}, a model-free IRL method that learns a shaped reward function using the  \texttt{BC} policy;
\textbf{6)} \texttt{MLIRL} \citep{zeng2023demonstrations}, a model-based offline IRL algorithm based on bi-level optimization.


\begin{figure}[htbp]
    \includegraphics[width=\columnwidth]{./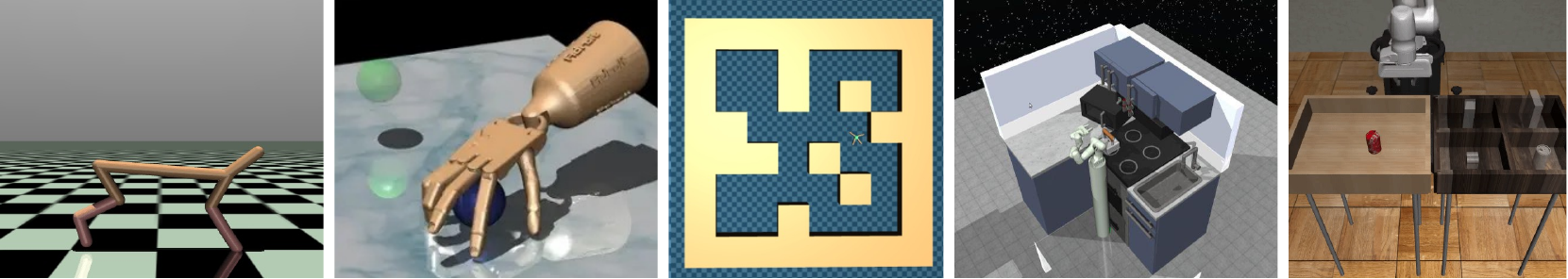}
    \vskip -0.1in
    \caption{Benchmark environments. From left to right: MuJoCo, Adroit, AntMaze, FrankaKitchen, and vision-based Robomimic. We also consider vision-based MuJoCo with image observations.}
    \label{fig:tasks}
\end{figure}

\textbf{Environments and datasets.} We run experiments with 6 domains including 21 tasks: {{1)}}~{{AntMaze}} (\texttt{umaze}, \texttt{medium}, \texttt{large}), {{2)}}~{{Adroit}} (\texttt{pen}, \texttt{hammer}, \texttt{door}, \texttt{relocate}), {{3)}}~{MuJoCo} (\texttt{ant}, \texttt{hopper},
\texttt{halfcheetah}, \texttt{walker2d}), {{4)}}~{FrankaKitchen} (\texttt{complete}, \texttt{partial}, \texttt{undirect}), {{5)}}~{vision-based Robomimic} (\texttt{lift}, \texttt{can}, \texttt{square}), and {{6)}}~{vision-based MuJoCo}. We employ the \texttt{D4RL} datasets~\citep{fu2020d4rl} for AntMaze, MuJoCo, Adroit, and FrankaKitchen and use the \texttt{robomimic}~\citep{robomimic2021} datasets for vision-based Robomimic. In addition, we construct vision-based MuJoCo datasets using the method introduced in \citet{fu2020d4rl}. Details on environments and datasets can be found in \cref{sec:benchmarks,sec:datasets}.

\textbf{Performance measure.} We train a policy using 3 random seeds and evaluate it by running it in the environment for 10 episodes and computing the average undiscounted return of the environment reward. Akin to \citet{fu2020d4rl}, we use the normalized scores in figures and tables, which are measured by $\texttt{score}=100\times\frac{\texttt{score} - \texttt{random\_score}}{\texttt{expert\_score} - \texttt{random\_score}}$.

\begin{table*}[htpb]
    \renewcommand{\arraystretch}{1.05} 
    \centering
    \caption{Normalized performance under limited expert demonstrations and low-quality imperfect data. The number of expert trajectories is 1 for MuJoCo and AntMaze, 10 for Adroit and FrankaKitchen, and 25 for vision-based MuJoCo and Robomimic; and the number of imperfect trajectories is 1000 across tasks. Uncertainty intervals depict standard deviation. The sampling datasets can be found in \cref{table:dataset_comparative}.}
    \vskip 0.1in
    \label{tab:performance}
    \resizebox{1.0\textwidth}{!}    {
        \begin{tabular}{lrrrrrrr}
            \toprule
    Task             & \multicolumn{1}{r}{\texttt{BCE\ \,\,}} & \multicolumn{1}{r}{\texttt{BCU\ \,\,}} & \multicolumn{1}{r}{\texttt{DWBC\ \,}} & \multicolumn{1}{r}{\texttt{CSIL\ \,}} & \multicolumn{1}{r}{\texttt{MLIRL\;\,}} & \multicolumn{1}{r}{\texttt{ISWBC\;\,}} & \multicolumn{1}{r}{\texttt{ILID} (ours)}   \\ \hline
    \texttt{ant}              & $-15.6\pm7.0$           & $31.4\pm0.1$              & \ $23.4\pm7.1$              & $0.2\pm0.0$                & $35.9\pm9.3$                & $27.1\pm6.7$                & $\bm{62.7\pm4.1} $              \\
    \texttt{halfcheetah}      & $0.4\pm1.0$               & $2.3\pm0.0$               & \ $0.9\pm1.3$                & $15.1\pm4.3$               & $21.5\pm0.8$                & $12.6\pm2.4$                & $\bm{32.4\pm2.4}$               \\
    \texttt{hopper}           & $16.7\pm4.3$              & $7.7\pm6.0$               & $\bm{78.3 \pm 10.9}$     & $16.1\pm3.7$               & $55.2\pm 14.6$               & $73.1\pm8.9$                & $68.9\pm4.8$                        \\
    \texttt{walker2d}         & $7.1\pm5.4$               & $0.3\pm0.1$               & \ $46.1\pm9.8$               & $8.9\pm4.2$                & $23.5\pm1.2$                & $39.8\pm1.9$                & $\bm{58.4\pm4.8} $              \\ \hline
    \texttt{hammer}           & $4.5\pm5.3$               & $0.2\pm0.0$               & \ $14.6\pm 12.6$              & $15.3\pm7.1$               & $0.2\pm0.0$                 & $3.8\pm3.0$                 & $\bm{51.0\pm2.4}  $             \\
    \texttt{pen}              & $40.0\pm9.6$                & $2.8\pm7.8$               & \ $36.0\pm 18.9$              & $22.1\pm0.2$               & $17.2\pm3.6$                & $31.8\pm0.0$                & $\bm{75.1\pm5.2}  $             \\
    \texttt{relocate}         & $-0.1\pm0.1$              & $-0.1\pm0.0$              & $-0.1\pm0.0$               & $4.0\pm3.2$                & $0.2\pm0.0$                 & $0.2\pm0.0$                 & $\bm{28.2\pm1.6}$               \\
    \texttt{door}             & $2.9\pm2.1$               & $-0.1\pm0.0$              & $-0.1\pm0.1$               & $16.7\pm7.1$               & $0.2\pm0.0$                 & $0.2\pm0.0$                 & $\bm{25.9\pm1.1} $              \\ \hline
    \texttt{antmaze-umaze}    & $3.6\pm0.0$               & $3.6\pm0.0$               & $22.0\pm2.7$               & $12.0\pm3.2$               & $6.4\pm0.3$                 & $9.9\pm1.1$                 & $\bm{72.3\pm3.8}$               \\
    \texttt{antmaze-medium}   & $0.2\pm0.0$               & $0.2\pm0.0$               & \ $0.2\pm0.0$                & $0.2\pm0.0$                & $0.2\pm0.0$                 & $6.4\pm0.3$                 & $\bm{64.6\pm5.2} $              \\
    \texttt{antmaze-large}    & $0.2\pm0.0$               & $0.2\pm0.0$               & \ $0.2\pm0.0$                & $0.2\pm0.0$                & $0.2\pm0.0$                 & $4.8\pm0.0$                 & $\bm{39.8\pm2.5} $              \\ \hline
    \texttt{undirect} & $0.2\pm0.0$               & $0.2\pm0.0$               & \ $0.2\pm0.0$                & $35.0\pm0.0$               & $0.2\pm0.0$                 & $0.2\pm0.0$                 & $\bm{52.8\pm3.1} $              \\
    \texttt{partial}  & $0.2\pm0.0$               & $0.2\pm0.0$               & \ $0.2\pm0.0$                & $21.7\pm1.4$               & $0.2\pm0.0$                 & $0.2\pm0.0$                 & $\bm{32.5\pm2.6} $              \\
    \texttt{complete} & $0.2\pm0.0$               & $0.2\pm0.0$               & \ $0.2\pm0.0$                & $11.7\pm0.0$               & $0.2\pm0.0$                 & $0.2\pm0.0$                 & $\bm{29.9\pm1.7} $              \\ \hline
    \texttt{ant-img}          & $16.0\pm4.1$             & $15.6\pm2.4$              & $17.6\pm3.2$               & $10.7\pm2.4$               & $0.0\pm0.0$                 & $19.2\pm2.1$                & $\bm{31.5\pm4.0} $              \\
    \texttt{halfcheetah-img}  & $26.6\pm3.2$              & $27.9\pm4.7$              & $18.5\pm6.4$               & $25.3\pm4.8$               & $0.0\pm0.0$                 & $23.5\pm1.5$                & $\bm{41.6\pm3.2}$               \\
    \texttt{hopper-img}       & $12.8\pm4.0$             & $10.9\pm5.2$              & $16.7\pm5.6$               & $11.8\pm4.0$               & $0.0\pm0.0$                 & $15.4\pm6.3$                & $\bm{61.5\pm5.0} $              \\
    \texttt{walker2d-img}     & $8.3\pm2.0$              & $7.7\pm6.3$               & $22.8\pm5.0$               & $7.5\pm5.5$                & $0.0\pm0.0$                 & $27.9\pm3.3$                & $\bm{58.9\pm4.4} $              \\ \hline
    \texttt{can-img}          & $13.7\pm9.6$              & $21.4\pm2.4$              & $21.9\pm1.4$               & $23.3\pm3.2$               & $0.0\pm0.0$                 & $9.8\pm 11.9$                & $\bm{38.8\pm2.7}  $             \\
    \texttt{lift-img}         & $48.5\pm4.9$              & $28.9\pm3.3$              & $46.6\pm5.7$               & $35.9\pm1.7$               & $0.0\pm0.0$                 & $56.9\pm2.4$                & $\bm{90.4\pm2.4} $              \\
    \texttt{square-img}       & $2.0\pm1.6$               & $5.0\pm4.1$              & $11.5\pm2.2$               & $5.0\pm3.3$                & $0.0\pm0.0$                 & $13.2\pm1.4$                & $\bm{37.8\pm3.0}  $             \\ 
            \bottomrule
        \end{tabular}
    }
\end{table*}

\textbf{Reproducibility.} All details of our experiments are provided in the appendices in terms of the tasks, network architectures, hyperparameters, etc. We implement all baselines and environments based on open-source repositories. Of note, our method is robust in hyperparameters -- they are \emph{identical} for all tasks except for the change of neural nets to CNNs in vision-based domains. 


\textbf{Comparative results.} To answer the first question, we evaluate \texttt{ILID}'s performance in each task using limited expert demonstrations and a set of low-quality imperfect data. For example, in the MuJoCo domain, we sample 1 \texttt{expert} trajectory and 1000 \texttt{random} trajectories from \texttt{D4RL} as the expert and imperfect data, respectively (refer to \cref{table:dataset_comparative} for the complete data setup). Comparative results are presented in \cref{tab:performance}, and learning curves are depicted in \cref{fig:convergence_all_1,fig:convergence_all_2}. We find \texttt{ILID} consistently outperforms baselines in \textbf{20/21} tasks often by a significant margin while enjoying fast and stabilized convergence. Due to limited state coverage of expert data and low quality of imperfect data, \texttt{BCE} and \texttt{BCU} fail to fulfill most of the tasks. This reveals \texttt{ILID}'s effectiveness in extracting and leveraging positive behaviors from imperfect demonstrations. \texttt{DWBC} and \texttt{ISWBC} exhibit similar performances, demonstrating relative success in MuJoCo but facing challenges in robotic manipulation and maze domains, which require precise long-horizon manipulation. This is because the similarity-based behavior selection confines their training data to the expert states with narrow coverage, rendering them prone to error compounding. In contrast, \texttt{ILID}, utilizing dynamics information, can \textit{stitch} parts of trajectories and empower the policy to recover from mistakes. In addition, the IRL methods struggle in high-dimensional environments owing to reward extrapolation and world model estimates.

\begin{figure}[t]
    \centering
    \subfigure{\includegraphics[width=0.48\textwidth]{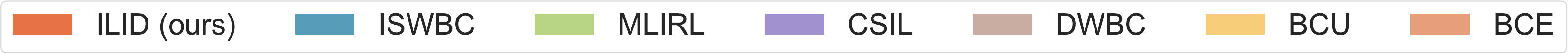}}
     \\[-2pt]  
    \subfigure{\includegraphics[width=0.48\textwidth]{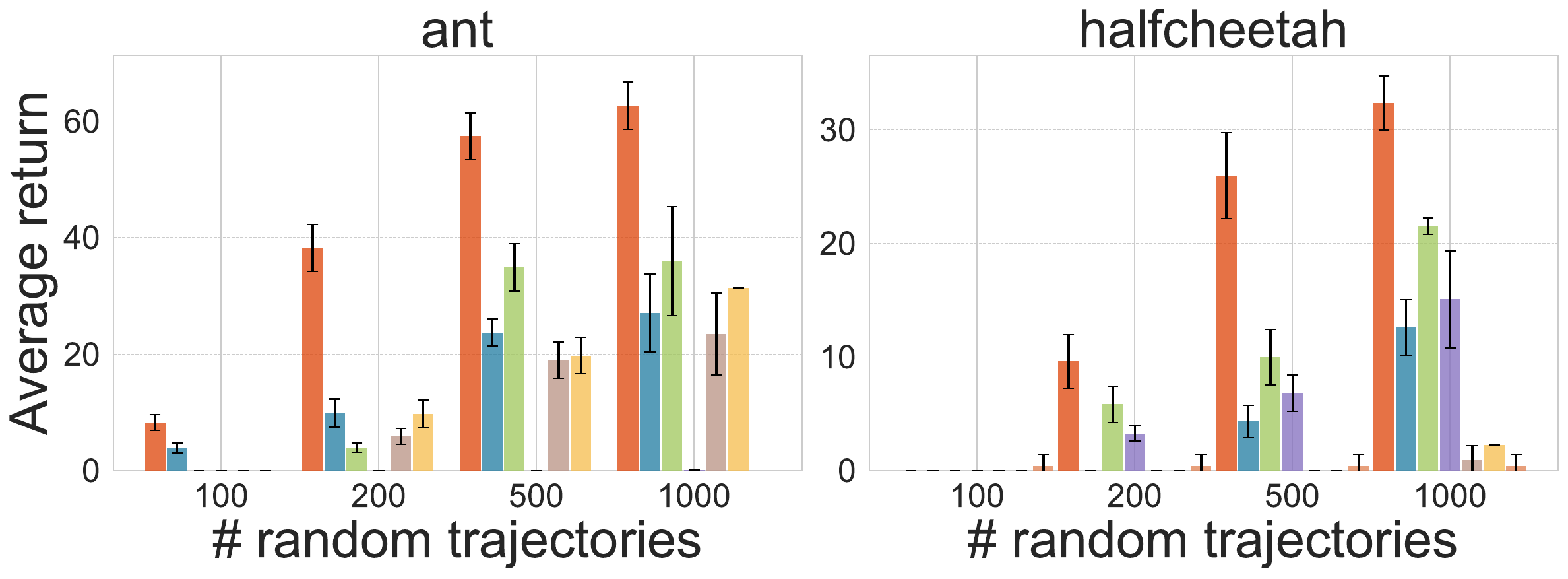}}
    \vskip -0.1in
    \caption{Performance with 1 \texttt{expert} trajectory and varying numbers of \texttt{random} trajectories.} 
    \vskip -0.1in
    \label{fig:random_num}
\end{figure}

\begin{figure*}[t]
    \centering
    
    \subfigure{
    \includegraphics[width=0.16\textwidth]{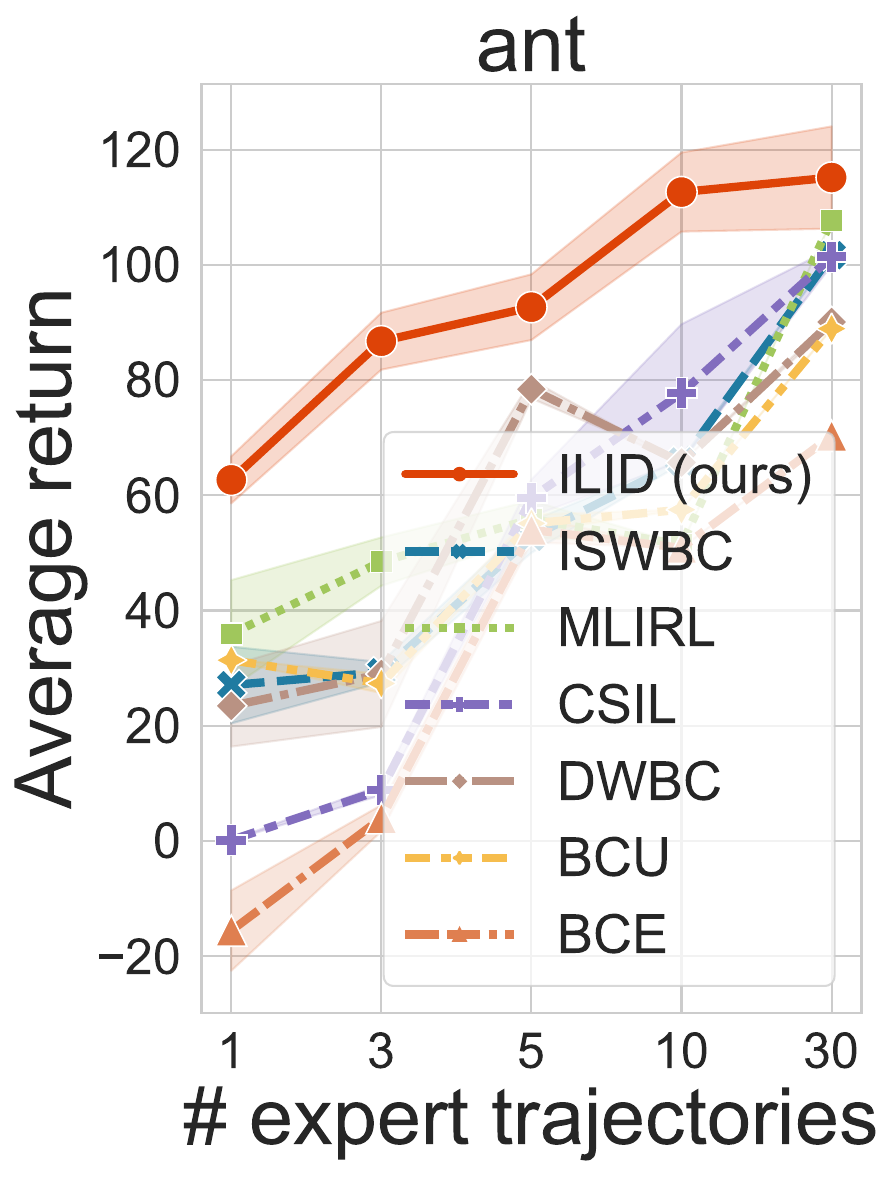}}
    \hspace{-4pt}
    \subfigure{
    \includegraphics[width=0.16\textwidth]{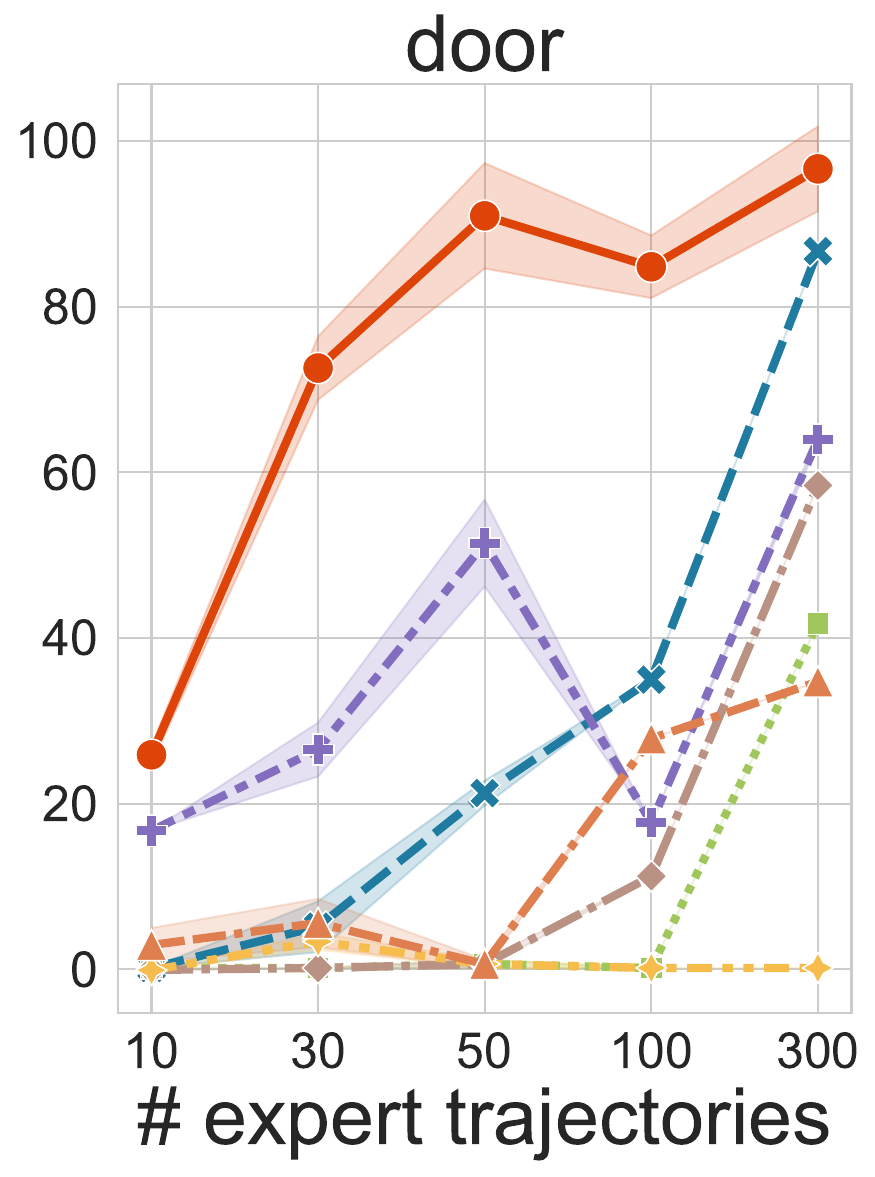}}
    \hspace{-4pt}
    \subfigure{
    \includegraphics[width=0.16\textwidth]{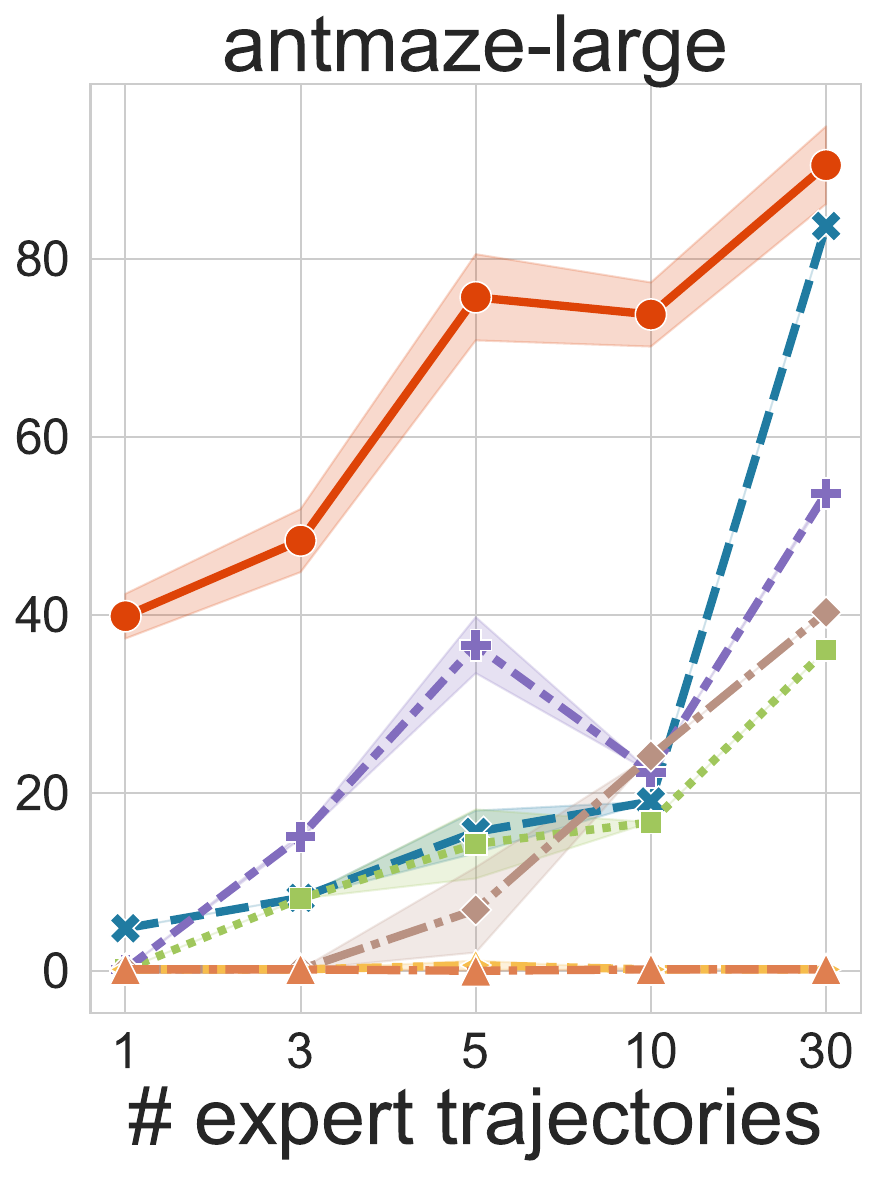}}
    \hspace{-4pt}
    \subfigure{
    \includegraphics[width=0.16\textwidth]{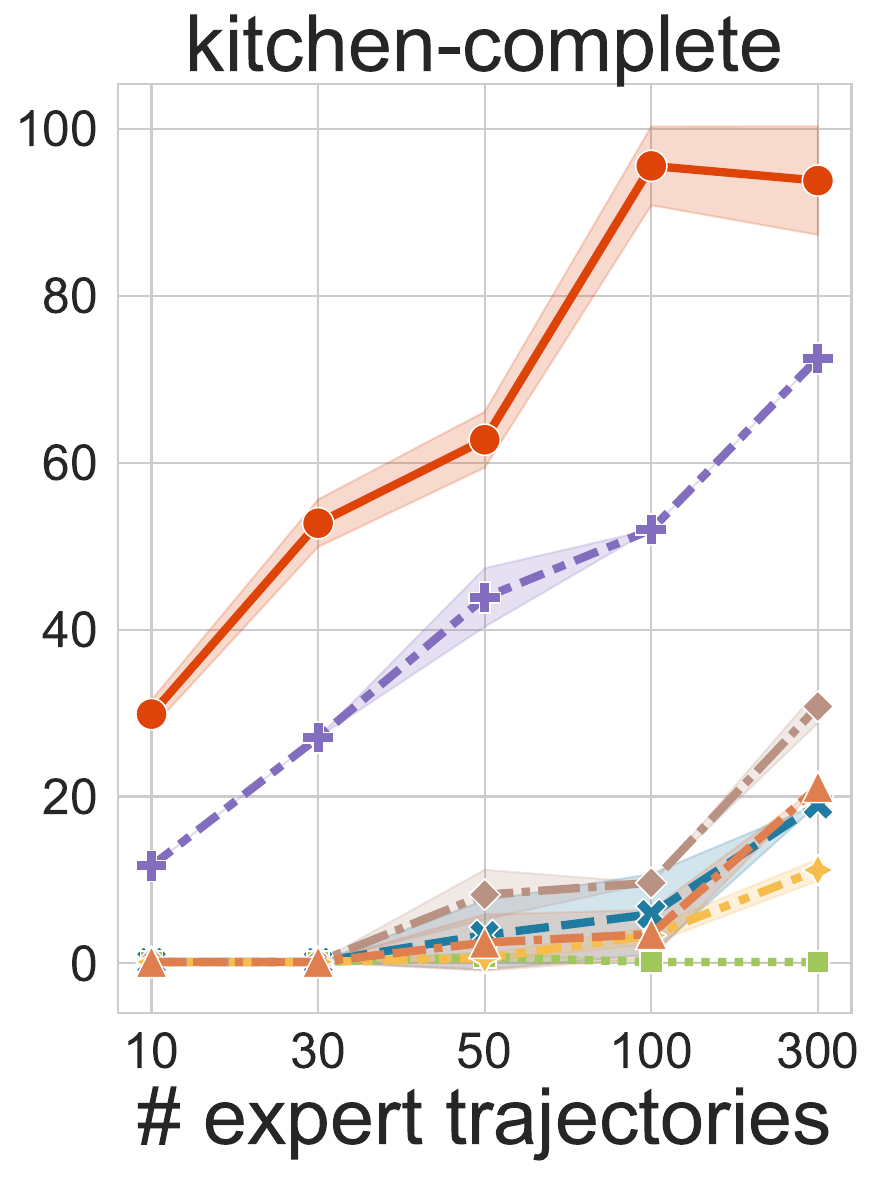}}
    \hspace{-4pt}
    \subfigure{
    \includegraphics[width=0.16\textwidth]{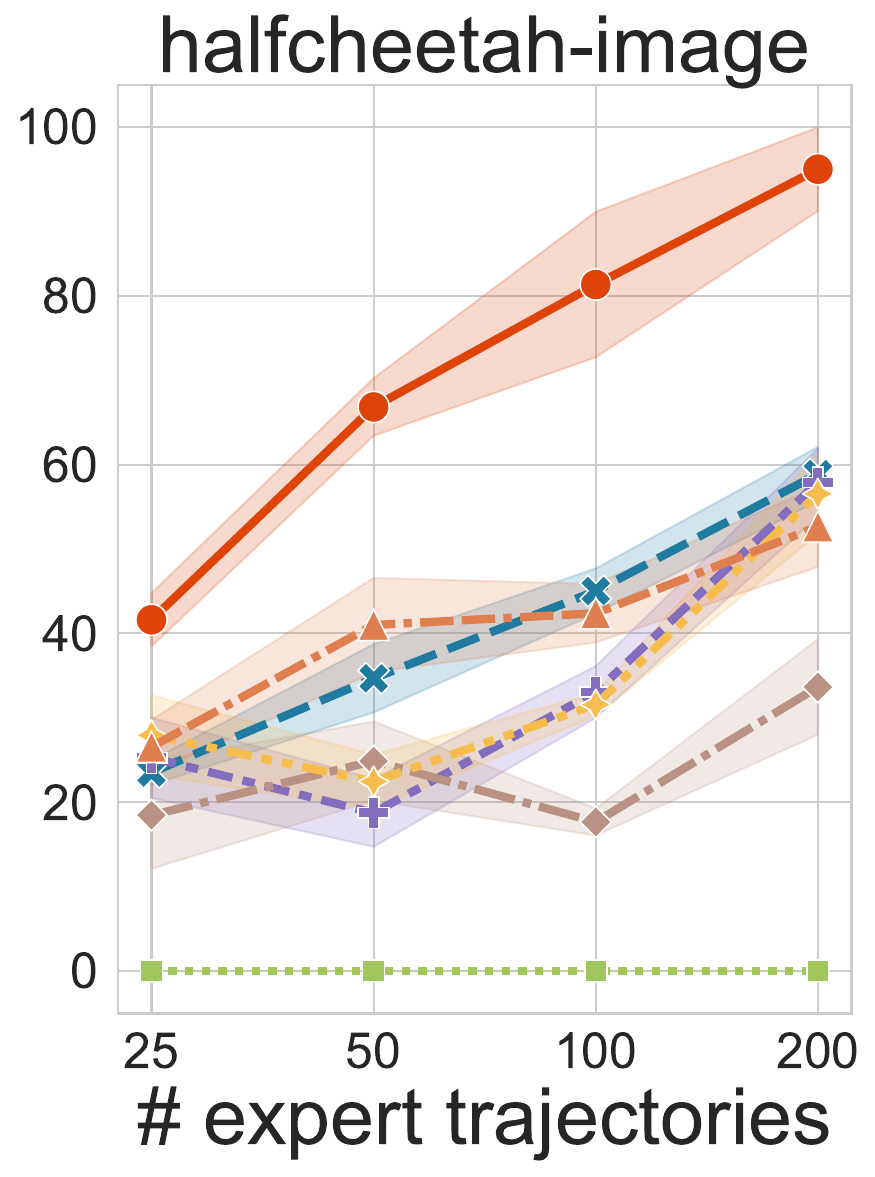}}
    \hspace{-4pt}
    \subfigure{
    \includegraphics[width=0.16\textwidth]{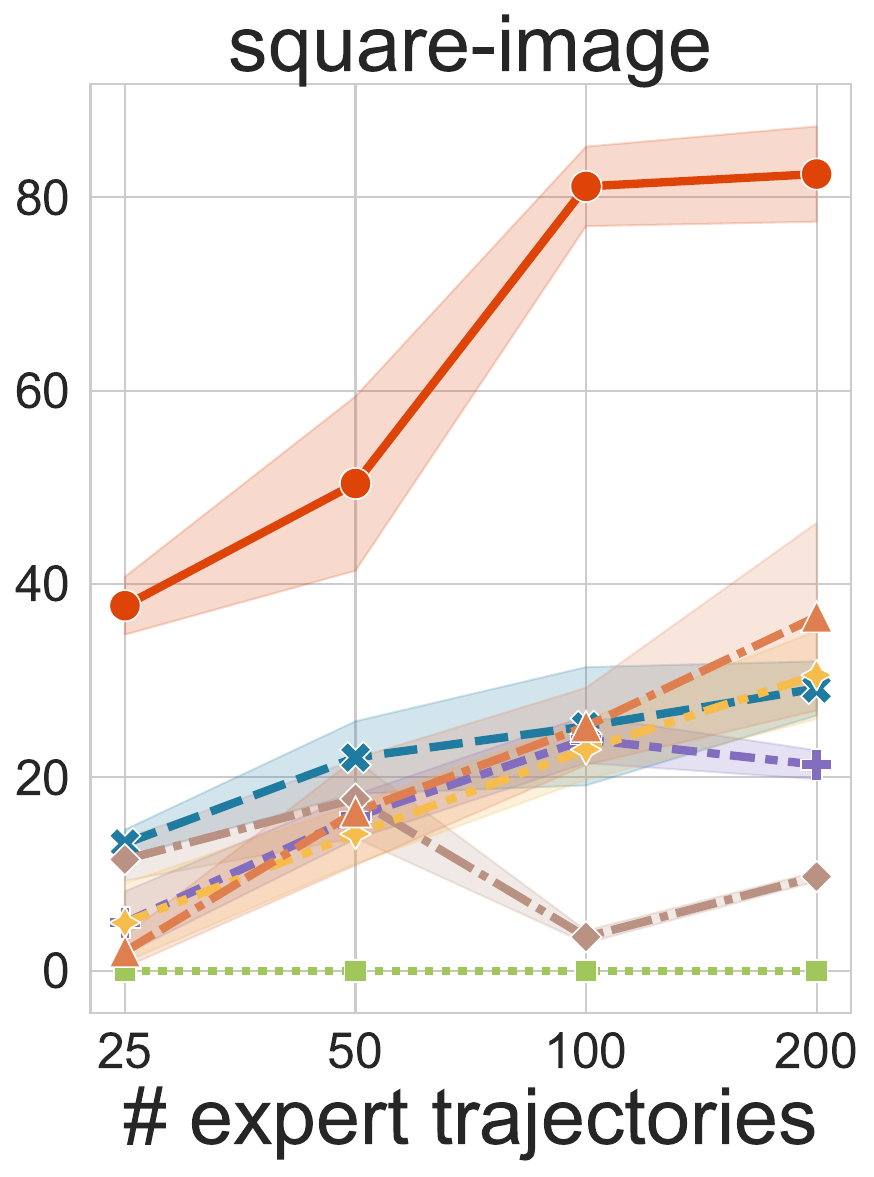}}
    \vskip -0.1in
    \caption{Normalized scores under varying numbers of expert demonstrations.}
    \label{fig:demonstration}
\end{figure*}

\begin{figure*}[t]
    \vskip -0.1in
    \centering
    \subfigure[\texttt{ant}]{
    \includegraphics[width=0.24\textwidth]{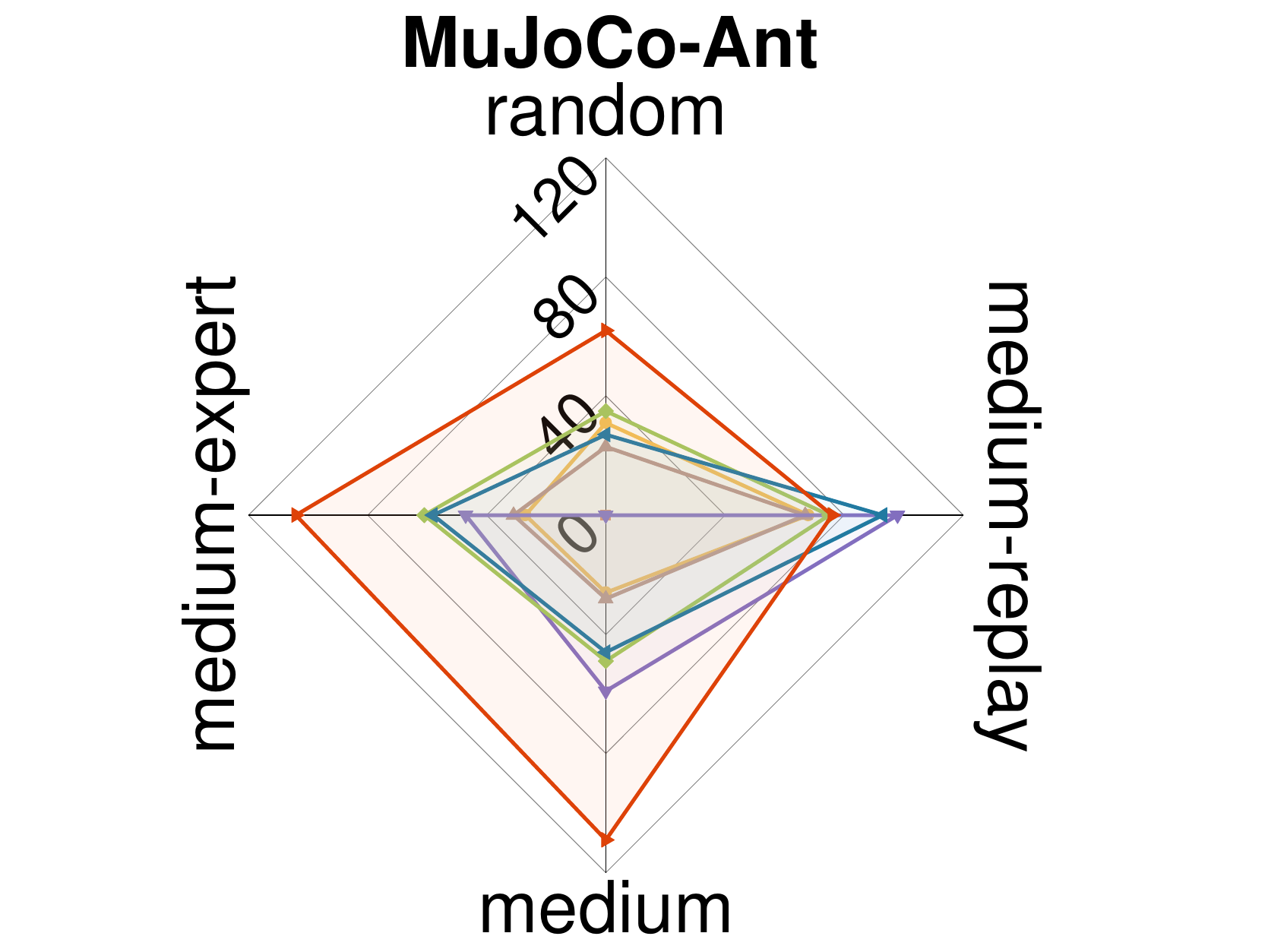}}
    \subfigure[\texttt{walker2d}]{
    \includegraphics[width=0.24\textwidth]{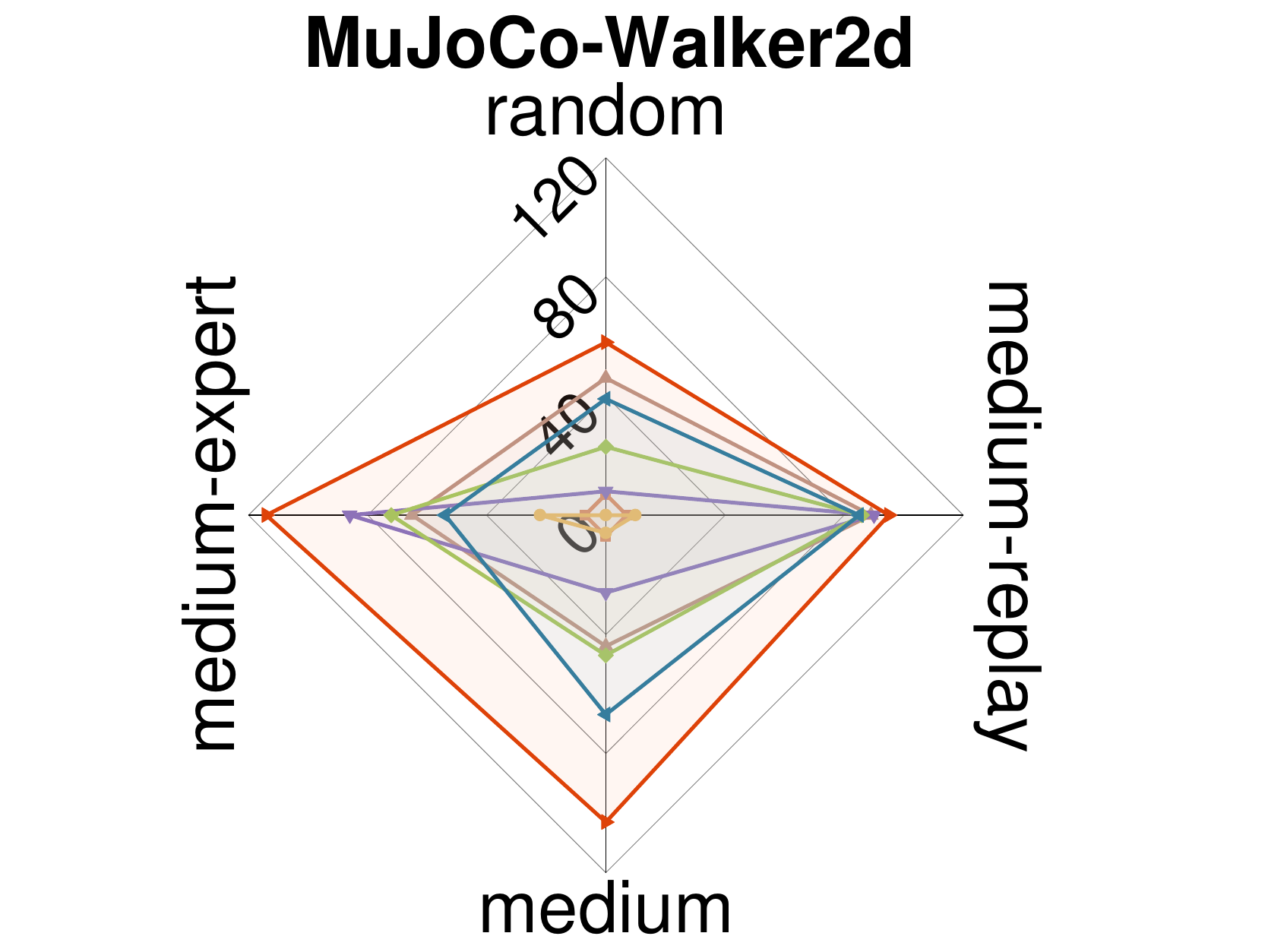}}
    \subfigure[\texttt{hammer} \& \texttt{relocate}]{
    \includegraphics[width=0.24\textwidth]{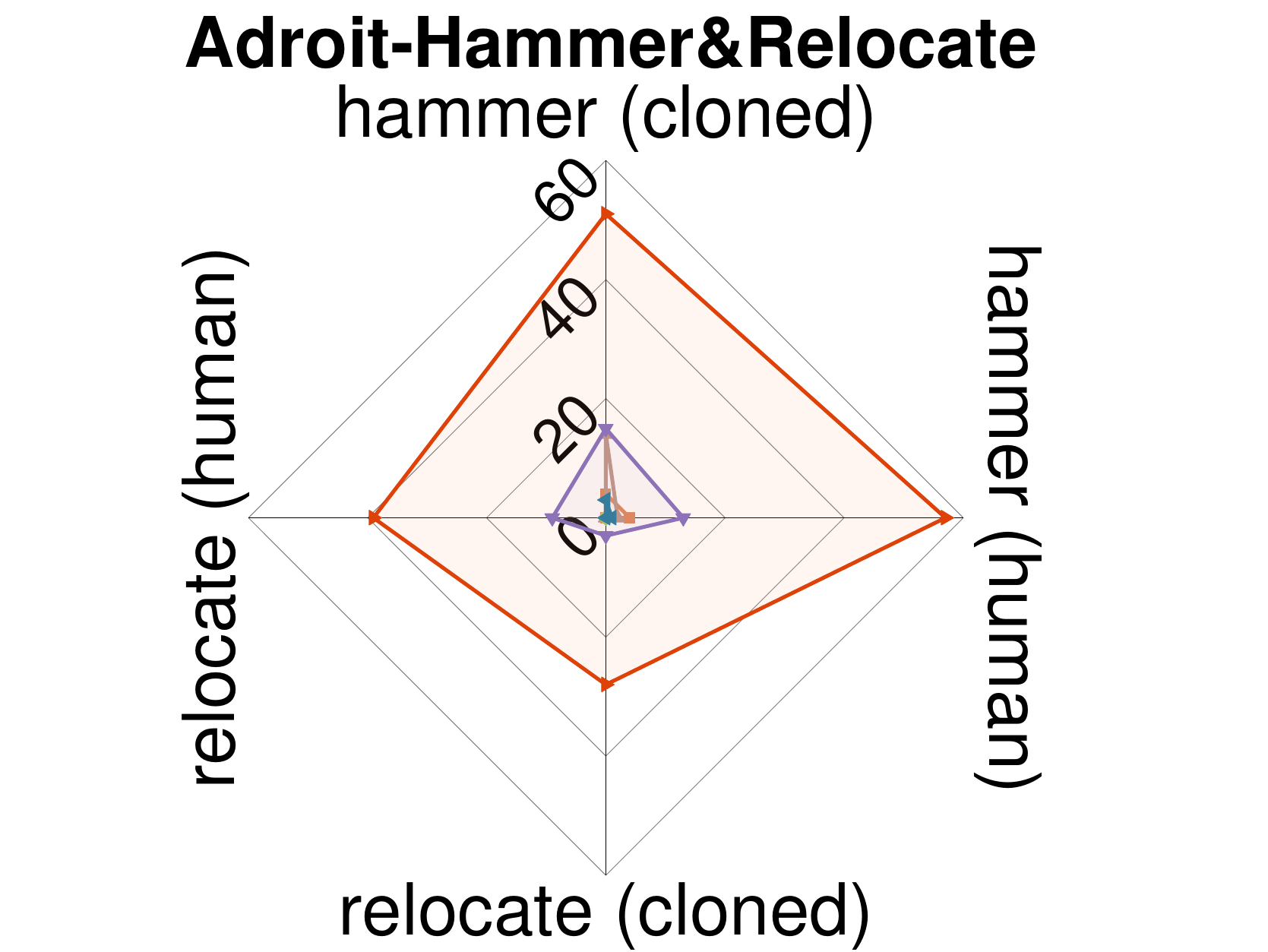}}
    \subfigure[\texttt{pen} \& \texttt{door}]{
    \includegraphics[width=0.24\textwidth]{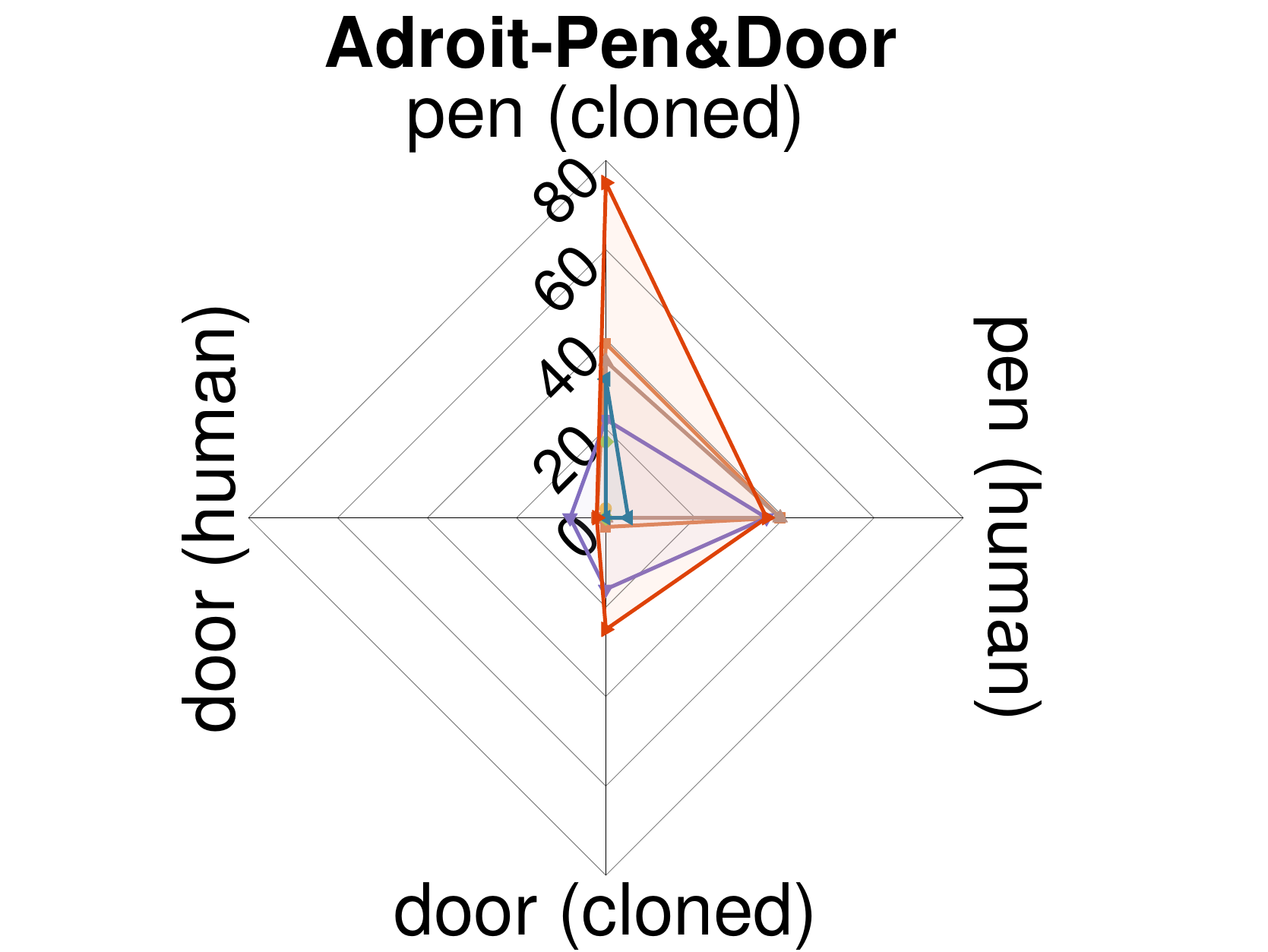}}
    \vskip -0.1in
    \caption{Comparative performance under varying qualities of imperfect demonstrations. Each axis represents a specific data quality, where the values denote normalized scores of methods. The correspondence between methods and colored lines can be found in \cref{fig:demonstration}.}
    \label{fig:imperfect}
\end{figure*}

\begin{figure*}[htpb]
    \centering
    \subfigure[Runtime]{\label{subfig:runtime}\includegraphics[width=0.199\textwidth]{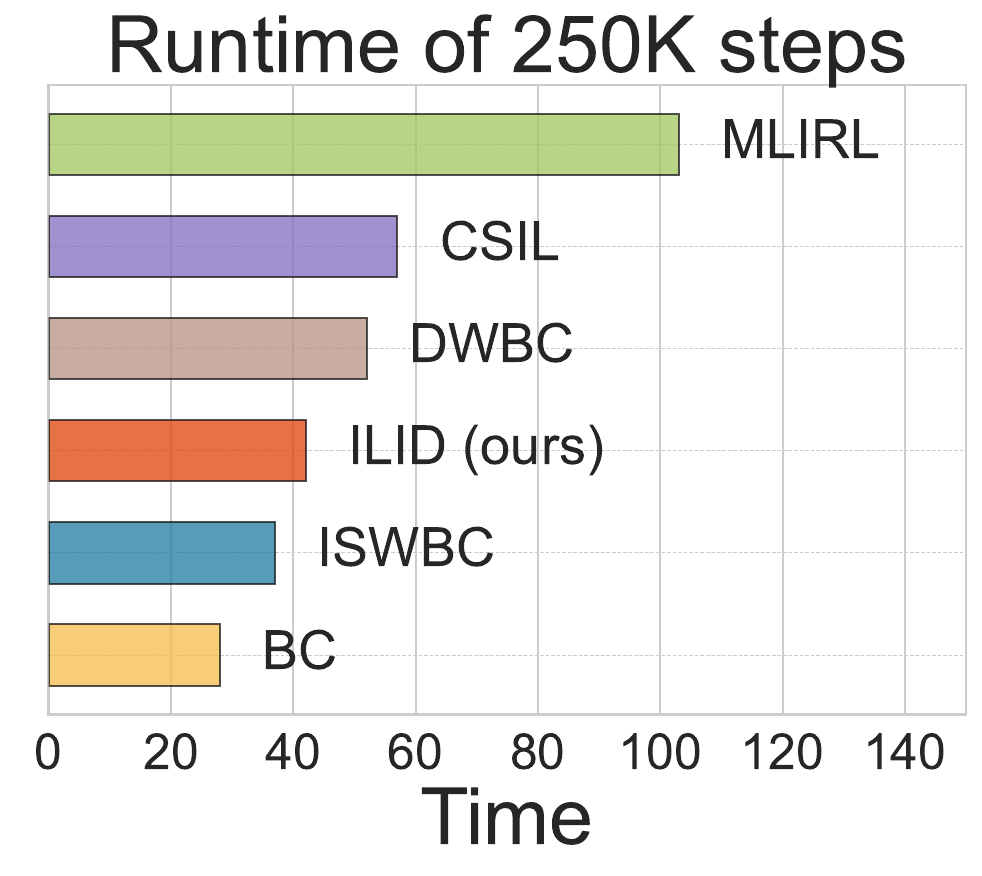}}
    \subfigure[Rollback steps]{\label{subfig:rollback}\includegraphics[width=0.199\textwidth]{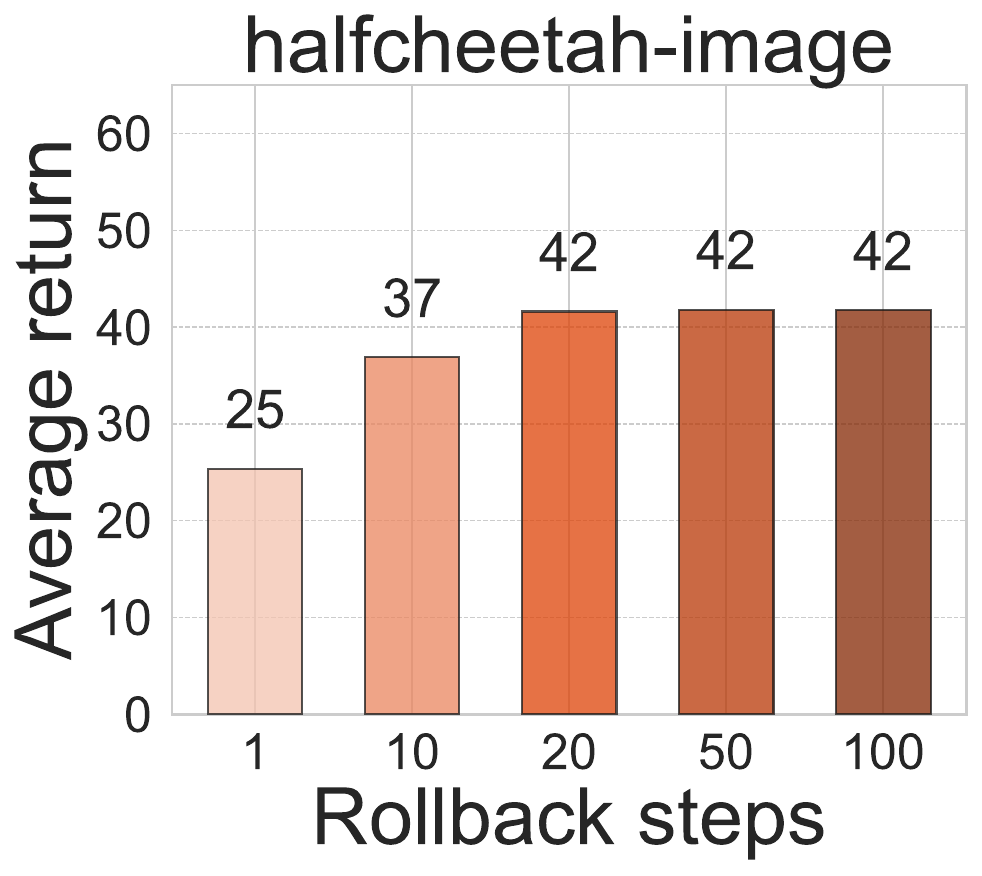}}
    \subfigure[Only ISW]{\label{subfig:dynamic_information}\includegraphics[width=0.19\textwidth]{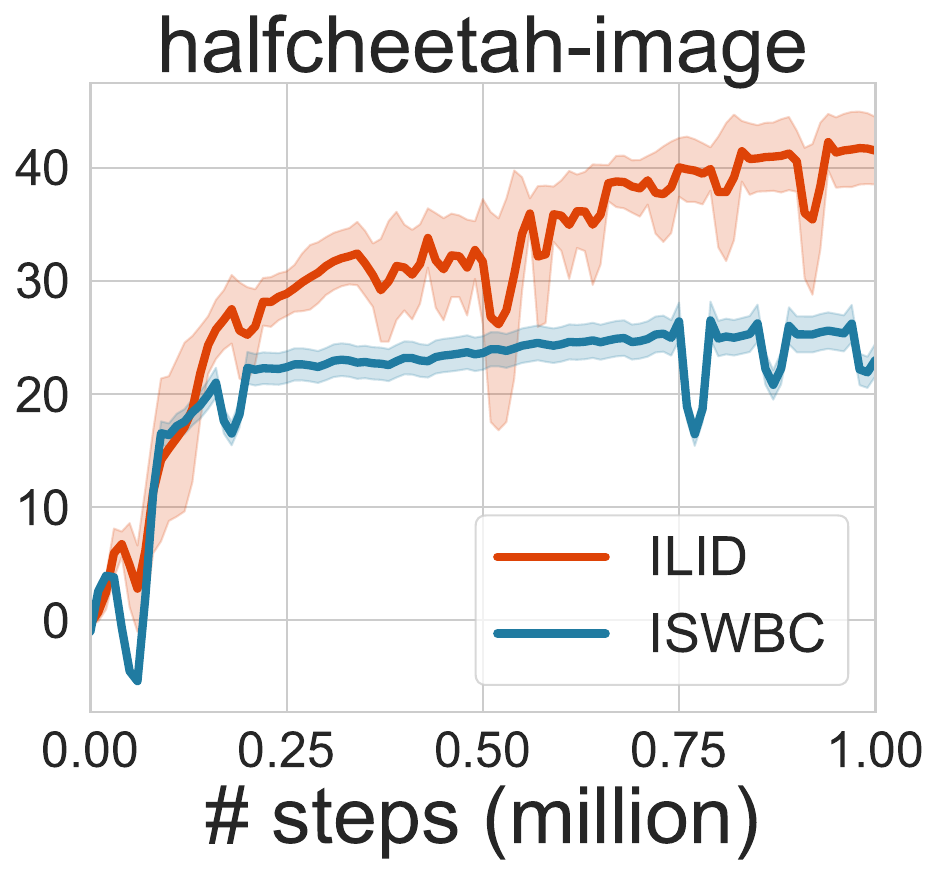}}
    \subfigure[On data selection]{\label{subfig:data_selection}\includegraphics[width=0.19\textwidth]{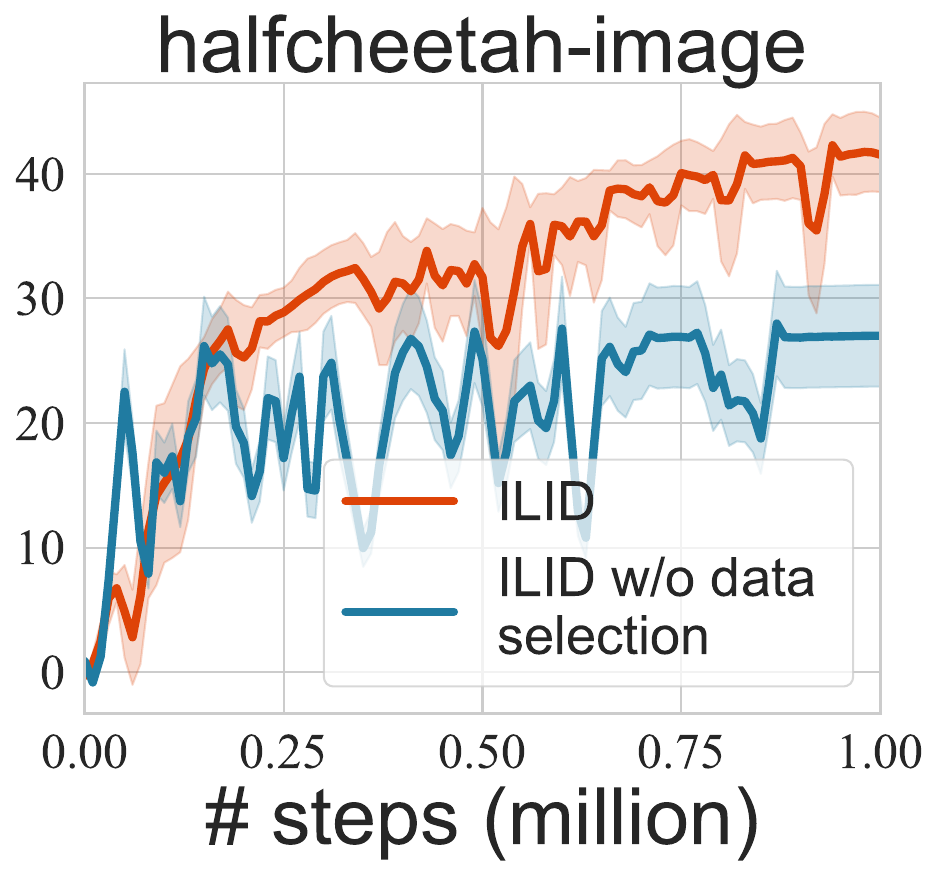}}
    \subfigure[On weighting]{\label{subfig:weighted_bc}\includegraphics[width=0.19\textwidth]{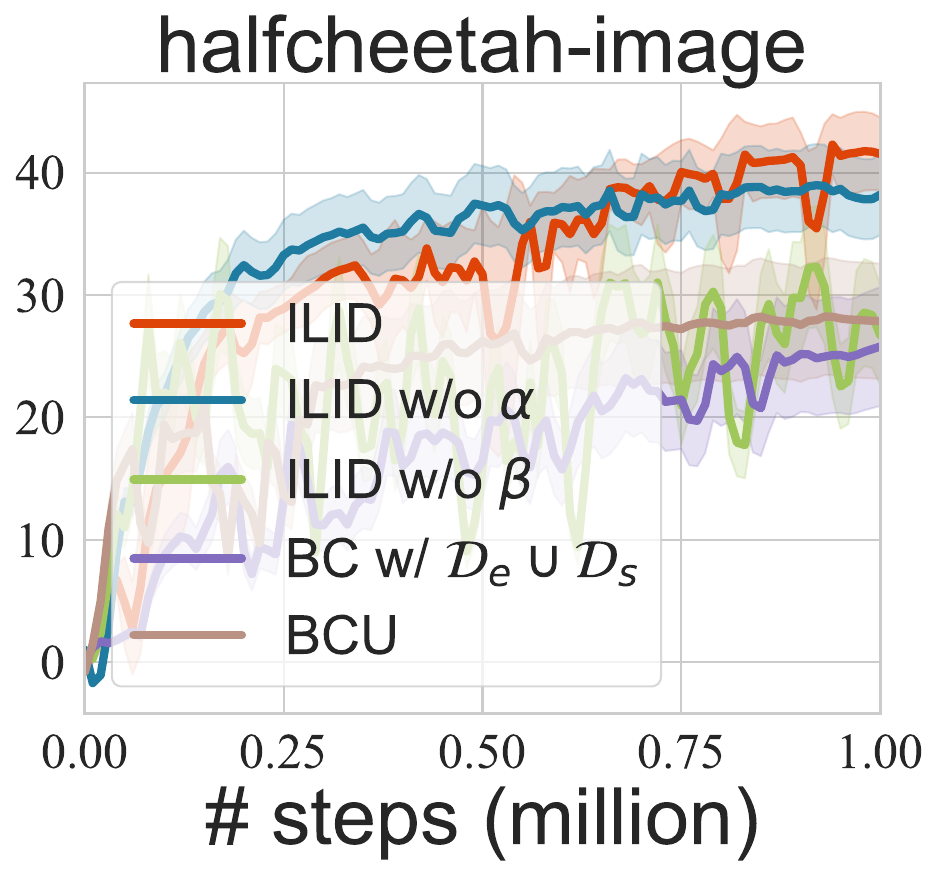}}
    \vskip -0.1in
    \caption{Ablation studies and comparative results of wall-clock runtime in policy learning. } 
    \label{fig:performance_ilid}
\end{figure*}

\textbf{Expert demonstrations.} To answer the second question, we run experiments with varying numbers of expert trajectories (ranging from 1 to 30 in MuJoCo and AntMaze, from 10 to 300 in Adroit and FrankaKitchen, and from 25 to 200 in vision-based domains). The data setup adheres to that of \cref{table:dataset_comparative}. We present selected results in \cref{fig:demonstration} and the full results in \cref{fig:demonstration_all} of \cref{sec:expert_demonstrations}. Our method, consistently requiring much fewer expert trajectories to attain expert performance, demonstrates great demonstration efficiency in comparison with prior methods.

\textbf{Quality and quantity of imperfect data.} For the second question, we also conduct experiments using imperfect demonstrations with varying qualities and quantities to test the robustness of \texttt{ILID}'s performance in behavior selection (the data setup is showcased in \cref{table:dataset_dataqualities}). Selected results are shown in \cref{fig:imperfect,fig:random_num}, with complete results provided in \cref{fig:imperfect_all,fig:random_num_bar} of \cref{sec:quality_quantity}. We find that \texttt{ILID} surpasses the
baselines in \textbf{20/24} settings, corroborating its efficacy
and superiority in the utilization of noisy data. Moreover, \cref{fig:random_num} underscores the importance of leveraging suboptimal data.

\textbf{Rollback steps.}  Regarding the fourth question, we vary $K$ from 1 to 100 and run experiments across all benchmarks. A selected result is shown in \cref{subfig:rollback} and full results are depicted in \cref{fig:rollback_all} of \cref{sec:rollback_steps}. The results clearly indicate that as $K$ increases, there is an initial improvement in performance; once it reaches a sufficiently large value, performance tends to stabilize. Considering that a larger rollback step leads to more selected behaviors capable of reaching expert states, this observation offers support for  \cref{hypo:resultant_state}. Importantly, the performance proves to be robust to a relatively large $K$, rendering \texttt{ILID} forgiving to the hyperparameter.

\textbf{Ablation studies.} We assess the effect of key components by ablating them on \textit{all} benchmarks, under the same setting as that of \cref{table:dataset_comparative} (see \cref{sec:ablation} for complete results). \textit{1) Only importance-sampling weighting.} Without the second term in Problem~(\ref{eq:ilid_objective}), \texttt{ILID} reduces to \texttt{ISWBC}. Yet, as shown in \cref{subfig:dynamic_information} and aforementioned comparative experiments, \texttt{ISWBC} does not suffice satisfactory performance. 
\textit{2) Effect of importance-sampling weighting.} We ablate importance weighting, and accordingly the first term of Problem~(\ref{eq:ilid_objective}) becomes the \texttt{BC} loss. The observed performance degradation in \cref{subfig:weighted_bc} suggests its benefits, which can enhance expert data support, particularly in continuous domains. 
\textit{3) Importance of data selection.} We ablate the data selection and replace $\mathcal{D}_s$ in Problem~(\ref{eq:ilid_objective}) by entire imperfect data of $\mathcal{D}_b$.  \cref{subfig:data_selection} corroborates \cref{hypo:resultant_state} and underscores the importance of our data selection scheme. 
\textit{4) Importance of $\beta(s,a)$.} As demonstrated in \cref{subfig:weighted_bc}, $\beta(s,a)$ assumes a crucial role in imitating selected data. The absence of $\beta(s,a)$ (setting $\beta(s,a)\equiv1$) renders training ineffective and unstable, due to behavior interference.

\textbf{Runtime.} We evaluate the runtime of \texttt{ILID} in comparison with baselines. \cref{subfig:runtime} demonstrates \texttt{ILID} remains comparable wall-clock time to \texttt{BC} (see \cref{sec:runtime}).

\section{Conclusion and Future Work}

In this paper, we introduce a simple yet effective data selection method along with a lightweight behavior cloning algorithm, which can explicitly harness the dynamics information in imperfect data, significantly enhancing the utilization of imperfect demonstrations. 
A limitation of this work is the requirement of \textit{state} overlap/similarity between the expert and imperfect data. While this assumption is weaker than most existing works (which necessitates \textit{state-action} overlap), there might be scenarios where only the expert can reach expert states. In general, it is hard to assess suboptimal behaviors persuasively if none of them bear a state resemblance to the expert's. A potential compromise is to involve prior information like the quality of imperfect data in the problem. This opens up an interesting future direction on offline IL with multi-quality demonstrations. 


\section*{Acknowledgments}
This research was supported in part by the National Natural Science Foundation of China under Grant No. 62302260, 62341201, 62122095, 62072472 and 62172445, by the National Key R\&D Program of China under Grant No. 2022YFF0604502, by China Postdoctoral Science Foundation under Grant No. 2023M731956, and by a grant from the Guoqiang Institute, Tsinghua University.

\section*{Impact Statement}

This paper presents work whose goal is to advance the field of Reinforcement Learning. There are many potential societal consequences of our work, none which we feel must be specifically highlighted here.

\bibliography{references}

\begin{thebibliography}{40}
\providecommand{\natexlab}[1]{#1}
\providecommand{\url}[1]{\texttt{#1}}
\expandafter\ifx\csname urlstyle\endcsname\relax
  \providecommand{\doi}[1]{doi: #1}\else
  \providecommand{\doi}{doi: \begingroup \urlstyle{rm}\Url}\fi

\bibitem[Bojarski et~al.(2016)Bojarski, Del~Testa, Dworakowski, Firner, Flepp, Goyal, Jackel, Monfort, Muller, Zhang, et~al.]{bojarski2016end}
Bojarski, M., Del~Testa, D., Dworakowski, D., Firner, B., Flepp, B., Goyal, P., Jackel, L.~D., Monfort, M., Muller, U., Zhang, J., et~al.
\newblock End to end learning for self-driving cars.
\newblock \emph{arXiv preprint arXiv:1604.07316}, 2016.

\bibitem[Chan \& van~der Schaar(2021)Chan and van~der Schaar]{chan2021scalable}
Chan, A.~J. and van~der Schaar, M.
\newblock Scalable bayesian inverse reinforcement learning.
\newblock In \emph{International Conference on Learning Representations}, 2021.

\bibitem[Chang et~al.(2021)Chang, Uehara, Sreenivas, Kidambi, and Sun]{chang2022mitigating}
Chang, J., Uehara, M., Sreenivas, D., Kidambi, R., and Sun, W.
\newblock Mitigating covariate shift in imitation learning via offline data with partial coverage.
\newblock In \emph{Advances in Neural Information Processing Systems}, volume~34, pp.\  965--979. Curran Associates, 2021.

\bibitem[Cideron et~al.(2023)Cideron, Tabanpour, Curi, Girgin, Hussenot, Dulac-Arnold, Geist, Pietquin, and Dadashi]{cideron2023get}
Cideron, G., Tabanpour, B., Curi, S., Girgin, S., Hussenot, L., Dulac-Arnold, G., Geist, M., Pietquin, O., and Dadashi, R.
\newblock Get back here: Robust imitation by return-to-distribution planning.
\newblock \emph{arXiv preprint arXiv:2305.01400}, 2023.

\bibitem[Florence et~al.(2022)Florence, Lynch, Zeng, Ramirez, Wahid, Downs, Wong, Lee, Mordatch, and Tompson]{florence2022implicit}
Florence, P., Lynch, C., Zeng, A., Ramirez, O.~A., Wahid, A., Downs, L., Wong, A., Lee, J., Mordatch, I., and Tompson, J.
\newblock Implicit behavioral cloning.
\newblock In \emph{Proceedings of the 5th Conference on Robot Learning}, volume 164, pp.\  158--168. PMLR, 2022.

\bibitem[Fu et~al.(2020)Fu, Kumar, Nachum, Tucker, and Levine]{fu2020d4rl}
Fu, J., Kumar, A., Nachum, O., Tucker, G., and Levine, S.
\newblock {D4RL}: Datasets for deep data-driven reinforcement learning.
\newblock \emph{arXiv preprint arXiv:2004.07219}, 2020.

\bibitem[Garg et~al.(2021)Garg, Chakraborty, Cundy, Song, and Ermon]{garg2021iq}
Garg, D., Chakraborty, S., Cundy, C., Song, J., and Ermon, S.
\newblock {IQ-Learn}: Inverse soft-q learning for imitation.
\newblock In \emph{Advances in Neural Information Processing Systems}, volume~34, pp.\  4028--4039. Curran Associates, 2021.

\bibitem[Goodfellow et~al.(2014)Goodfellow, Pouget-Abadie, Mirza, Xu, Warde-Farley, Ozair, Courville, and Bengio]{goodfello2016generative}
Goodfellow, I., Pouget-Abadie, J., Mirza, M., Xu, B., Warde-Farley, D., Ozair, S., Courville, A., and Bengio, Y.
\newblock Generative adversarial nets.
\newblock In \emph{Advances in Neural Information Processing Systems}, volume~27, pp.\  2672--2680. Curran Associates, 2014.

\bibitem[Gupta et~al.(2019)Gupta, Kumar, Lynch, Levine, and Hausman]{gupta2019relay}
Gupta, A., Kumar, V., Lynch, C., Levine, S., and Hausman, K.
\newblock Relay policy learning: Solving long-horizon tasks via imitation and reinforcement learning.
\newblock \emph{arXiv preprint arXiv:1910.11956}, 2019.

\bibitem[Haarnoja et~al.(2018)Haarnoja, Zhou, Abbeel, and Levine]{haarnoja18soft}
Haarnoja, T., Zhou, A., Abbeel, P., and Levine, S.
\newblock Soft actor-critic: Off-policy maximum entropy deep reinforcement learning with a stochastic actor.
\newblock In \emph{Proceedings of the 35th International Conference on Machine Learning}, volume~80, pp.\  1861--1870. PMLR, 2018.

\bibitem[Herman et~al.(2016)Herman, Gindele, Wagner, Schmitt, and Burgard]{herman2016inverse}
Herman, M., Gindele, T., Wagner, J., Schmitt, F., and Burgard, W.
\newblock Inverse reinforcement learning with simultaneous estimation of rewards and dynamics.
\newblock In \emph{Proceedings of the 19th International Conference on Artificial Intelligence and Statistics}, volume~51, pp.\  102--110. PMLR, 2016.

\bibitem[Jarrett et~al.(2020)Jarrett, Bica, and van~der Schaar]{jarrett2020strictly}
Jarrett, D., Bica, I., and van~der Schaar, M.
\newblock Strictly batch imitation learning by energy-based distribution matching.
\newblock In \emph{Advances in Neural Information Processing Systems}, volume~33, pp.\  7354--7365. Curran Associates, 2020.

\bibitem[Kim et~al.(2022)Kim, Seo, Lee, Jeon, Hwang, Yang, and Kim]{kim2022demodice}
Kim, G.-H., Seo, S., Lee, J., Jeon, W., Hwang, H., Yang, H., and Kim, K.-E.
\newblock {DemoDICE}: Offline imitation learning with supplementary imperfect demonstrations.
\newblock In \emph{International Conference on Learning Representations}, 2022.

\bibitem[Klein et~al.(2012{\natexlab{a}})Klein, Geist, and Pietquin]{klein2011batch}
Klein, E., Geist, M., and Pietquin, O.
\newblock Batch, off-policy and model-free apprenticeship learning.
\newblock In \emph{Recent Advances in Reinforcement Learning}, pp.\  285--296. Springer Berlin Heidelberg, 2012{\natexlab{a}}.

\bibitem[Klein et~al.(2012{\natexlab{b}})Klein, Geist, Piot, and Pietquin]{klein2012inverse}
Klein, E., Geist, M., Piot, B., and Pietquin, O.
\newblock Inverse reinforcement learning through structured classification.
\newblock In \emph{Advances in Neural Information Processing Systems}, volume~25, pp.\  1007--1015. Curran Associates, 2012{\natexlab{b}}.

\bibitem[Kostrikov et~al.(2020)Kostrikov, Nachum, and Tompson]{kostrikov2020imitation}
Kostrikov, I., Nachum, O., and Tompson, J.
\newblock Imitation learning via off-policy distribution matching.
\newblock In \emph{International Conference on Learning Representations}, 2020.

\bibitem[Lee et~al.(2019)Lee, Srinivasan, and Doshi-Velez]{lee2019truly}
Lee, D., Srinivasan, S., and Doshi-Velez, F.
\newblock Truly batch apprenticeship learning with deep successor features.
\newblock In \emph{Proceedings of the 28th International Joint Conference on Artificial Intelligence}, pp.\  5909--5915, 2019.

\bibitem[Lee et~al.(2021)Lee, Jeon, Lee, Pineau, and Kim]{lee2021optidice}
Lee, J., Jeon, W., Lee, B., Pineau, J., and Kim, K.-E.
\newblock {OptiDICE}: Offline policy optimization via stationary distribution correction estimation.
\newblock In \emph{Proceedings of the 38th International Conference on Machine Learning}, volume 139, pp.\  6120--6130. PMLR, 2021.

\bibitem[Li et~al.(2023)Li, Xu, Qin, Yu, and Luo]{li2023imitation}
Li, Z., Xu, T., Qin, Z., Yu, Y., and Luo, Z.-Q.
\newblock Imitation learning from imperfection: Theoretical justifications and algorithms.
\newblock In \emph{The 37th Conference on Neural Information Processing Systems}, 2023.

\bibitem[Mandlekar et~al.(2021)Mandlekar, Xu, Wong, Nasiriany, Wang, Kulkarni, Fei-Fei, Savarese, Zhu, and Mart\'{i}n-Mart\'{i}n]{robomimic2021}
Mandlekar, A., Xu, D., Wong, J., Nasiriany, S., Wang, C., Kulkarni, R., Fei-Fei, L., Savarese, S., Zhu, Y., and Mart\'{i}n-Mart\'{i}n, R.
\newblock What matters in learning from offline human demonstrations for robot manipulation.
\newblock In \emph{Conference on Robot Learning (CoRL)}, 2021.

\bibitem[Nakamoto et~al.(2023)Nakamoto, Zhai, Singh, Ma, Finn, Kumar, and Levine]{nakamoto2023cal}
Nakamoto, M., Zhai, Y., Singh, A., Ma, Y., Finn, C., Kumar, A., and Levine, S.
\newblock {Cal-QL}: Calibrated offline rl pre-training for efficient online fine-tuning.
\newblock In \emph{The 37th Conference on Neural Information Processing Systems}, 2023.

\bibitem[Piot et~al.(2014)Piot, Geist, and Pietquin]{piot2014boosted}
Piot, B., Geist, M., and Pietquin, O.
\newblock Boosted and reward-regularized classification for apprenticeship learning.
\newblock In \emph{Proceedings of the 13th International Conference on Autonomous Agents and Multi-Agent Systems}, pp.\  1249–1256, 2014.

\bibitem[Pomerleau(1988)]{pomerleau1988alvinn}
Pomerleau, D.~A.
\newblock {ALVINN}: An autonomous land vehicle in a neural network.
\newblock In \emph{Advances in Neural Information Processing Systems}, volume~1, pp.\  305--313. Morgan Kaufmann, 1988.

\bibitem[Rajaraman et~al.(2020)Rajaraman, Yang, Jiao, and Ramchandran]{rajaraman2020toward}
Rajaraman, N., Yang, L., Jiao, J., and Ramchandran, K.
\newblock Toward the fundamental limits of imitation learning.
\newblock In \emph{Advances in Neural Information Processing Systems}, volume~33, pp.\  2914--2924. Curran Associates, 2020.

\bibitem[Rajeswaran et~al.(2017)Rajeswaran, Kumar, Gupta, Vezzani, Schulman, Todorov, and Levine]{rajeswaran2017learning}
Rajeswaran, A., Kumar, V., Gupta, A., Vezzani, G., Schulman, J., Todorov, E., and Levine, S.
\newblock Learning complex dexterous manipulation with deep reinforcement learning and demonstrations.
\newblock \emph{arXiv preprint arXiv:1709.10087}, 2017.

\bibitem[Rashidinejad et~al.(2021)Rashidinejad, Zhu, Ma, Jiao, and Russell]{rashidinejad2021bridging}
Rashidinejad, P., Zhu, B., Ma, C., Jiao, J., and Russell, S.
\newblock Bridging offline reinforcement learning and imitation learning: A tale of pessimism.
\newblock In \emph{Advances in Neural Information Processing Systems}, volume~34, pp.\  11702--11716. Curran Associates, 2021.

\bibitem[Sasaki \& Yamashina(2021)Sasaki and Yamashina]{sasaki2021behavioral}
Sasaki, F. and Yamashina, R.
\newblock Behavioral cloning from noisy demonstrations.
\newblock In \emph{International Conference on Learning Representations}, 2021.

\bibitem[Sutton \& Barto(2018)Sutton and Barto]{sutton2018reinforcement}
Sutton, R.~S. and Barto, A.~G.
\newblock \emph{Reinforcement learning: An introduction}.
\newblock MIT press, 2018.

\bibitem[Sutton et~al.(1999)Sutton, Precup, and Singh]{sutton1999between}
Sutton, R.~S., Precup, D., and Singh, S.
\newblock Between mdps and semi-mdps: A framework for temporal abstraction in reinforcement learning.
\newblock \emph{Artificial intelligence}, 112\penalty0 (1-2):\penalty0 181--211, 1999.

\bibitem[Swamy et~al.(2021)Swamy, Choudhury, Bagnell, and Wu]{swamy2021moments}
Swamy, G., Choudhury, S., Bagnell, J.~A., and Wu, S.
\newblock Of moments and matching: A game-theoretic framework for closing the imitation gap.
\newblock In \emph{Proceedings of the 38th International Conference on Machine Learning}, volume 139, pp.\  10022--10032. PMLR, 2021.

\bibitem[Watson et~al.(2024)Watson, Huang, and Heess]{watson2023coherent}
Watson, J., Huang, S., and Heess, N.
\newblock Coherent soft imitation learning.
\newblock In \emph{Advances in Neural Information Processing Systems}, volume~36, pp.\  14540--14583. Curran Associates, 2024.

\bibitem[Wu et~al.(2019)Wu, Charoenphakdee, Bao, Tangkaratt, and Sugiyama]{wu2019imitation}
Wu, Y.-H., Charoenphakdee, N., Bao, H., Tangkaratt, V., and Sugiyama, M.
\newblock Imitation learning from imperfect demonstration.
\newblock In Chaudhuri, K. and Salakhutdinov, R. (eds.), \emph{Proceedings of the 36th International Conference on Machine Learning}, volume~97, pp.\  6818--6827. PMLR, 2019.

\bibitem[Xu et~al.(2022)Xu, Zhan, Yin, and Qin]{xu2022discriminator}
Xu, H., Zhan, X., Yin, H., and Qin, H.
\newblock Discriminator-weighted offline imitation learning from suboptimal demonstrations.
\newblock In \emph{Proceedings of the 39th International Conference on Machine Learning}, volume 162, pp.\  24725--24742. PMLR, 2022.

\bibitem[Xu et~al.(2021)Xu, Li, Yu, and Luo]{xu2021generalization}
Xu, T., Li, Z., Yu, Y., and Luo, Z.-Q.
\newblock On generalization of adversarial imitation learning and beyond.
\newblock \emph{arXiv preprint arXiv:2106.10424}, 2021.

\bibitem[Yu et~al.(2022)Yu, Kumar, Chebotar, Hausman, Finn, and Levine]{yu2022how}
Yu, T., Kumar, A., Chebotar, Y., Hausman, K., Finn, C., and Levine, S.
\newblock How to leverage unlabeled data in offline reinforcement learning.
\newblock In \emph{Proceedings of the 39th International Conference on Machine Learning}, volume 162, pp.\  25611--25635. PMLR, 2022.

\bibitem[Yue et~al.(2023)Yue, Wang, Shao, Zhang, Lin, Ren, and Zhang]{yue2023clare}
Yue, S., Wang, G., Shao, W., Zhang, Z., Lin, S., Ren, J., and Zhang, J.
\newblock {CLARE}: Conservative model-based reward learning for offline inverse reinforcement learning.
\newblock In \emph{International Conference on Learning Representations}, 2023.

\bibitem[Yue et~al.(2024)Yue, Deng, Wang, Ren, and Zhang]{yue2024federated}
Yue, S., Deng, Y., Wang, G., Ren, J., and Zhang, Y.
\newblock Federated offline reinforcement learning with proximal policy evaluation.
\newblock \emph{Chinese Journal of Electronics}, 33\penalty0 (6):\penalty0 1--13, 2024.

\bibitem[Zeng et~al.(2022)Zeng, Li, Garcia, and Hong]{zeng2022maximum}
Zeng, S., Li, C., Garcia, A., and Hong, M.
\newblock Maximum-likelihood inverse reinforcement learning with finite-time guarantees.
\newblock In \emph{Advances in Neural Information Processing Systems}, volume~35, pp.\  10122--10135. Curran Associates, 2022.

\bibitem[Zeng et~al.(2023)Zeng, Li, Garcia, and Hong]{zeng2023demonstrations}
Zeng, S., Li, C., Garcia, A., and Hong, M.
\newblock When demonstrations meet generative world models: A maximum likelihood framework for offline inverse reinforcement learning.
\newblock In \emph{The 37th Conference on Neural Information Processing Systems}, 2023.

\bibitem[Zolna et~al.(2020)Zolna, Novikov, Konyushkova, Gulcehre, Wang, Aytar, Denil, de~Freitas, and Reed]{zolna2020offline}
Zolna, K., Novikov, A., Konyushkova, K., Gulcehre, C., Wang, Z., Aytar, Y., Denil, M., de~Freitas, N., and Reed, S.
\newblock Offline learning from demonstrations and unlabeled experience.
\newblock In \emph{NeurIPS Workshop on Offline Reinforcement Learning}, 2020.

\end{thebibliography}
\bibliographystyle{icml2024}

\clearpage
\appendix
\onecolumn

\section{Experimental Setup}
\label{sec:expertimental_details}

In this section, we present full experimental details for reproducibility. 

\subsection{Benchmarks}
\label{sec:benchmarks}

We evaluate our method on a number of environments (Robomimic, MuJoCo, Adroit, FrankaKitchen, and AntMaze) which are widely used in prior studies~\citep{nakamoto2023cal,watson2023coherent}. We elaborate in what follows.
\begin{itemize}[leftmargin=*,itemsep=0pt,topsep=0pt]
	\item \textbf{Vision-based Robomimic.} The Robomimic tasks (\texttt{lift}, \texttt{can}, \texttt{square}) involve controlling a 7-DoF simulated hand robot~\citep{robomimic2021}, with pixelized observations as shown in \cref{fig:robomimic}. The robot is tasked with lifting objects, picking and placing cans, and picking up a square nut to place it on a rod from random initializations. 
	\begin{figure}[ht]
		\centering  \includegraphics[width=0.55\linewidth]{./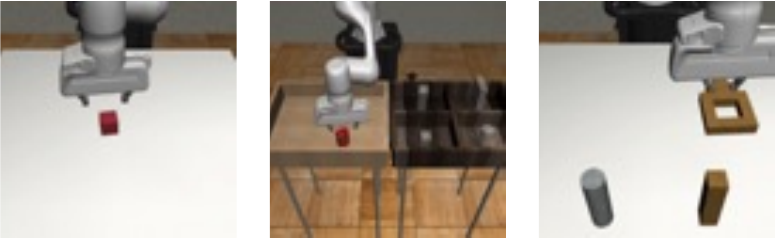}
		\vskip -0.1in
		\caption{Observations of vision-based Robomimic tasks. From left to right: \texttt{lift}, \texttt{can}, \texttt{square}.}
		\vskip -0.1in
		\label{fig:robomimic}
	\end{figure}
	
	\item \textbf{Vision-based MuJoCo.} The MuJoCo locomotion tasks (\texttt{ant}, \texttt{hopper}, \texttt{halfcheetah}, \texttt{walker2d}) are popular benchmarks used in existing work. In addition to the standard setting, we also consider vision-based MuJoCo tasks which uses the image observation as input (see \cref{fig:mujoco_vision}).
	\begin{figure}[ht]
		\centering  \includegraphics[width=0.7\linewidth]{./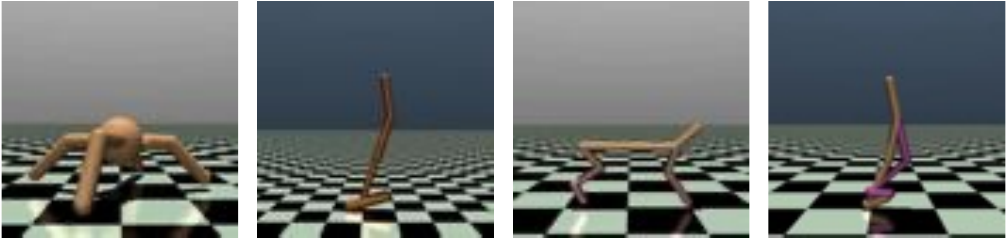}
		\vskip -0.1in
		\caption{Observations of vision-based MuJoCo tasks. From left to right: \texttt{ant}, \texttt{hopper}, \texttt{halfcheetah}, \texttt{walker2d}.}
		\vskip -0.1in
		\label{fig:mujoco_vision}
	\end{figure}
	
	\item \textbf{Adroit.} The Adroit tasks (\texttt{hammer}, \texttt{door}, \texttt{pen}, and \texttt{relocate}) \cite{rajeswaran2017learning} involve controlling a  28-DoF hand with five fingers tasked with hammering a nail, opening a door, twirling a pen, or picking up and moving a ball. 
	\begin{figure}[ht]
		\centering  \includegraphics[width=0.7\linewidth]{./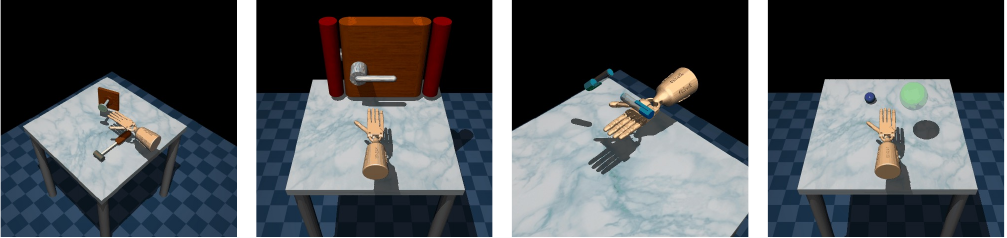}
		\vskip -0.1in
		\caption{Adroit tasks: \texttt{hammer}, \texttt{door}, \texttt{pen}, and \texttt{relocate} (from left to right).}
		\vskip -0.1in
		\label{fig:adroit}
	\end{figure}
	
	\item \textbf{FrankaKitchen.} The FrankaKitchen tasks (\texttt{complete}, \texttt{partial}, \texttt{undirect}), proposed by \citet{gupta2019relay}, involve controlling a 9-DoF Franka robot in a kitchen environment containing several common household items: a microwave, a kettle, an overhead light, cabinets, and an oven. The goal of each task is to interact with the items to reach a desired goal configuration.
	In the \texttt{undirect} task, the robot requires opening the microwave. In the \texttt{partial} task, the robot must first open the microwave and subsequently move the kettle. In the \texttt{complete} task, the robot needs to open the microwave, move the kettle, flip the light switch, and slide open the cabinet door sequentially (see \cref{fig:kitchen}). These tasks are especially challenging since they require composing parts of trajectories, precise long-horizon manipulation, and handling human-provided teleoperation data.
	\begin{figure}[t]
		\centering  \includegraphics[width=0.95\linewidth]{./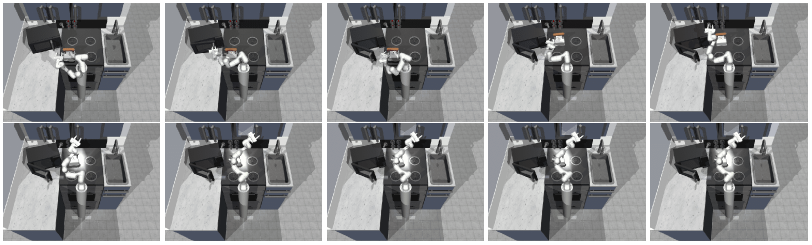}
		\vskip -0.1in
		\caption{Visualized success for opening the microwave, moving the kettle, turning on the light switch, and sliding the slider.}
		\label{fig:kitchen}
	\end{figure}
	
	\item \textbf{AntMaze.} The AntMaze tasks require controlling an 8-Degree of Freedom (DoF) quadruped robot to move from a starting point to a fixed goal location~\citep{fu2020d4rl}. 
	Three maze layouts (\texttt{umaze}, \texttt{medium}, and \texttt{large}) are provided from small to large.
	\begin{figure}[ht]
		\centering  \includegraphics[width=0.5\linewidth]{./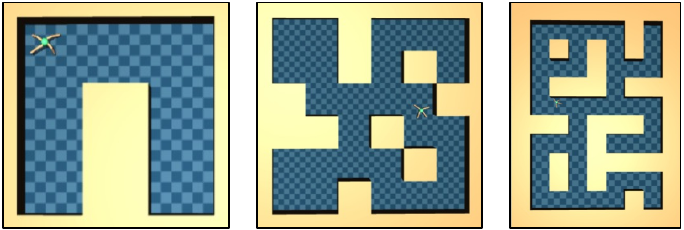}
		\vskip -0.1in
		\caption{AntMaze with three maze layouts, \texttt{umaze}, \texttt{medium}, and \texttt{large} (from left to right).}
		\label{fig:antmaze_layout}
		\vskip -0.1in
	\end{figure}
\end{itemize}


Detailed information about the environments including observation space, action space, and expert performance is provided in \cref{tab:env_performance,tab:env_image_performance}, where expert and random scores are averaged over 1000 episodes.

\begin{table}[ht]
	\vskip -0.2in
	\centering
	\caption{Details of continuous-control tasks.}
	\label{tab:env_performance}
	\vskip 0.1in
	{\small \begin{threeparttable}
	\begin{tabular}{l|c|c|r|r}
		Task & State dim. & Action dim. & \texttt{random}\tnote{*} & \texttt{expert}\tnote{*} \\ 
		\hline
		\texttt{ant} & $27$ & $8$ & $-325.60$ & $3879.70$ \\
		\texttt{halfcheetah} & $17$ & $6$ & $-280.18$ & $12135.00$ \\
		\texttt{hopper} & $11$ & $3$ & $-20.27$ & $3234.30$ \\
		\texttt{walker2d} & $17$ & $6$ & $1.63$ & $4592.30$ \\
		\texttt{antmaze} & $27$ & $8$ & $0.00$ & $1.00$ \\
		\texttt{door} & $39$ & $28$ & $-56.51$ & $2880.57$ \\
		\texttt{hammer} & $46$ & $26$ & $-274.86$ & $12794.13$ \\
		\texttt{pen} & $45$ & $24$ & $96.26$ & $3076.83$ \\
		\texttt{relocate} & $39$ & $30$ & $-6.43$ & $4233.88$ \\
		\texttt{FrankaKitchen} & $59$ & $9$ & $0.00$ & $1.00 $ \\
		\hline
	\end{tabular}
        {\scriptsize 
	\begin{tablenotes}
		\item[*] Average scores over 1000 trajectories of \texttt{expert} and \texttt{random}.
	\end{tablenotes}}
	\end{threeparttable}}
	\vskip -0.2in
\end{table}

\begin{table}[ht]
	\vskip -0.15in
	\centering
	\begin{minipage}{.65\textwidth}
	\centering
	\caption{Details of vision-based tasks.}
	\label{tab:env_image_performance}
	\vskip 0.1in
	{\small\begin{tabular}{l|c|c|r|r}
		Task & \multicolumn{1}{l|}{State dim.} & Action dim. & \texttt{random} & \texttt{expert} \\ 
		\hline
		\texttt{ant} & ($84\times84$) & $8$ & $-325.60$ & $3879.70$ \\
		\texttt{halfcheetah} & ($84\times84$) & $6$ & $-280.18$ & $12135.00$ \\
		\texttt{hopper} & ($84\times84$) & $3$ & $-20.27$ & $3234.30$ \\
		\texttt{walker2d} & ($84\times84$) & $6$ & $1.63$ & $4592.30$ \\
		\texttt{lift} & ($84\times84$) & $7$ & $0.00$ & $1.00$ \\
		\texttt{can} & ($84\times84$) & $7$ & $0.00$ & $1.00$ \\
		\texttt{square} & ($84\times84$) & $7$ & $0.00$ & $1.00$ \\
		\hline
	\end{tabular}}
	\end{minipage}
	\begin{minipage}{.3\textwidth}
		\centering
		\caption{Hyperparameters across tasks.} 
		\label{table:parameter}
		\vskip 0.1in
		{\small \begin{tabular}{l|l} 
			Hyperparameter & Value \\ 
			\hline
			\# Neural net layers & 2 \\
			Optimizer & \texttt{Adam} \\
			Activation & \texttt{ReLU} \\
			Batchsize & 256 \\
			All learning rates& 1e-5 \\
			Threshold $\sigma$ & 0.2 \\
			Rollback $K$ & 20 \\
			\hline
		\end{tabular}}
	\end{minipage}
	\vskip -0.2in
\end{table}

\subsection{Datasets}
\label{sec:datasets}

We employ \texttt{D4RL}~\citep{fu2020d4rl} for AntMaze, MuJoCo, Adroit, and FrankaKitchen, and use \texttt{robomimic}~\citep{robomimic2021} for Robomimic. \cref{table:dataset_comparative,table:dataset_dataqualities} specify the data setup used for each task across experiments. Of note, we construct vision-based MuJoCo datasets using the same method as \citet{fu2020d4rl}: the expert and imperfect data use video samples from a policy trained to completion with \texttt{SAC}~\citep{haarnoja18soft} and a randomly initialized policy, respectively.

\subsection{Baselines}

We evaluate our method against six strong baseline methods in offline IL:
\emph{\textbf{1)} {B}ehavior {C}loning with {E}xpert Data} (\texttt{BCE}), the standard \texttt{BC} trained only on expert demonstrations; 
\emph{\textbf{2)} {B}ehavior {C}loning with {U}nion Data} (\texttt{BCU}), \texttt{BC} trained on union data;
\emph{\textbf{3)} {D}iscriminator-{W}eighted {B}ehavioral {C}loning} (\texttt{DWBC}) \citep{xu2022discriminator}, an offline IL method that leverages suboptimal demonstrations by jointly training a discriminator to re-weight the \texttt{BC} objective (\url{https://github.com/ryanxhr/DWBC});
\emph{\textbf{4)} {I}mportance-{S}ampling-{W}eighted {B}ehavioral {C}loning}  (\texttt{ISWBC}) \citep{li2023imitation}, an offline IL method that adopts importance sampling to enhance \texttt{BC} (\url{https://github.com/liziniu/ISWBC});
\emph{\textbf{5)} {C}oherent {S}oft {I}mitation {L}earning} (\texttt{CSIL}) \citep{watson2023coherent}, a model-free IRL method that learns a shaped reward function by entropy-regularized \texttt{BC} (\url{https://joemwatson.github.io/csil});
\emph{\textbf{6)} {M}aximum {L}ikelihood-{I}nverse {R}einforcement {L}earning} (\texttt{MLIRL}) \citep{zeng2023demonstrations}, a recent model-based offline IRL algorithm based on bi-level optimization (\url{https://github.com/Cloud0723/Offline-MLIRL}).

We implement and tune baseline methods based on their publicly available implementatinons with the same policy network structures. The tuned codes are included in the supplementary material.

\subsection{Implementation}

Our method is straightforward to implement and robust to hyperparameters (which are consistent across all benchamarks and settings). We represent the policy as a 2-layer feedforward neural network with 256 hidden units, \texttt{ReLU} activation functions, and Tanh Gaussian outputs. Analogously, the discriminators are represented as a 2-layer feedforward neurl net with 256 hidden units, \texttt{ReLU} activations with the output clipped to $[0.1,0.9]$. For vision-based tasks, we change the network architectures to a simple CNN, consisting of two convolutional layers, each with a $3\times3$ convolutional kernel and $2\times2$ max pooling. We adopt \texttt{Adam} as the optimizer. All learning rates and batchsizes are set to 1e-5 and 256, respectively. The thresholds $\sigma$ for identifying expert states is set to $0.2$, and the rollback step $K$ is set to 20. The hyperparameters are summarized in \cref{table:parameter}.

We implement our code using Pytorch 1.8.1, built upon the open-source framework of offline RL algorithms, provided at \url{https://github.com/tinkoff-ai/CORL} (under the Apache-2.0 License) and the implementation of DWBC, provided at \url{https://github.com/ryanxhr/DWBC} (under the MIT License).
All the experiments are run on Ubuntu 20.04.2 LTS with 8 NVIDIA GeForce RTX 4090 GPUs.

\begin{sidewaystable}
	\caption{Data used in the comparative experiment.}
	\label{table:dataset_comparative}
	\vskip 0.1in
	\centering
	\resizebox{\textwidth}{!}{
	{\small\begin{threeparttable}
	\begin{tabular}{l|l|l|l|l|c|l|cl}
		Domain & Dataset & Traj. length & Task & Expert data & \multicolumn{1}{l|}{\# Expert traj.} & Imperfect data & \multicolumn{1}{l}{\# Imperfect traj.} &  \\ 
		\cline{1-8}
		\multirow{4}{*}{MuJoCo} & \multirow{4}{*}{\texttt{D4RL}} & \multirow{4}{*}{$\le$\,1000} & \texttt{ant} & \texttt{ant-expert-v2} & 1 & \texttt{ant-random-v2} & 1000 &  \\
		&  &  & \texttt{hopper} & \texttt{hopper-expert-v2} & 1 & \texttt{hopper-random-v2} & 1000 &  \\
		&  &  & \texttt{halfcheetah} & \texttt{halfcheetah-expert-v2} & 1 & \texttt{halfcheetah-random-v2} & 1000 &  \\
		&  &  & \texttt{walker2d} & \texttt{walker2d-expert-v2} & 1 & \texttt{walker2d-random-v2} & 1000 &  \\ 
		\cline{1-8}
		\multirow{3}{*}{AntMaze} & \multirow{3}{*}{\texttt{D4RL}} & \multirow{3}{*}{$\le$\,100} & \texttt{umaze} & \texttt{antmaze-umaze-v0} & 1 & \texttt{antmaze-umaze-diverse-v0} & 1000 &  \\
		&  &  & \texttt{medium} & \texttt{antmaze-medium-v0} & 1 & \texttt{antmaze-medium-diverse-v0} & 1000 &  \\
		&  &  & \texttt{large} & \texttt{antmaze-large-v0} & 1 & \texttt{antmaze-large-diverse-v0} & 1000 &  \\ 
		\cline{1-8}
		\multirow{4}{*}{Adroit} & \multirow{4}{*}{\texttt{D4RL}} & \multirow{4}{*}{$\le$\,100} & \texttt{pen} & \texttt{pen-expert-v1} & 10 & \texttt{pen-cloned-v1} & 1000 &  \\
		&  &  & \texttt{hammer} & \texttt{hammer-expert-v1} & 10 & \texttt{hammer-cloned-v1} & 1000 &  \\
		&  &  & \texttt{door} & \texttt{door-expert-v1} & 10 & \texttt{door-cloned-v1} & 1000 &  \\
		&  &  & \texttt{relocate} & \texttt{relocate-expert-v1} & 10 & \texttt{relocate-cloned-v1} & 1000 &  \\ 
		\cline{1-8}
		\multirow{3}{*}{FrankaKitchen} & \multirow{3}{*}{\texttt{D4RL}} & \multirow{3}{*}{$\le$\,280} & \texttt{complete} & \texttt{kitchen-complete-v0} & 10 & \texttt{kitchen-mixed-v0} & 1000 &  \\
		&  &  & \texttt{partial} & \texttt{kitchen-partial-v0} & 10 & \texttt{kitchen-mixed-v0} & 1000 &  \\
		&  &  & \texttt{indirect} & \texttt{kitchen-partial-v0} & 10 & \texttt{kitchen-mixed-v0} & 1000 &  \\ 
		\cline{1-8}
		\multirow{3}{*}{Robomimic} & \multirow{3}{*}{\texttt{robomimic}} & \multirow{3}{*}{$\le$\,500} & \texttt{lift} & \texttt{lift-proficient-human} & 25 & \texttt{lift-paired-bad} & 1000 &  \\
		&  &  & \texttt{can-paired-bad} & \texttt{can-proficient-human} & 25 & \texttt{can-paired-bad} & 1000 &  \\
		&  &  & \texttt{square-paired-bad} & \texttt{square-proficient-human} & 25 & \texttt{square-paired-bad} & 1000 &  \\ 
		\cline{1-8}
		\multirow{4}{*}{MuJoCo (vision)} & \multirow{4}{*}{-} & \multirow{4}{*}{$\le$\,1000} & \texttt{ant} & \texttt{ant-expert-vision} & 25 & \texttt{ant-random-vision} & 1000 &  \\
		&  &  & \texttt{hopper} & \texttt{hopper-expert-vision} & 25 & \texttt{hopper-random-vision} & 1000 &  \\
		&  &  & \texttt{halfcheetah} & \texttt{halfcheetah-expert-vision} & 25 & \texttt{halfcheetah-random-vision} & 1000 &  \\
		&  &  & \texttt{walker2d} & \texttt{walker2d-expert-vision} & 25 & \texttt{walker2d-random-vision} & 1000 &  \\ 
		\cline{1-8}
		\end{tabular}
		{\scriptsize \begin{tablenotes}
			\item[1] In vision-based MuJoCo, we collect the \texttt{expert-vision} and \texttt{random-vision} data use video samples from a policy trained to completion with \texttt{SAC} and a randomly \\
			initialized policy, respectively.
		\end{tablenotes}}
		\end{threeparttable}}}
\end{sidewaystable}

\begin{sidewaystable}
	\caption{Data used in the experiment on varying data qualities.}
	\label{table:dataset_dataqualities}
	\vskip 0.1in
	\centering
	\resizebox{.85\textwidth}{!}{
	{\small \begin{tabular}{l|l|c|l|c|l|r}
		Task & Trajectory length & \multicolumn{1}{l|}{\# Expert trjectories} & Expert data & \multicolumn{1}{l|}{\# Imperfect trjectories} & Imperfect data & Score \\ 
		\hline
		\multirow{4}{*}{\texttt{ant}} & \multirow{4}{*}{$\le1000$} & \multirow{4}{*}{1} & \multirow{4}{*}{\texttt{ant-expert-v2}} & \multirow{4}{*}{1000} & \texttt{ant-random-v2} & 9.2 \\
		&  &  &  &  & \texttt{ant-medium-replay-v2} & 19.0 \\
		&  &  &  &  & \texttt{ant-medium-v2} & 80.3 \\
		&  &  &  &  & \texttt{ant-medium-expert-v2} & 90.1 \\ 
		\hline
		\multirow{4}{*}{\texttt{halfcheetah}} & \multirow{4}{*}{$\le1000$} & \multirow{4}{*}{1} & \multirow{4}{*}{\texttt{halfcheetah-expert-v2}} & \multirow{4}{*}{1000} & \texttt{ant-random-v2} & -0.1 \\
		&  &  &  &  & \texttt{ant-medium-replay-v2} & 7.3 \\
		&  &  &  &  & \texttt{ant-medium-v2} & 40.7 \\
		&  &  &  &  & \texttt{ant-medium-expert-v2} & 70.3 \\ 
		\hline
		\multirow{4}{*}{\texttt{hopper}} & \multirow{4}{*}{$\le1000$} & \multirow{4}{*}{1} & \multirow{4}{*}{\texttt{hopper-expert-v2}} & \multirow{4}{*}{1000} & \texttt{ant-random-v2} & 1.2 \\
		&  &  &  &  & \texttt{ant-medium-replay-v2} & 6.8 \\
		&  &  &  &  & \texttt{ant-medium-v2} & 44.1 \\
		&  &  &  &  & \texttt{ant-medium-expert-v2} & 72.0 \\ 
		\hline
		\multirow{4}{*}{\texttt{walker2d}} & \multirow{4}{*}{$\le1000$} & \multirow{4}{*}{1} & \multirow{4}{*}{\texttt{walker2d-expert-v2}} & \multirow{4}{*}{1000} & \texttt{ant-random-v2} & 0.0 \\
		&  &  &  &  & \texttt{ant-medium-replay-v2} & 13.0 \\
		&  &  &  &  & \texttt{ant-medium-v2} & 62.0 \\
		&  &  &  &  & \texttt{ant-medium-expert-v2} & 81.0 \\ 
		\hline
		\multirow{2}{*}{\texttt{hammer}} & \multirow{2}{*}{$\le100$} & \multirow{2}{*}{10} & \multirow{2}{*}{\texttt{hammer-expert-v1}} & \multirow{2}{*}{1000} & \texttt{pen-human-v1} & 2.7 \\
		&  &  &  &  & \texttt{pen-cloned-v1} & 0.5 \\ 
		\hline
		\multirow{2}{*}{\texttt{pen}} & \multirow{2}{*}{$\le100$} & \multirow{2}{*}{10} & \multirow{2}{*}{\texttt{pen-expert-v1}} & \multirow{2}{*}{1000} & \texttt{hammer-human-v1} & 2.1 \\
		&  &  &  &  & \texttt{hammer-cloned-v1} & 59.9 \\ 
		\hline
		\multirow{2}{*}{\texttt{door}} & \multirow{2}{*}{$\le100$} & \multirow{2}{*}{10} & \multirow{2}{*}{\texttt{door-expert-v1}} & \multirow{2}{*}{1000} & \texttt{door-human-v1} & 2.6 \\
		&  &  &  &  & \texttt{door-cloned-v1} & -0.1 \\ 
		\hline
		\multirow{2}{*}{\texttt{relocate}} & \multirow{2}{*}{$\le100$} & \multirow{2}{*}{10} & \multirow{2}{*}{\texttt{relocate-expert-v1}} & \multirow{2}{*}{1000} & \texttt{relocate-human-v1} & 2.3 \\
		&  &  &  &  & \texttt{relocate-cloned-v1} & -0.1 \\
		\hline
	\end{tabular}}}
\end{sidewaystable}

\clearpage

\section{Complete Experimental Results}

This section provides complete experimental results to answer the questions raised in \cref{sec:experiment}.

\subsection{Comparative Experiments}
\label{comparative_results}

To answer the first question, we evaluate \texttt{ILID}'s performance in each task using limited expert demonstrations and a set of low-quality imperfect data. For example, in the MuJoCo domain, we sample 1 \texttt{expert} trajectory and 1000 \texttt{random} trajectories from \texttt{D4RL} as the expert and imperfect data, respectively (refer to \cref{table:dataset_comparative} for the complete data setup). Comparative results are presented in \cref{tab:performance}, and learning curves are depicted in \cref{fig:convergence_all_1,fig:convergence_all_2}. We find \texttt{ILID} consistently outperforms baselines in \textbf{20/21} tasks often by a significant margin while enjoying fast and stabilized convergence. Due to limited state coverage of expert data and low quality of imperfect data, \texttt{BCE} and \texttt{BCU} fail to fulfill most of the tasks. This reveals \texttt{ILID}'s effectiveness in extracting and leveraging positive behaviors from imperfect demonstrations. \texttt{DWBC} and \texttt{ISWBC} exhibit similar performances, demonstrating relative success in MuJoCo but facing challenges in robotic manipulation and maze domains, which require precise long-horizon manipulation. This is because the similarity-based behavior selection confines their training data to the expert states with narrow coverage, rendering them prone to error compounding. In contrast, \texttt{ILID}, utilizing dynamics information, can \textit{stitch} parts of trajectories and empower the policy to recover from mistakes. In addition, the IRL methods struggle in high-dimensional environments owing to reward extrapolation and world model estimates.

\begin{figure}[ht]
    \centering
    \subfigure{
        \includegraphics[width=0.24\textwidth]{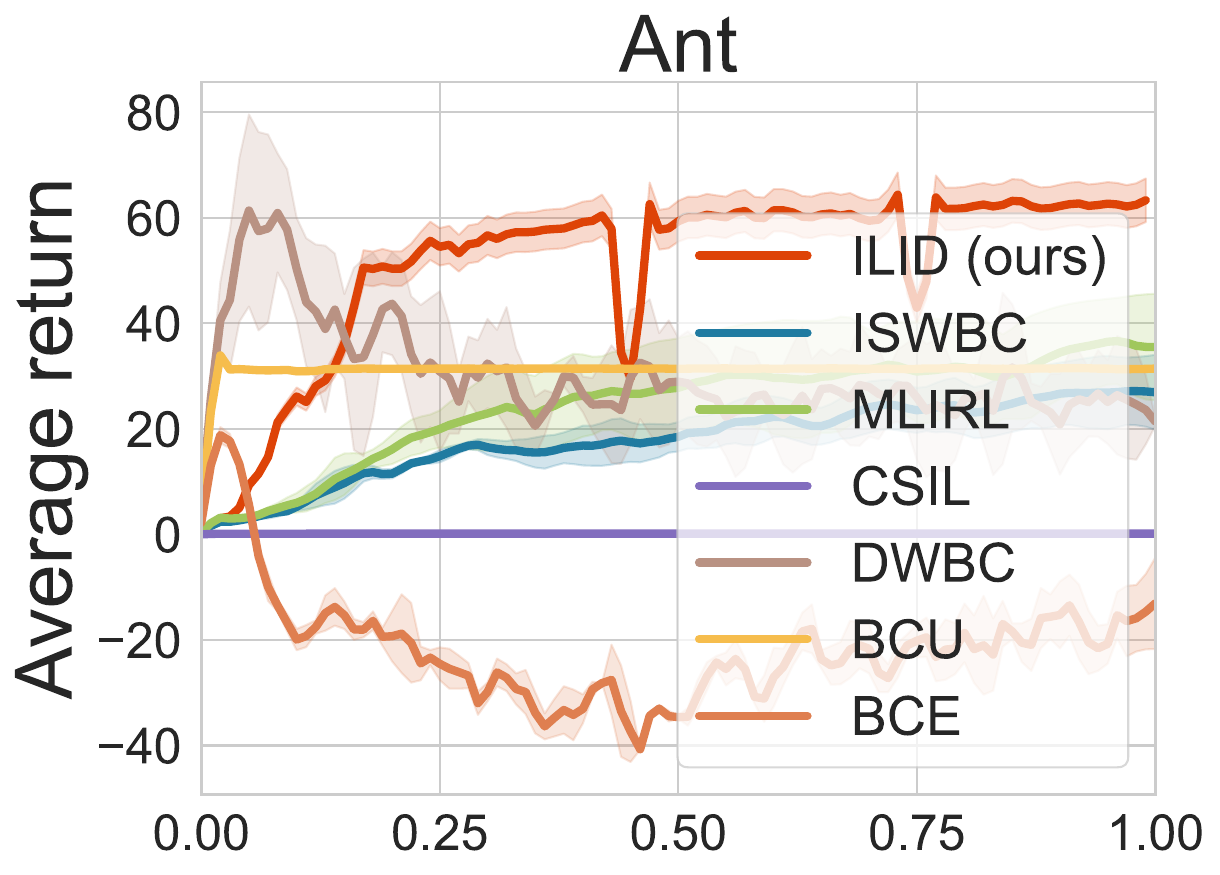}}
    \hspace{-4pt}
    \subfigure{
        \includegraphics[width=0.24\textwidth]{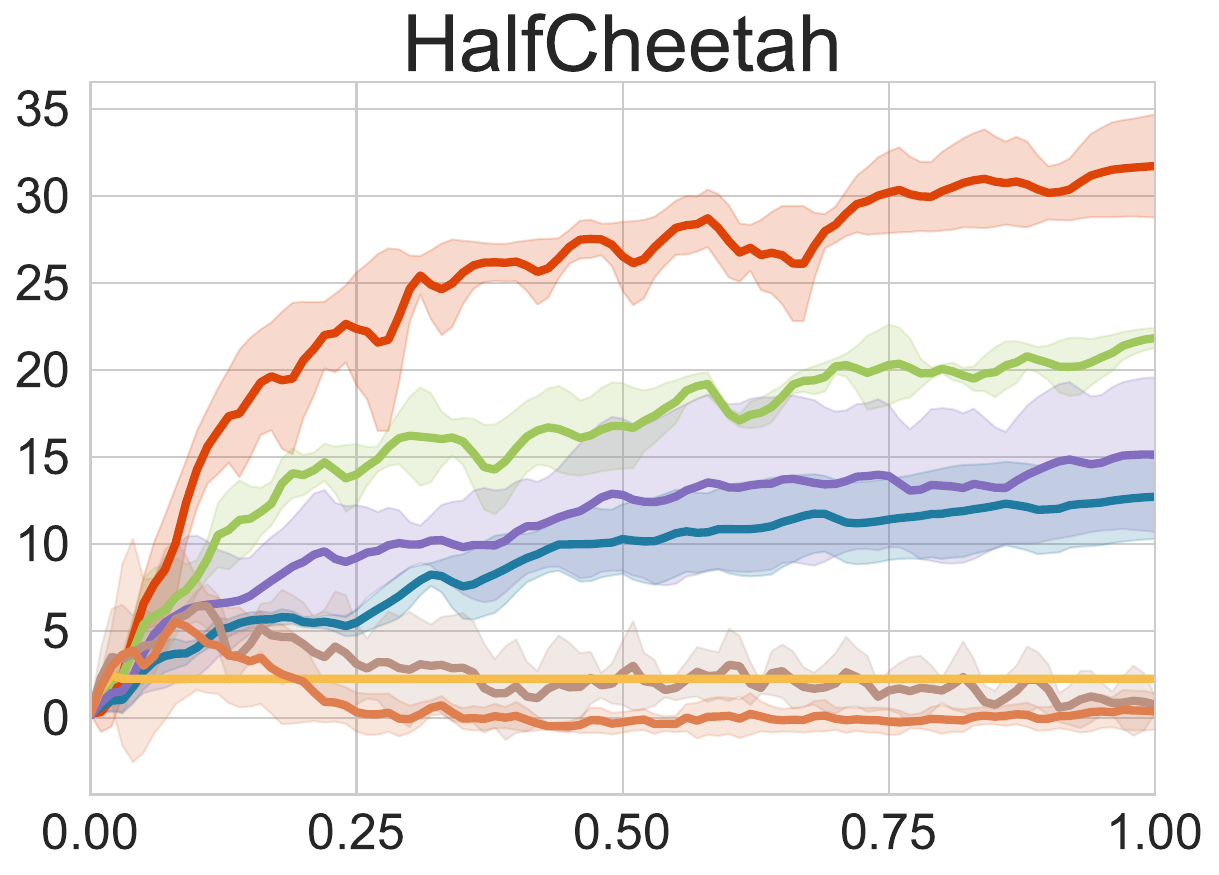}}
    \hspace{-4pt}
    \subfigure{
        \includegraphics[width=0.24\textwidth]{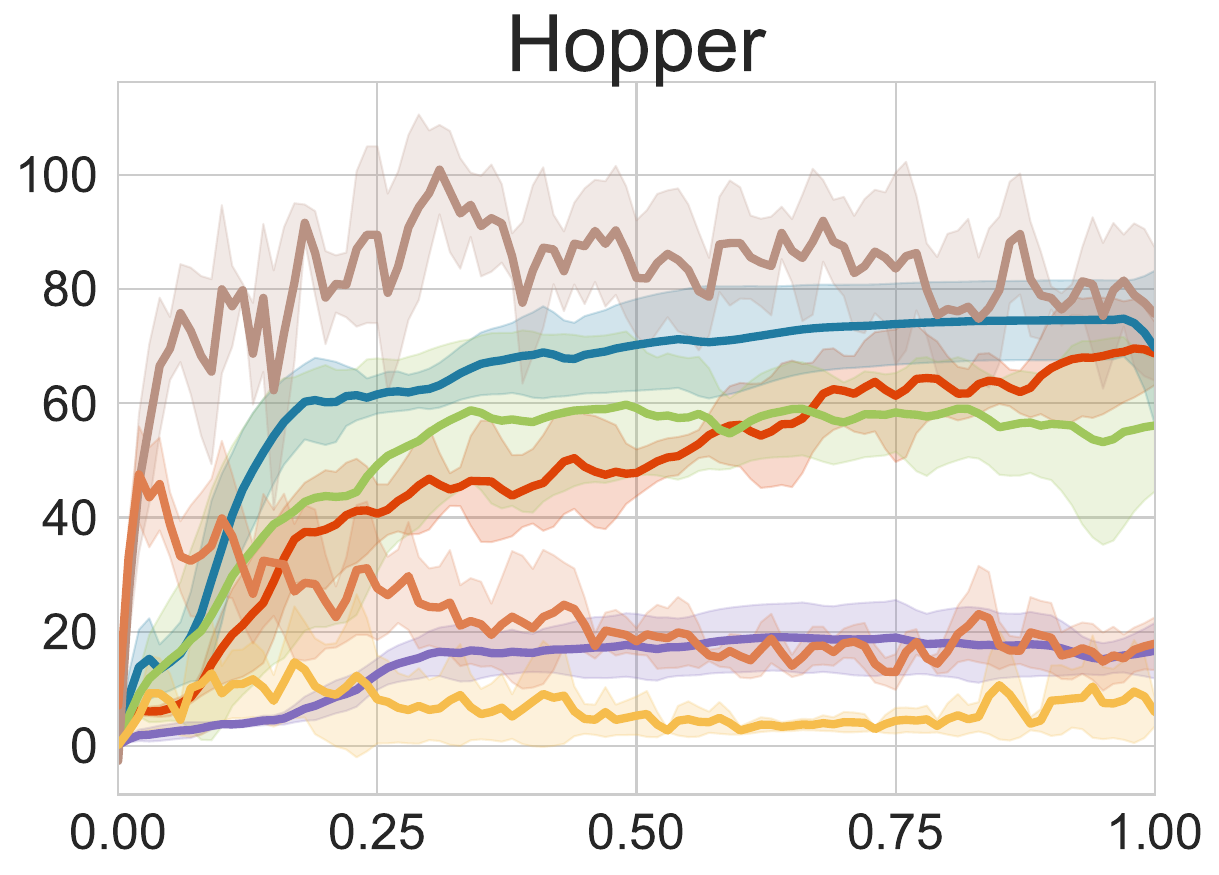}}
    \hspace{-4pt}
    \subfigure{
        \includegraphics[width=0.24\textwidth]{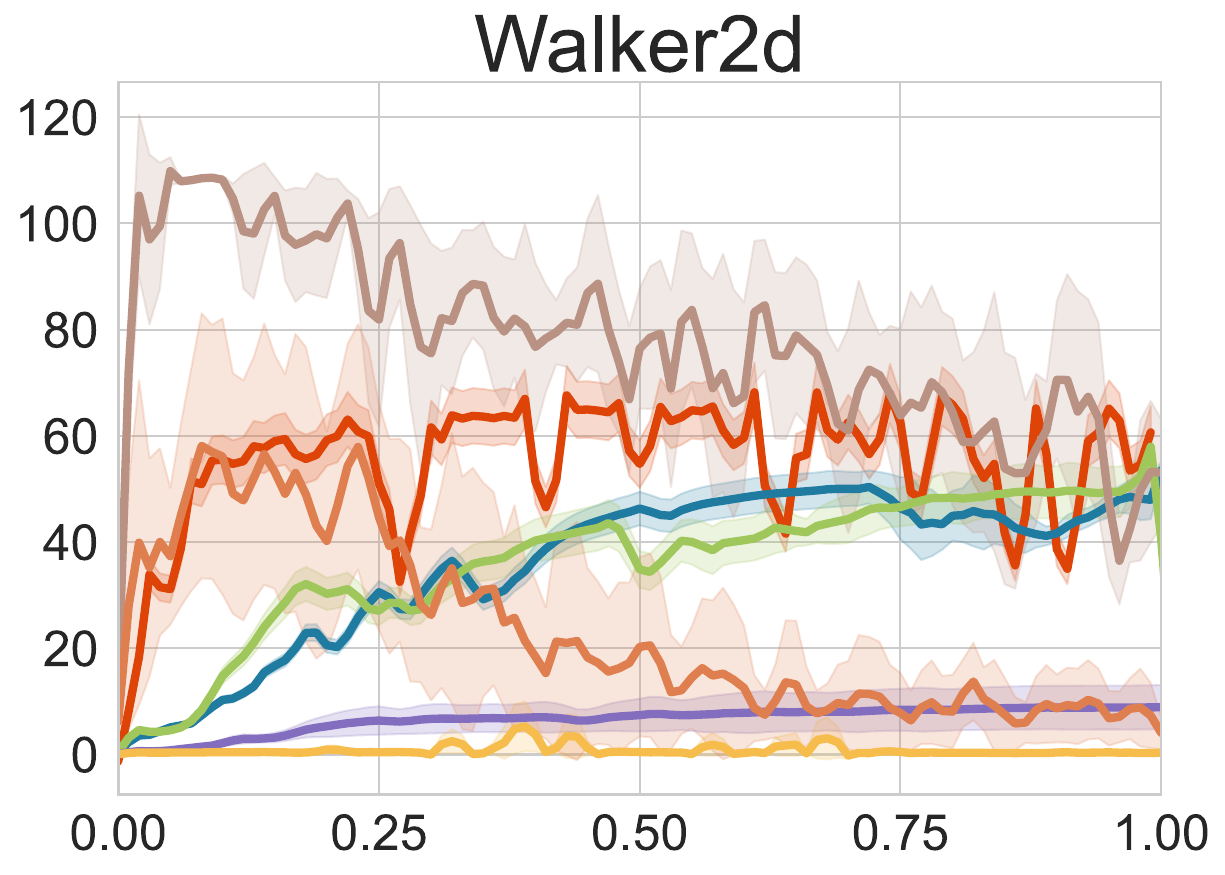}}
    
    \vspace{-10pt}
    \subfigure{
        \includegraphics[width=0.24\textwidth]{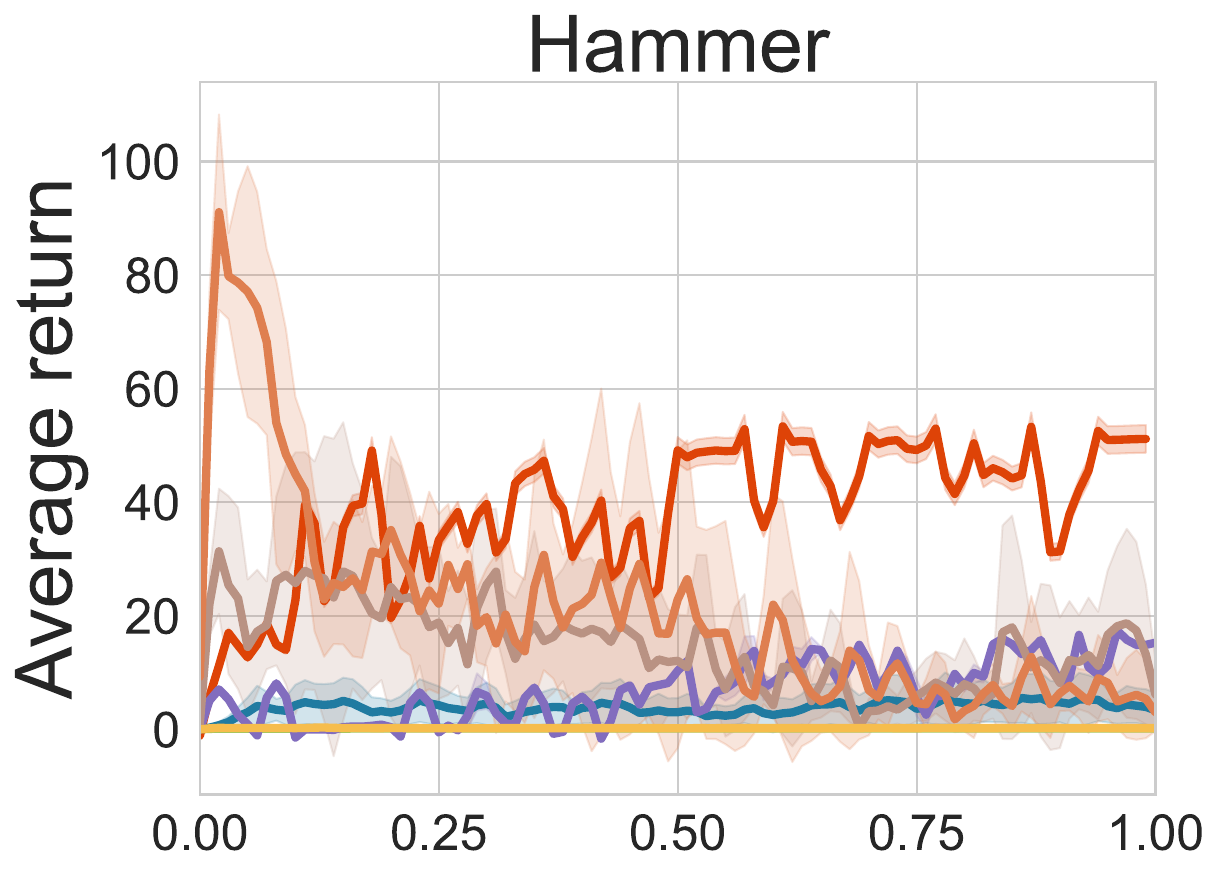}}
    \hspace{-4pt}
    \subfigure{
        \includegraphics[width=0.24\textwidth]{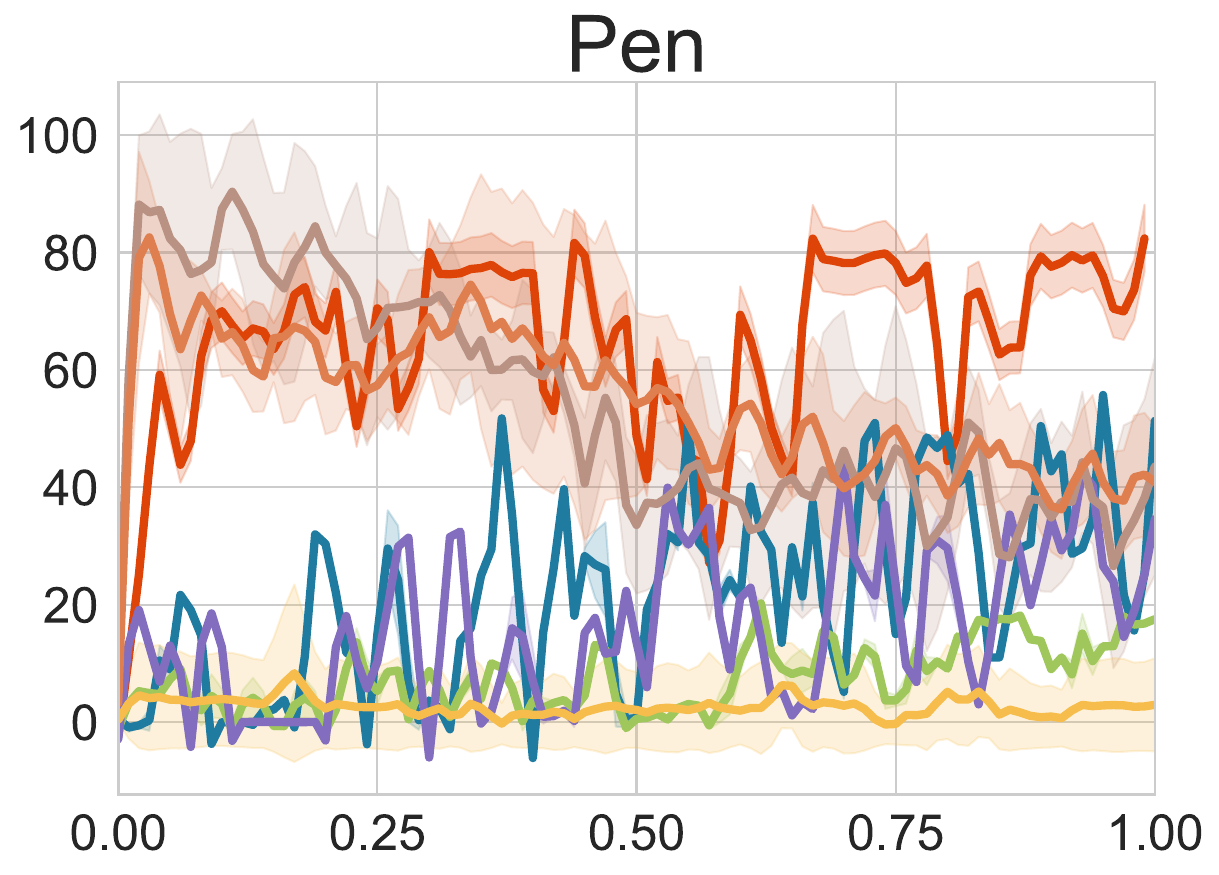}}
    \hspace{-4pt}
    \subfigure{
        \includegraphics[width=0.24\textwidth]{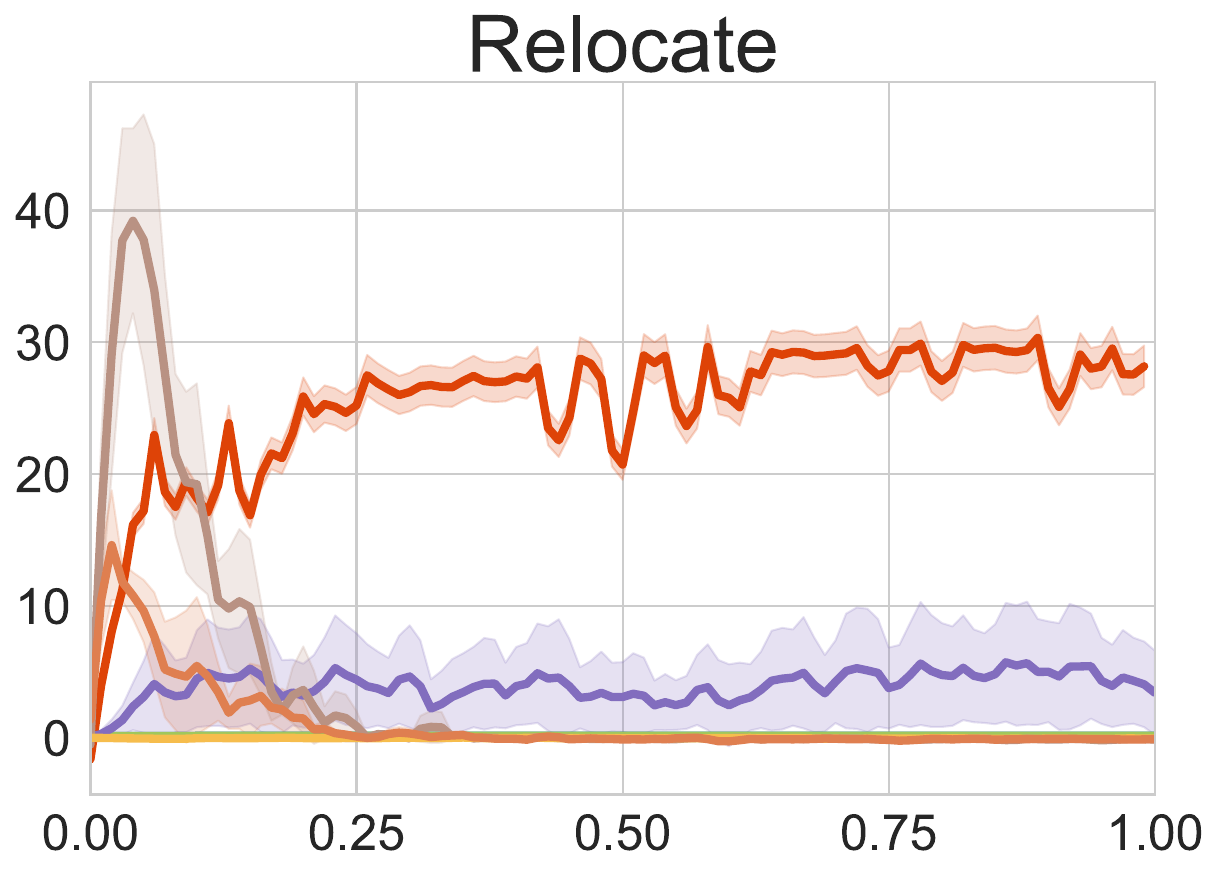}}
    \hspace{-4pt}
    \subfigure{
        \includegraphics[width=0.24\textwidth]{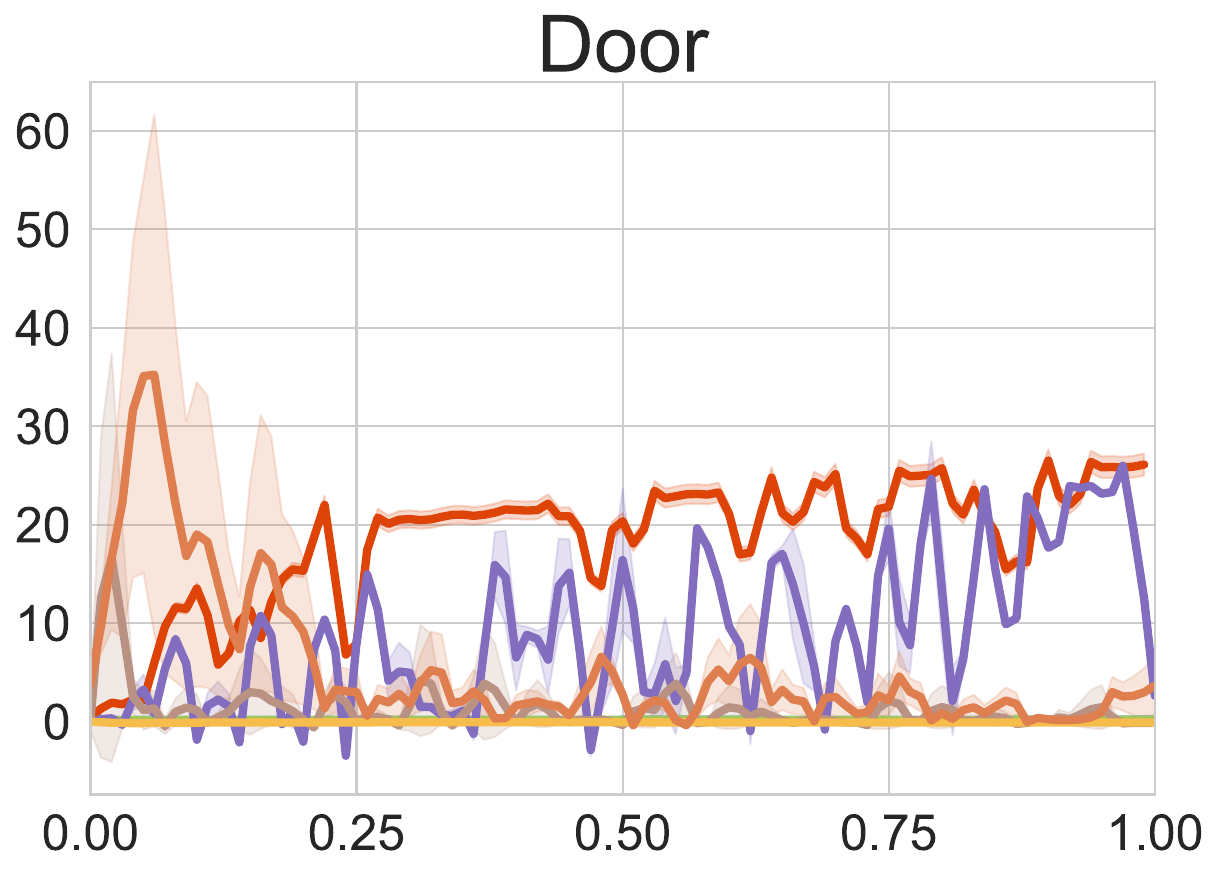}}
    
    \vspace{-10pt}
    \subfigure{
        \includegraphics[width=0.24\textwidth]{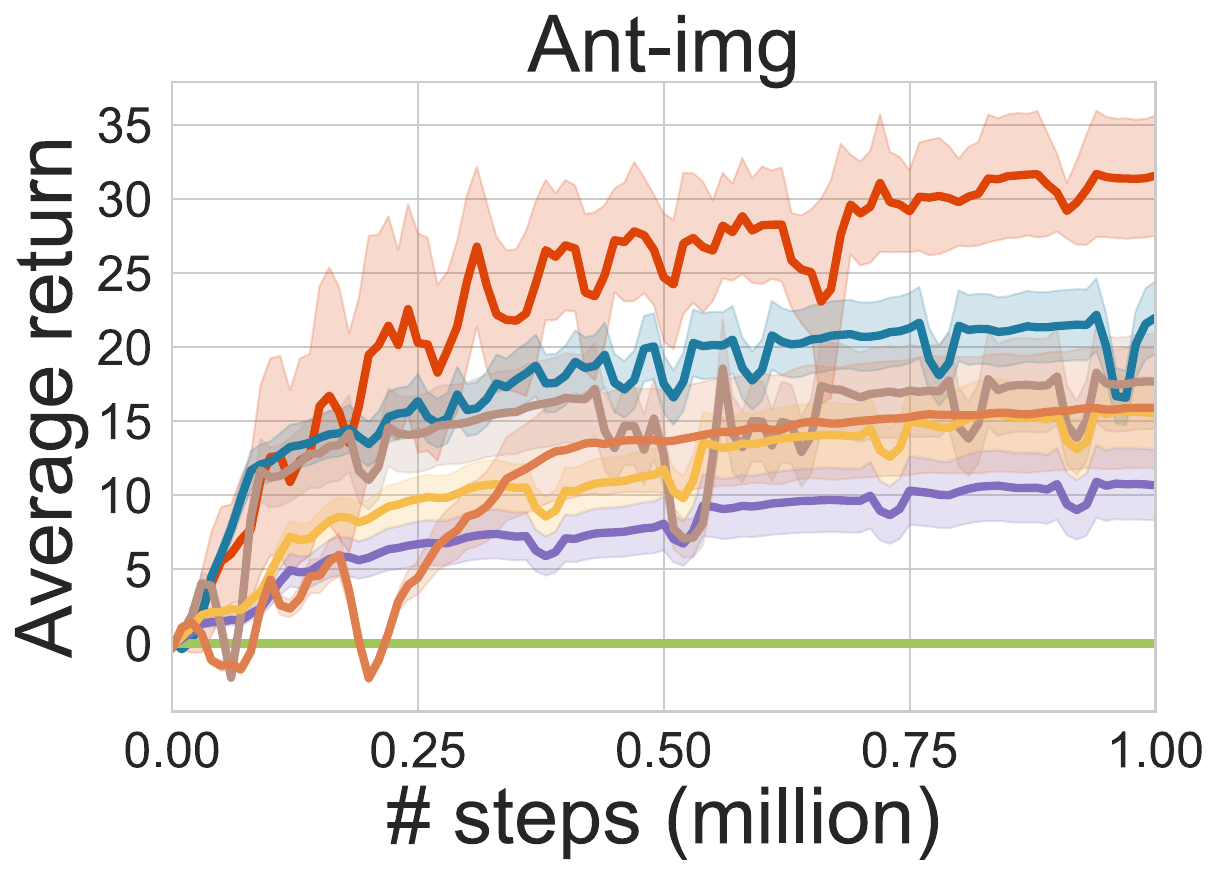}}
    \hspace{-4pt}
    \subfigure{
        \includegraphics[width=0.24\textwidth]{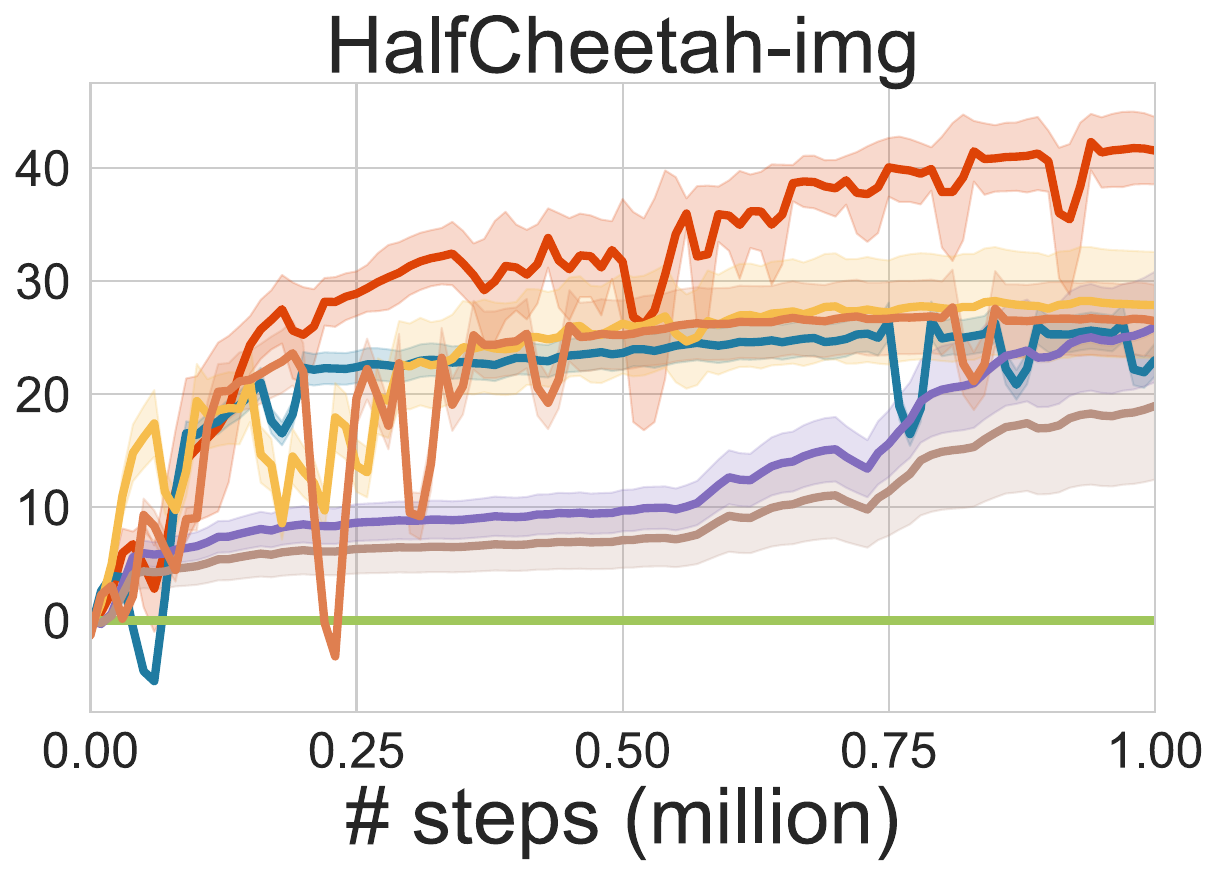}}
    \hspace{-4pt}
    \subfigure{
        \includegraphics[width=0.24\textwidth]{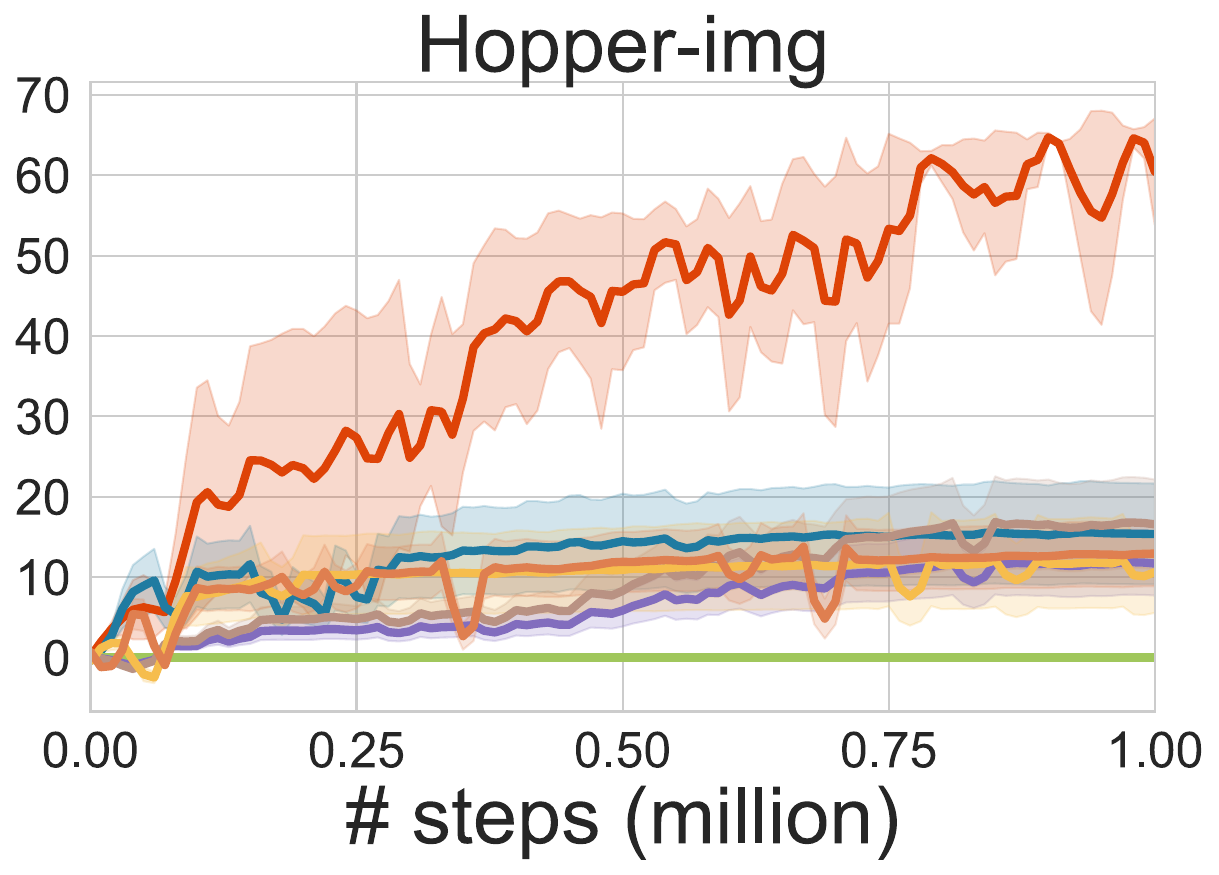}}
    \hspace{-4pt}
    \subfigure{
        \includegraphics[width=0.24\textwidth]{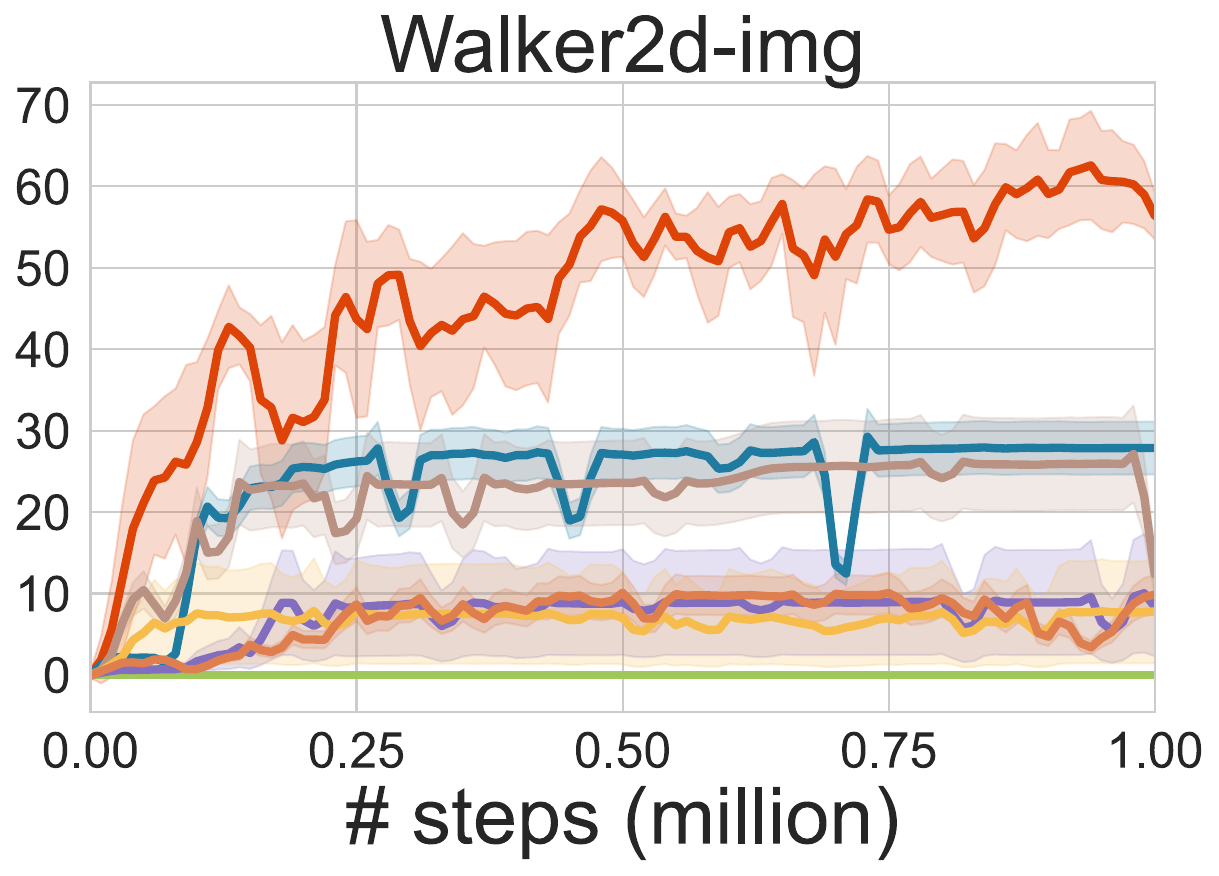}}
    \caption{Learning curves for \cref{tab:performance}.  `\texttt{-img}' represents vision-based MuJoCo tasks.}
    \label{fig:convergence_all_1}
\end{figure}

\begin{figure}[t]
    \centering
    \subfigure{
        \includegraphics[width=0.24\textwidth]{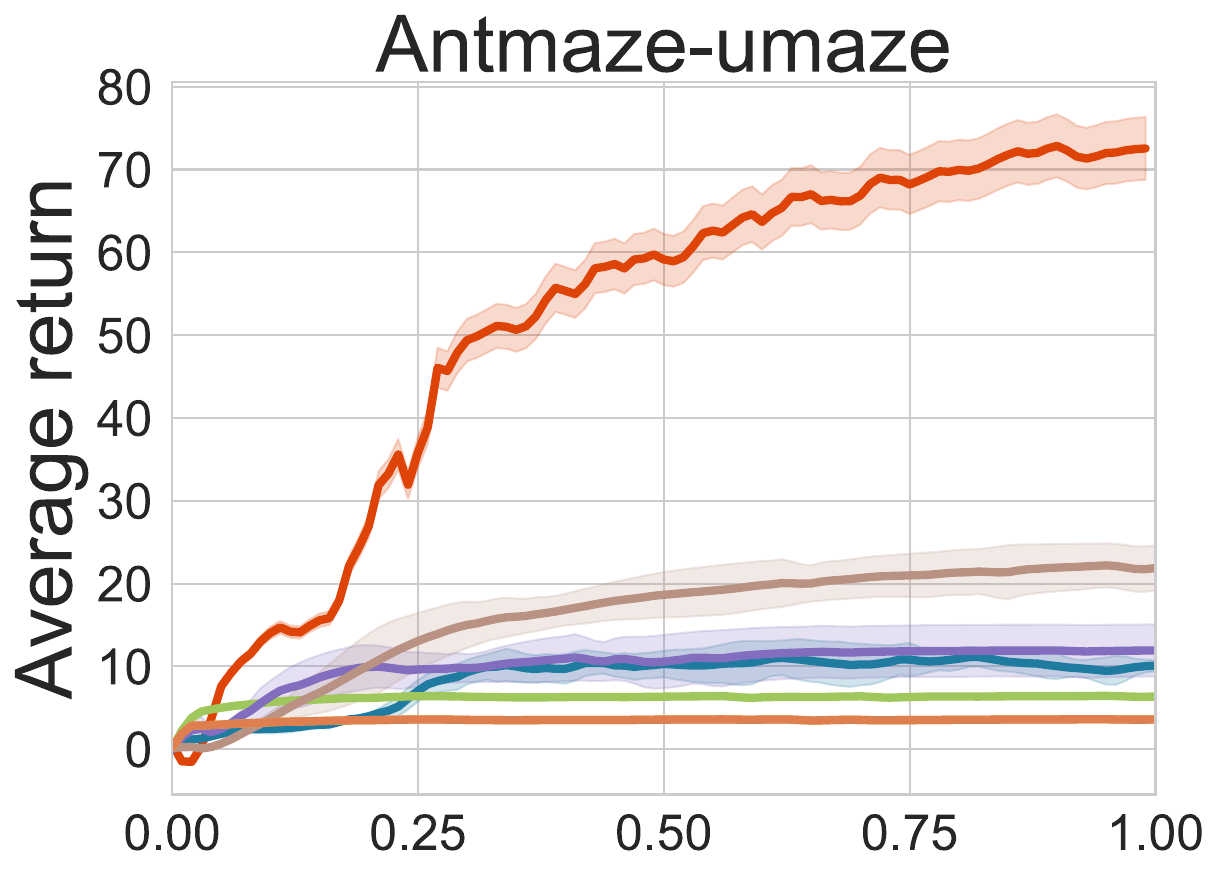}}
    \subfigure{
        \includegraphics[width=0.24\textwidth]{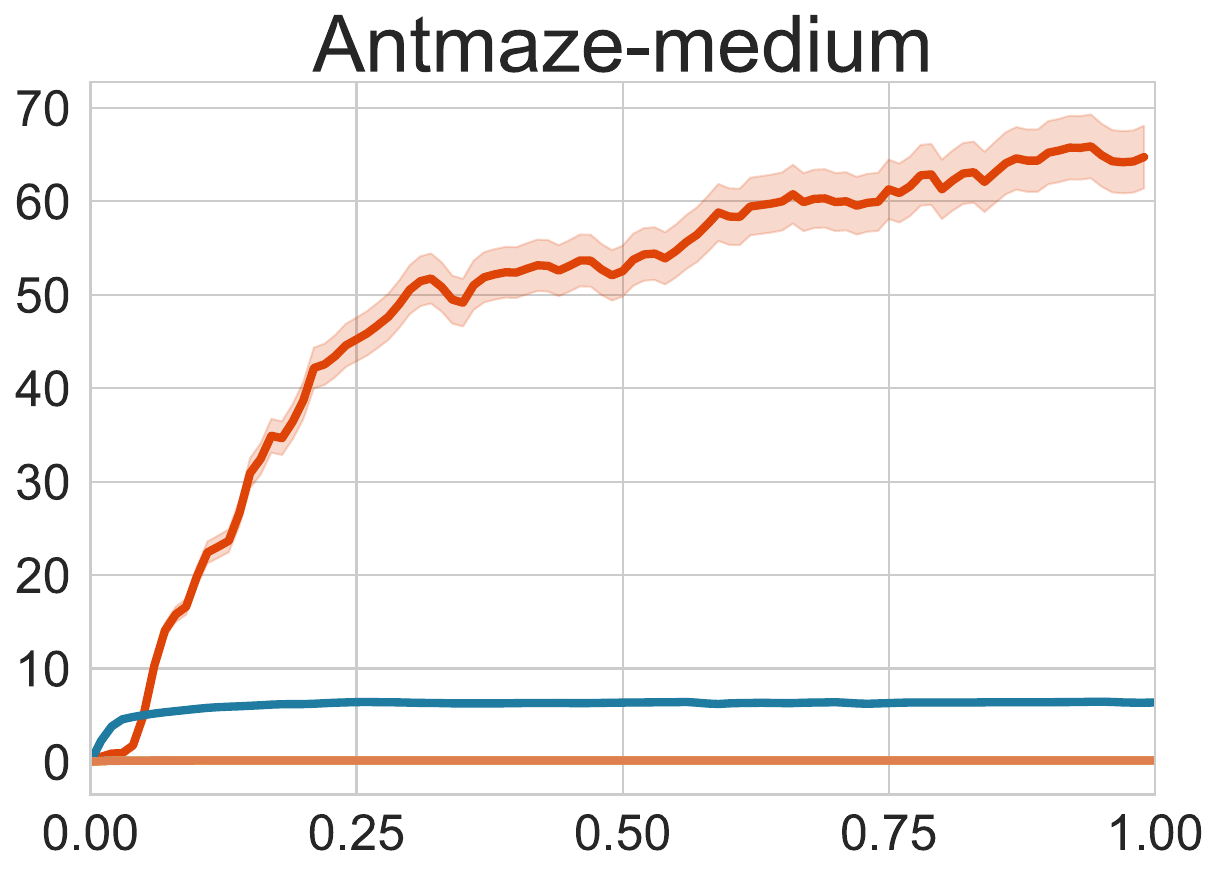}}
    \subfigure{
        \includegraphics[width=0.24\textwidth]{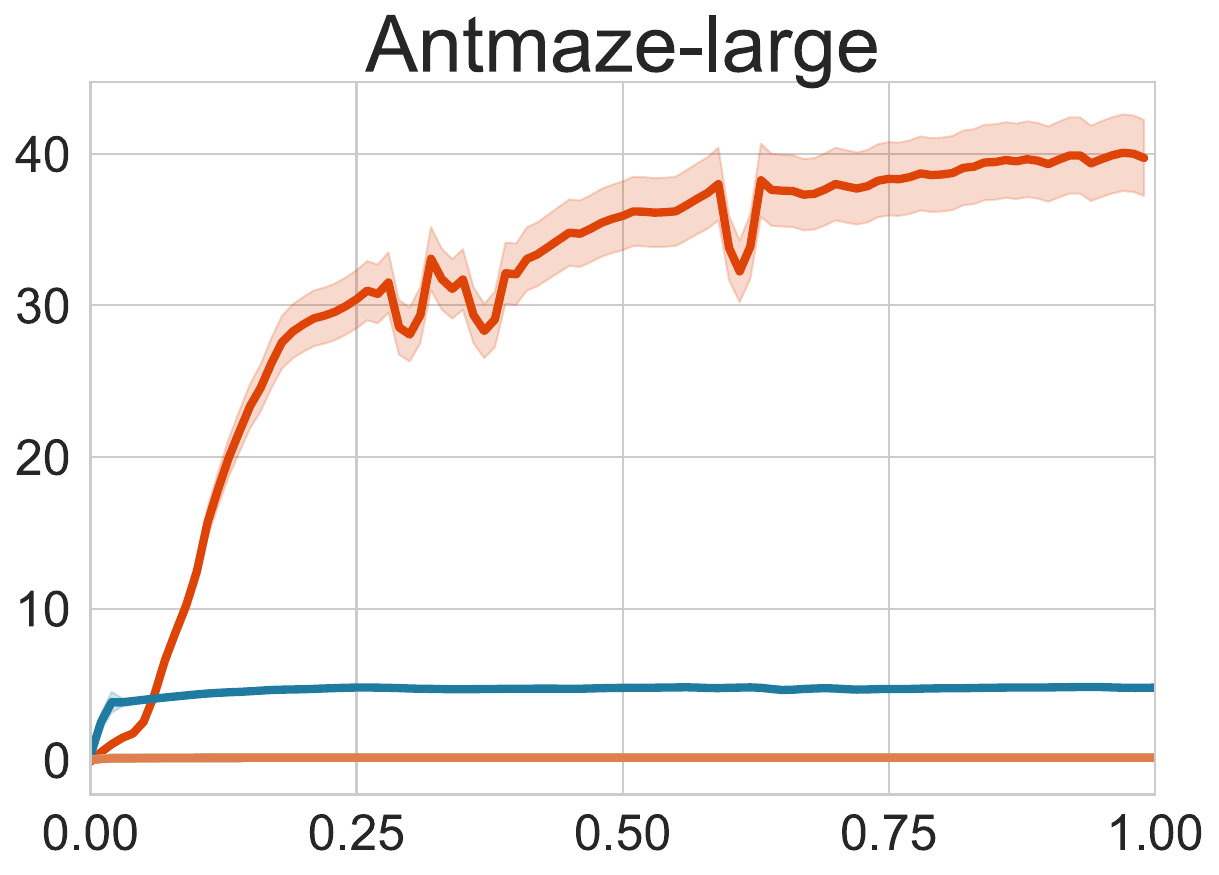}}
    
    \vspace{-10pt}
    \subfigure{
        \includegraphics[width=0.24\textwidth]{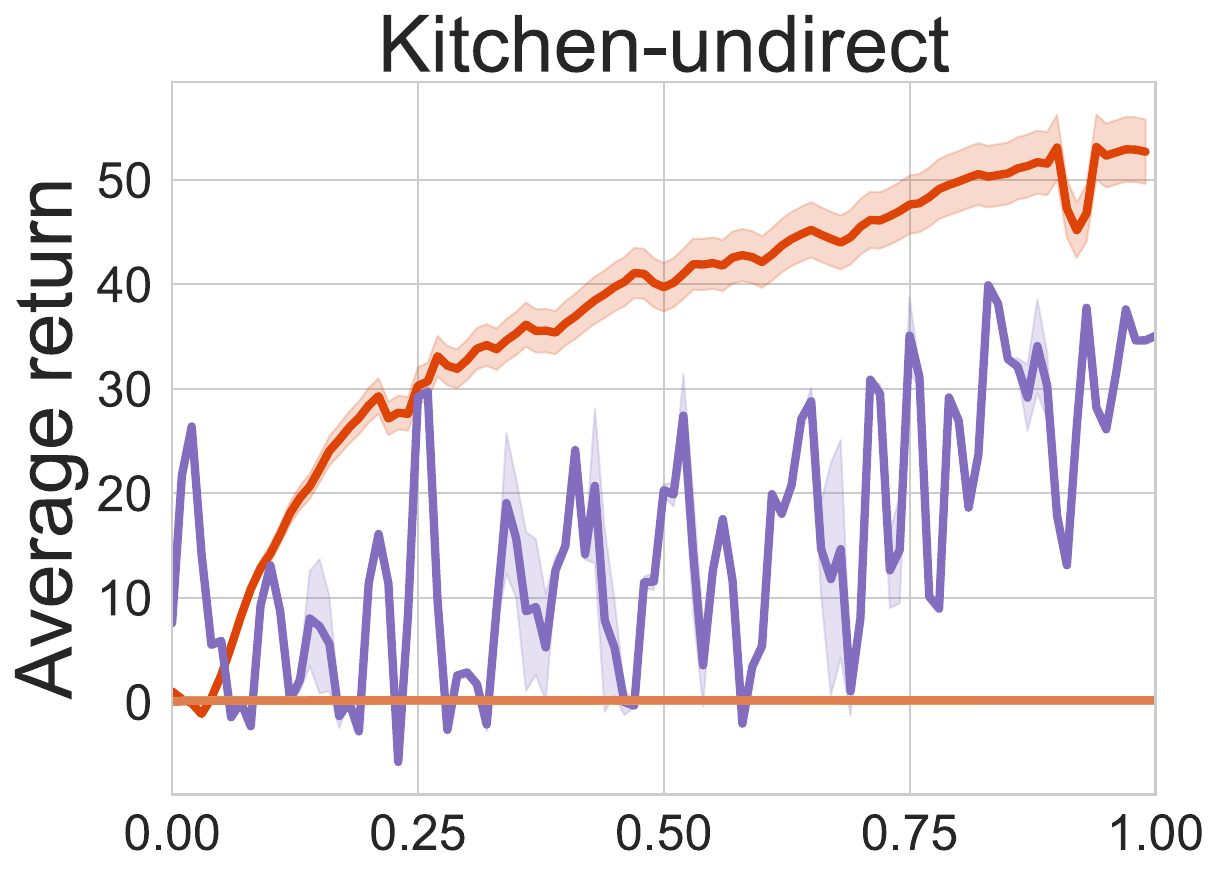}}
    \subfigure{
        \includegraphics[width=0.24\textwidth]{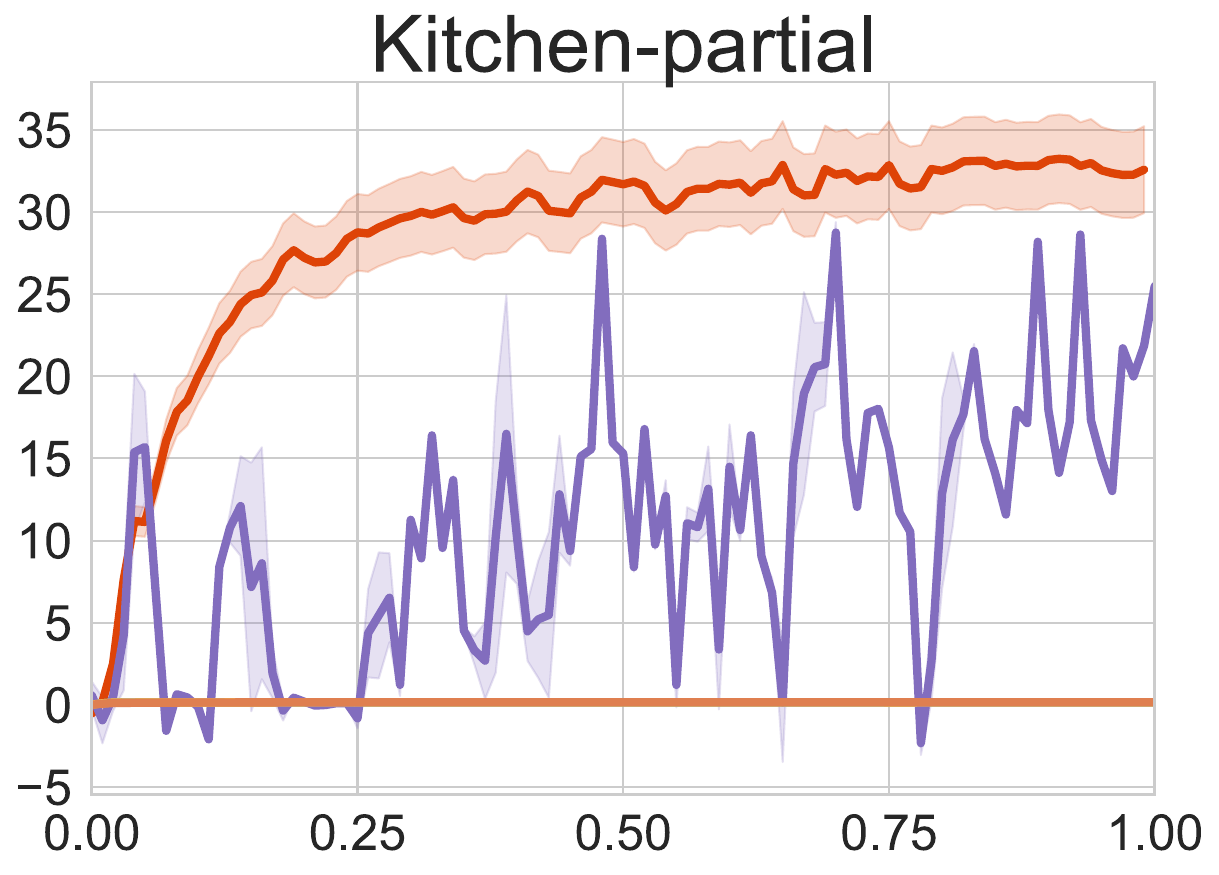}}
    \subfigure{
        \includegraphics[width=0.24\textwidth]{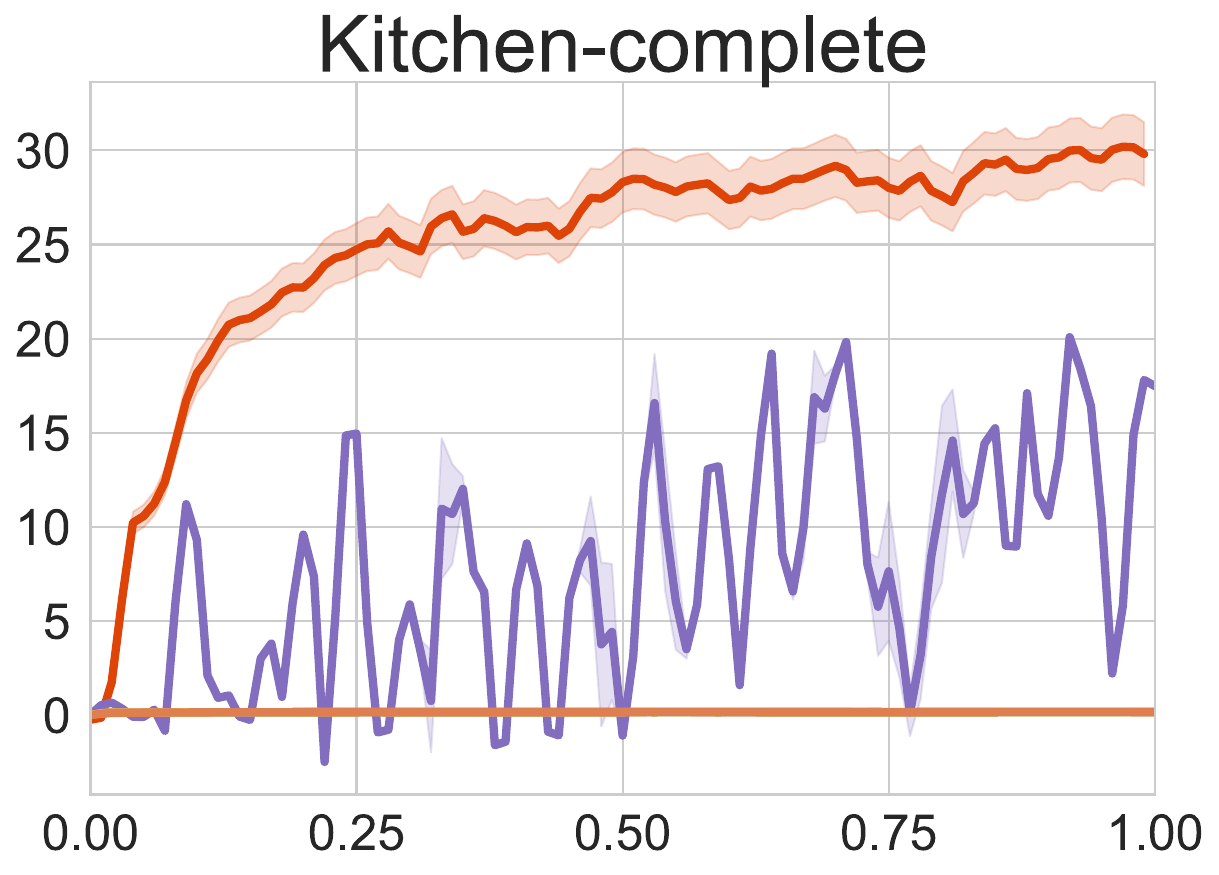}}

    \vspace{-10pt}
    \subfigure{
        \includegraphics[width=0.24\textwidth]{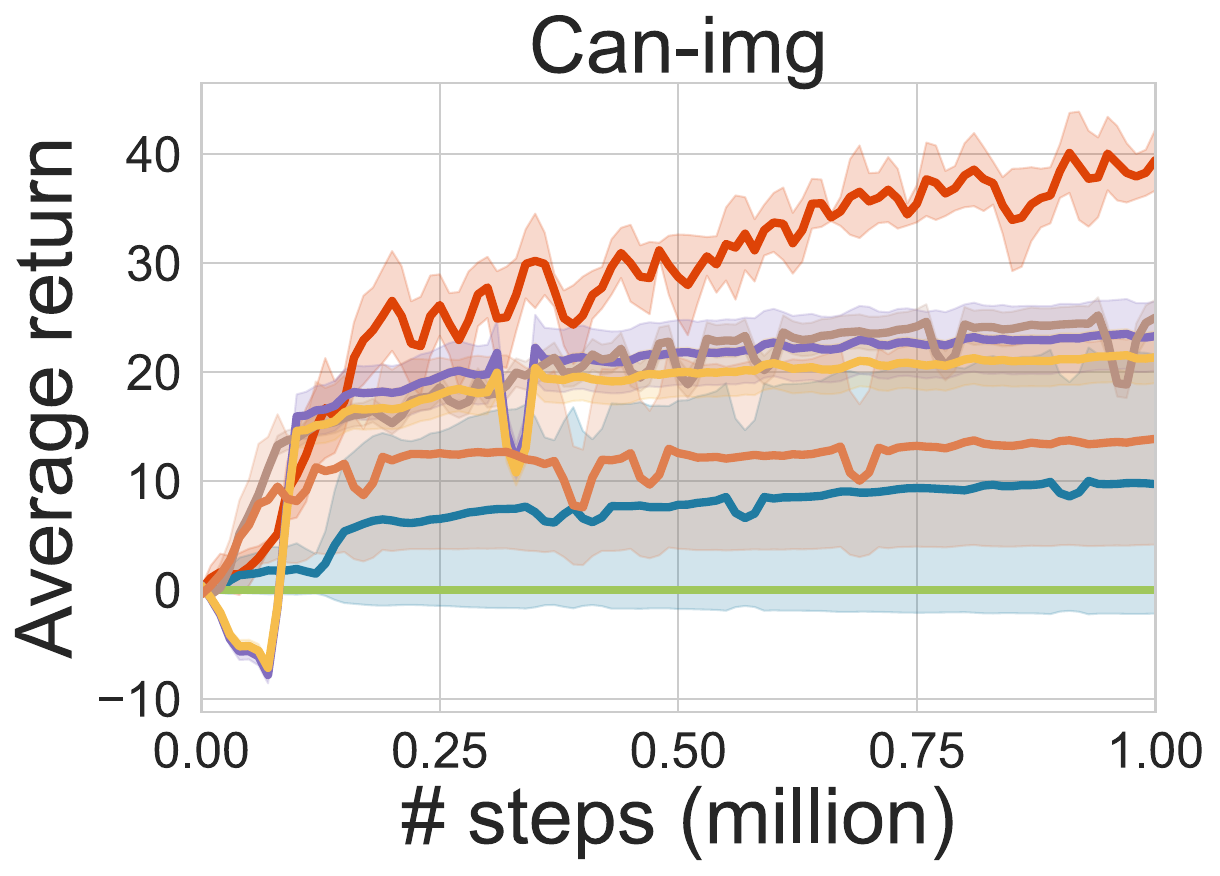}}
    \subfigure{
        \includegraphics[width=0.24\textwidth]{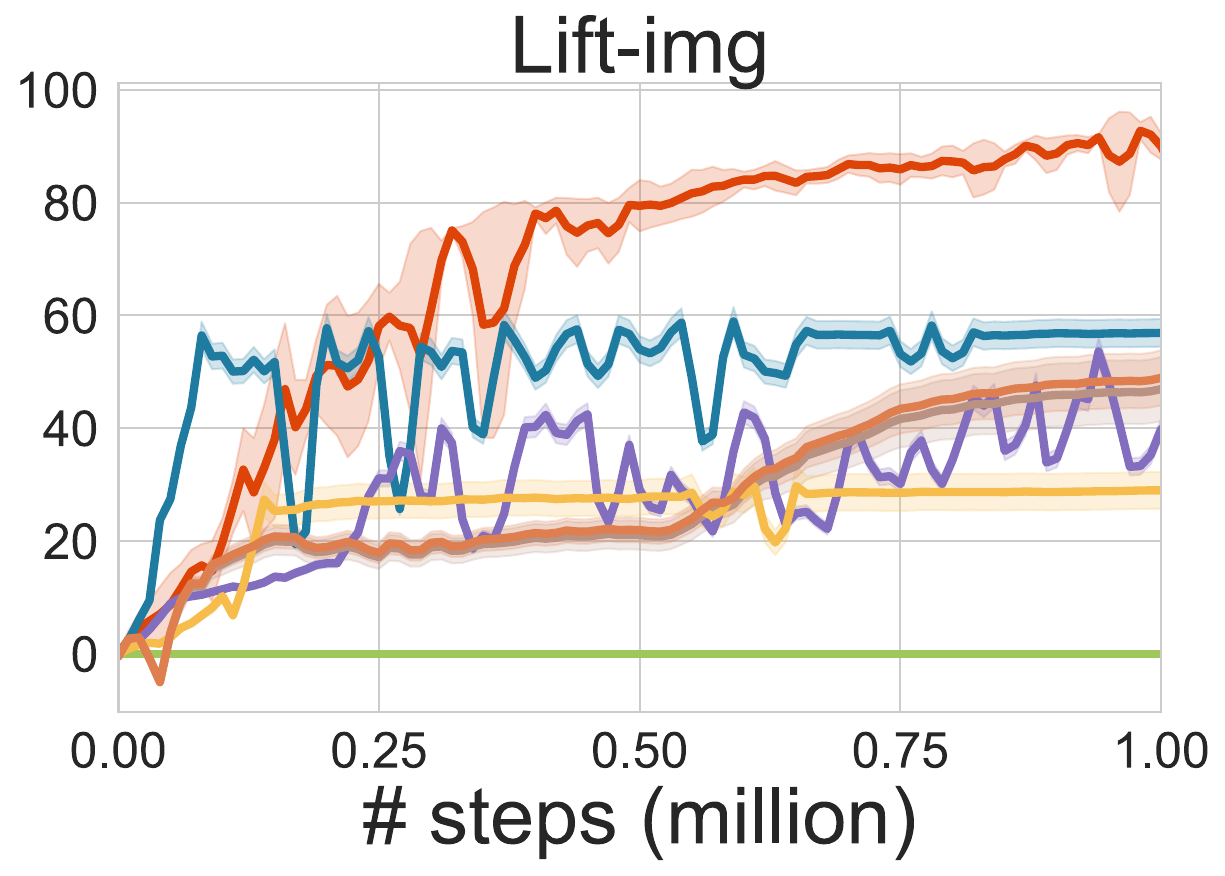}}
    \subfigure{
        \includegraphics[width=0.24\textwidth]{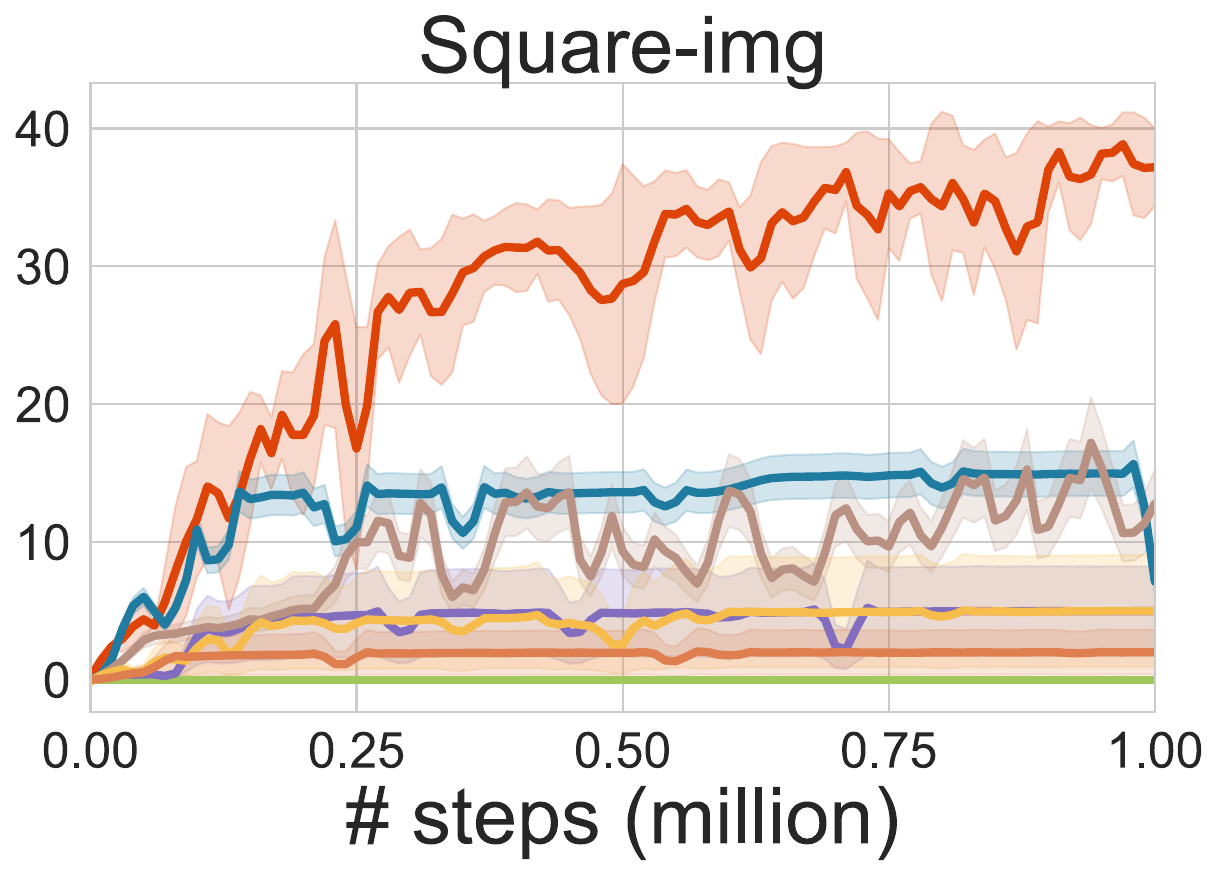}}
    \vskip -0.1in
    \caption{Learning curves for \cref{tab:performance}. Uncertainty intervals depict standard deviation over three seeds.}
    \label{fig:convergence_all_2}
\end{figure}

\clearpage

\subsection{Expert Demonstrations}
\label{sec:expert_demonstrations}

To answer the second question, we run experiments with varying numbers of expert trajectories (ranging from 1 to 30 in MuJoCo and AntMaze, from 10 to 300 in Adroit and FrankaKitchen, and from 25 to 200 in vision-based domains). The data setup adheres to that of \cref{table:dataset_comparative}. As illustrated in \cref{fig:demonstration_all},  our method, consistently requiring much fewer expert trajectories to attain expert performance, demonstrates great demonstration efficiency in comparison with prior methods.

\begin{figure}[ht]
    \centering
    \subfigure{
        \includegraphics[width=0.99\textwidth]{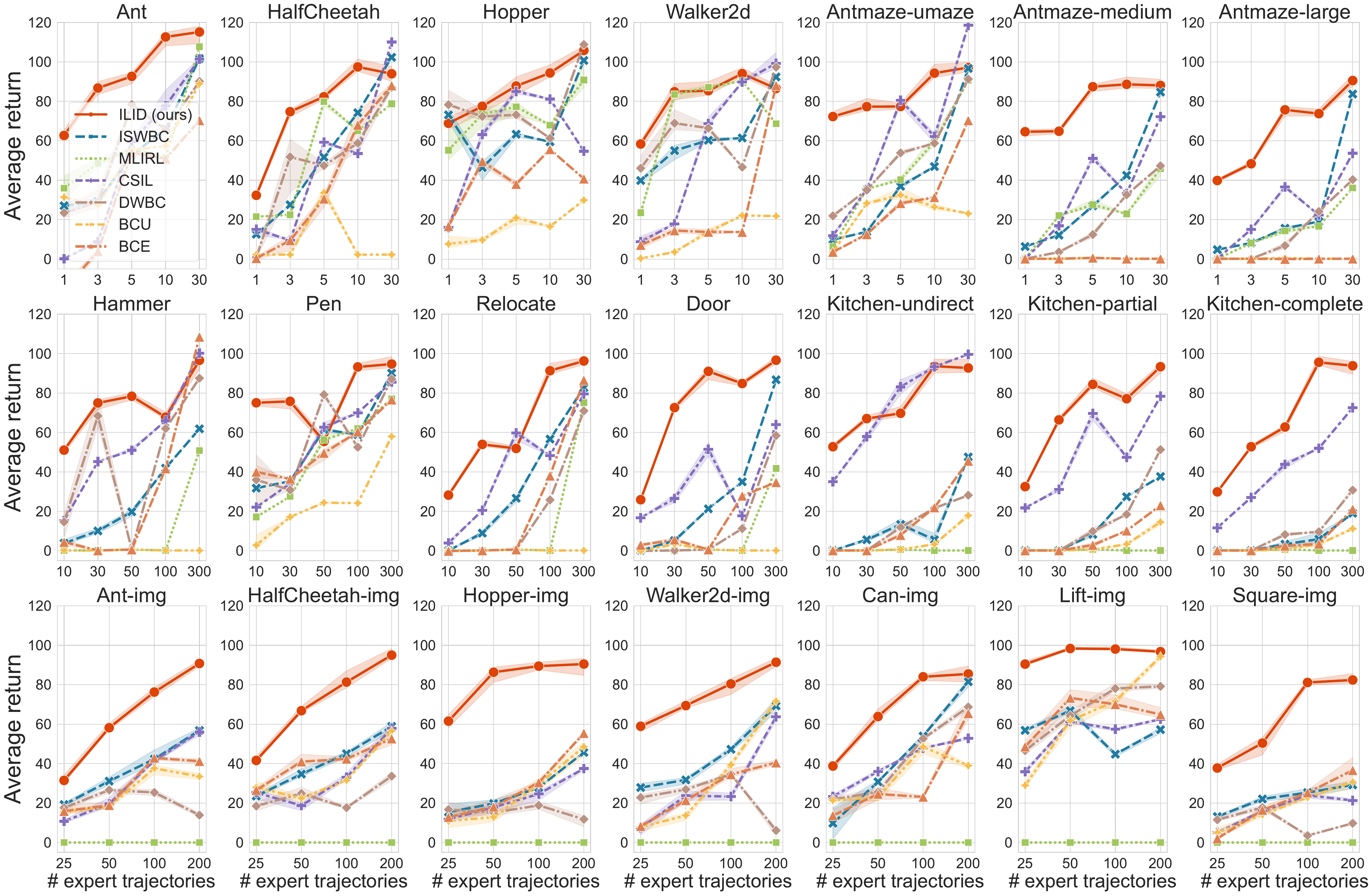}}
    \caption{Normalized scores under varying numbers of expert demonstrations. }
    \label{fig:demonstration_all}
\end{figure}

\clearpage

\subsection{Quality and Quantity of Imperfect Data}
\label{sec:quality_quantity}

For the second question, we also conduct experiments using imperfect demonstrations with varying qualities and quantities to test the robustness of \texttt{ILID}'s performance in behavior selection (the data setup is showcased in \cref{table:dataset_dataqualities}). As shown in \cref{fig:imperfect,fig:random_num,fig:imperfect_all,fig:random_num_bar}, we find that \texttt{ILID} surpasses the
baselines in \textbf{20/24} settings, corroborating its efficacy
and superiority in the utilization of noisy data. Moreover, \cref{fig:random_num} underscores the importance of leveraging suboptimal data.



\begin{figure}[H]
    \centering
    \subfigure{
        \includegraphics[width=0.24\textwidth]{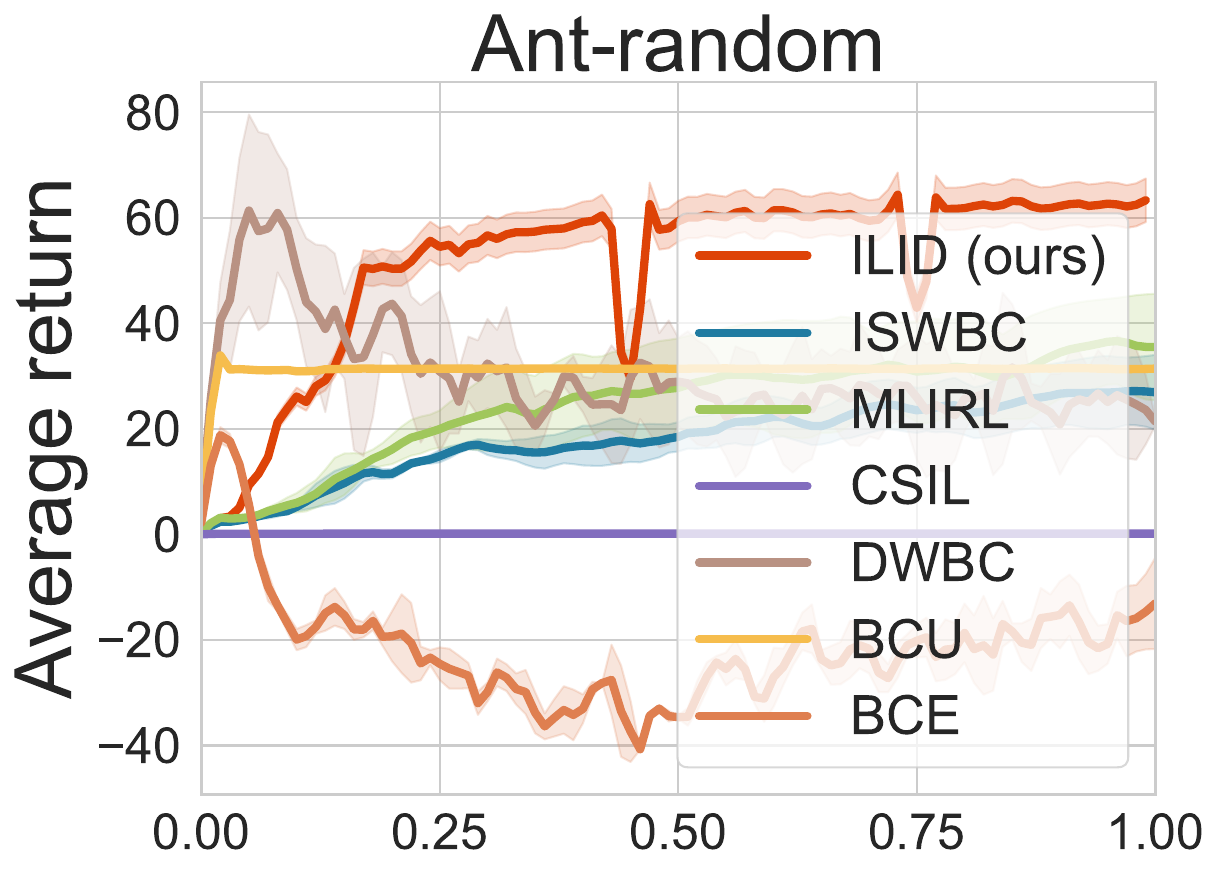}}
    \hspace{-4pt}
    \subfigure{
        \includegraphics[width=0.24\textwidth]{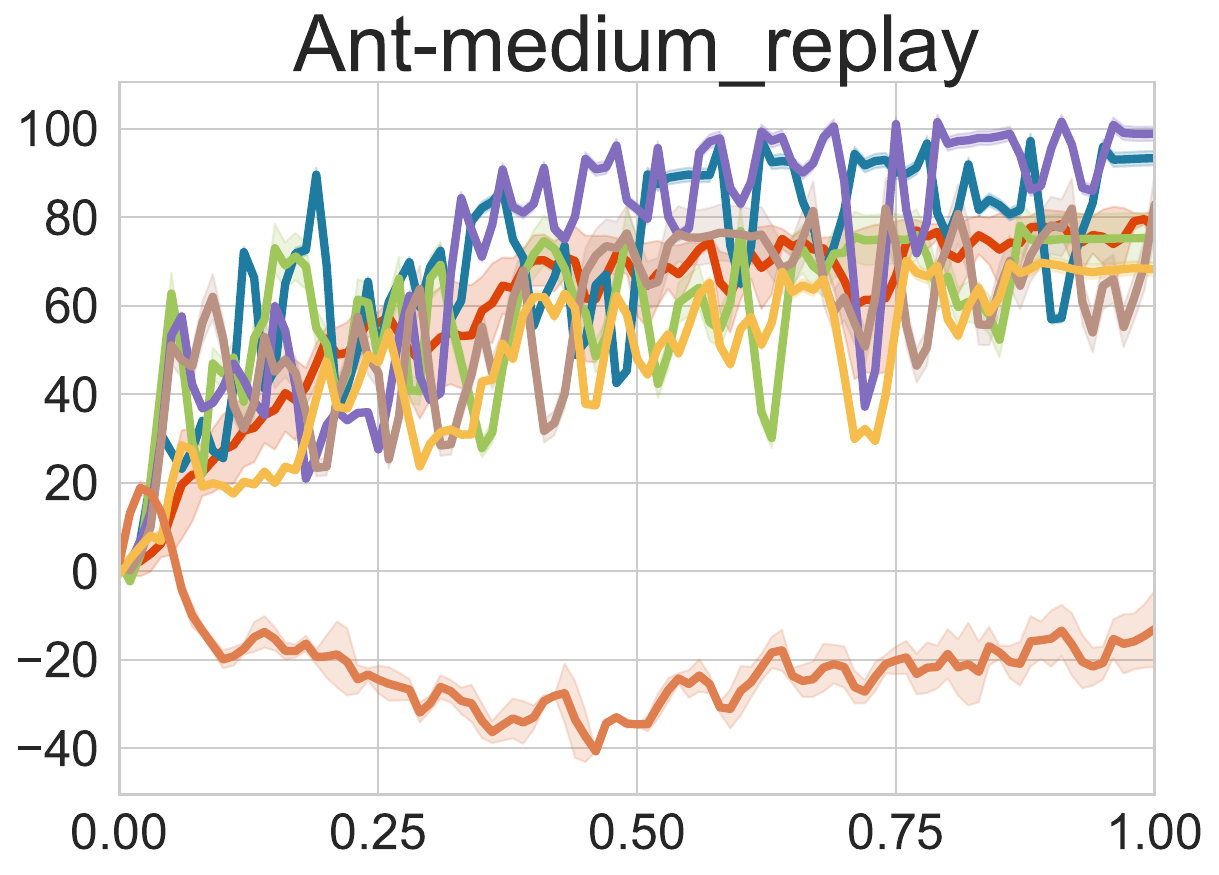}}
    \hspace{-4pt}
    \subfigure{
        \includegraphics[width=0.24\textwidth]{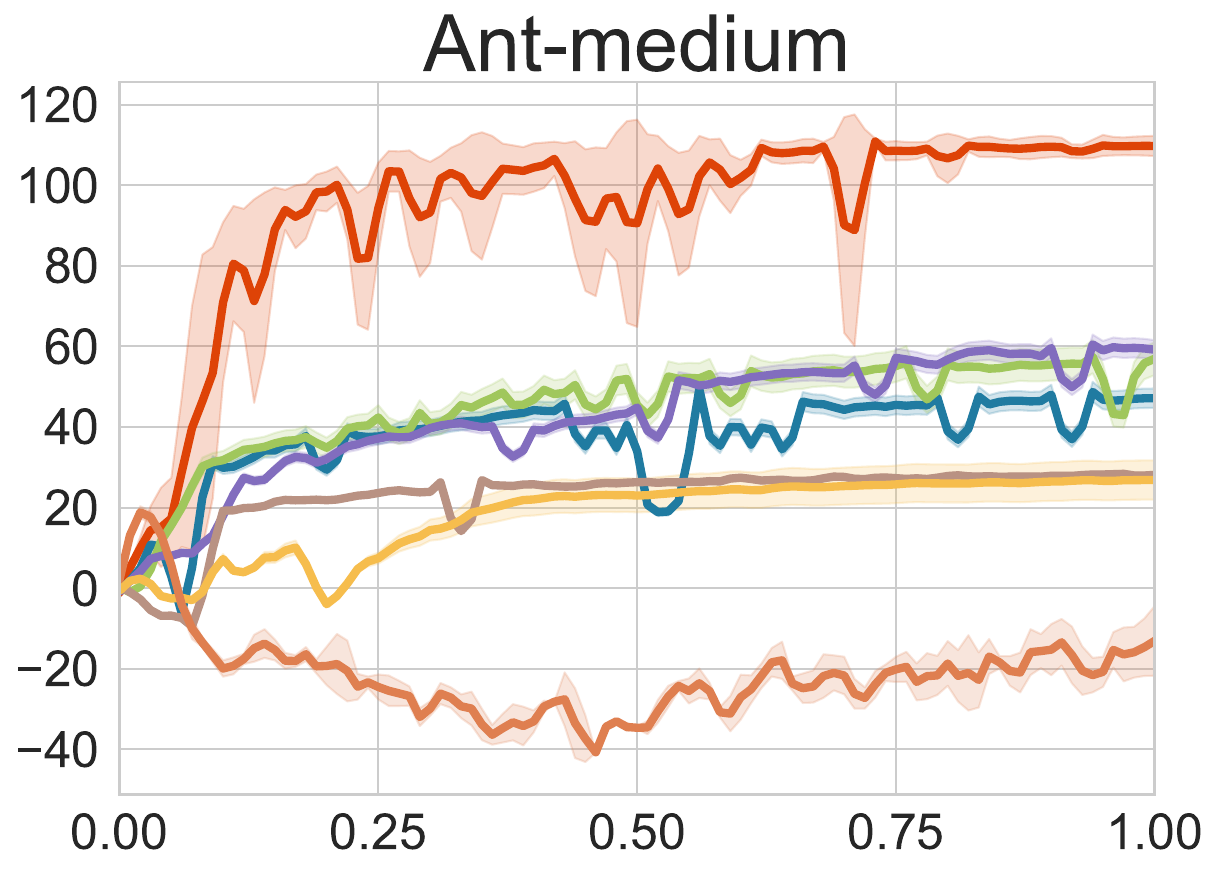}}
    \hspace{-4pt}
    \subfigure{
        \includegraphics[width=0.24\textwidth]{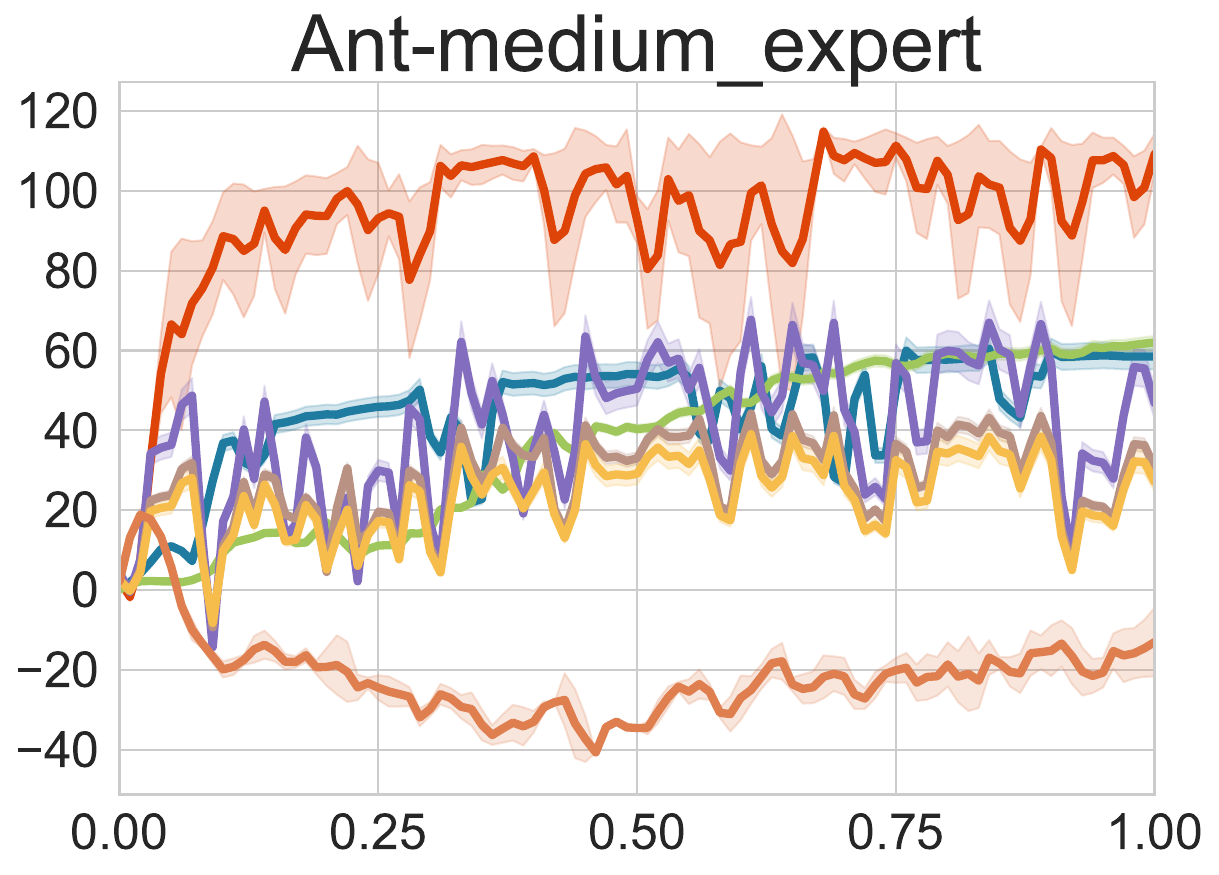}}
    
    \vspace{-10pt}
    \subfigure{
        \includegraphics[width=0.24\textwidth]{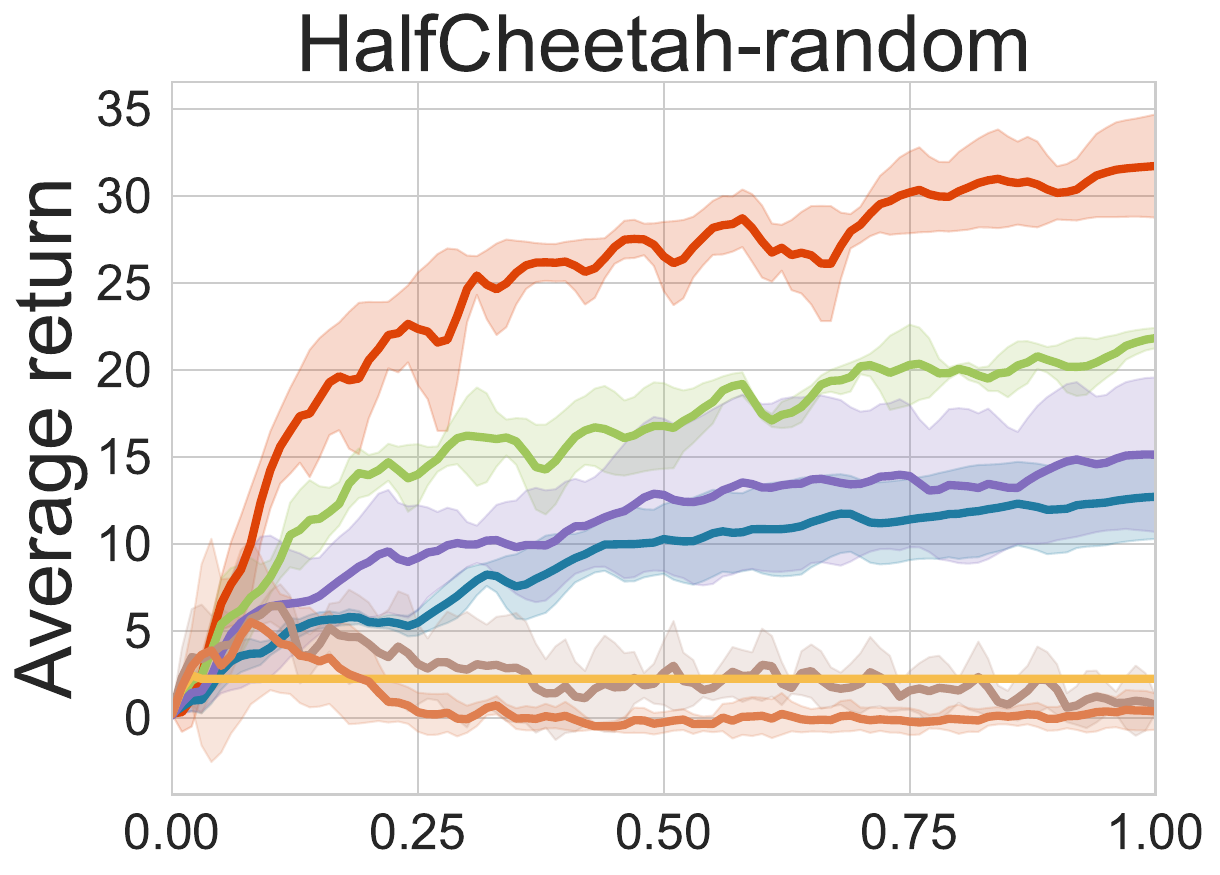}}
    \hspace{-4pt}
    \subfigure{
        \includegraphics[width=0.24\textwidth]{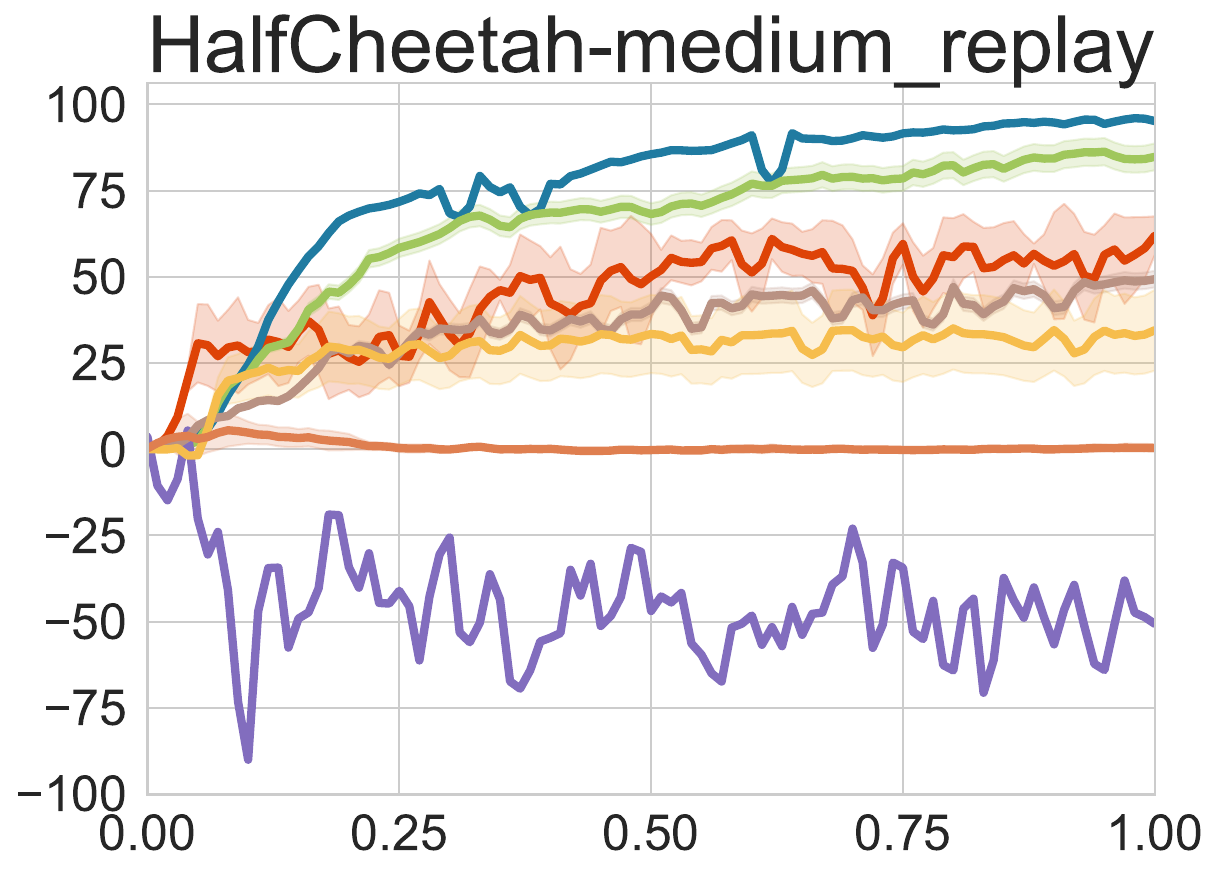}}
    \hspace{-4pt}
    \subfigure{
        \includegraphics[width=0.24\textwidth]{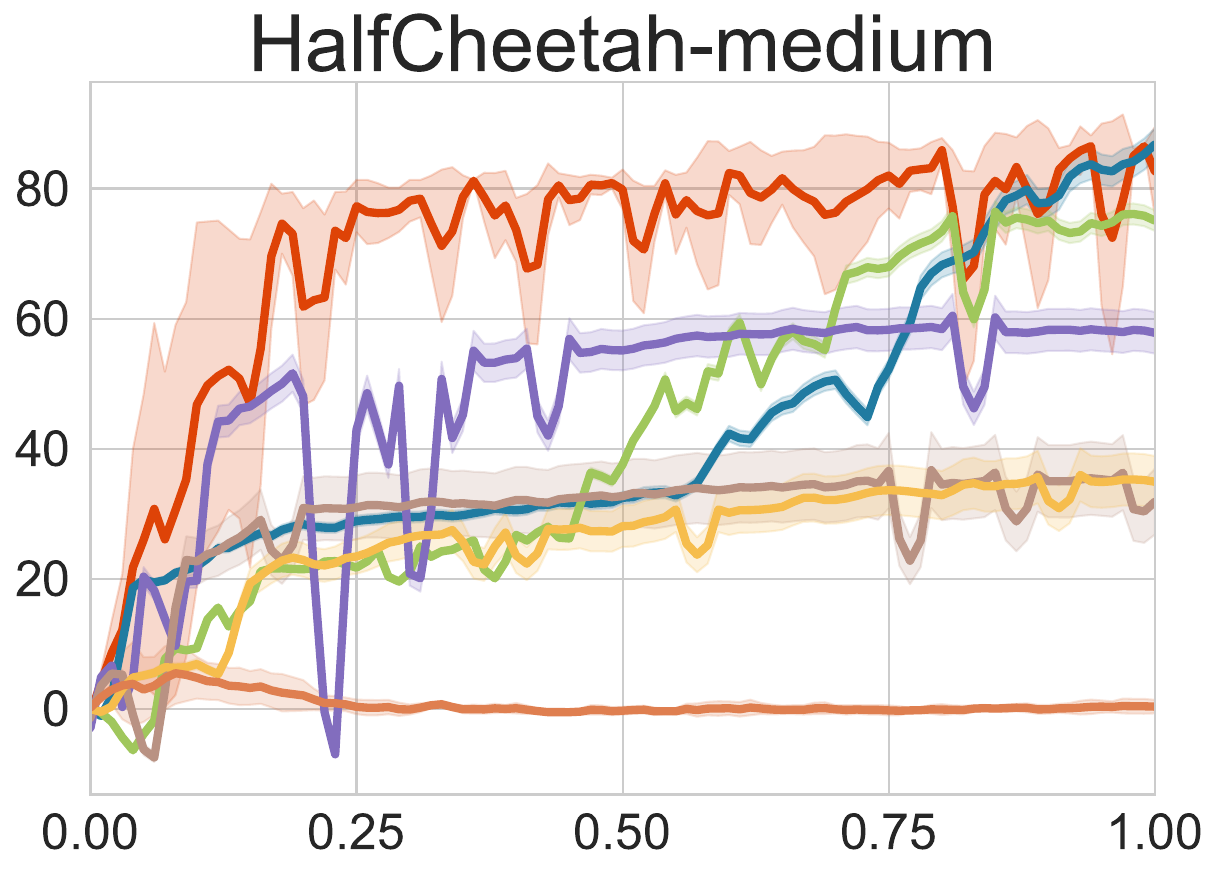}}
    \hspace{-4pt}
    \subfigure{
        \includegraphics[width=0.24\textwidth]{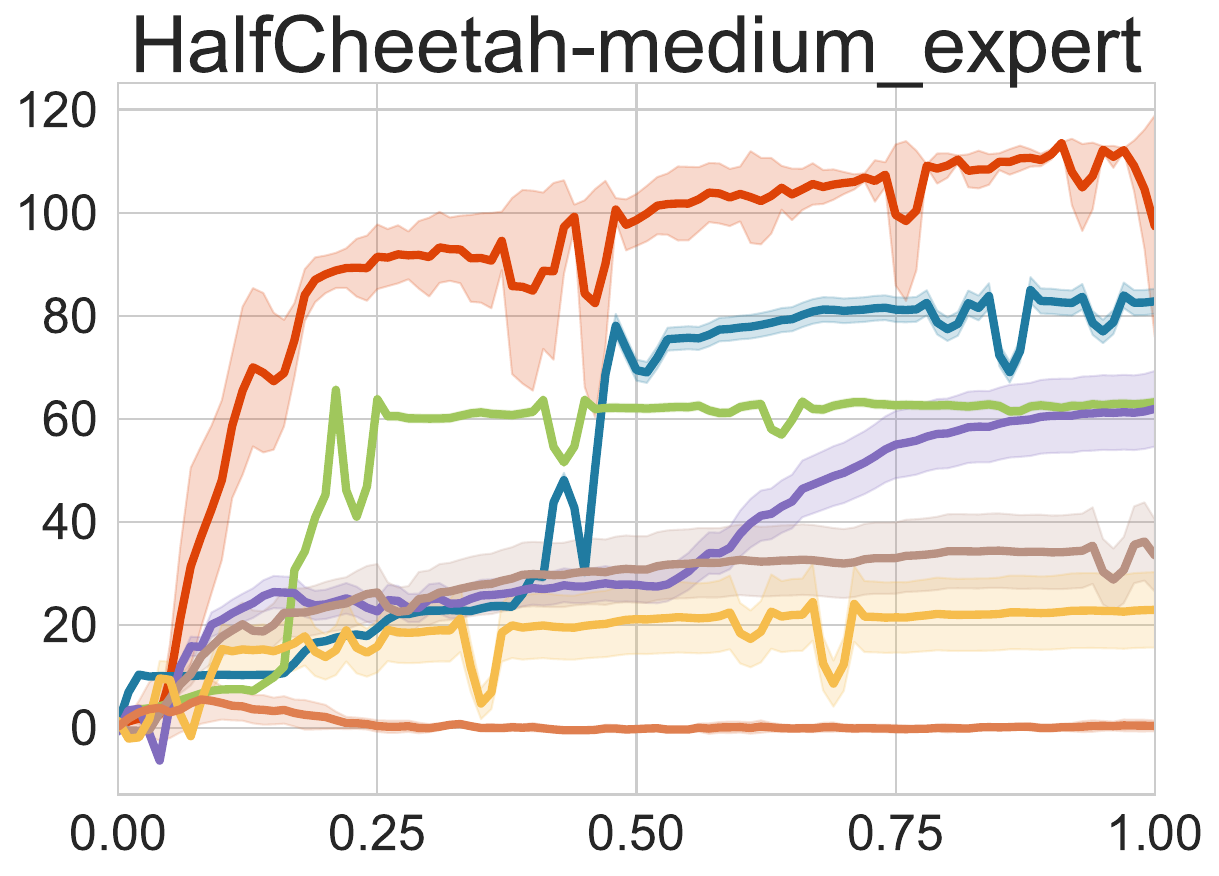}}
    
    \vspace{-10pt}
    \subfigure{
        \includegraphics[width=0.24\textwidth]{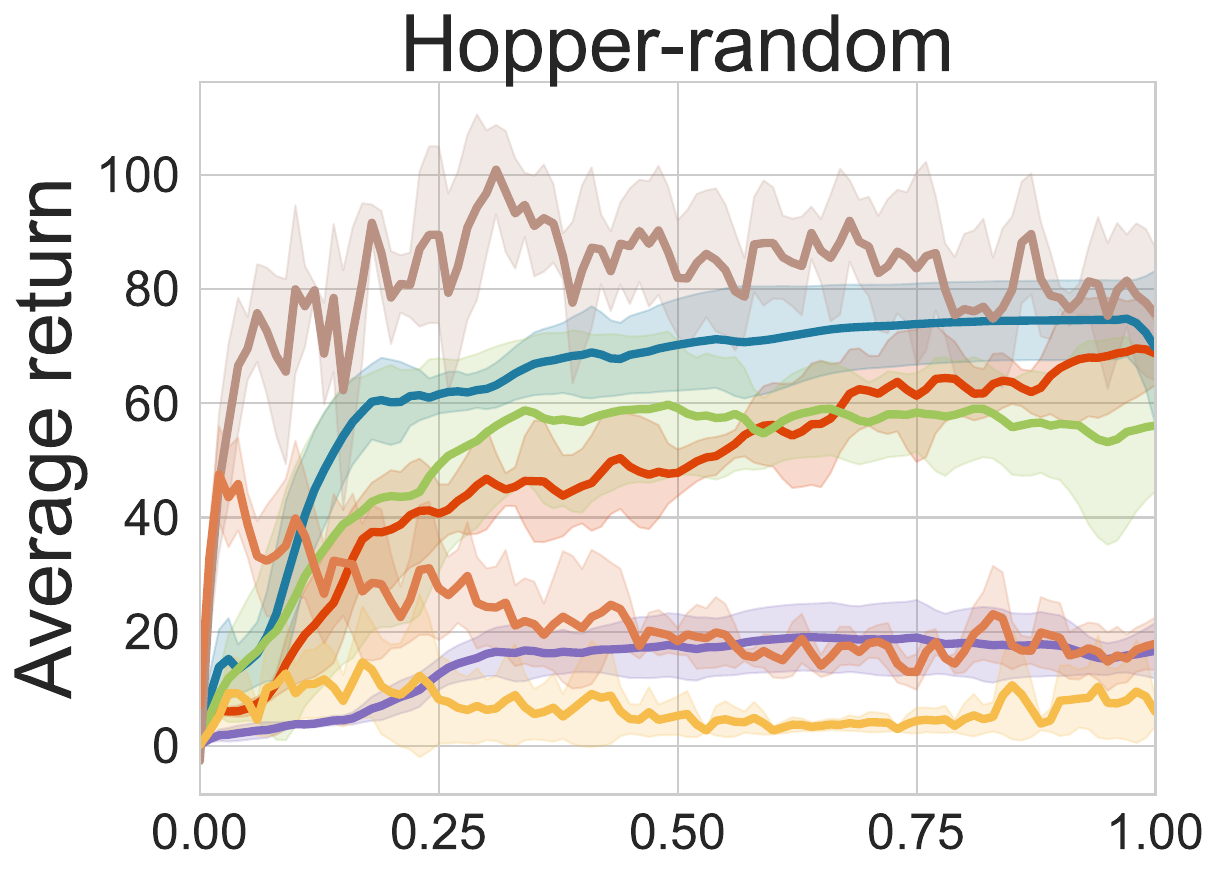}}
    \hspace{-4pt}
    \subfigure{
        \includegraphics[width=0.24\textwidth]{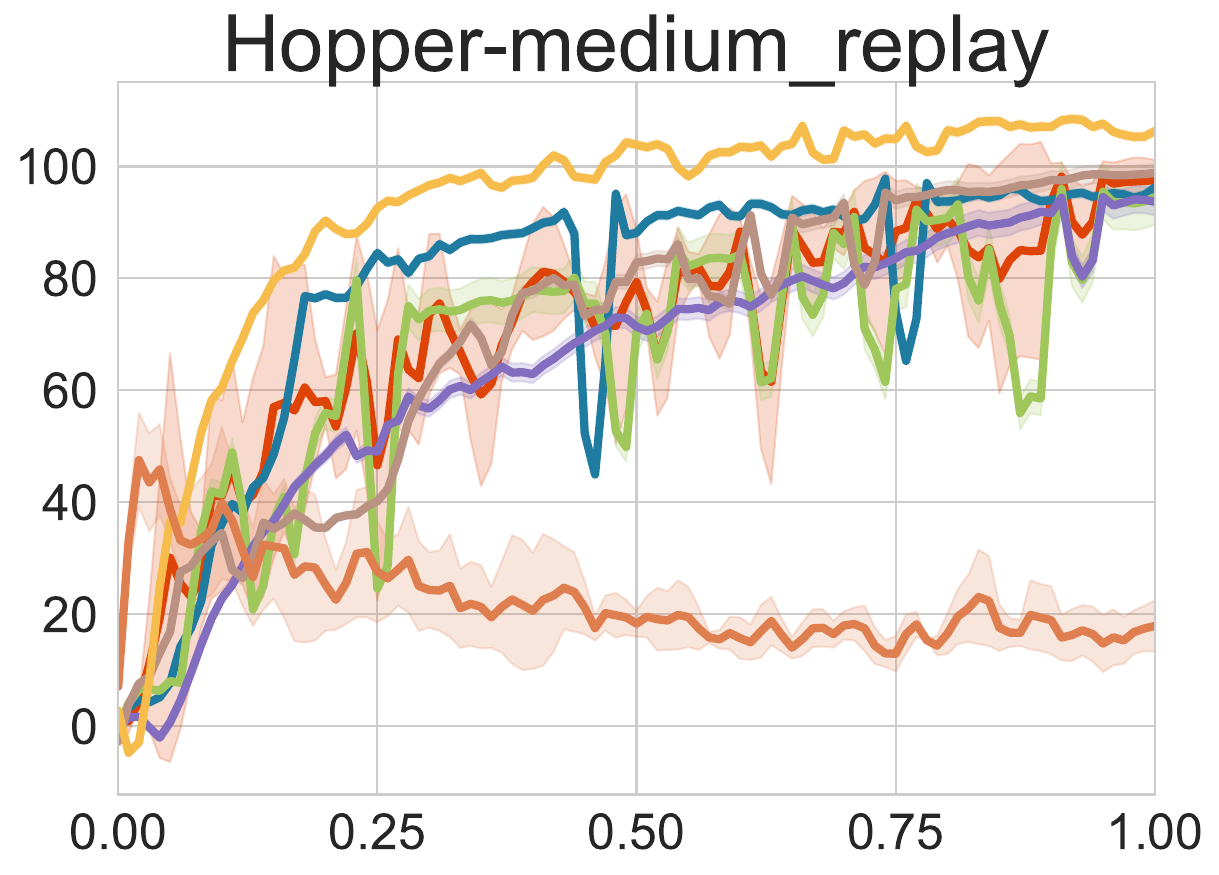}}
    \hspace{-4pt}
    \subfigure{
        \includegraphics[width=0.24\textwidth]{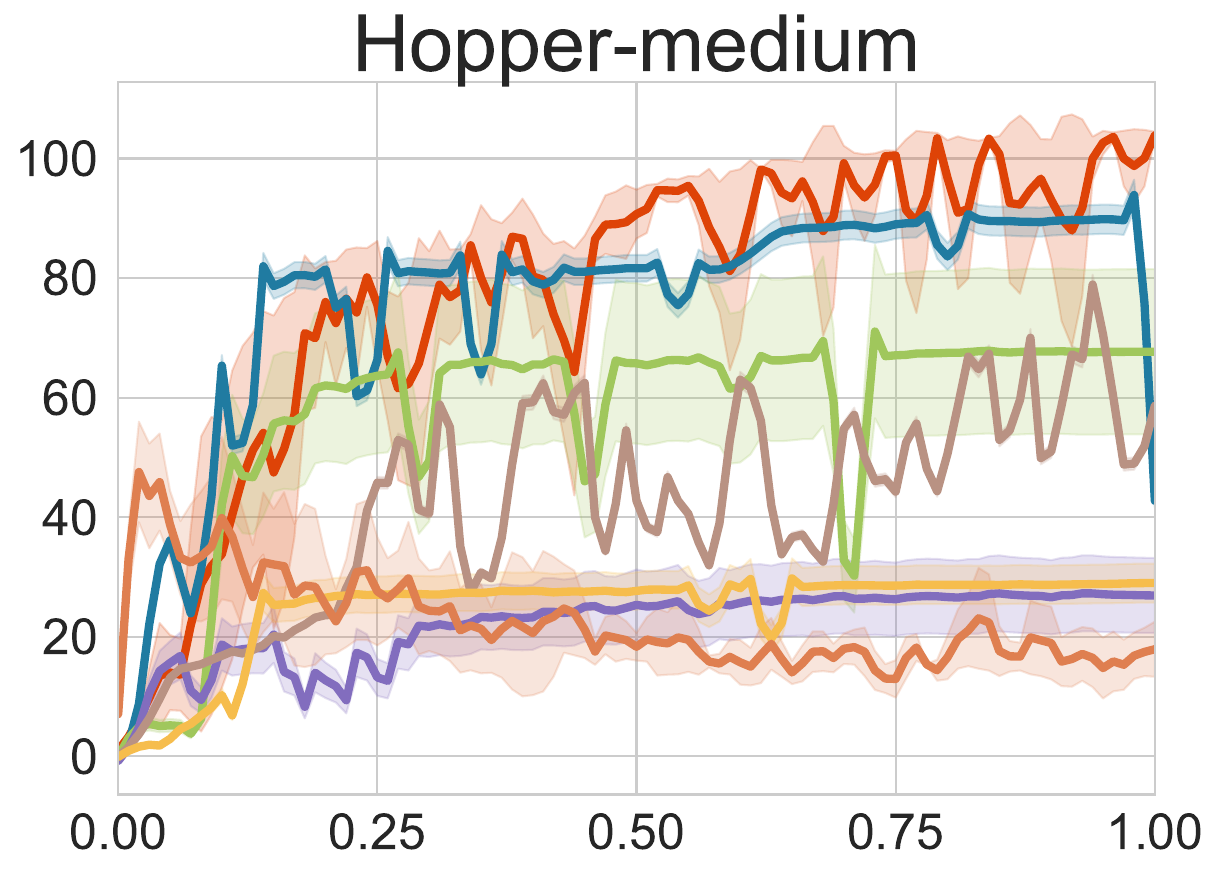}}
    \hspace{-4pt}
    \subfigure{
        \includegraphics[width=0.24\textwidth]{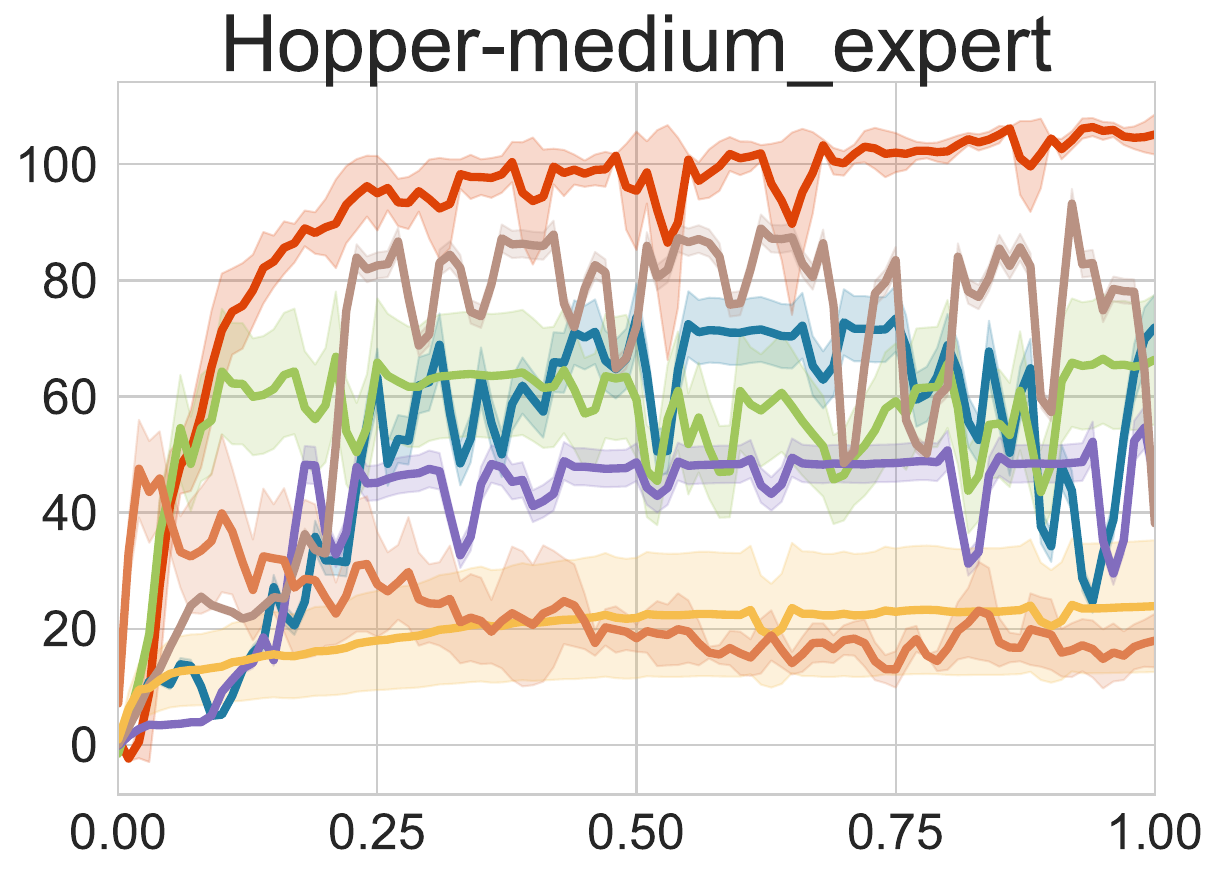}}
    
    \vspace{-10pt}
    \subfigure{
        \includegraphics[width=0.24\textwidth]{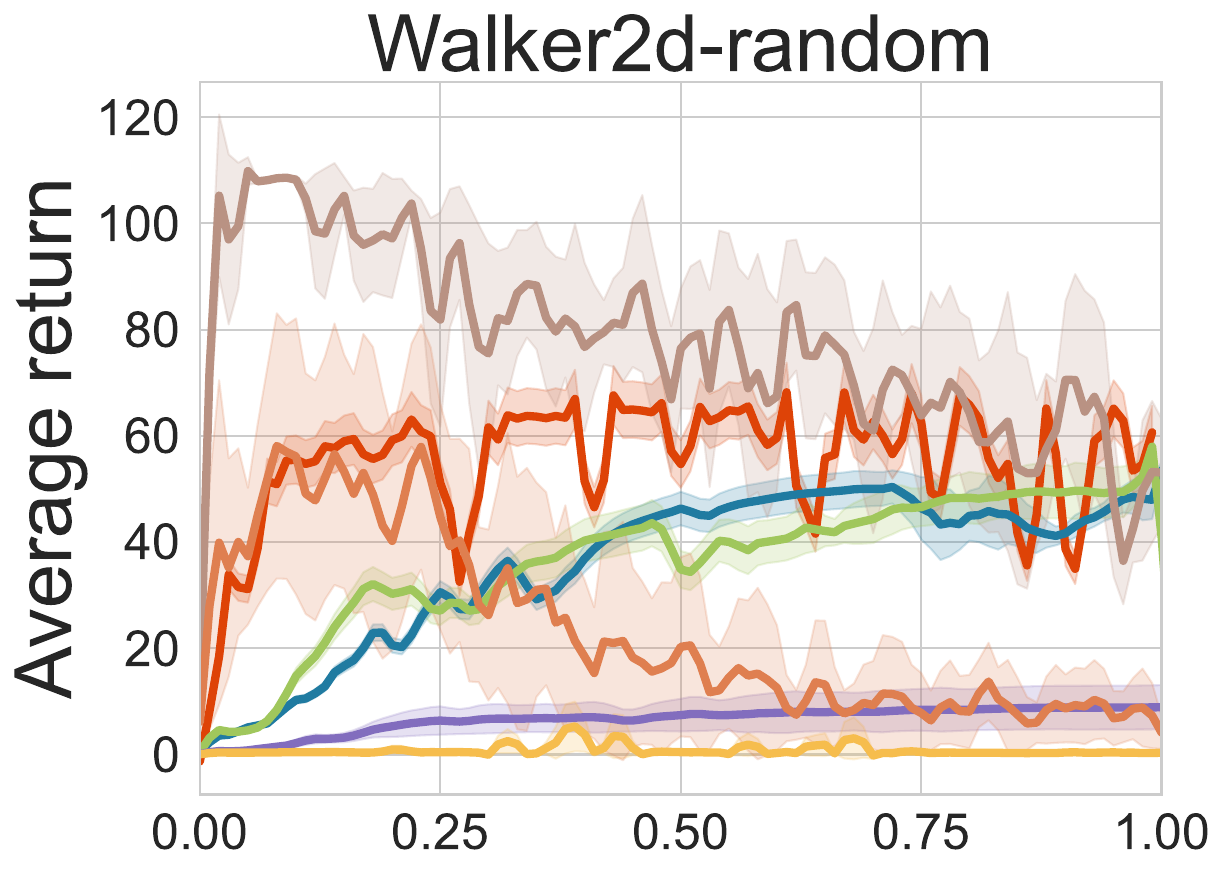}}
    \hspace{-4pt}
    \subfigure{
        \includegraphics[width=0.24\textwidth]{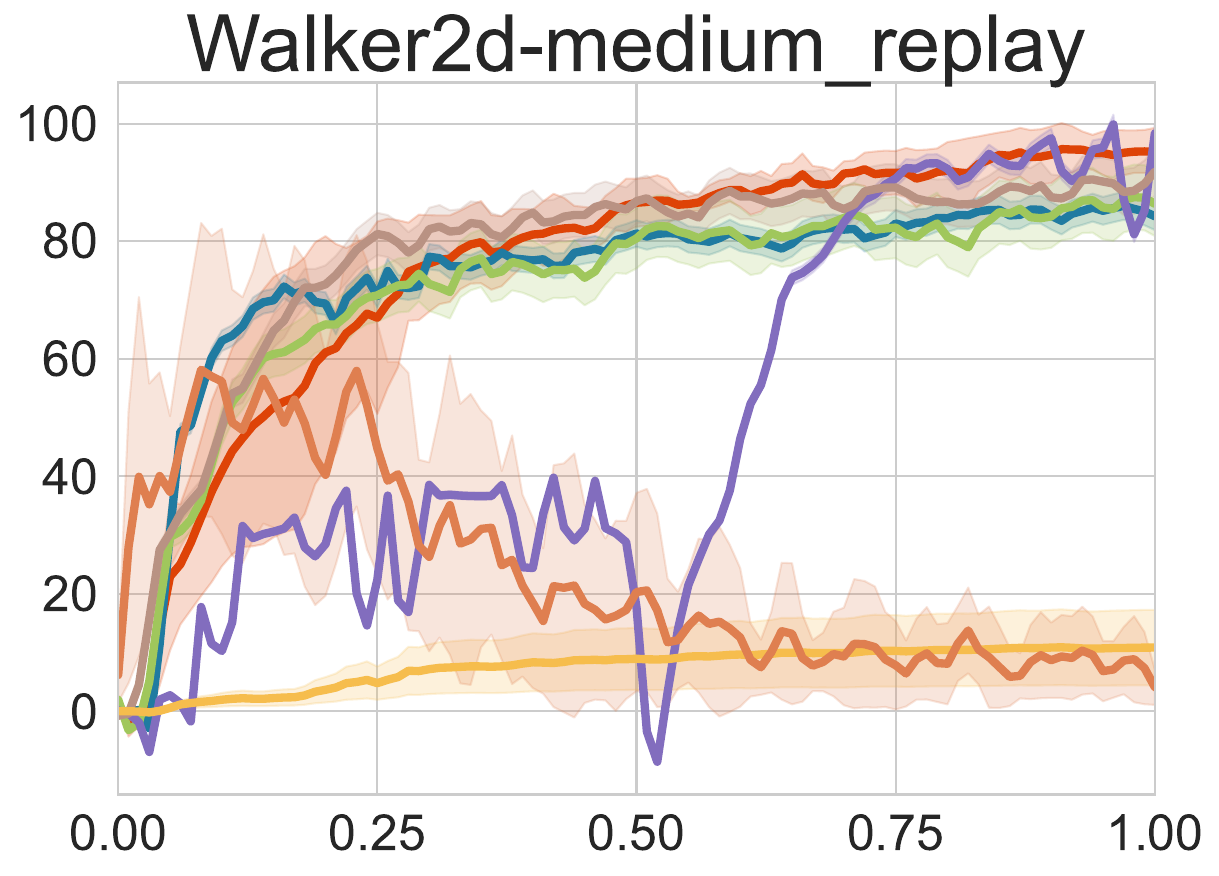}}
    \hspace{-4pt}
    \subfigure{
        \includegraphics[width=0.24\textwidth]{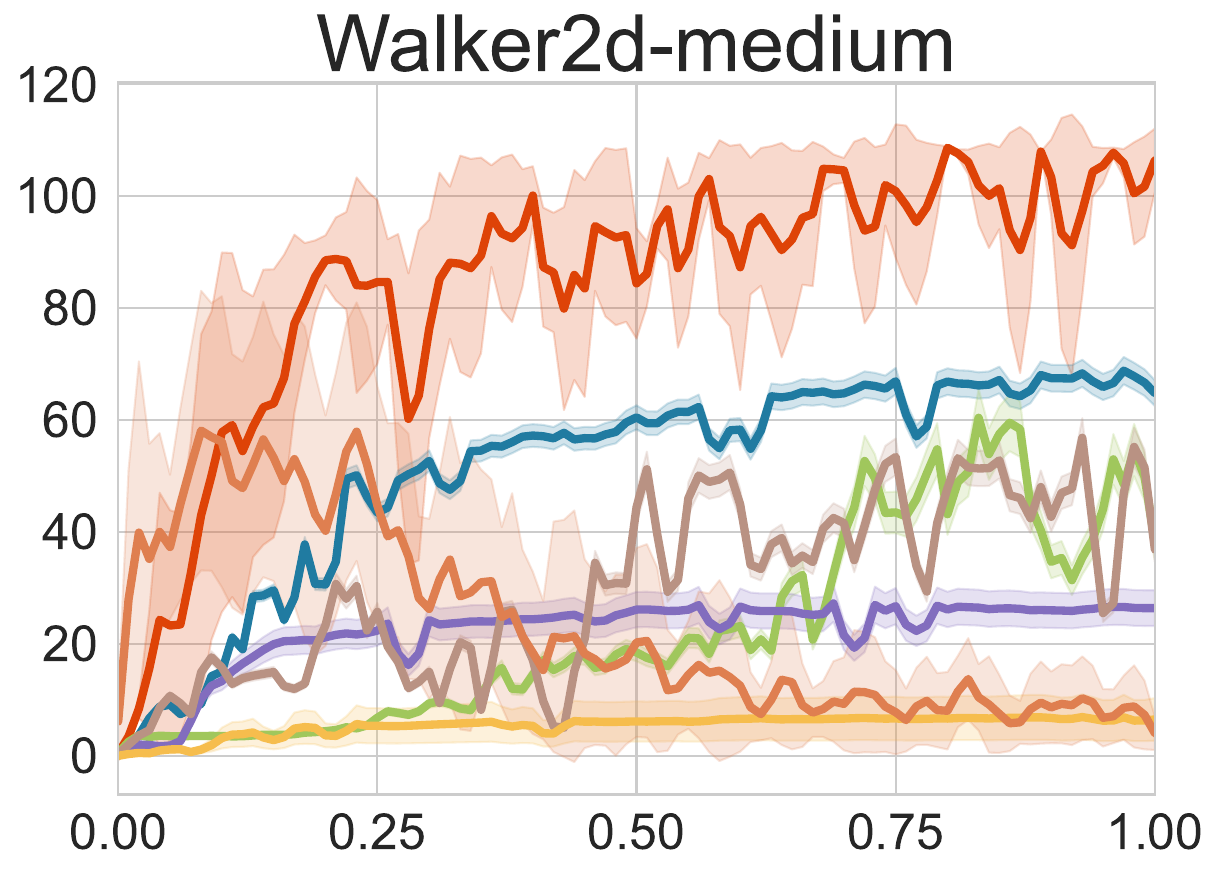}}
    \hspace{-4pt}
    \subfigure{
        \includegraphics[width=0.24\textwidth]{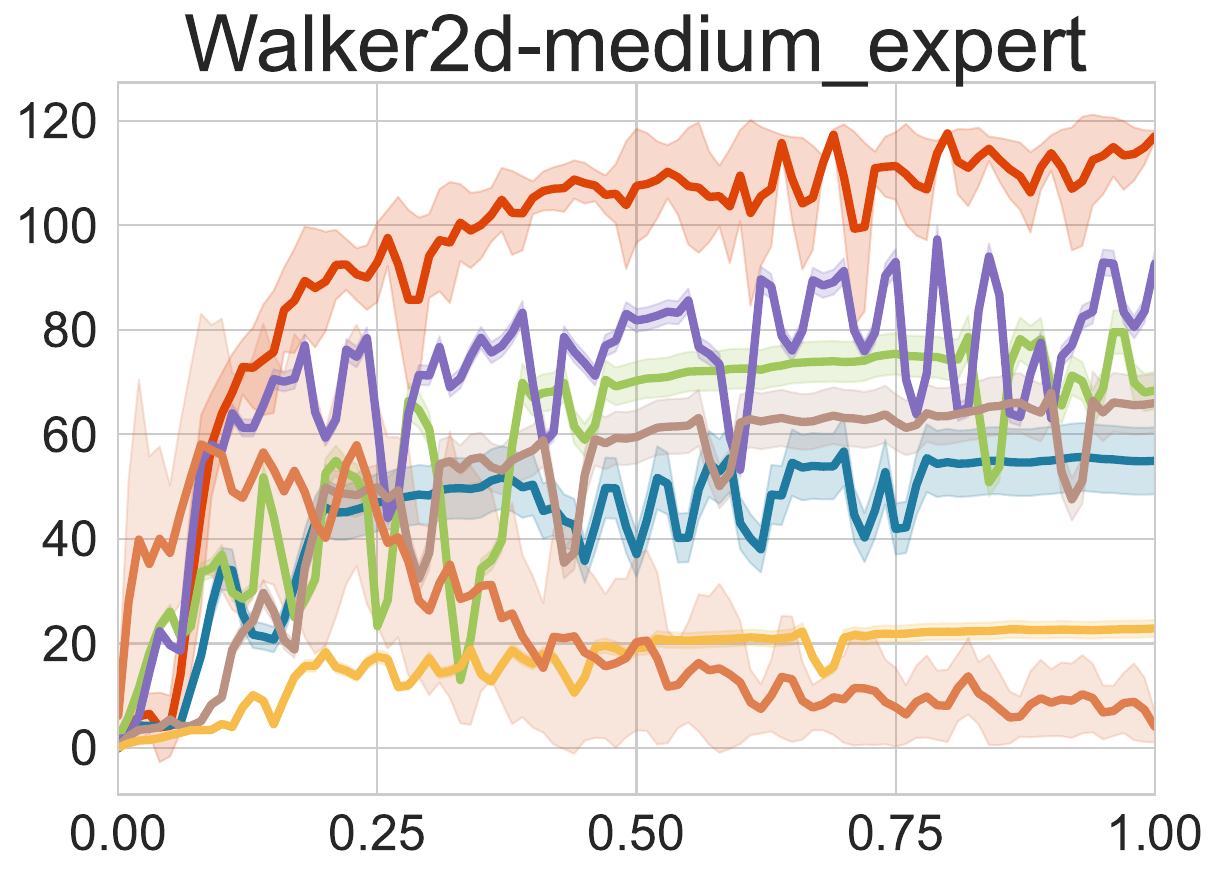}}

    \vspace{-10pt}
    \subfigure{
        \includegraphics[width=0.24\textwidth]{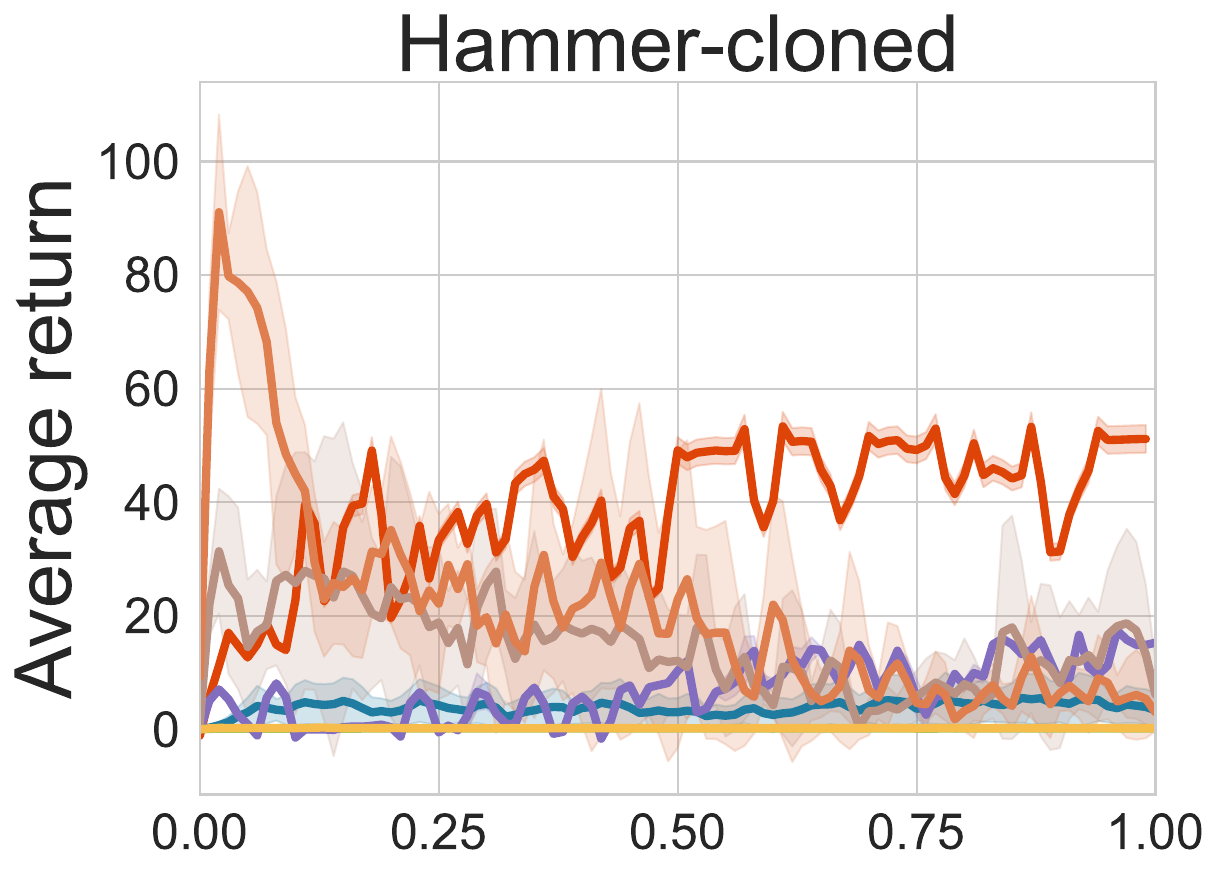}}
    \hspace{-4pt}
    \subfigure{
        \includegraphics[width=0.24\textwidth]{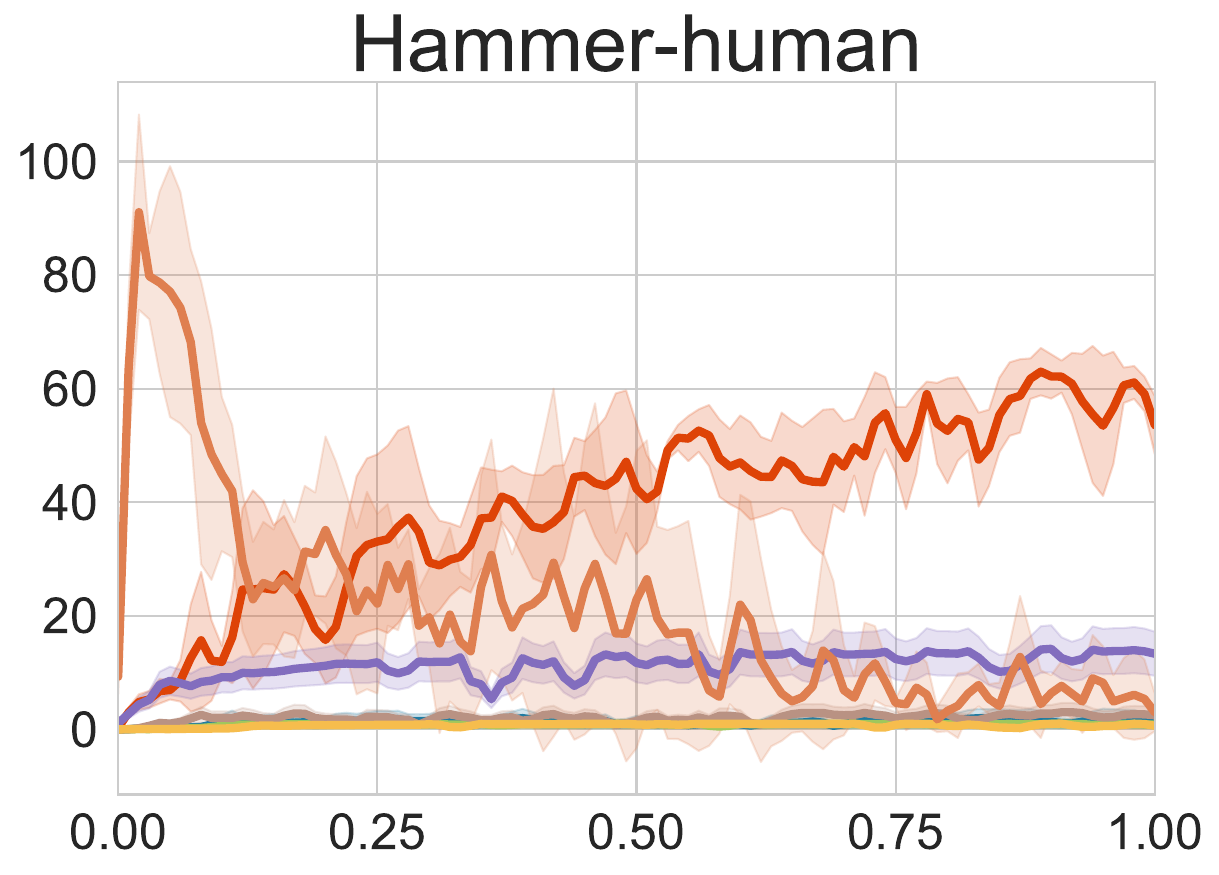}}
    \hspace{-4pt}
    \subfigure{
        \includegraphics[width=0.24\textwidth]{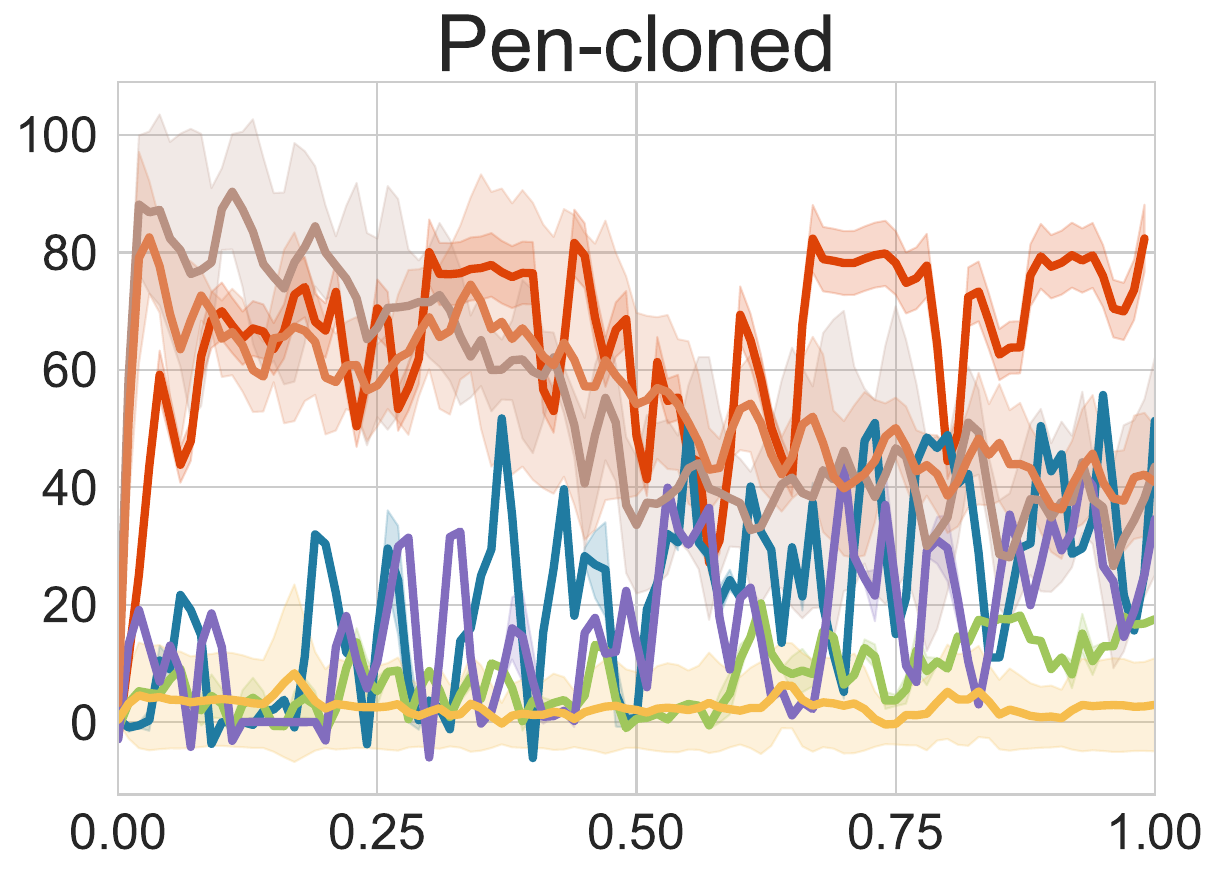}}
    \hspace{-4pt}
    \subfigure{
        \includegraphics[width=0.24\textwidth]{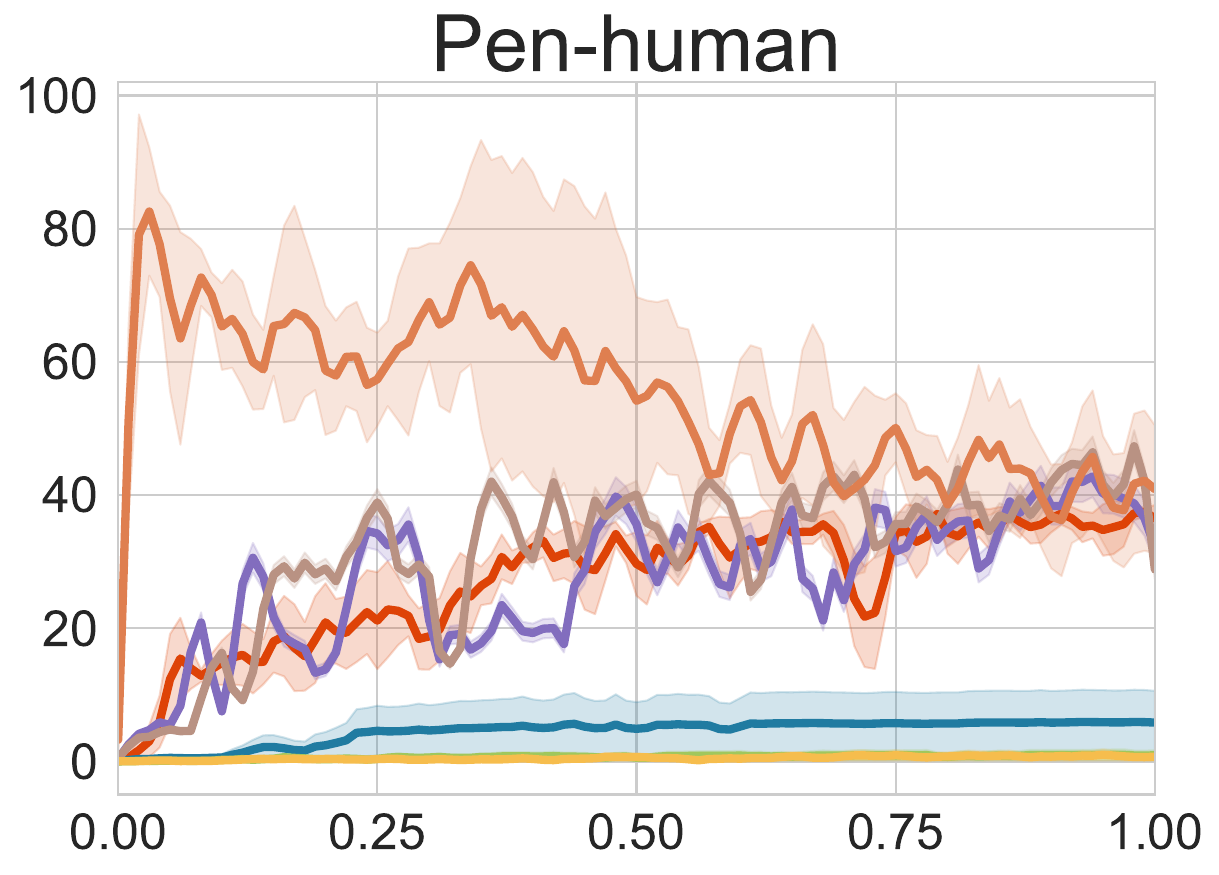}}
        
    \vspace{-10pt}
    \subfigure{
        \includegraphics[width=0.24\textwidth]{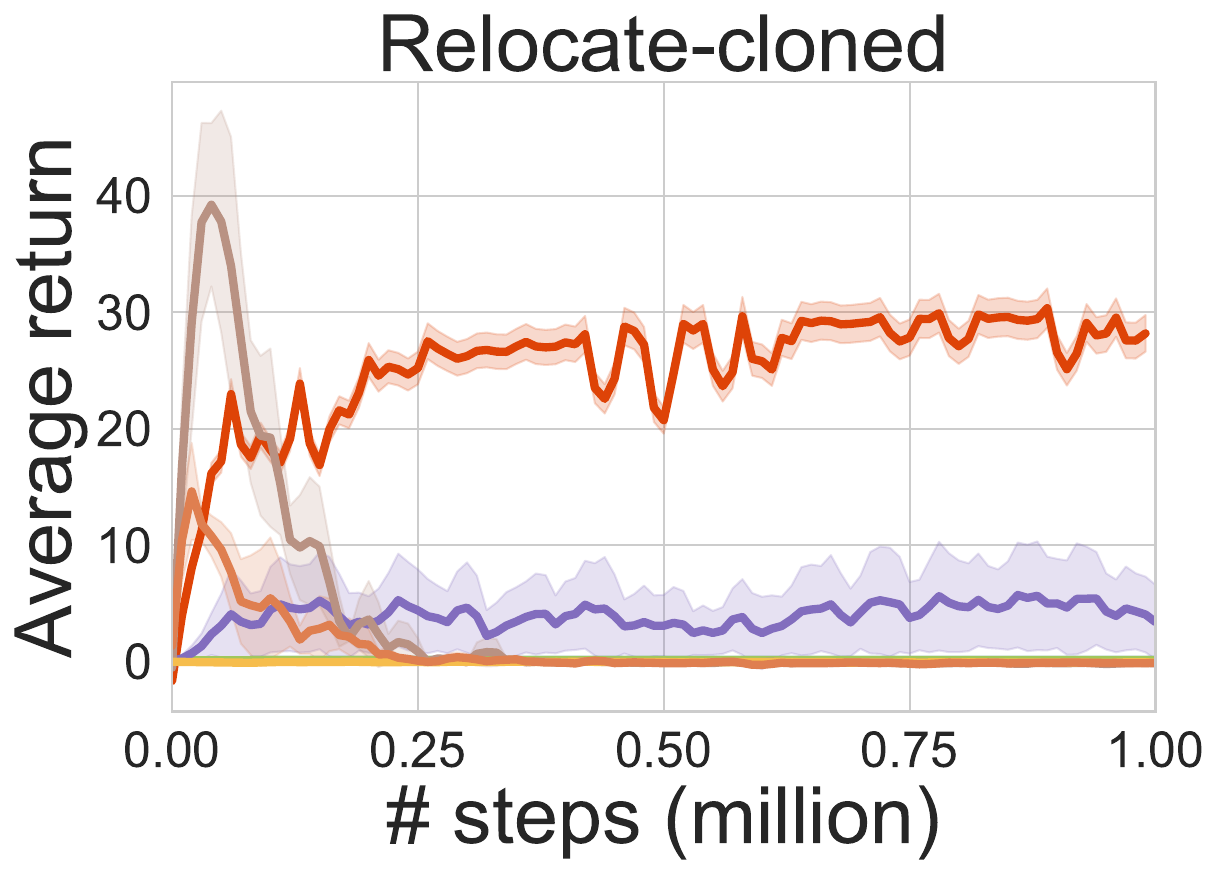}}
    \hspace{-4pt}
    \subfigure{
        \includegraphics[width=0.24\textwidth]{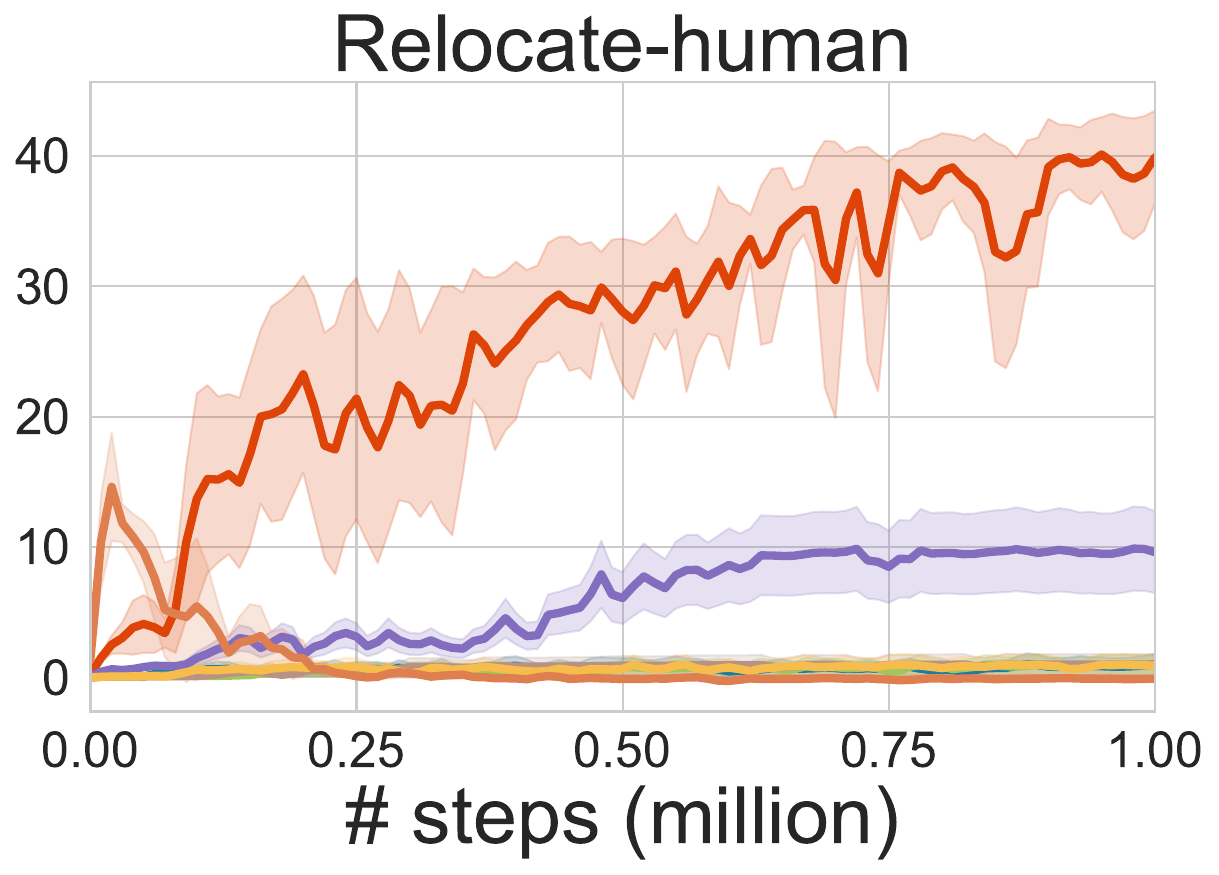}}
    \hspace{-4pt}
    \subfigure{
        \includegraphics[width=0.24\textwidth]{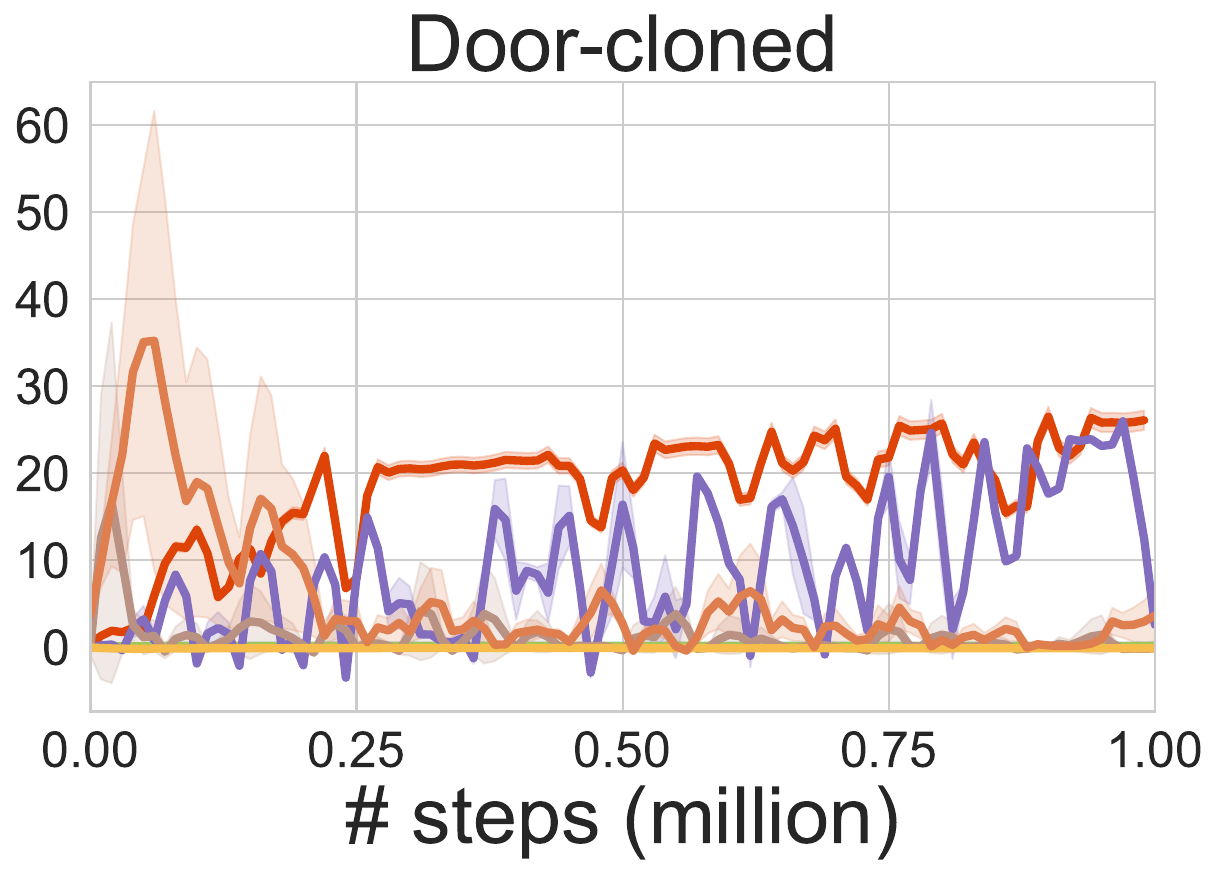}}
    \hspace{-4pt}
    \subfigure{
        \includegraphics[width=0.24\textwidth]{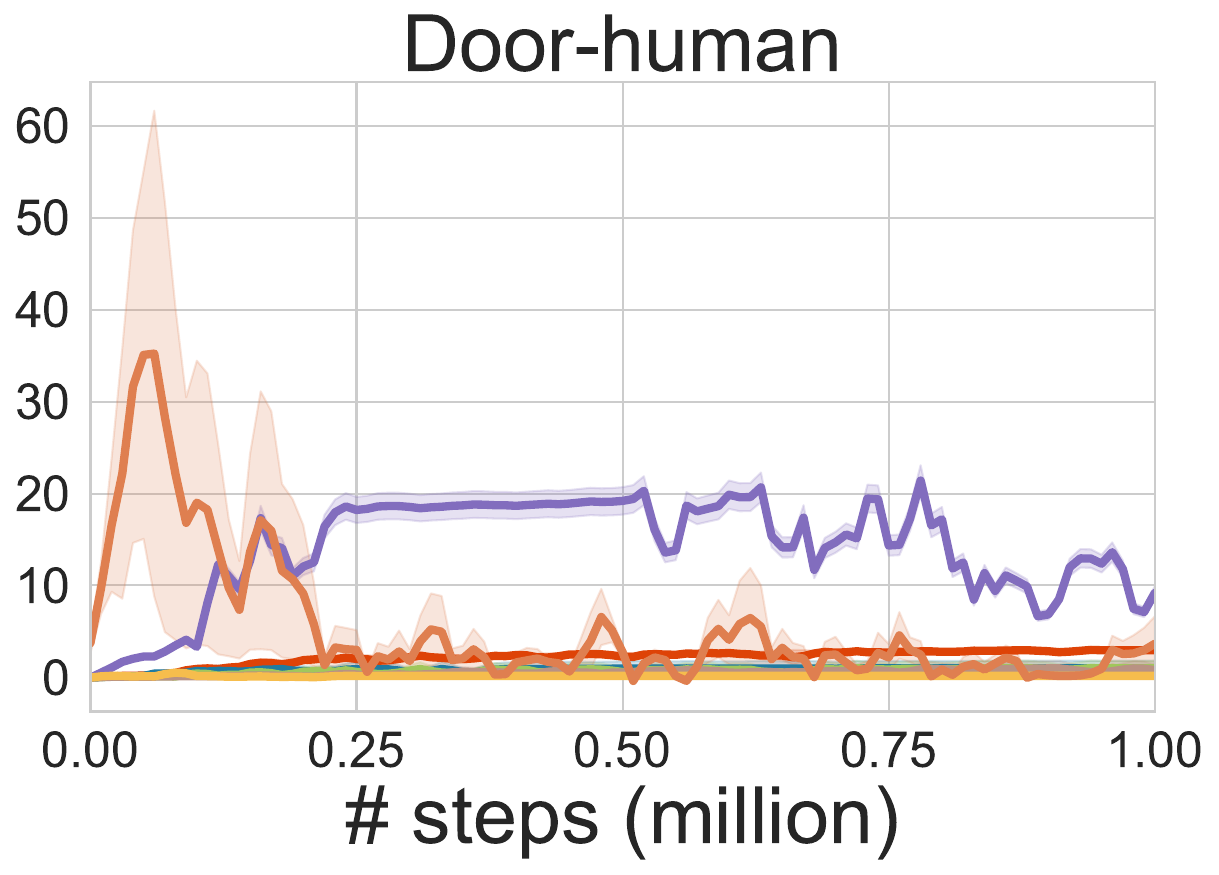}}
    \caption{Normalized performance under varying qualities of imperfect data.}    
    \label{fig:imperfect_all}
\end{figure}

\begin{figure}[ht]
    \centering
    \subfigure{
        \includegraphics[width=0.24\textwidth]{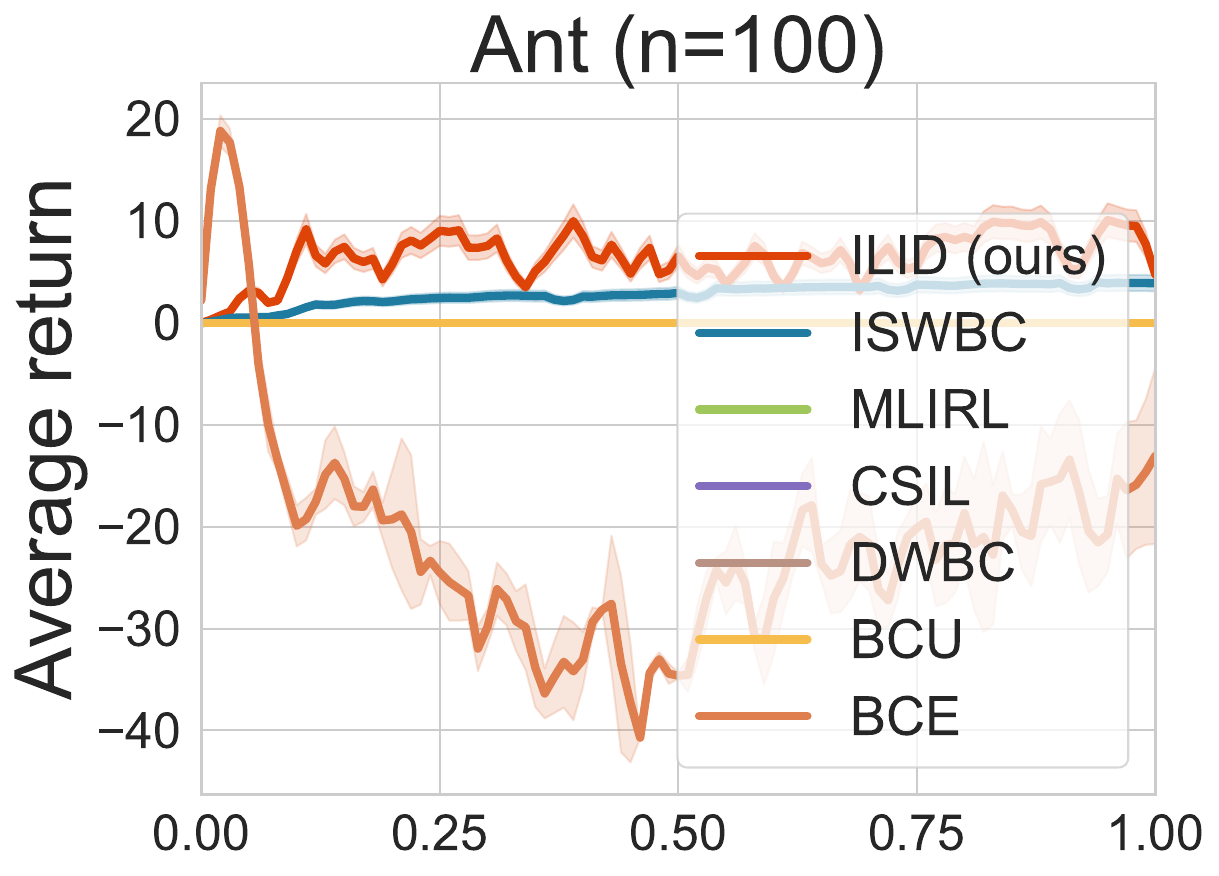}}
    \hspace{-4pt}
    \subfigure{
        \includegraphics[width=0.24\textwidth]{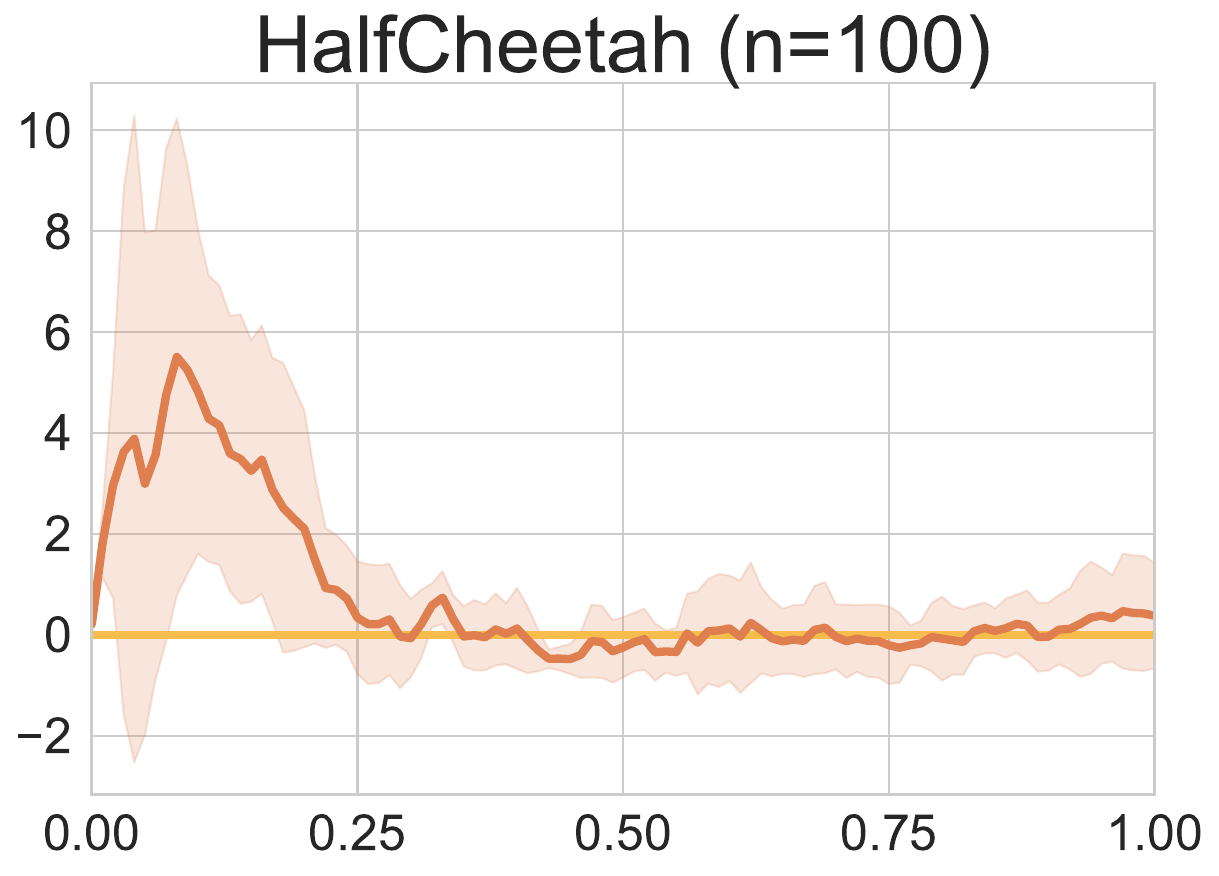}}
    \hspace{-4pt}
    \subfigure{
        \includegraphics[width=0.24\textwidth]{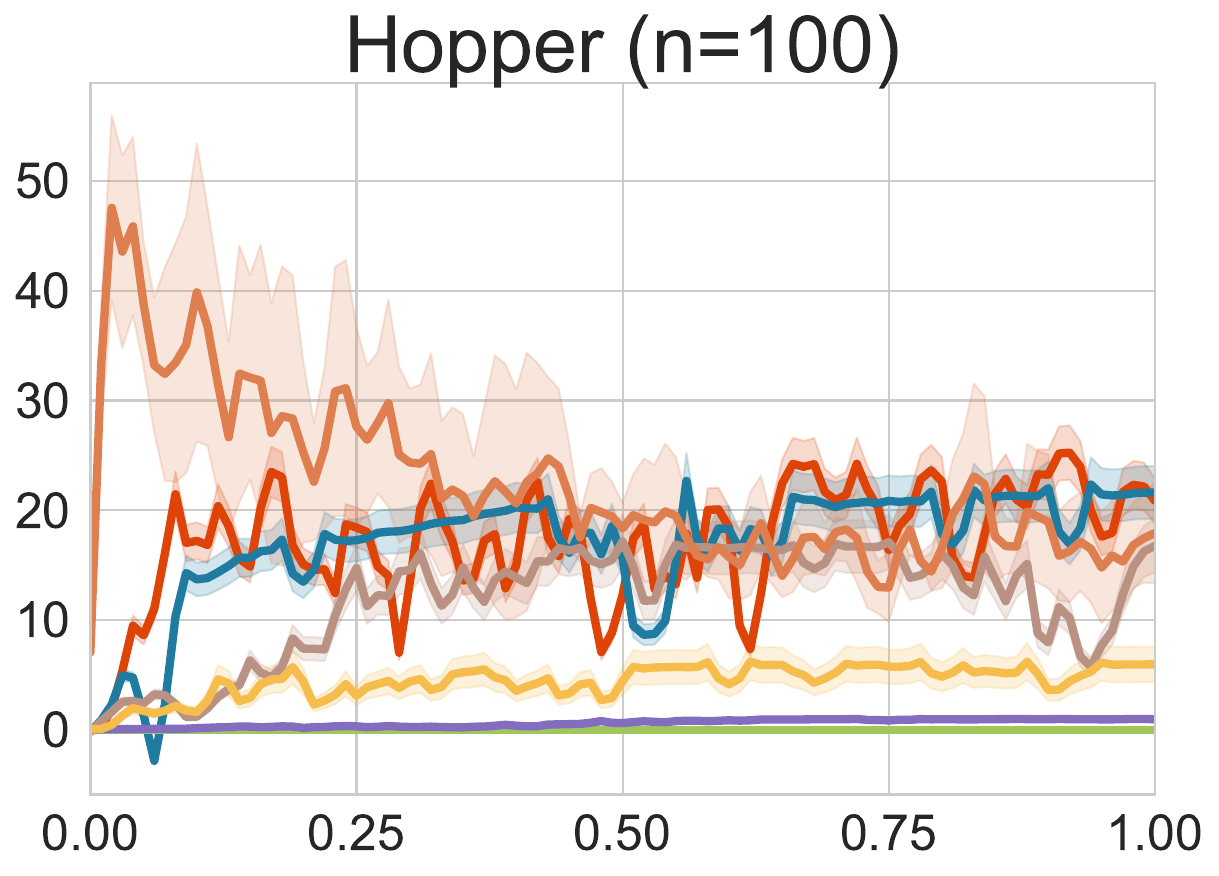}}
    \hspace{-4pt}
    \subfigure{
        \includegraphics[width=0.24\textwidth]{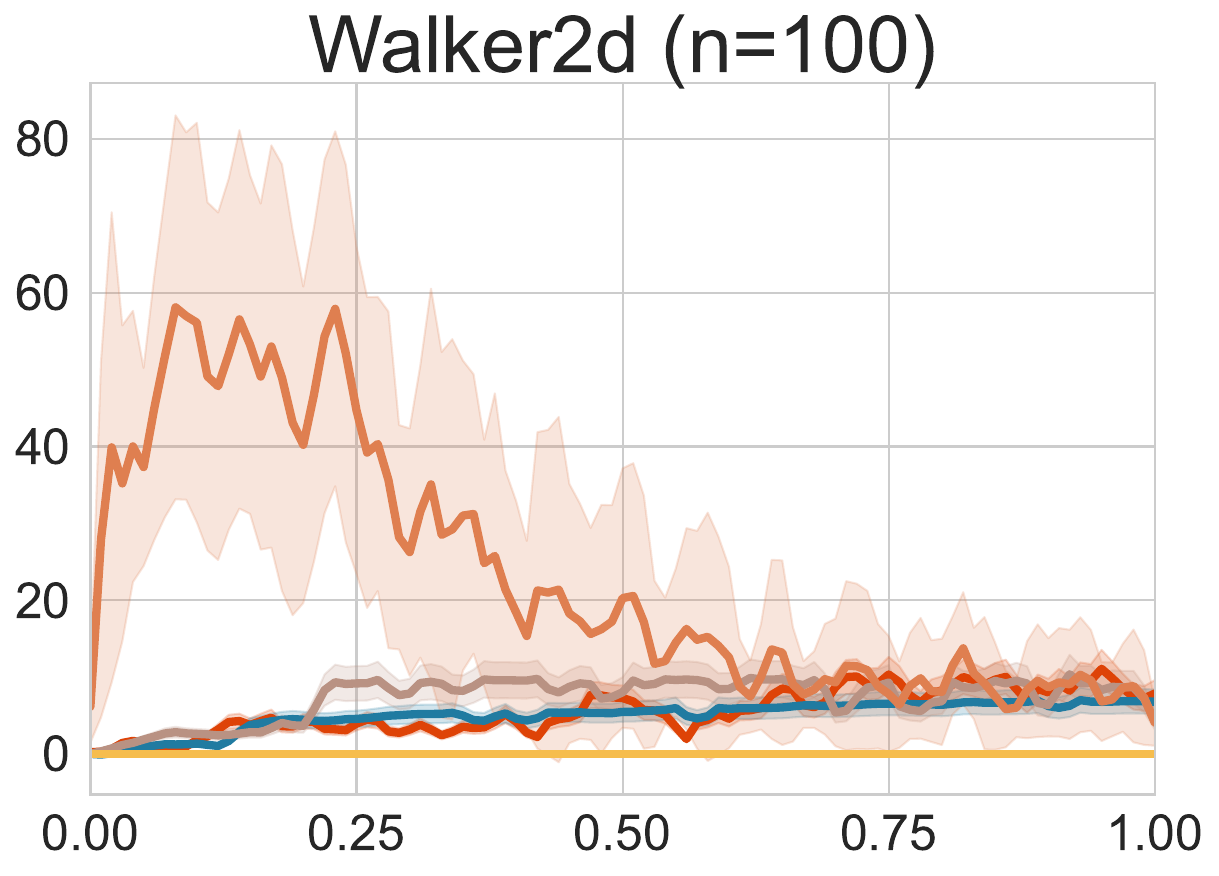}}
    
    \vspace{-10pt}
    \subfigure{
        \includegraphics[width=0.24\textwidth]{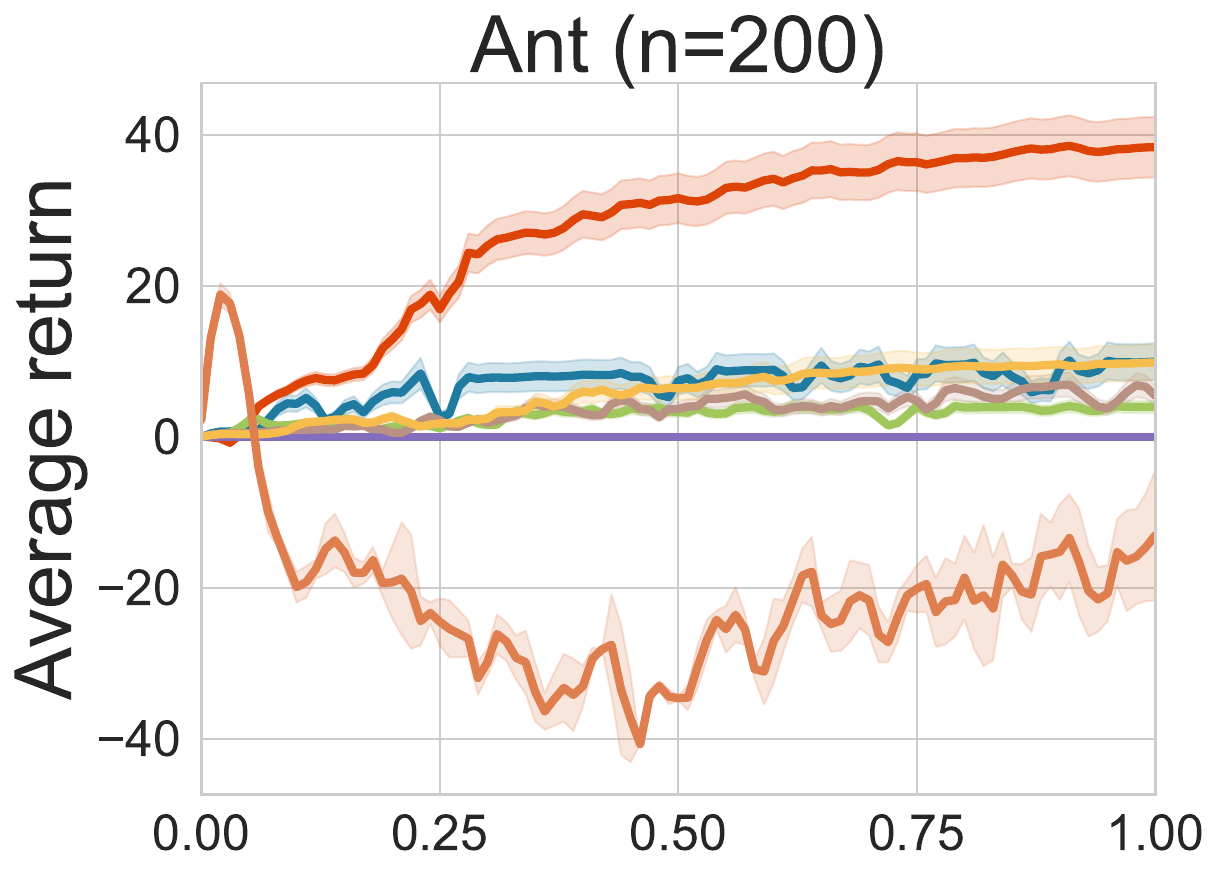}}
    \hspace{-4pt}
    \subfigure{
        \includegraphics[width=0.24\textwidth]{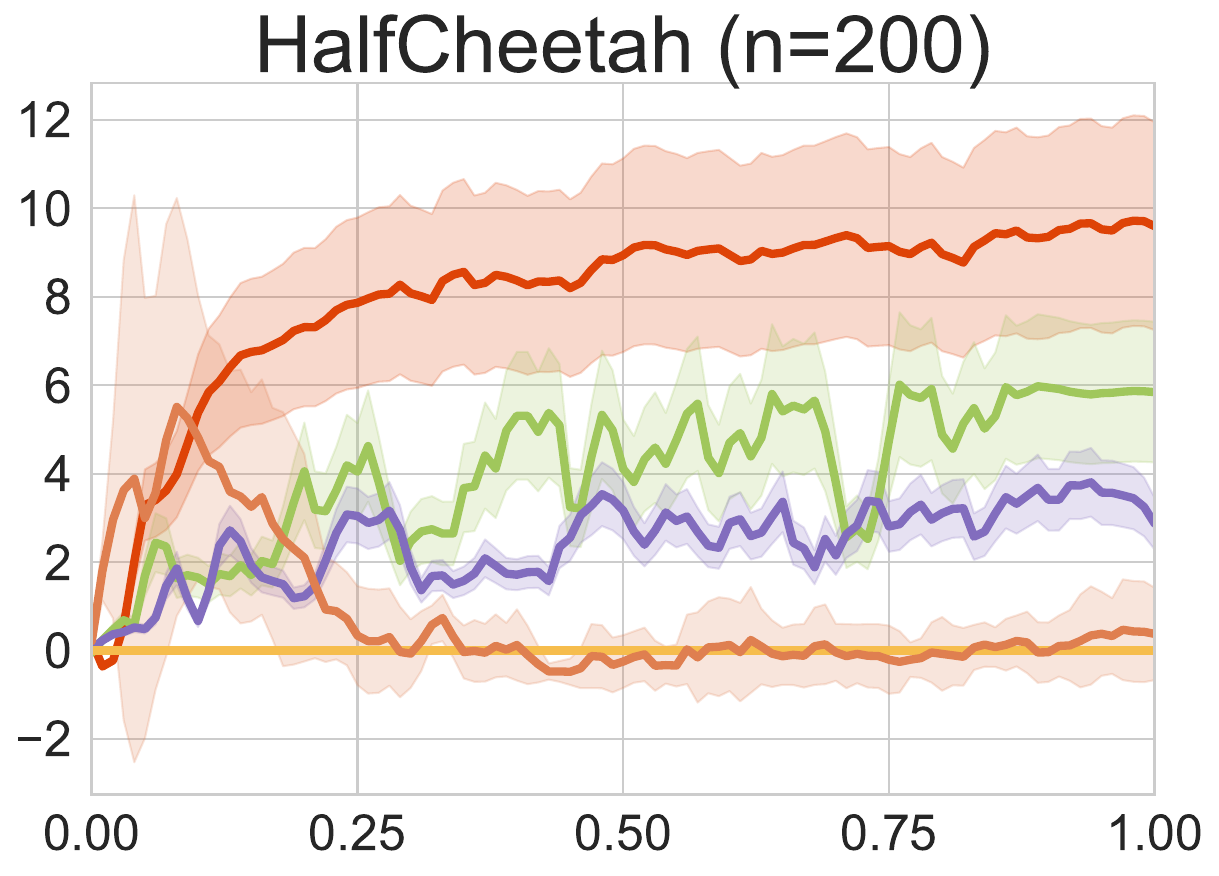}}
    \hspace{-4pt}
    \subfigure{
        \includegraphics[width=0.24\textwidth]{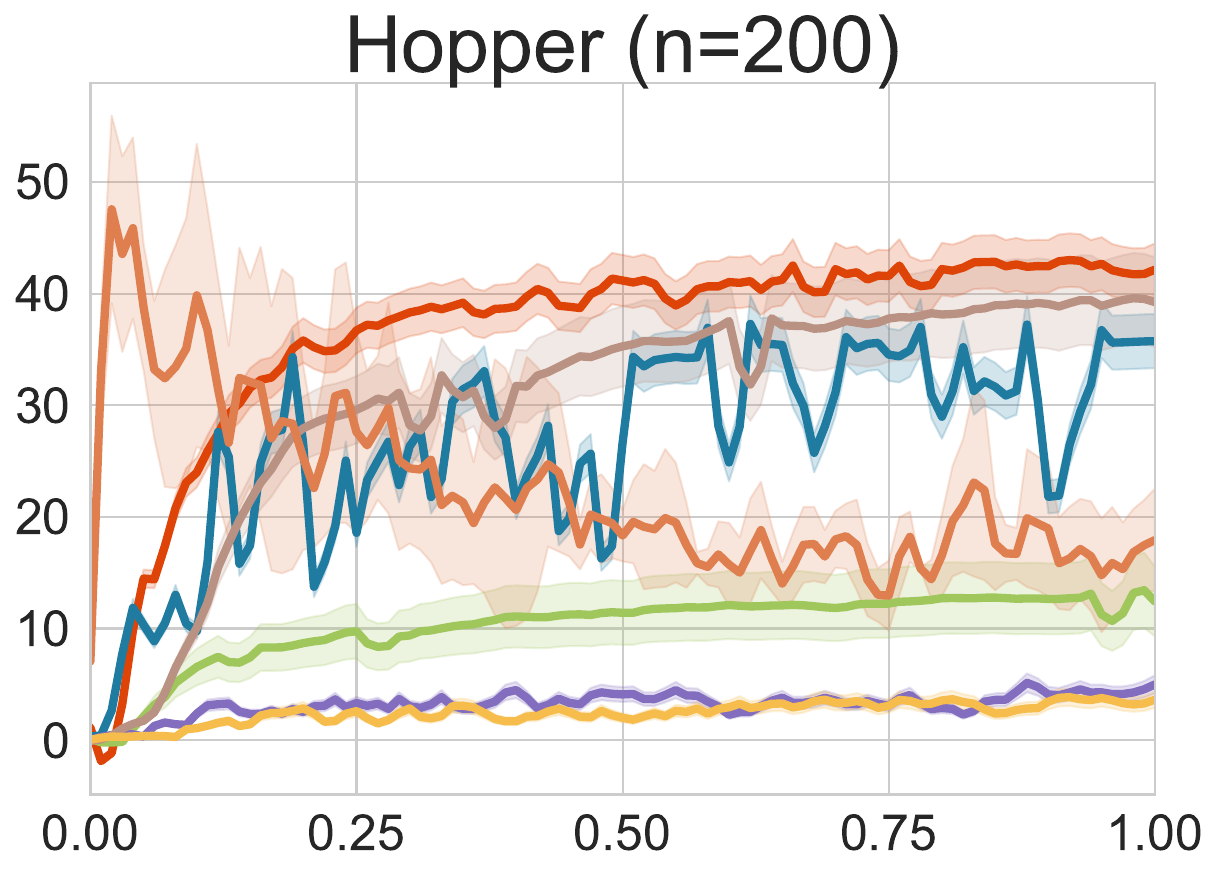}}
    \hspace{-4pt}
    \subfigure{
        \includegraphics[width=0.24\textwidth]{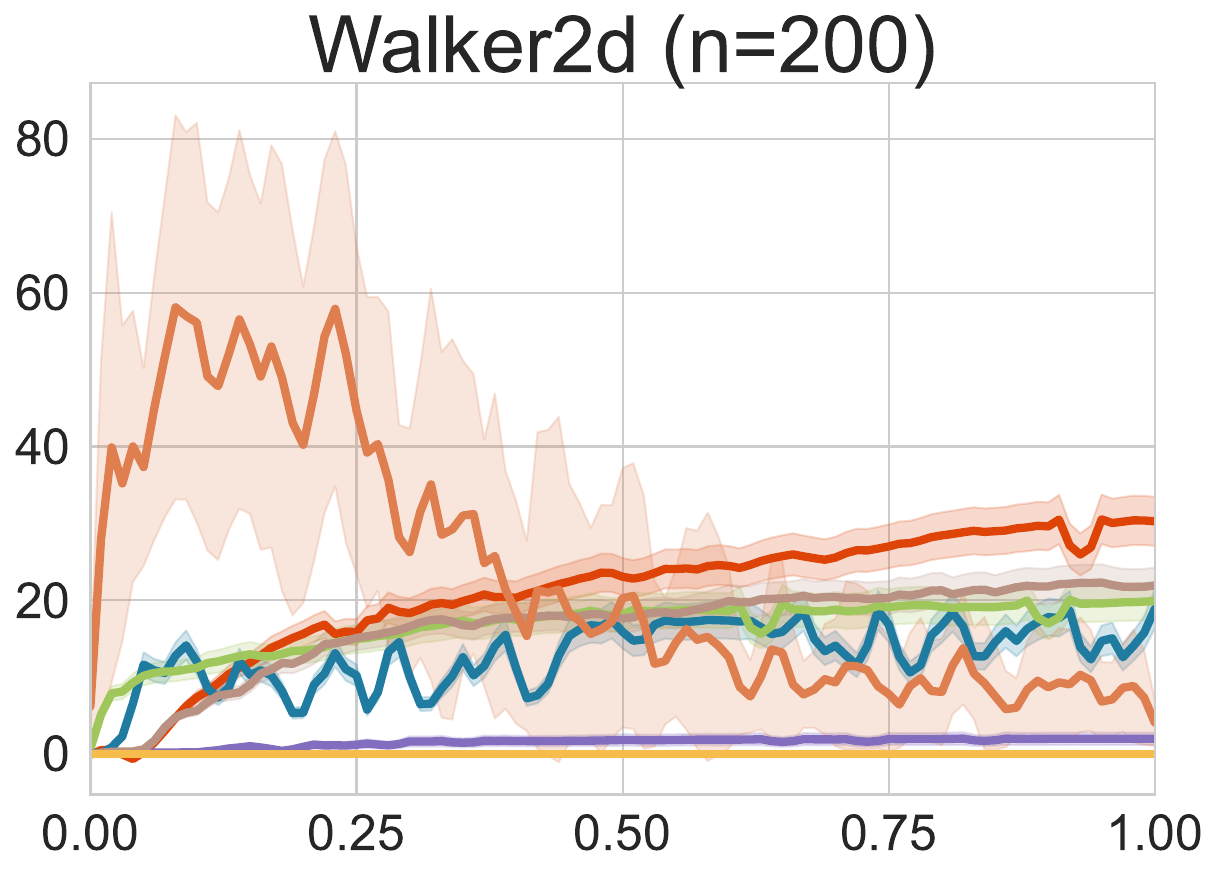}}
    
    \vspace{-10pt}
    \subfigure{
        \includegraphics[width=0.24\textwidth]{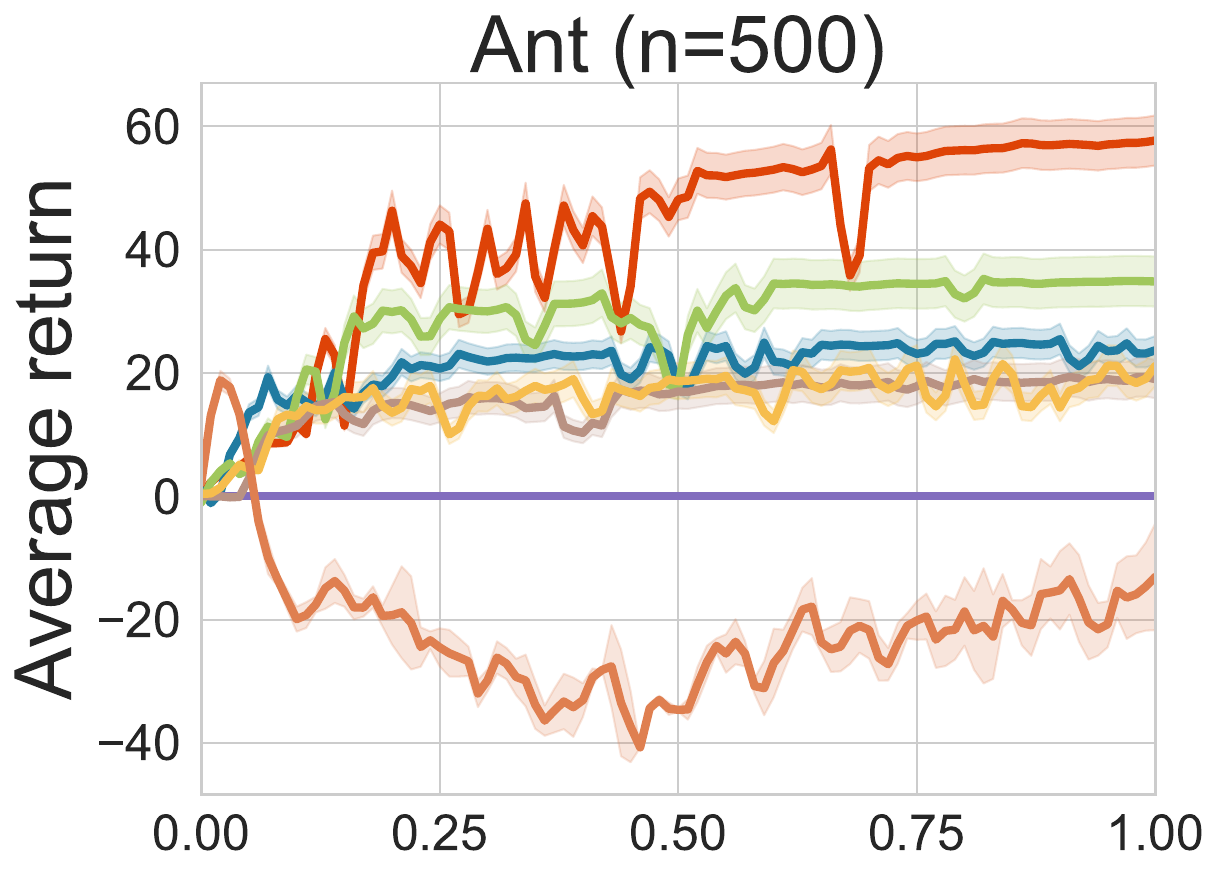}}
    \hspace{-4pt}
    \subfigure{
        \includegraphics[width=0.24\textwidth]{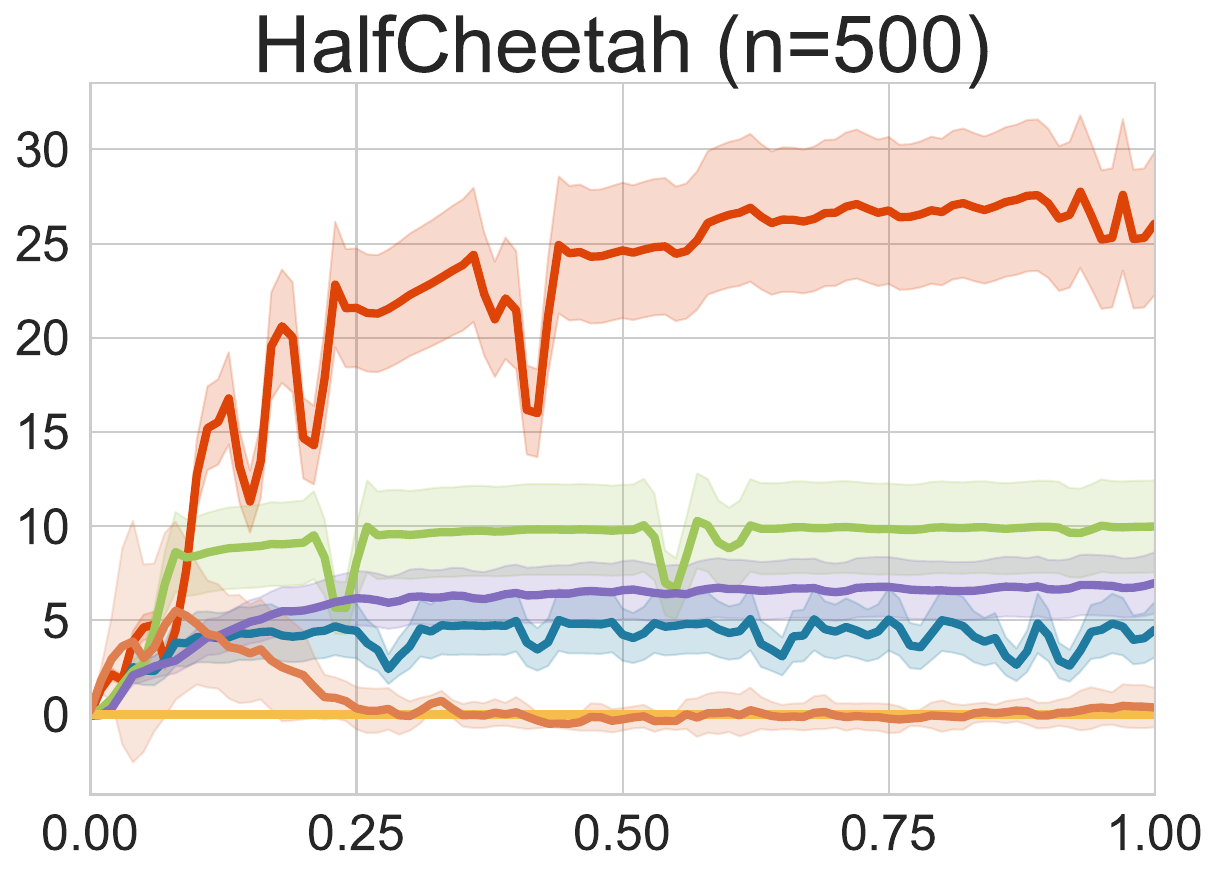}}
    \hspace{-4pt}
    \subfigure{
        \includegraphics[width=0.24\textwidth]{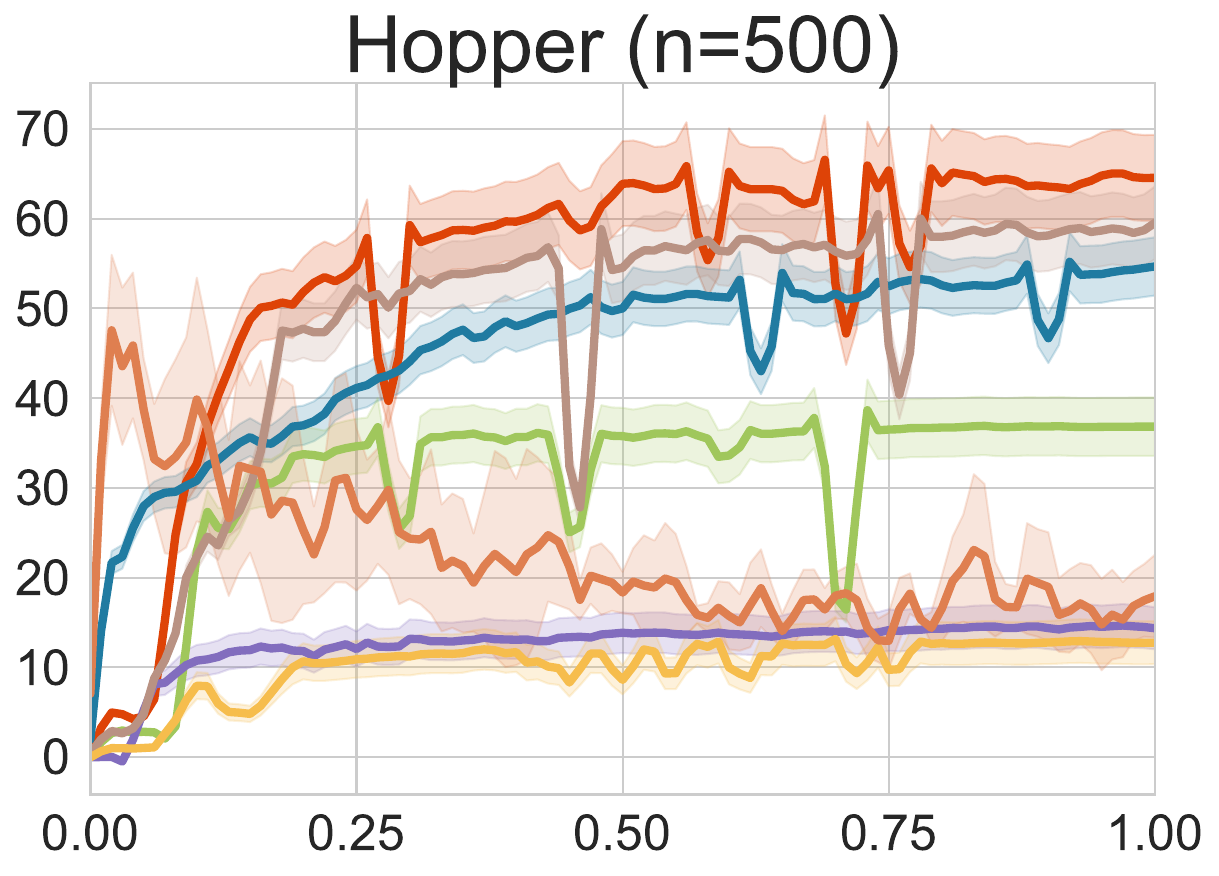}}
    \hspace{-4pt}
    \subfigure{
        \includegraphics[width=0.24\textwidth]{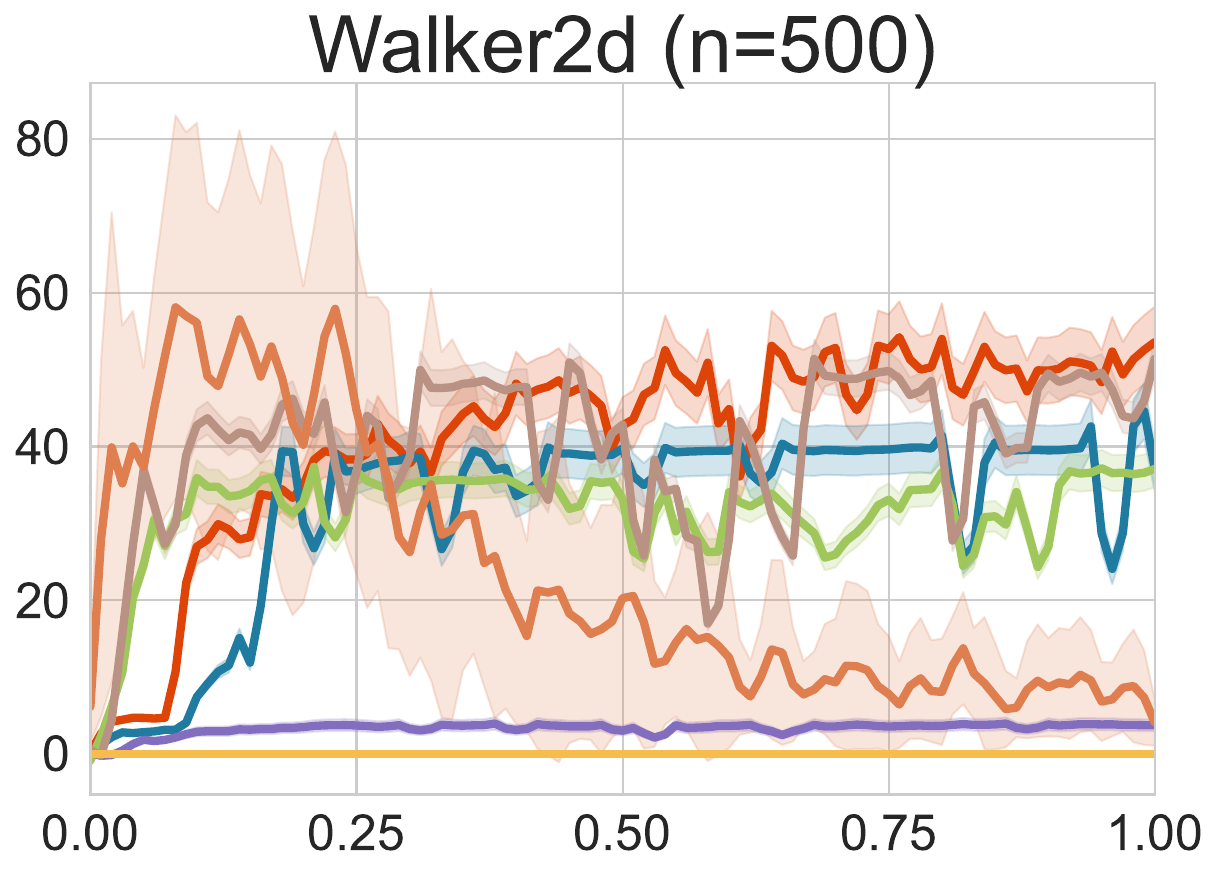}}
    
    \vspace{-10pt}
    \subfigure{
        \includegraphics[width=0.24\textwidth]{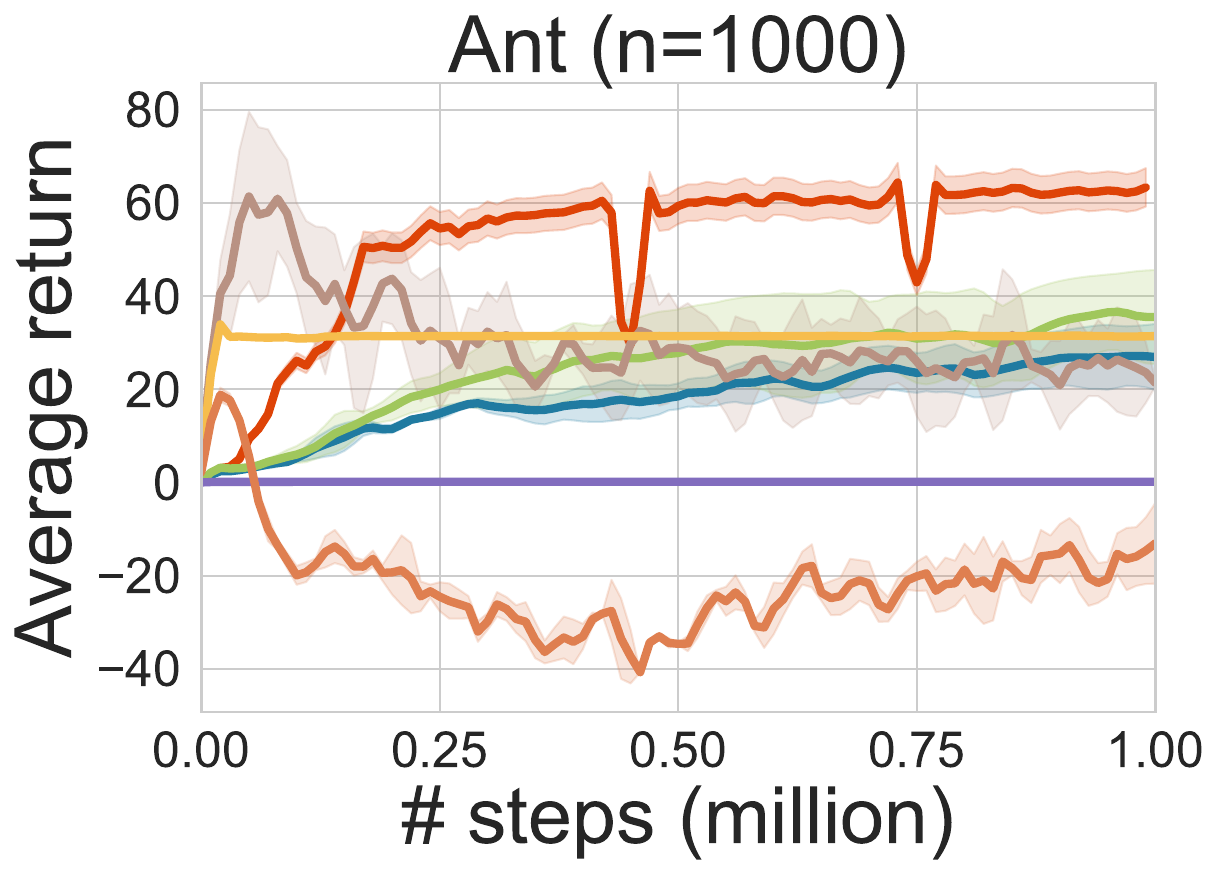}}
    \hspace{-4pt}
    \subfigure{
        \includegraphics[width=0.24\textwidth]{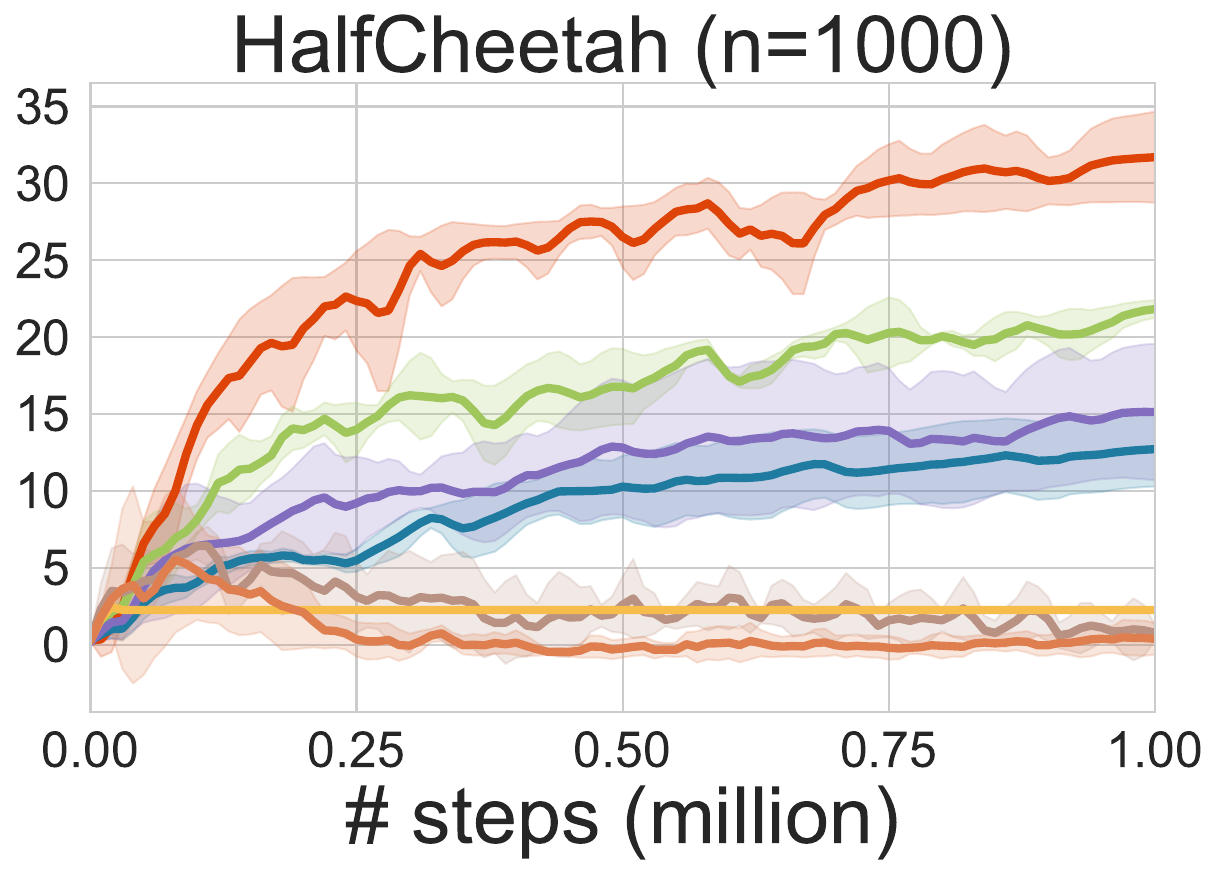}}
    \hspace{-4pt}
    \subfigure{
        \includegraphics[width=0.24\textwidth]{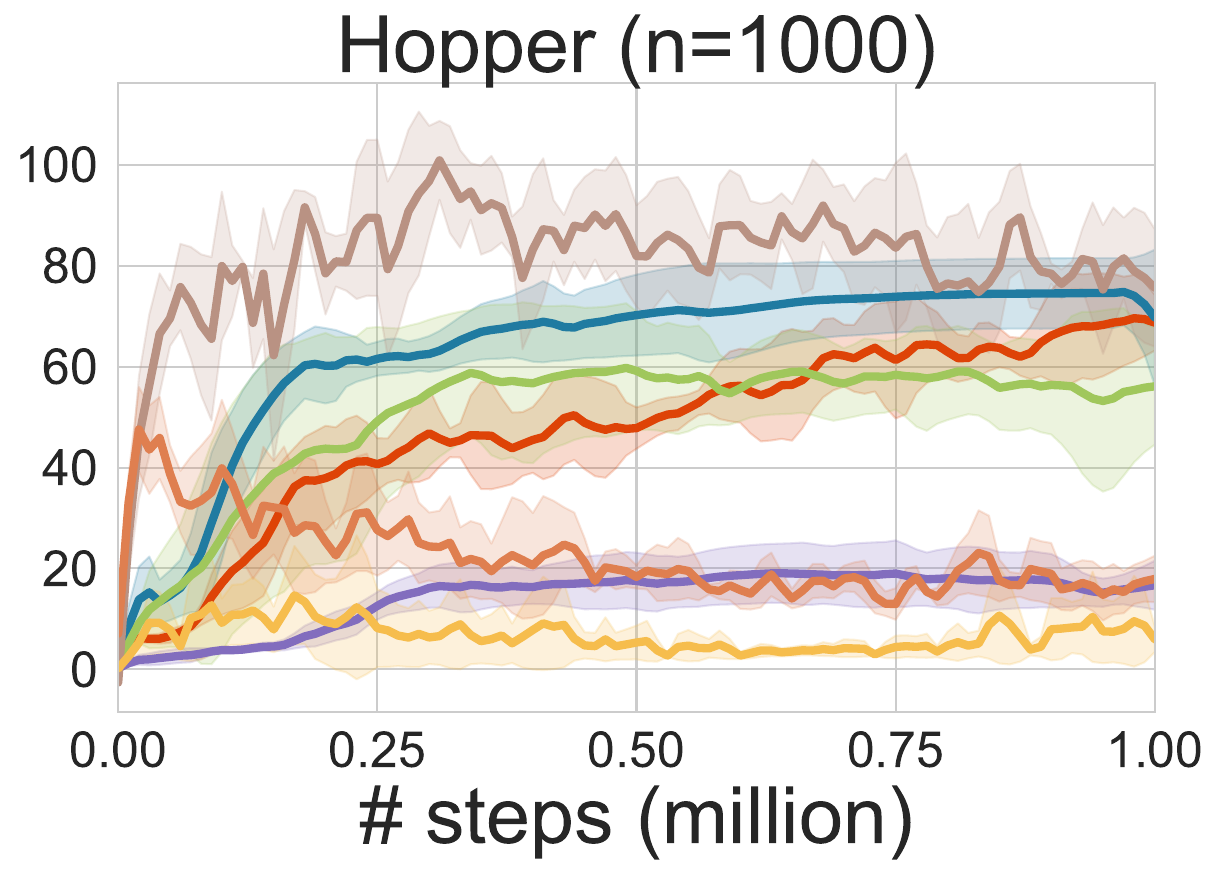}}
    \hspace{-4pt}
    \subfigure{
        \includegraphics[width=0.24\textwidth]{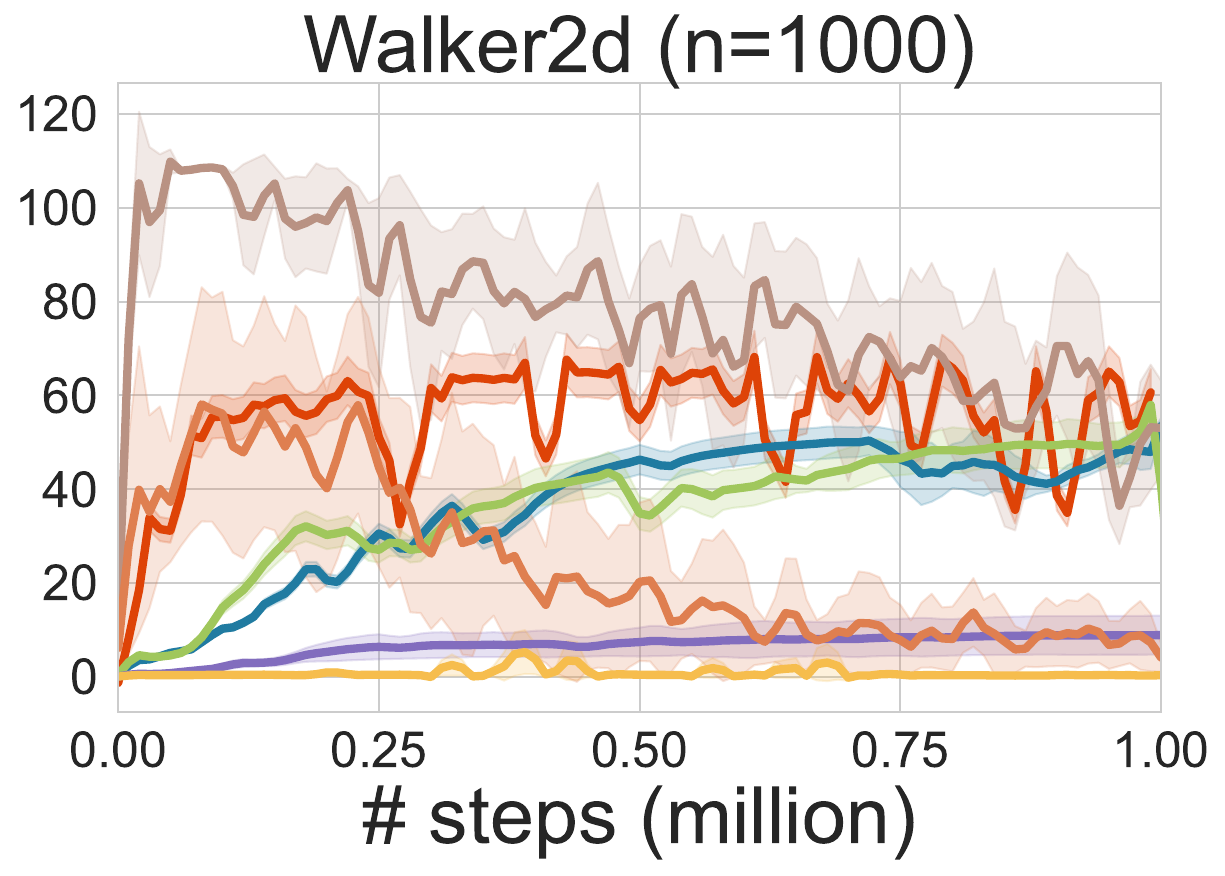}}
    
    \caption{Effect of the quantity of imperfect demonstrations}
    \label{fig:random_num_bar}
\end{figure}

\clearpage

\subsection{Rollback Steps}
\label{sec:rollback_steps}

Regarding the fourth question, we vary $K$ from 1 to 100 and run experiments across all benchmarks. \cref{fig:rollback_all} clearly indicate that as $K$ increases, there is an initial improvement in performance; once it reaches a sufficiently large value, performance tends to stabilize. Considering that a larger rollback step leads to more selected behaviors capable of reaching expert states, this observation offers support for  \cref{hypo:resultant_state}. Importantly, the performance proves to be robust to a relatively large $K$, rendering \texttt{ILID} forgiving to the hyperparameter.

\begin{figure}[ht]
    \centering
    \subfigure{
        \includegraphics[width=0.99\textwidth]{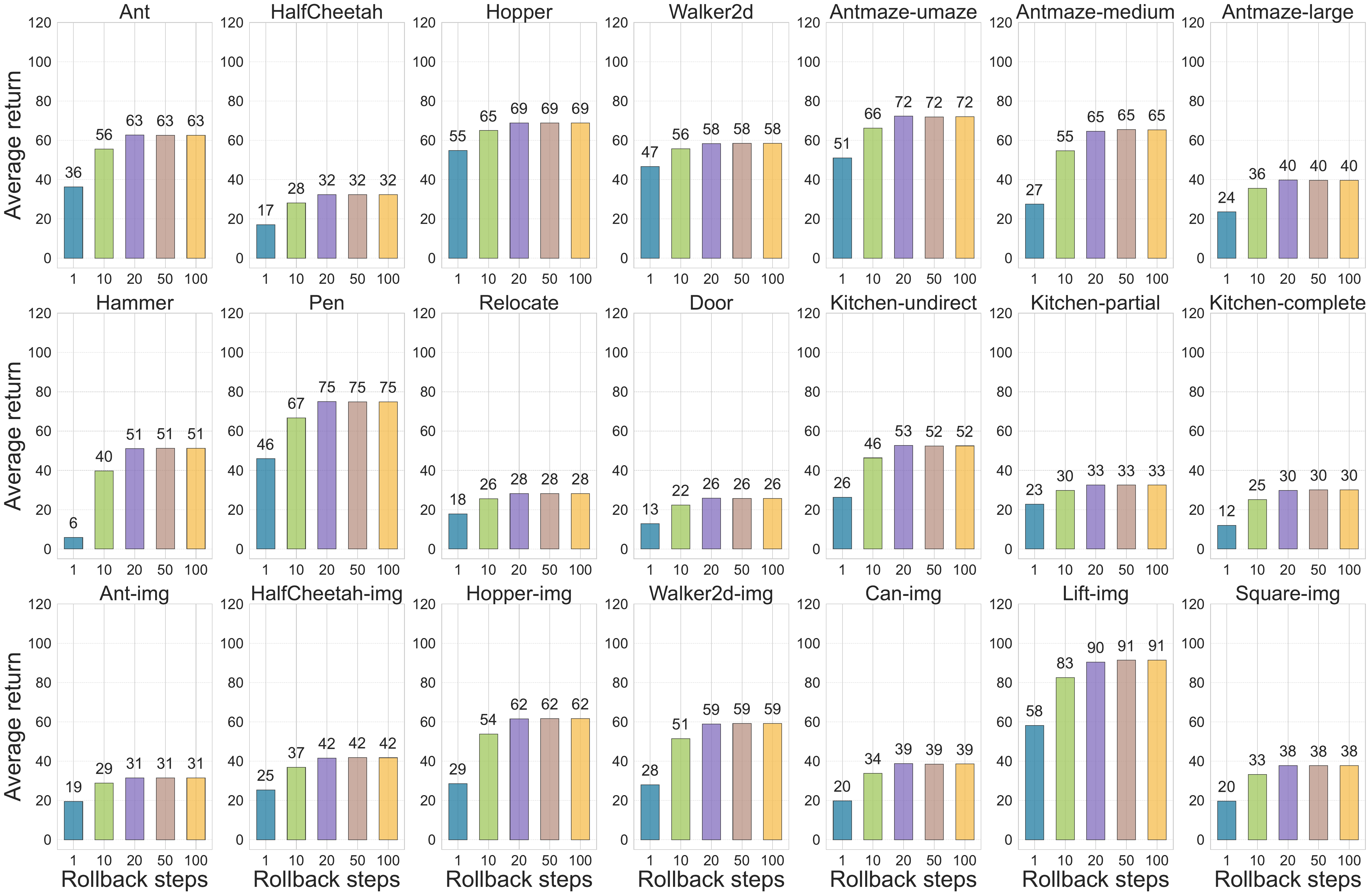}}
    \caption{Performance of \texttt{ILID} under varying numbers of rollback steps}
    \label{fig:rollback_all}
\end{figure}



\subsection{Runtime}
\label{sec:runtime}

We evaluate the runtime of \texttt{ILID} compared with baseline algorithms for 250,000 training steps, utilizing the same network size and batch size on an NVIDIA 4090 GPU. As illustrated by \cref{subfig:runtime}, the runtime of \texttt{ILID} (around \SI{40}{min}) is slightly longer than \texttt{BC} (around \SI{30}{min}), which substantiates the low computational cost of \texttt{ILID}. 

\clearpage

\subsection{Ablation Studies}
\label{sec:ablation}

In this section, we assess the effect of key components by ablating them, under the same setting as that of \cref{table:dataset_comparative}. 


\subsubsection{Only Importance-Sampling Weighting}

Without the second term in Problem~(\ref{eq:ilid_objective}), \texttt{ILID} reduces to \texttt{ISWBC}. Unsurprisingly, as shown in \cref{fig:dynamic_information_all_1,fig:dynamic_information_all_2}, \texttt{ISWBC} does not suffice satisfactory performance.



\begin{figure}[ht]
    \centering
    \subfigure{
        \includegraphics[width=0.24\textwidth]{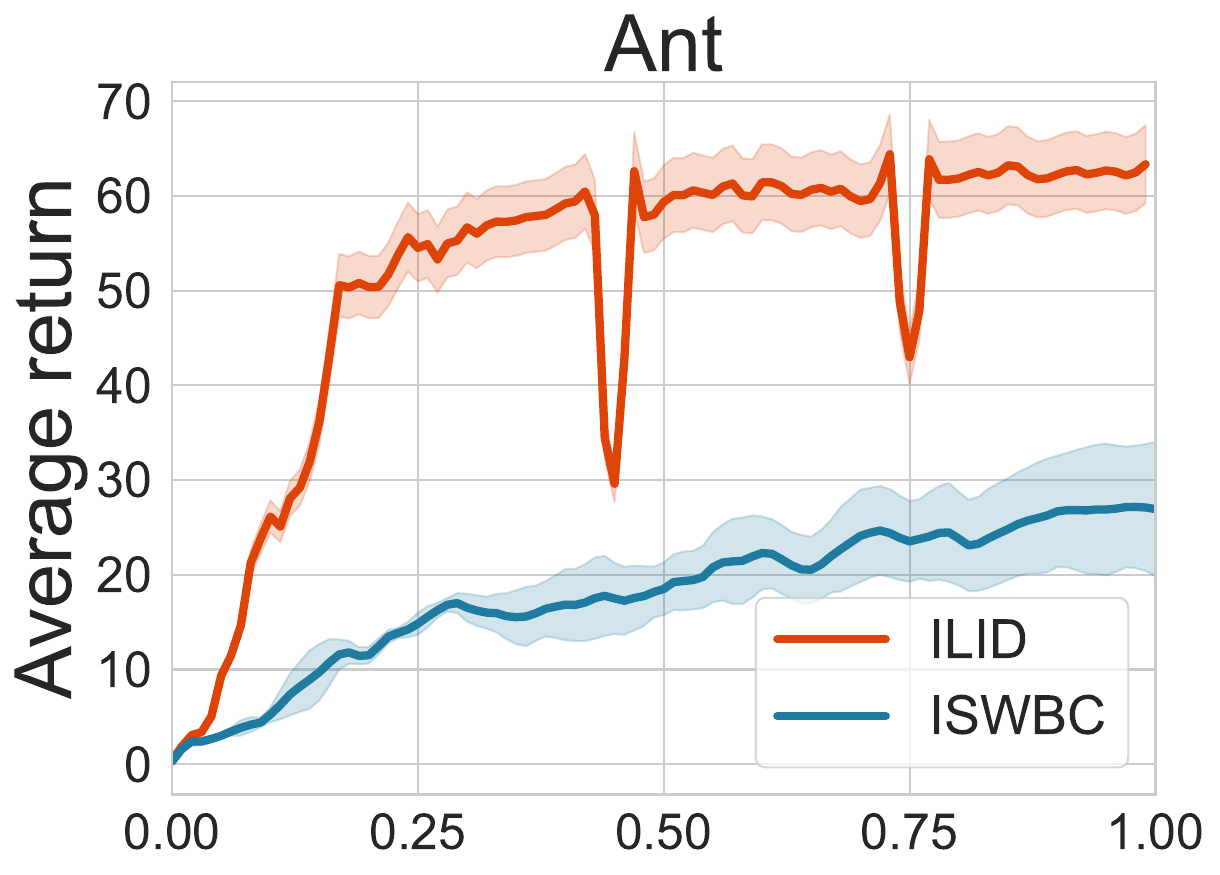}}
    \hspace{-4pt}
    \subfigure{
        \includegraphics[width=0.24\textwidth]{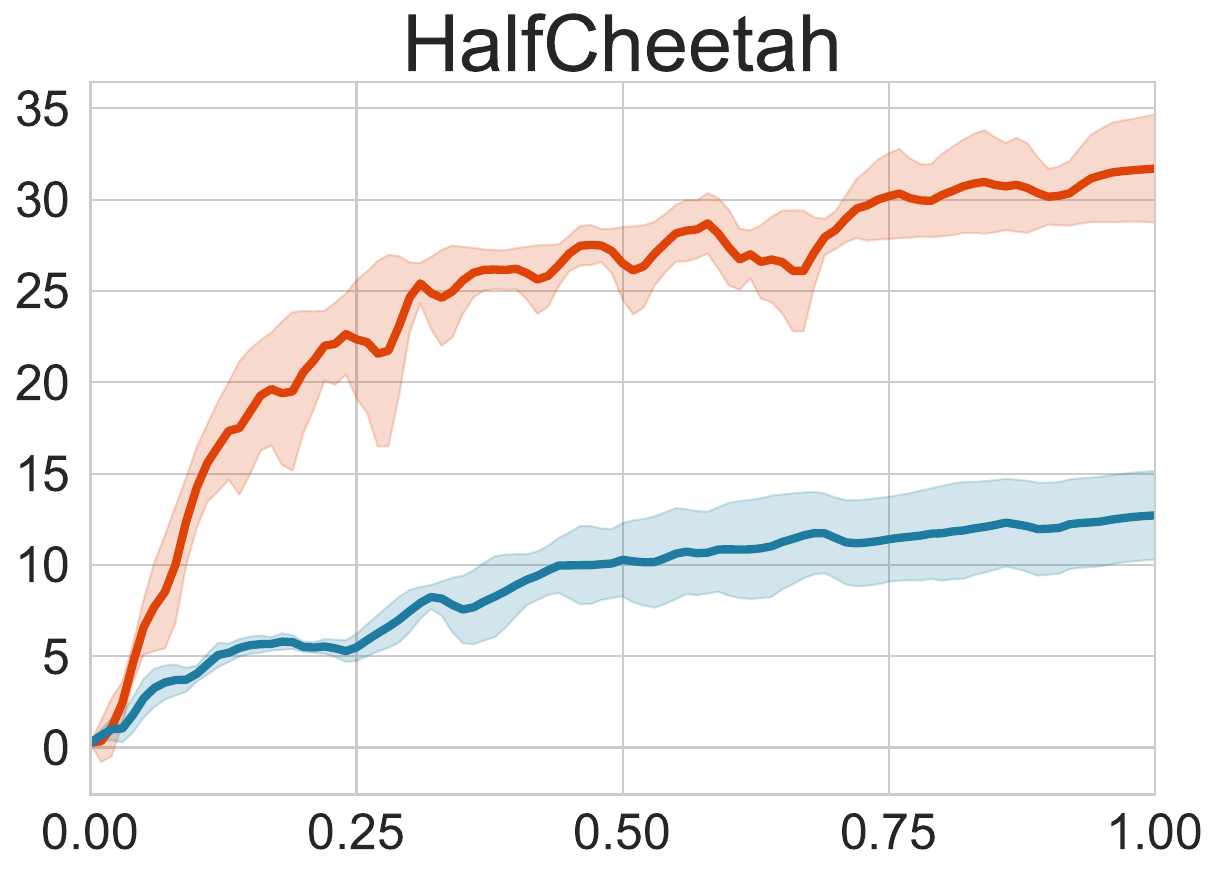}}
    \hspace{-4pt}
    \subfigure{
        \includegraphics[width=0.24\textwidth]{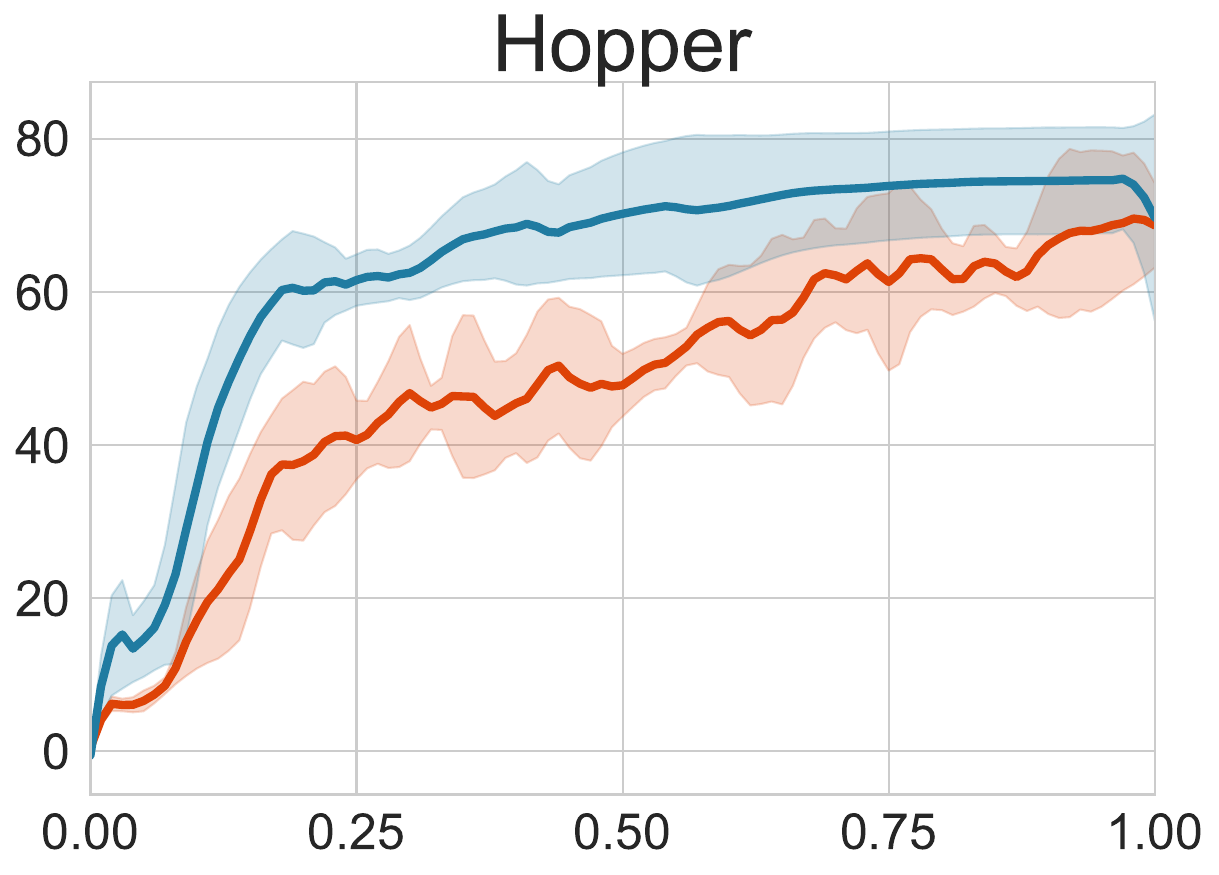}}
    \hspace{-4pt}
    \subfigure{
        \includegraphics[width=0.24\textwidth]{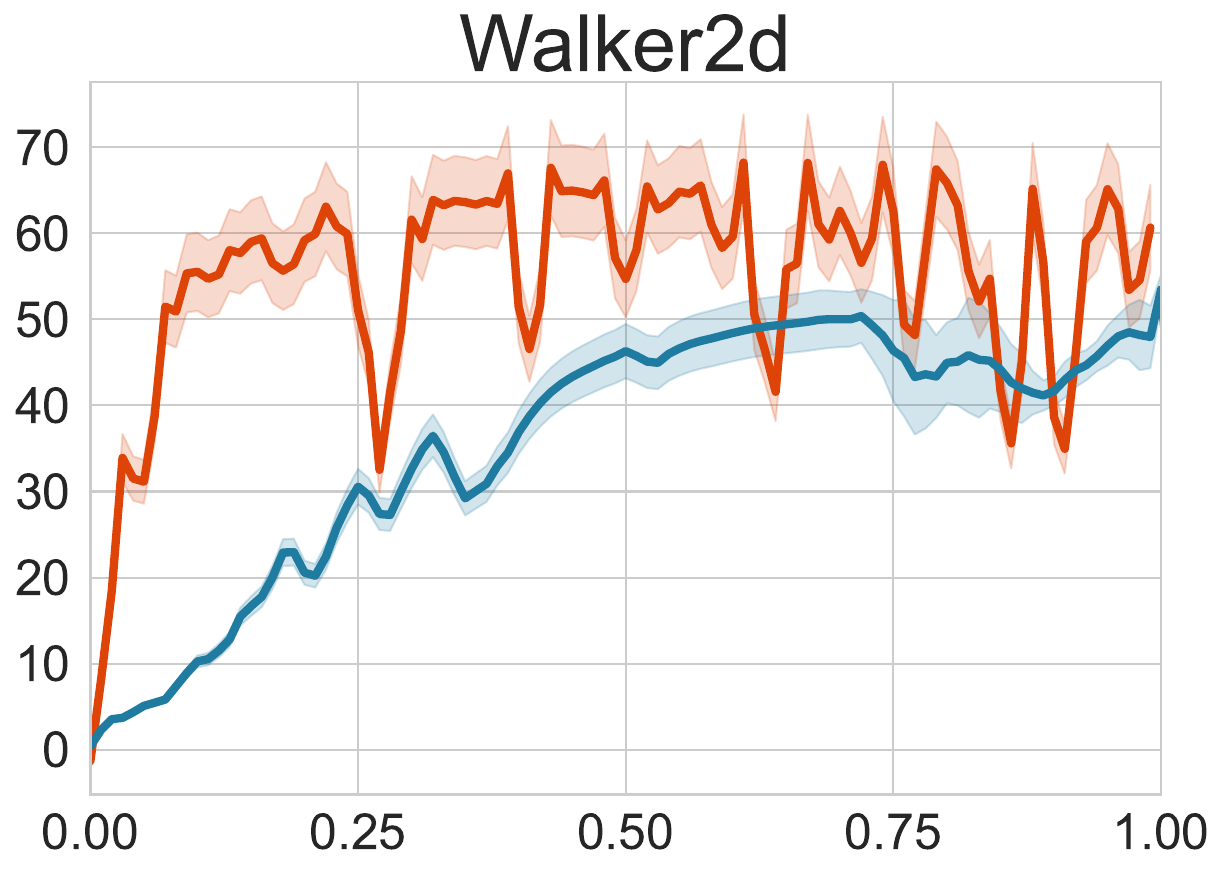}}
    
    \vspace{-10pt}
    \subfigure{
        \includegraphics[width=0.24\textwidth]{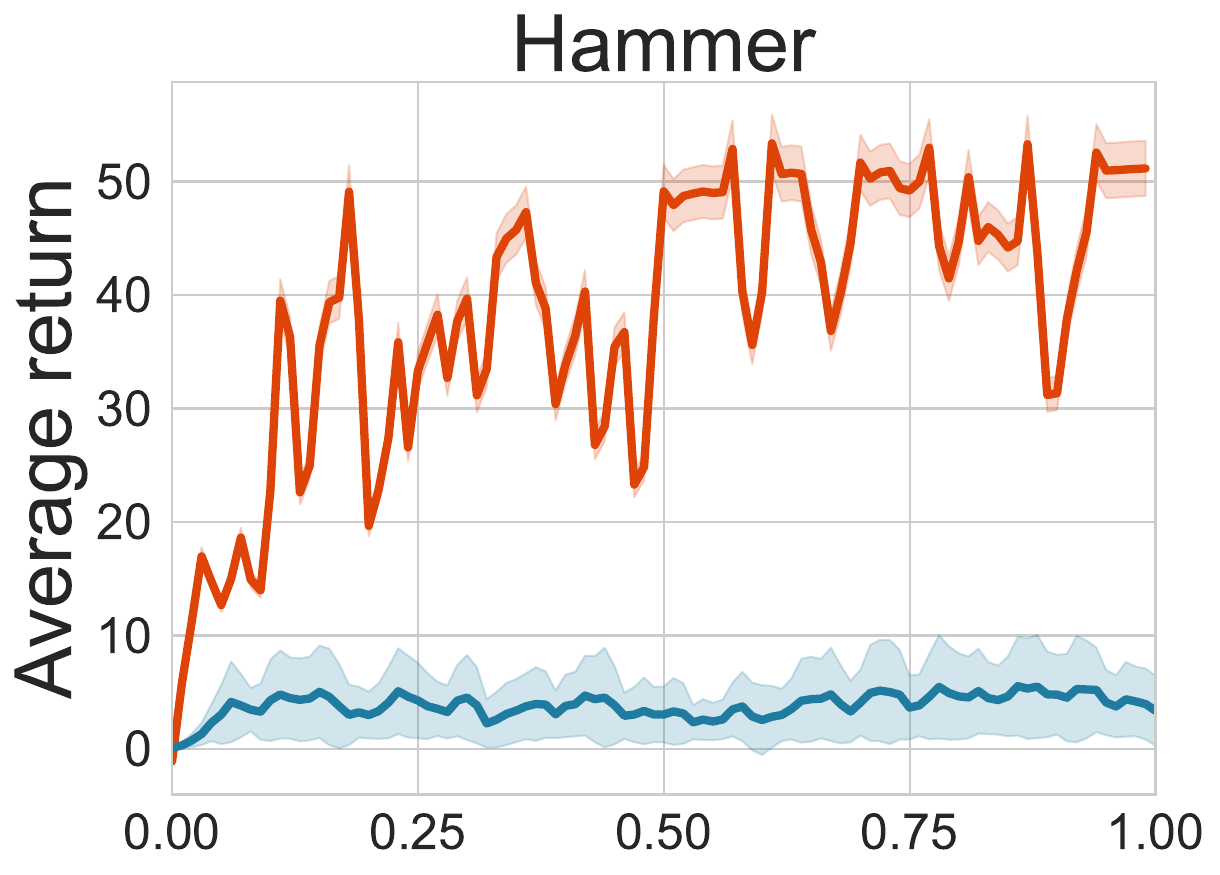}}
    \hspace{-4pt}
    \subfigure{
        \includegraphics[width=0.24\textwidth]{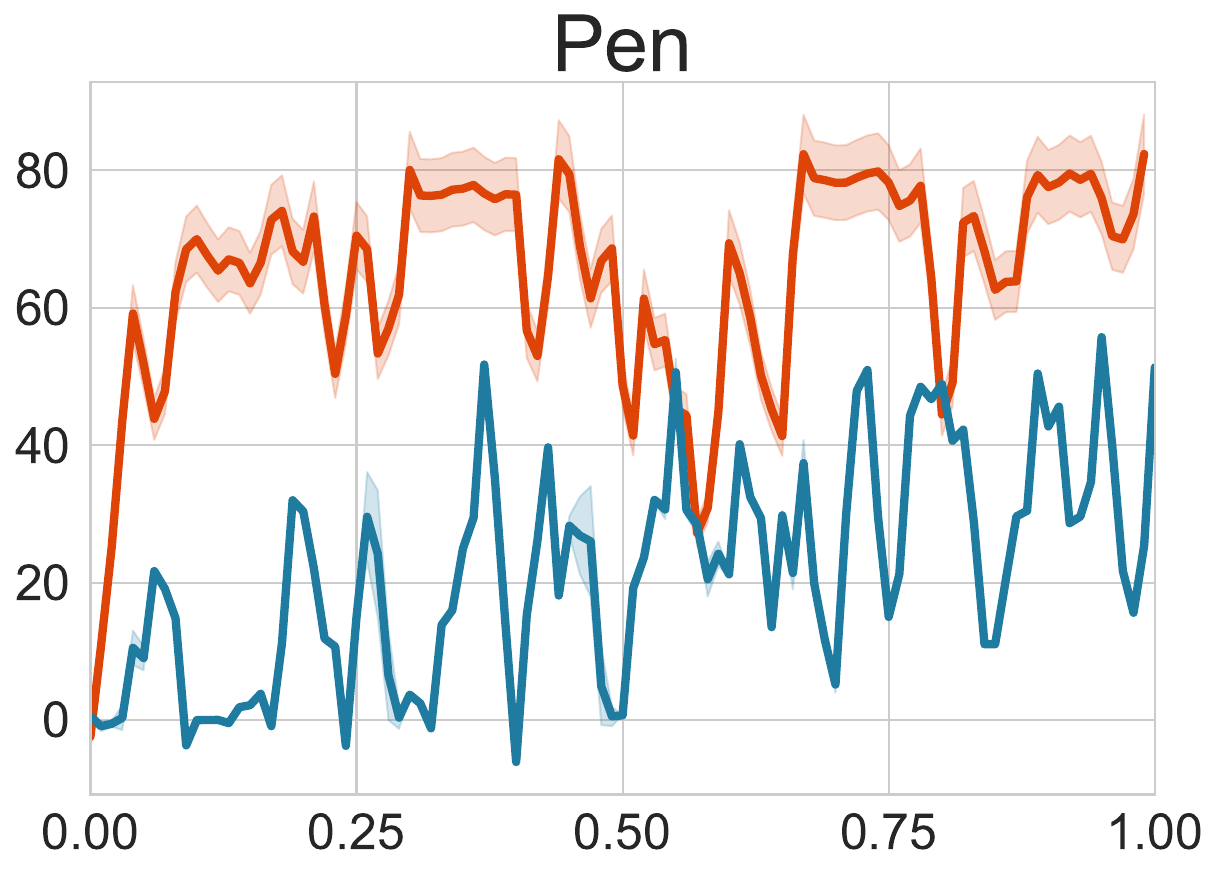}}
    \hspace{-4pt}
    \subfigure{
        \includegraphics[width=0.24\textwidth]{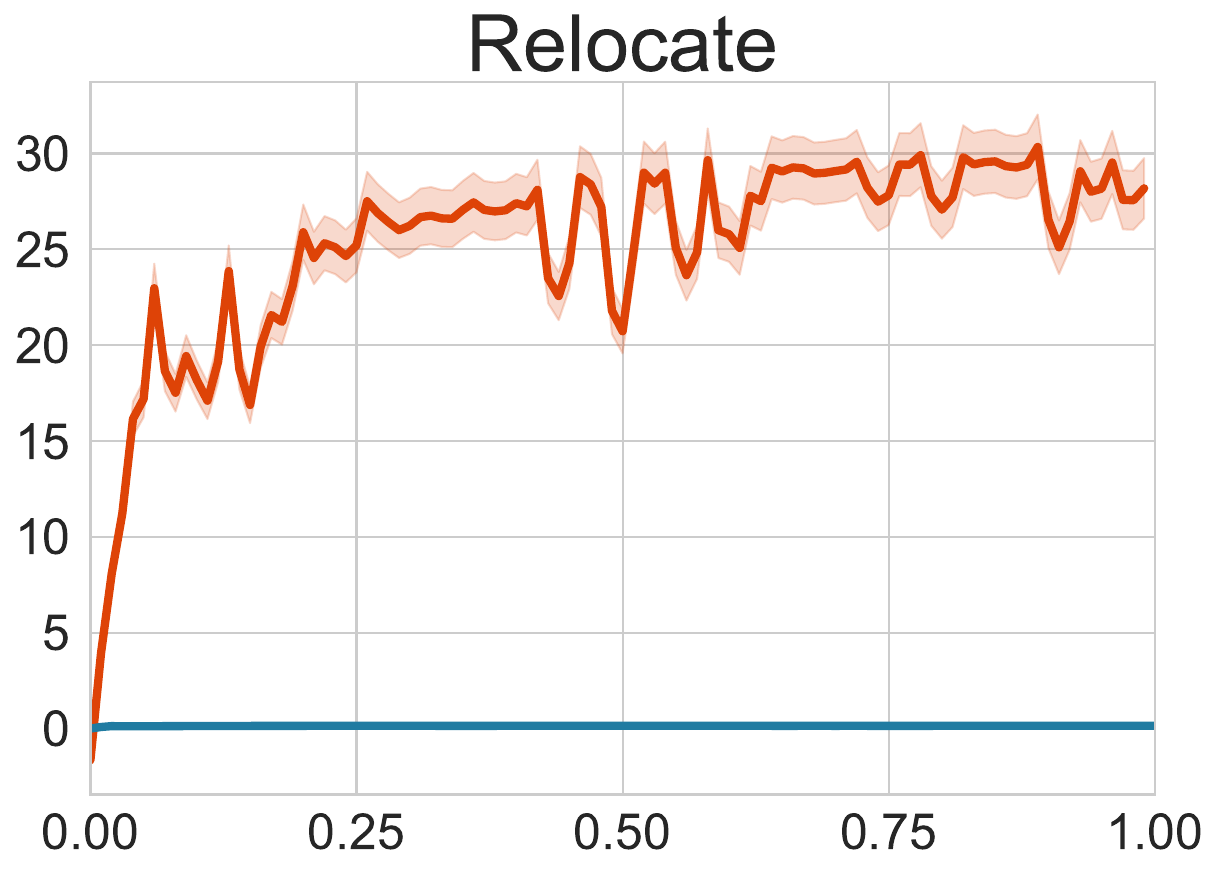}}
    \hspace{-4pt}
    \subfigure{
        \includegraphics[width=0.24\textwidth]{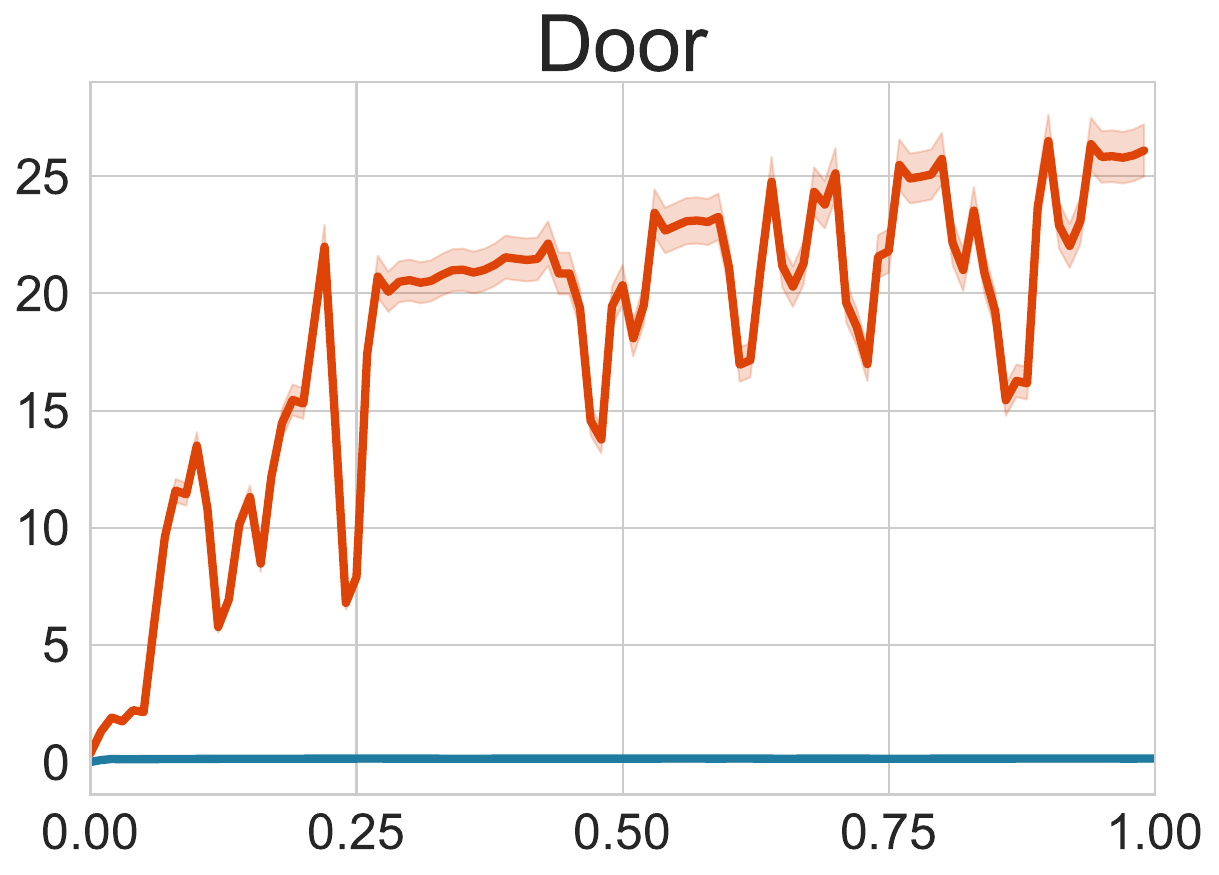}}
    
    \vspace{-10pt}
    \subfigure{
        \includegraphics[width=0.24\textwidth]{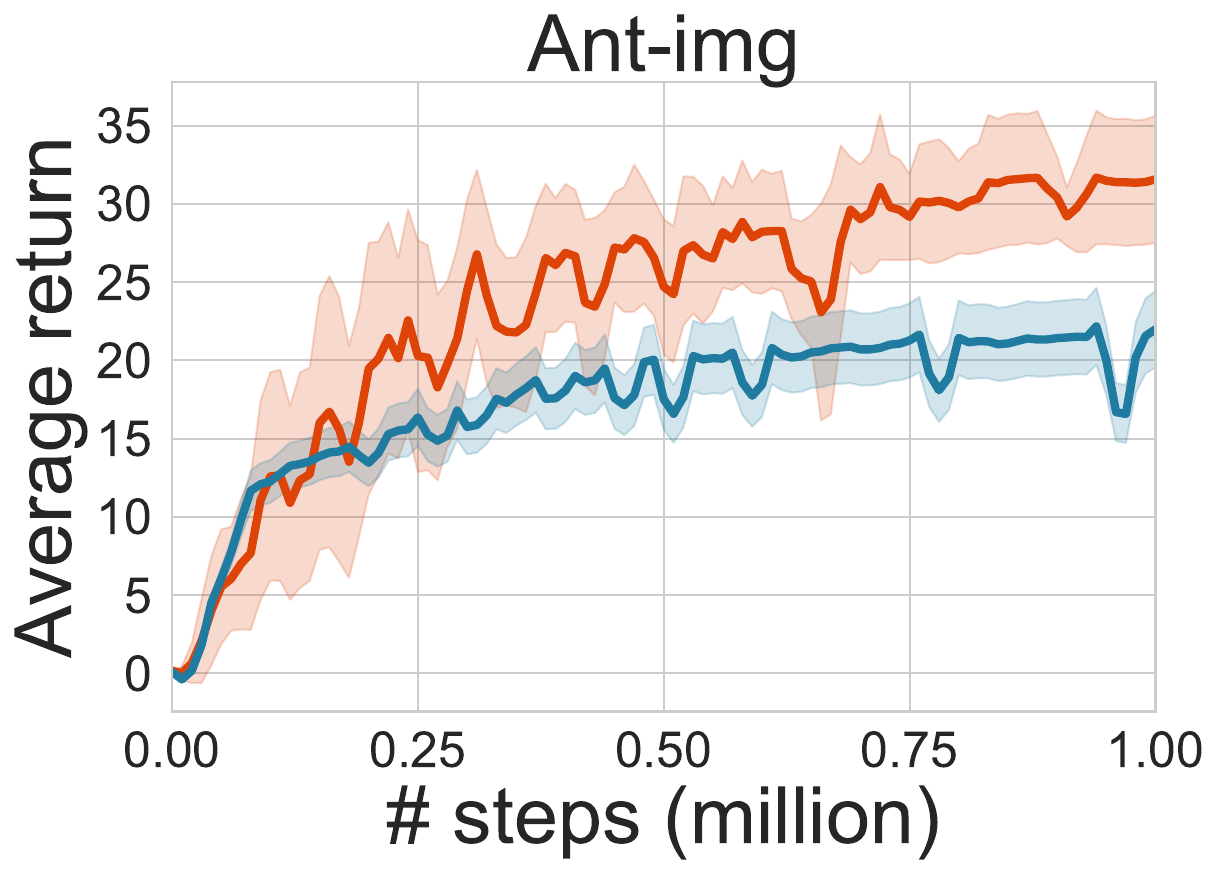}}
    \hspace{-4pt}
    \subfigure{
        \includegraphics[width=0.24\textwidth]{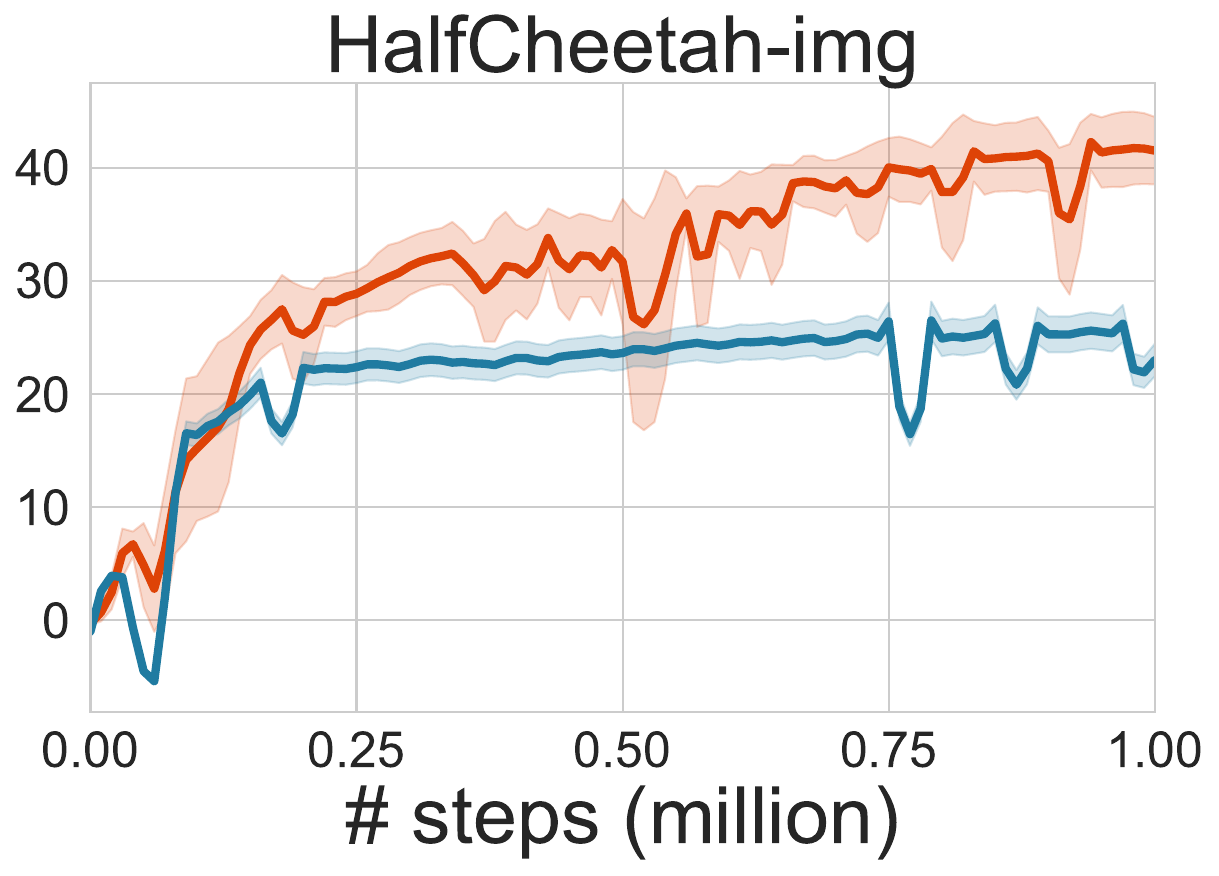}}
    \hspace{-4pt}
    \subfigure{
        \includegraphics[width=0.24\textwidth]{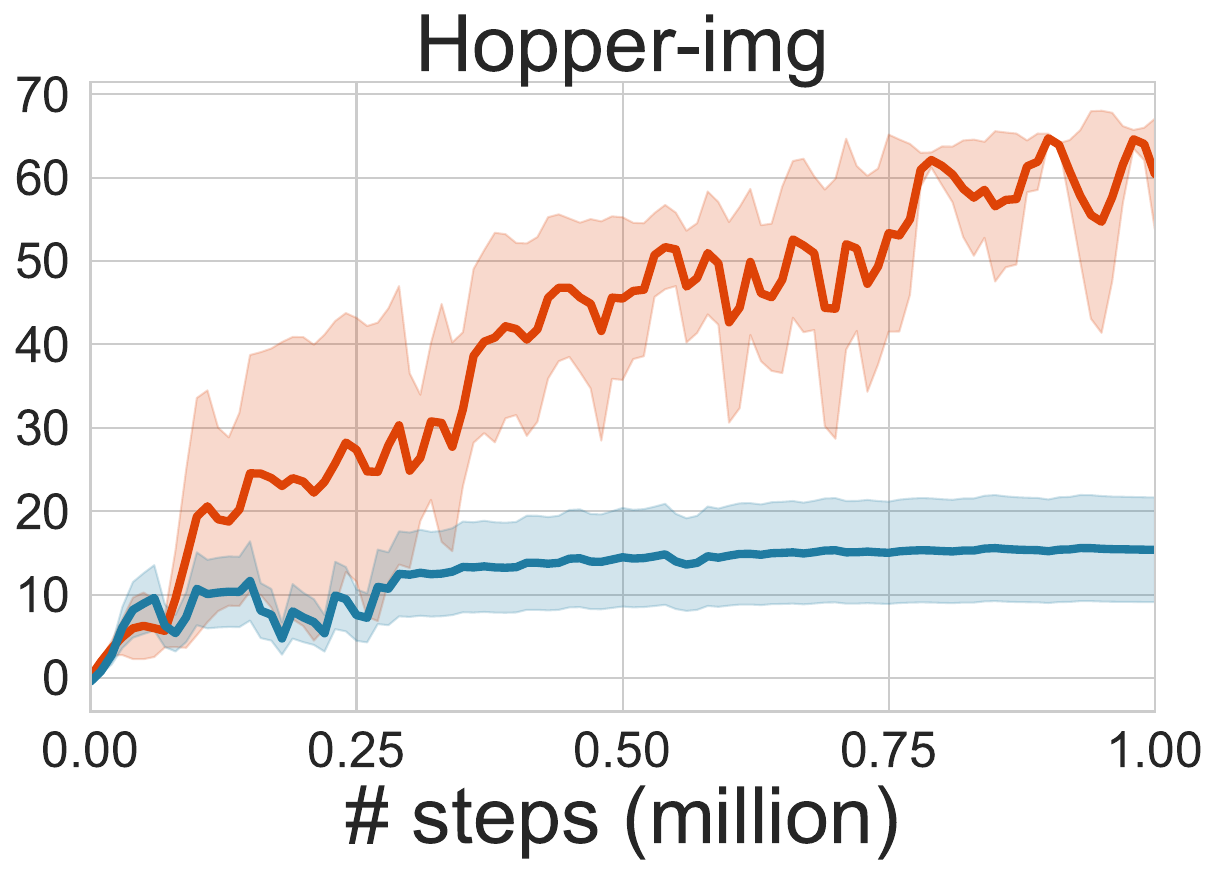}}
    \hspace{-4pt}
    \subfigure{
        \includegraphics[width=0.24\textwidth]{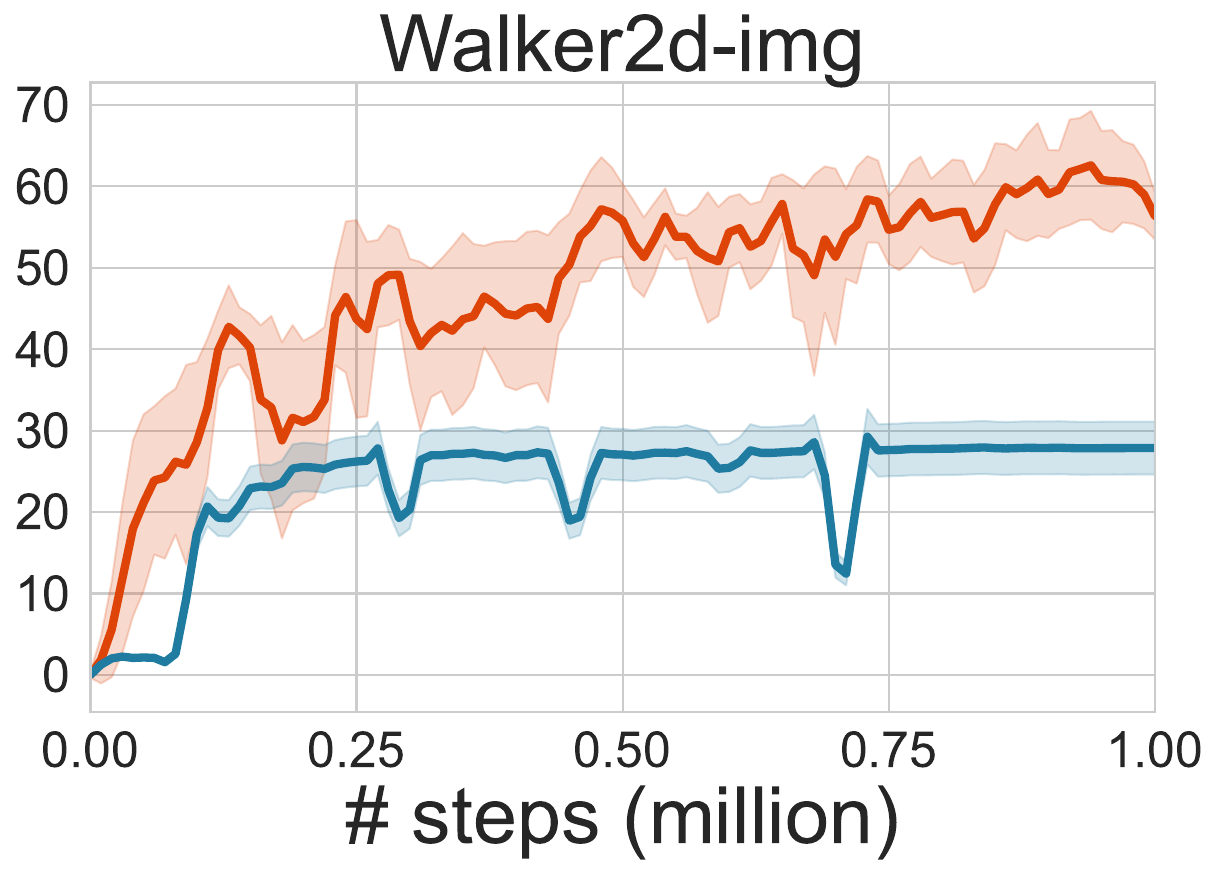}}
    \caption{Comparison between \texttt{ISWBC} and \texttt{ILID}}
    \label{fig:dynamic_information_all_1}
\end{figure}

\begin{figure}[!b]
    \centering    
    \subfigure{
        \includegraphics[width=0.24\textwidth]{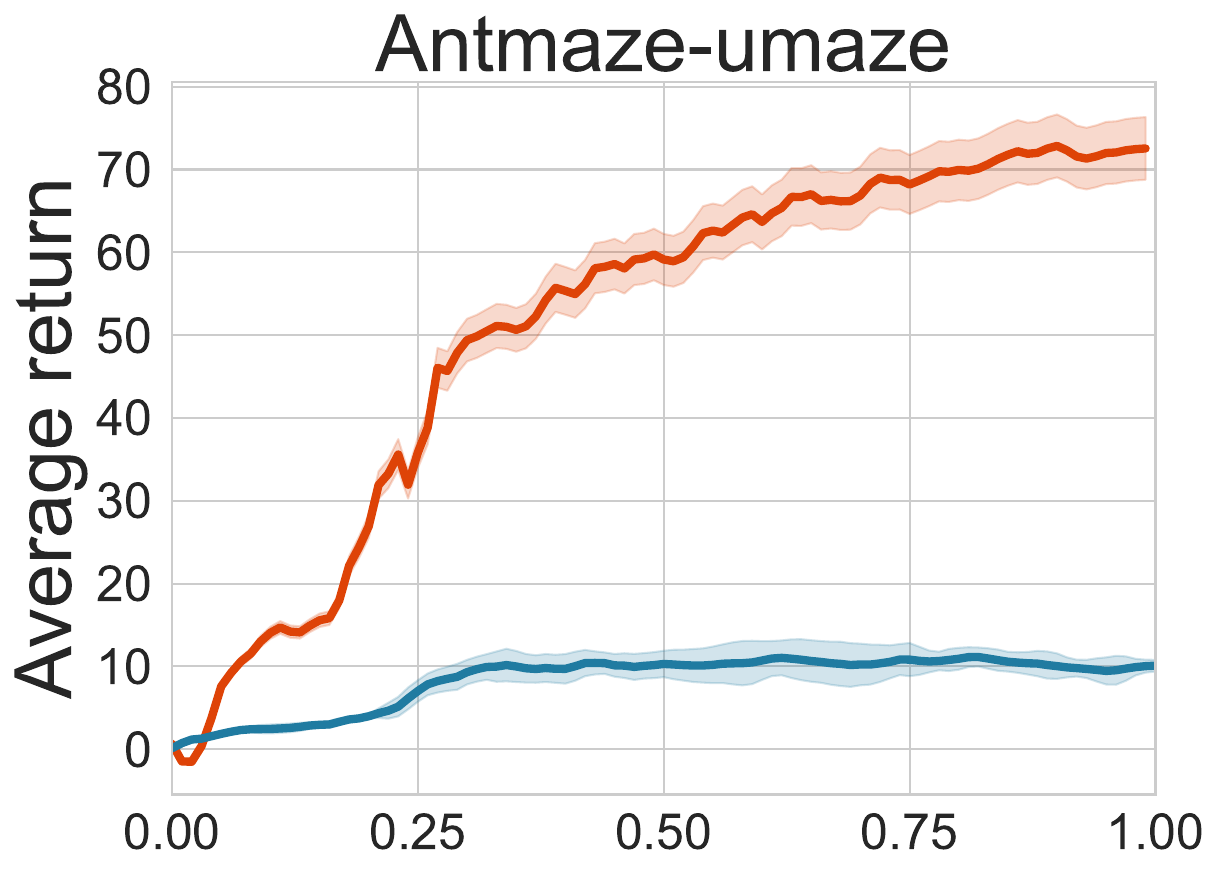}}
    \subfigure{
        \includegraphics[width=0.24\textwidth]{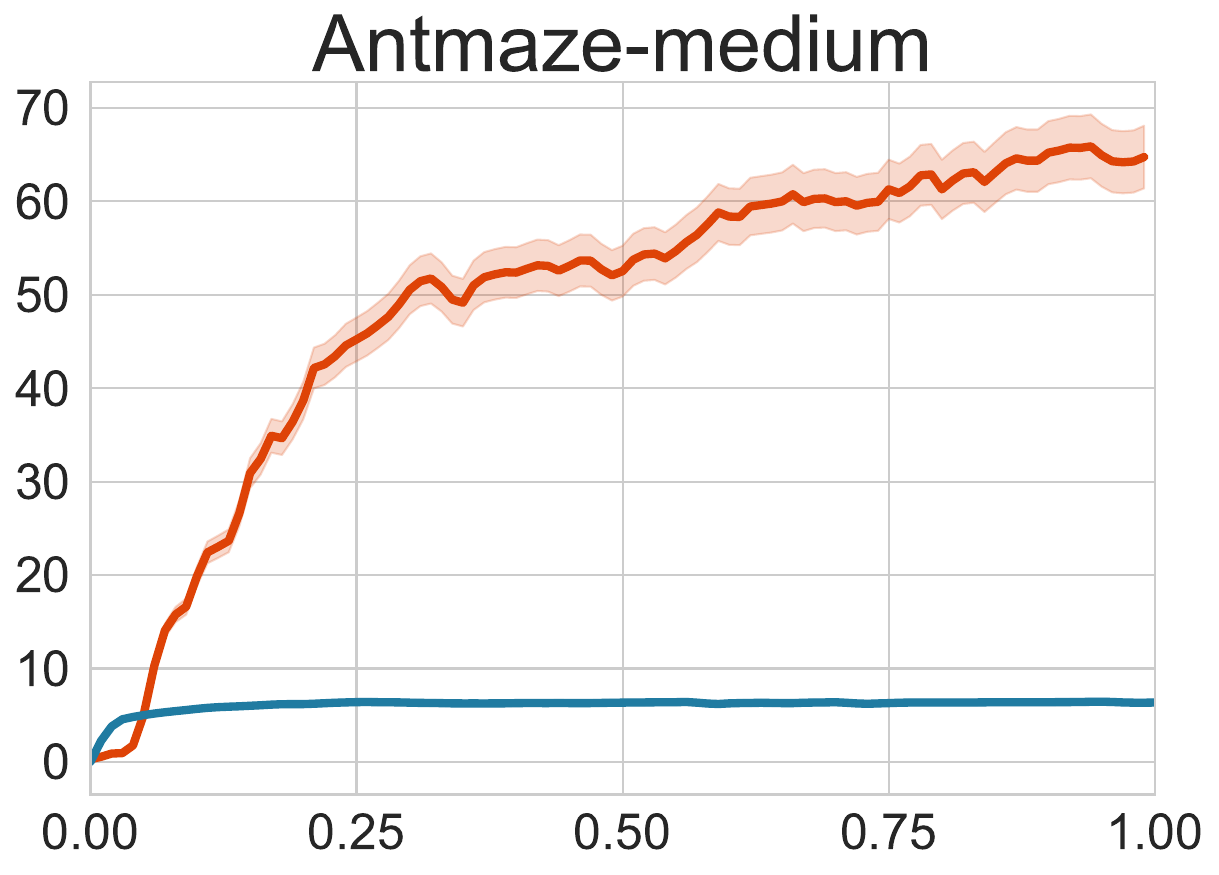}}
    \subfigure{
        \includegraphics[width=0.24\textwidth]{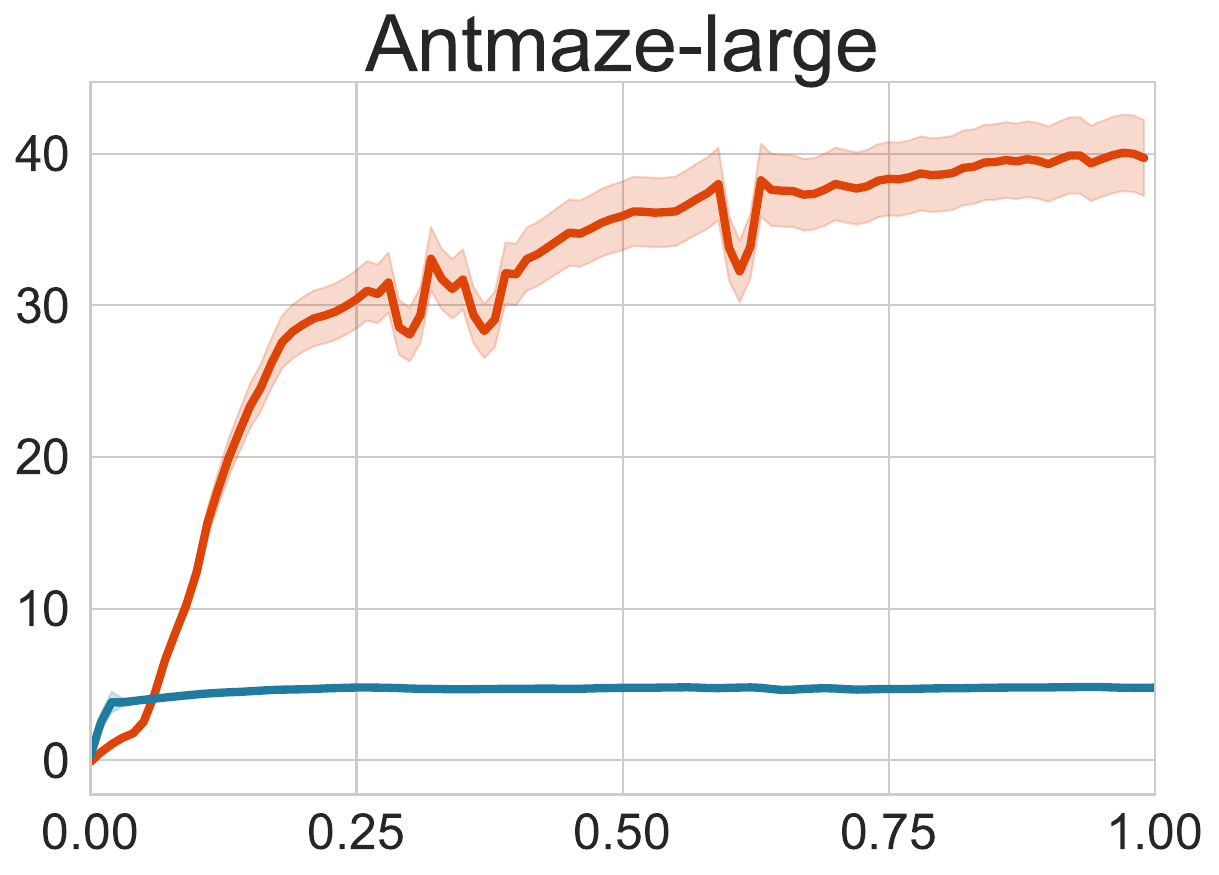}}
    
    \vspace{-10pt}
    \subfigure{
        \includegraphics[width=0.24\textwidth]{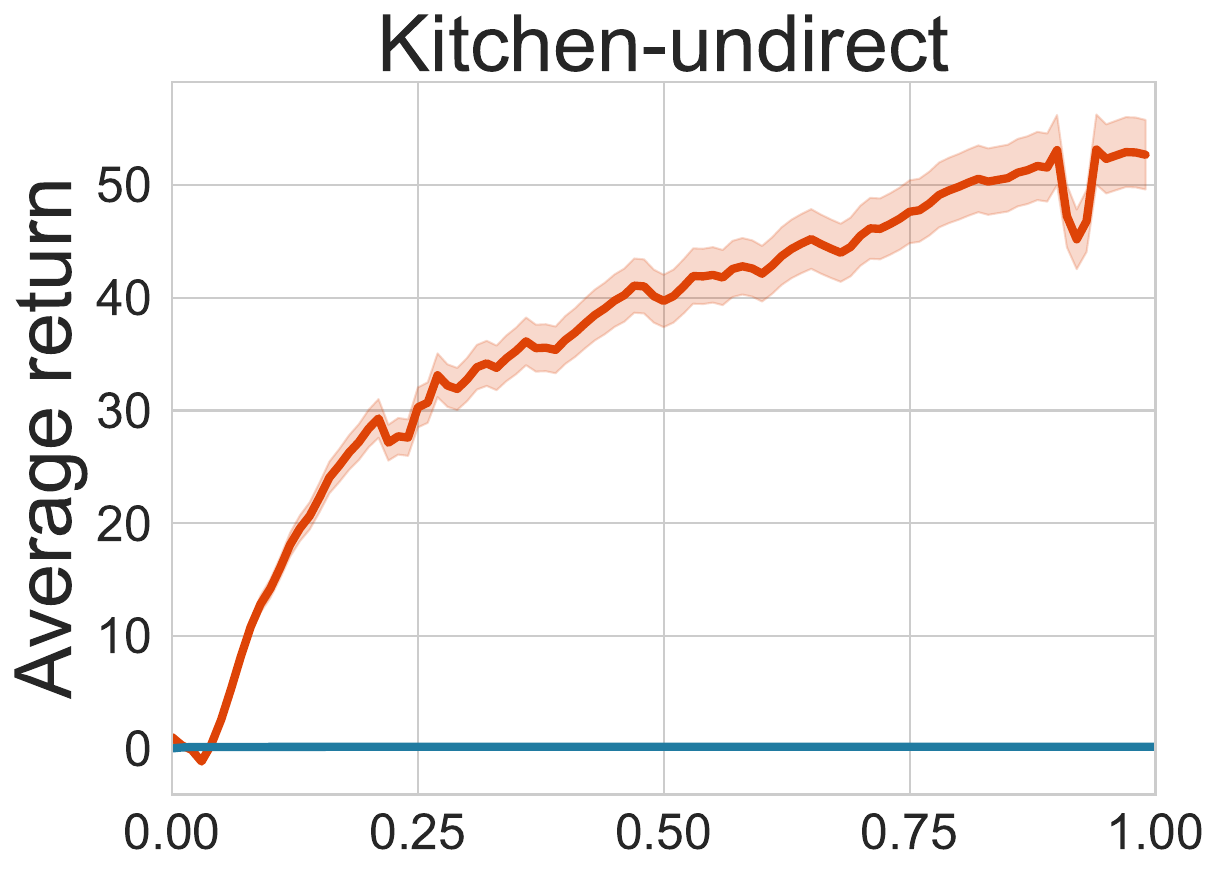}}
    \subfigure{
        \includegraphics[width=0.24\textwidth]{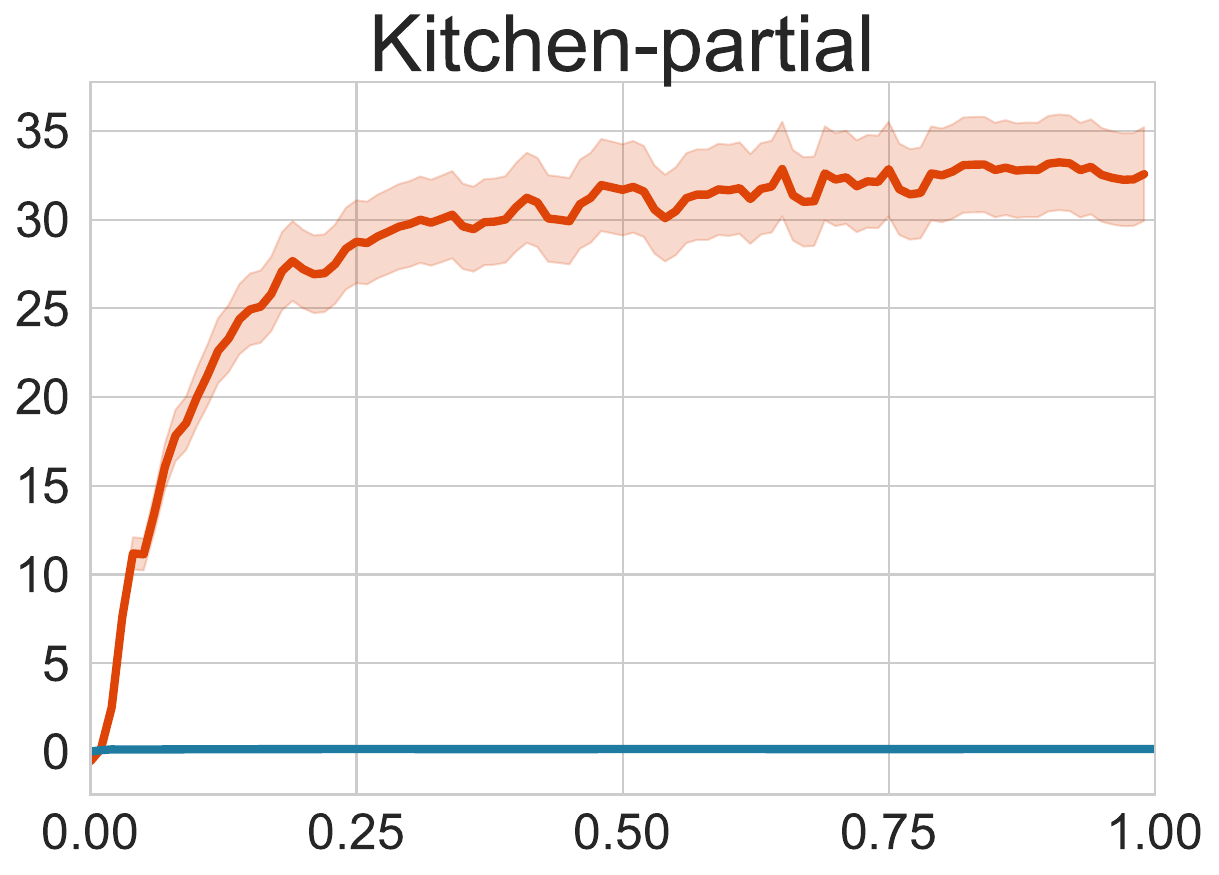}}
    \subfigure{
        \includegraphics[width=0.24\textwidth]{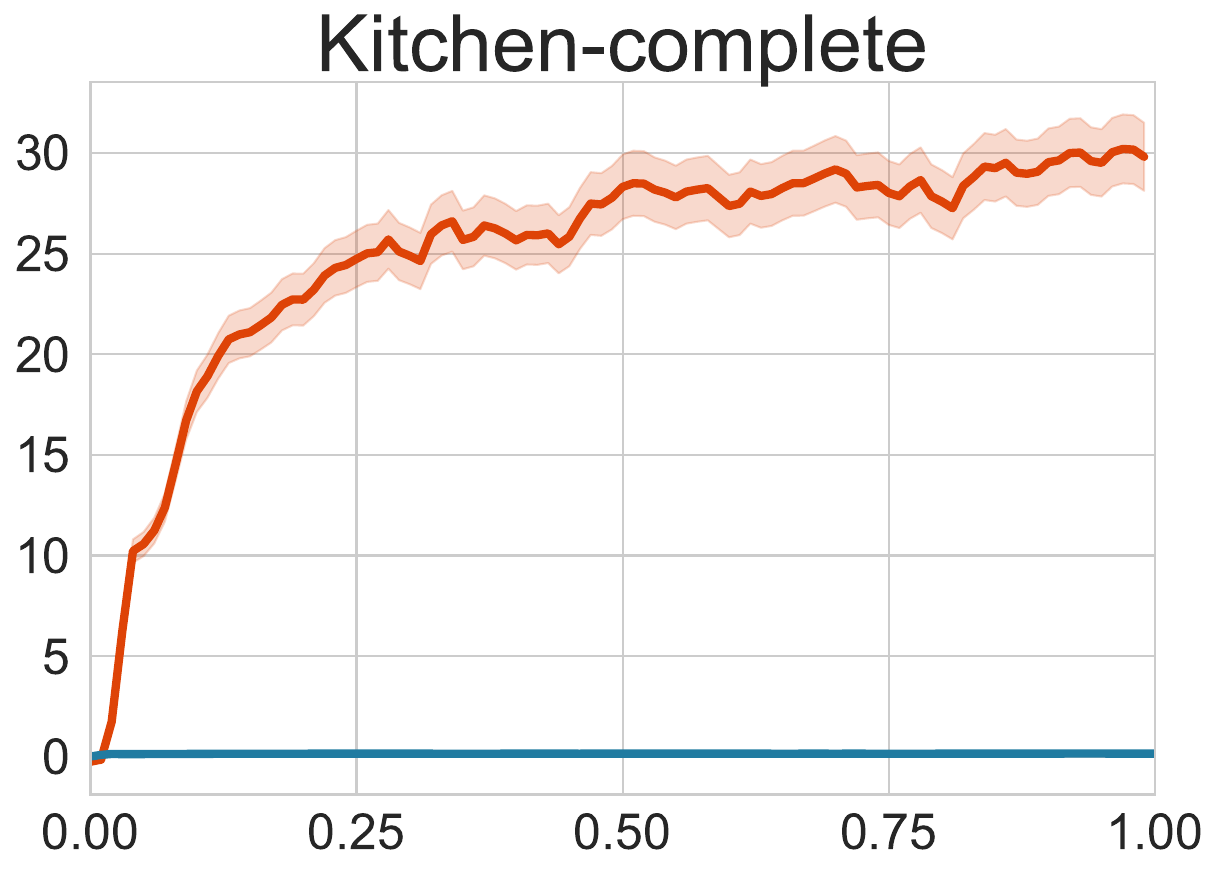}}

    \vspace{-10pt}
    \subfigure{
        \includegraphics[width=0.24\textwidth]{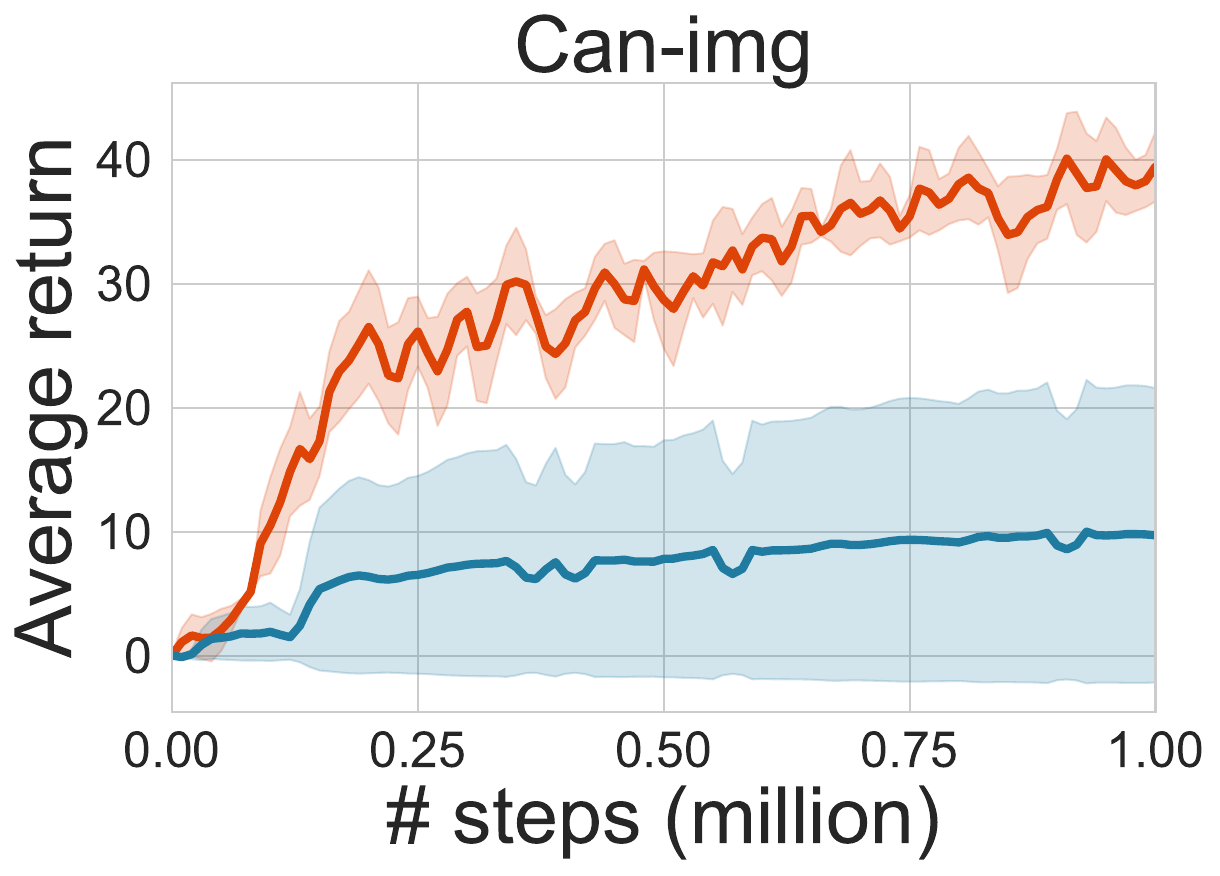}}
    \subfigure{
        \includegraphics[width=0.24\textwidth]{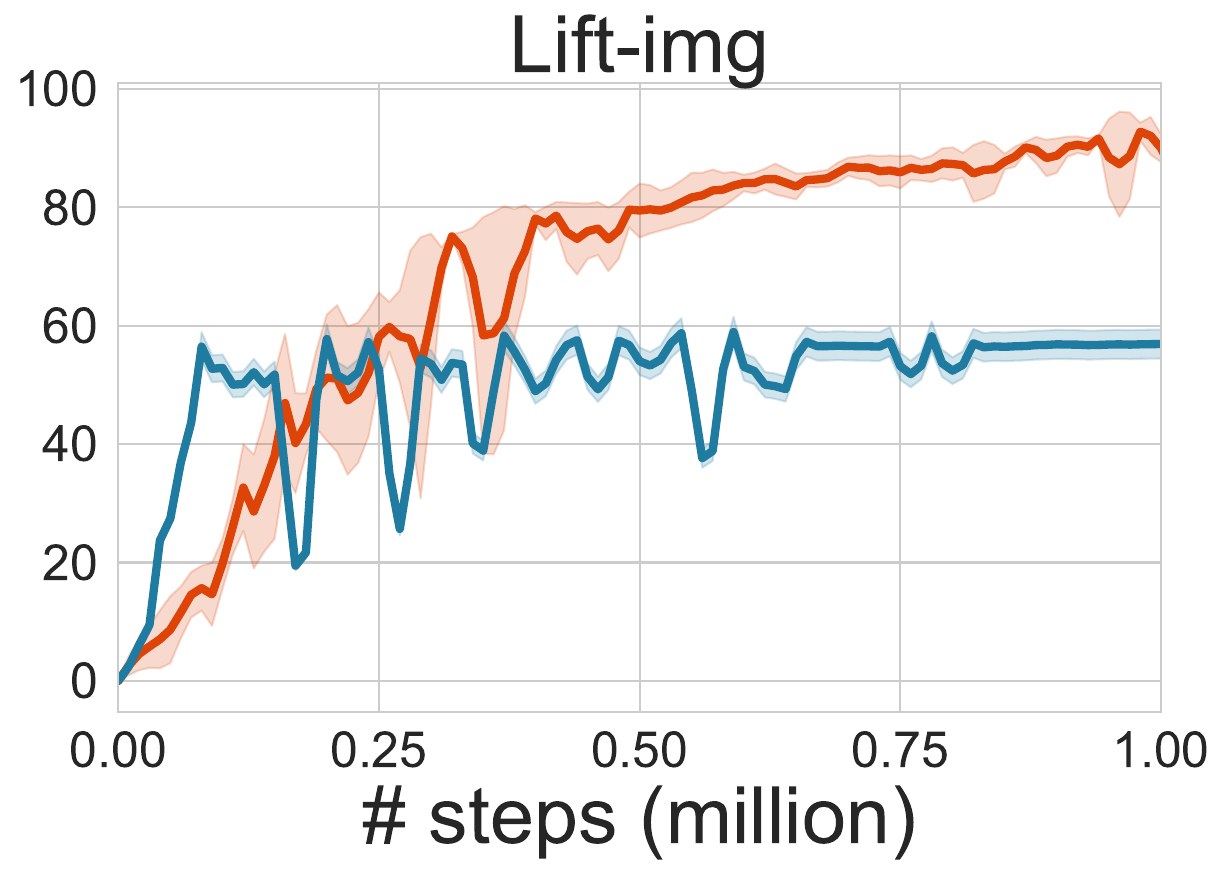}}
    \subfigure{
        \includegraphics[width=0.24\textwidth]{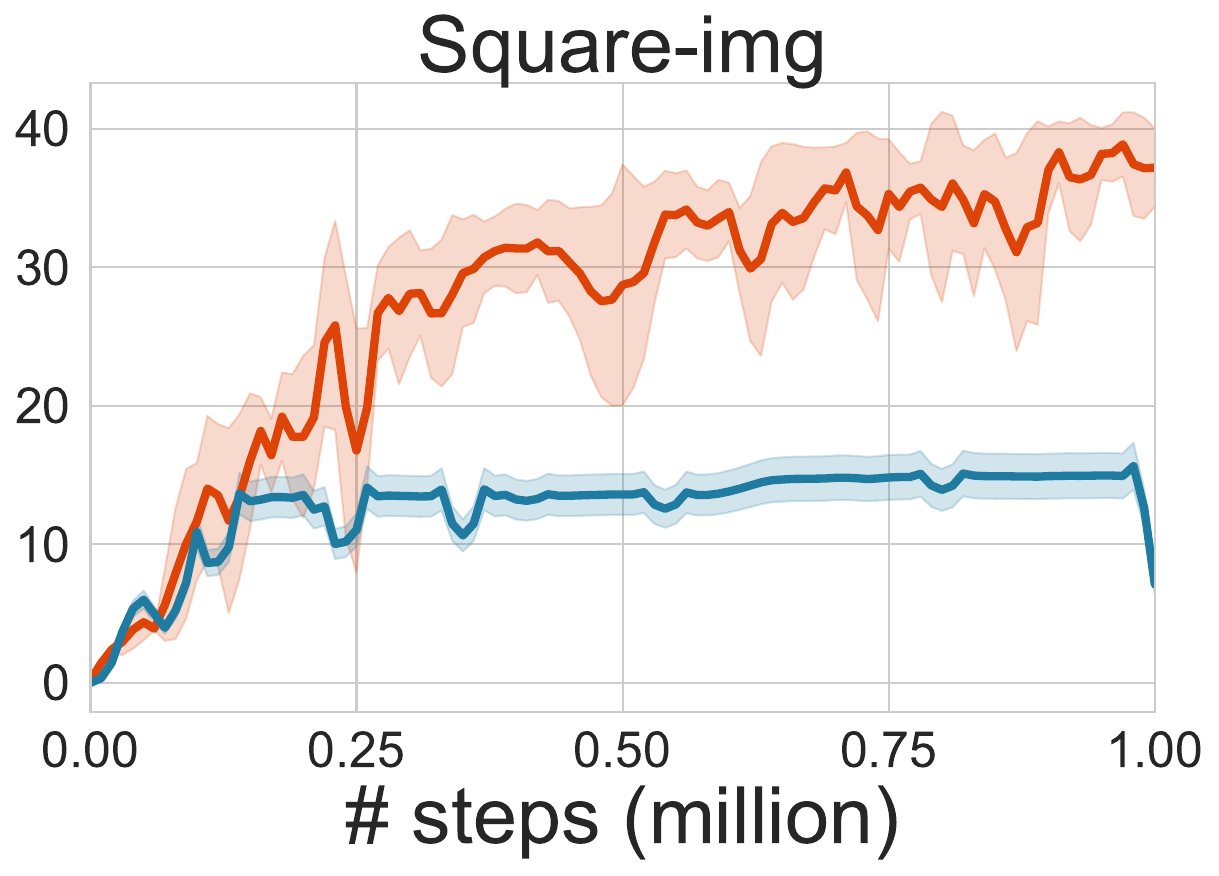}}
    \caption{Comparison between \texttt{ISWBC} and \texttt{ILID}.}
    \label{fig:dynamic_information_all_2}
\end{figure}

\clearpage

\subsubsection{Importance of Data Selection} 

In this section, we ablate the data selection and replace $\mathcal{D}_s$ in Problem~(\ref{eq:ilid_objective}) by entire imperfect data of $\mathcal{D}_b$.  \cref{fig:data_selection_all_1,fig:data_selection_all_2} corroborate \cref{hypo:resultant_state} and underscores the importance of our data selection scheme. 



\begin{figure}[ht]
    \centering
    \subfigure{
        \includegraphics[width=0.24\textwidth]{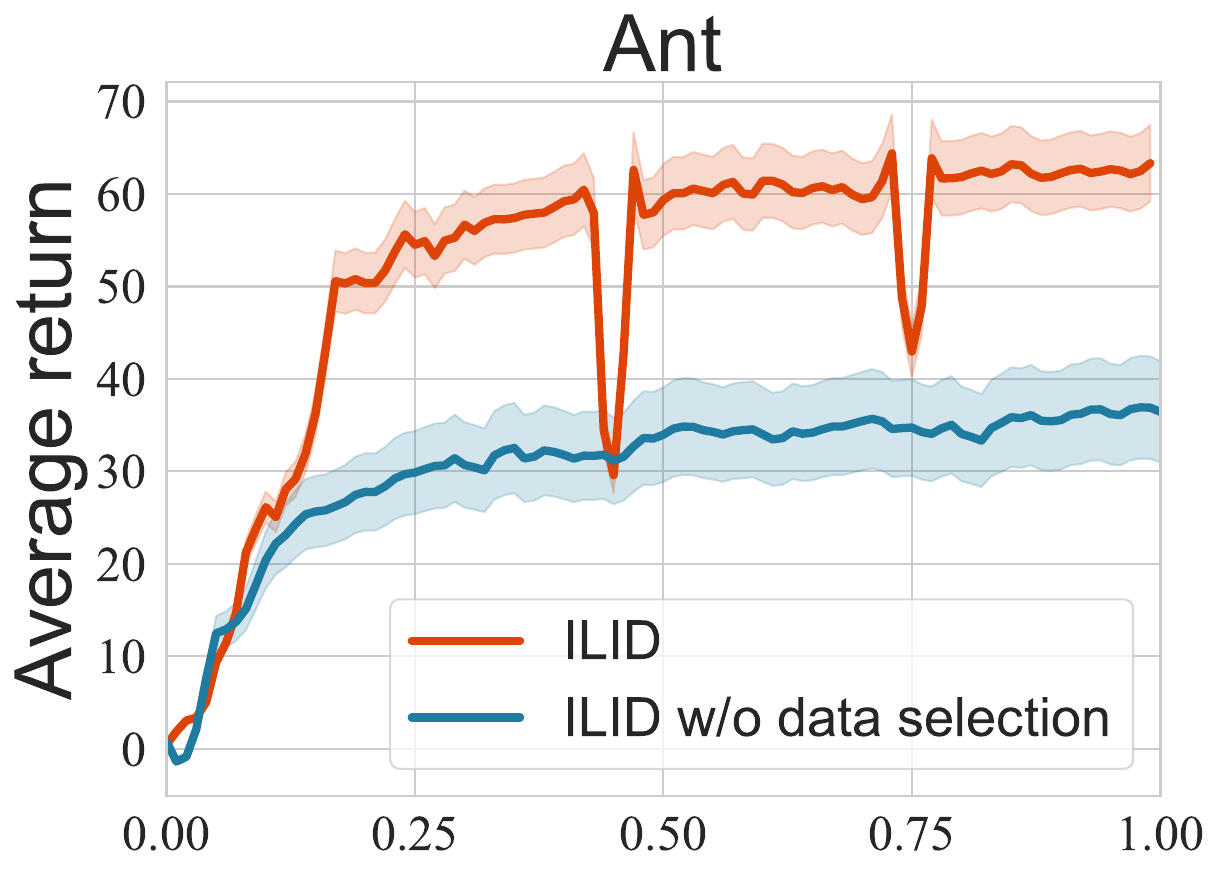}}
    \hspace{-4pt}
    \subfigure{
        \includegraphics[width=0.24\textwidth]{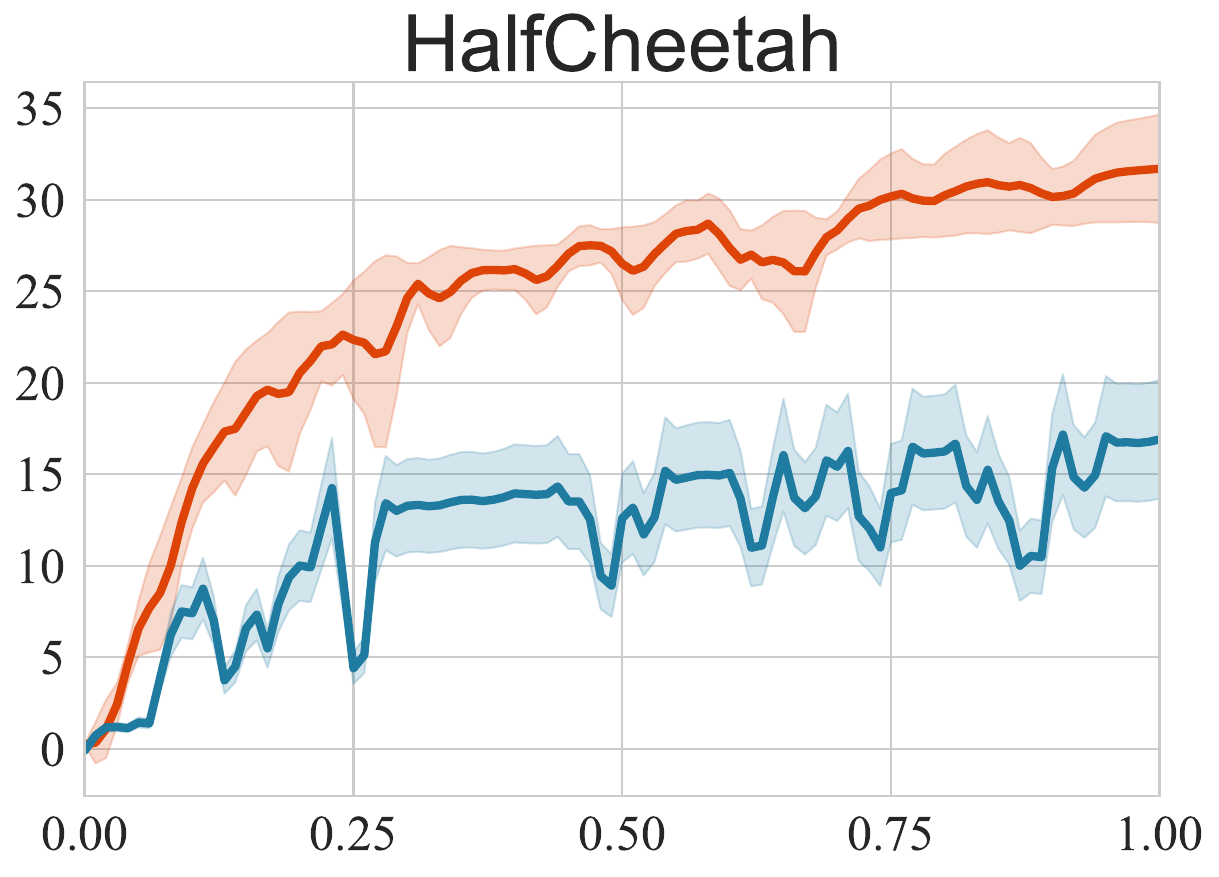}}
    \hspace{-4pt}
    \subfigure{
        \includegraphics[width=0.24\textwidth]{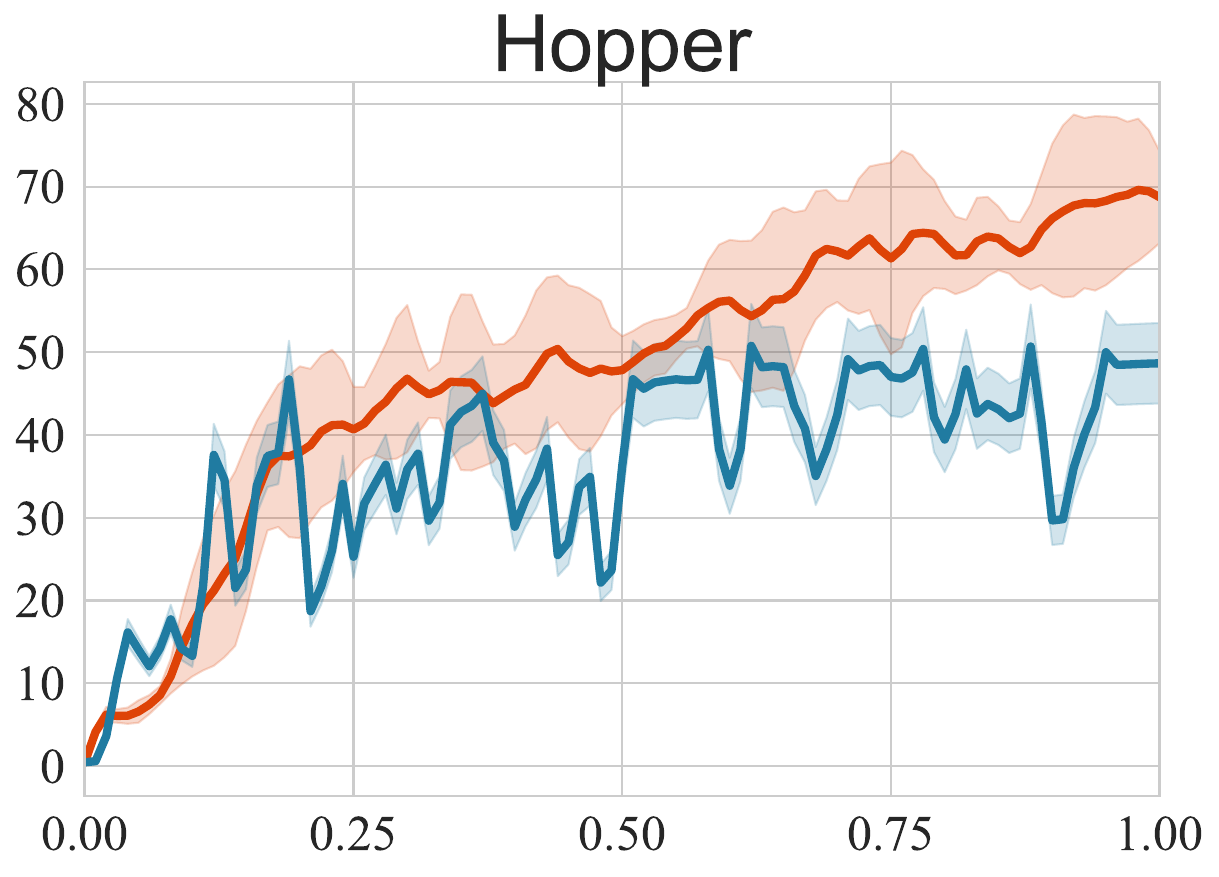}}
    \hspace{-4pt}
    \subfigure{
        \includegraphics[width=0.24\textwidth]{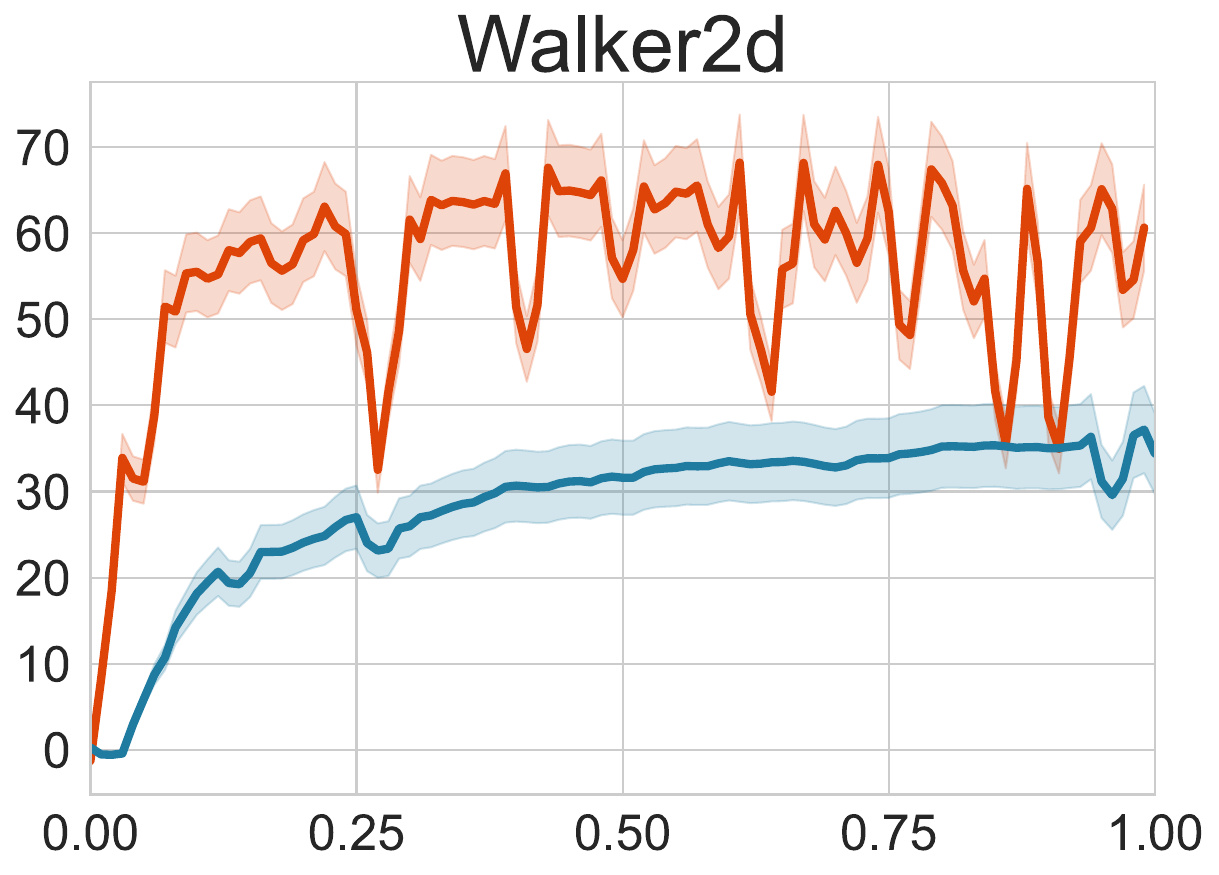}}
    
    \vspace{-10pt}
    \subfigure{
        \includegraphics[width=0.24\textwidth]{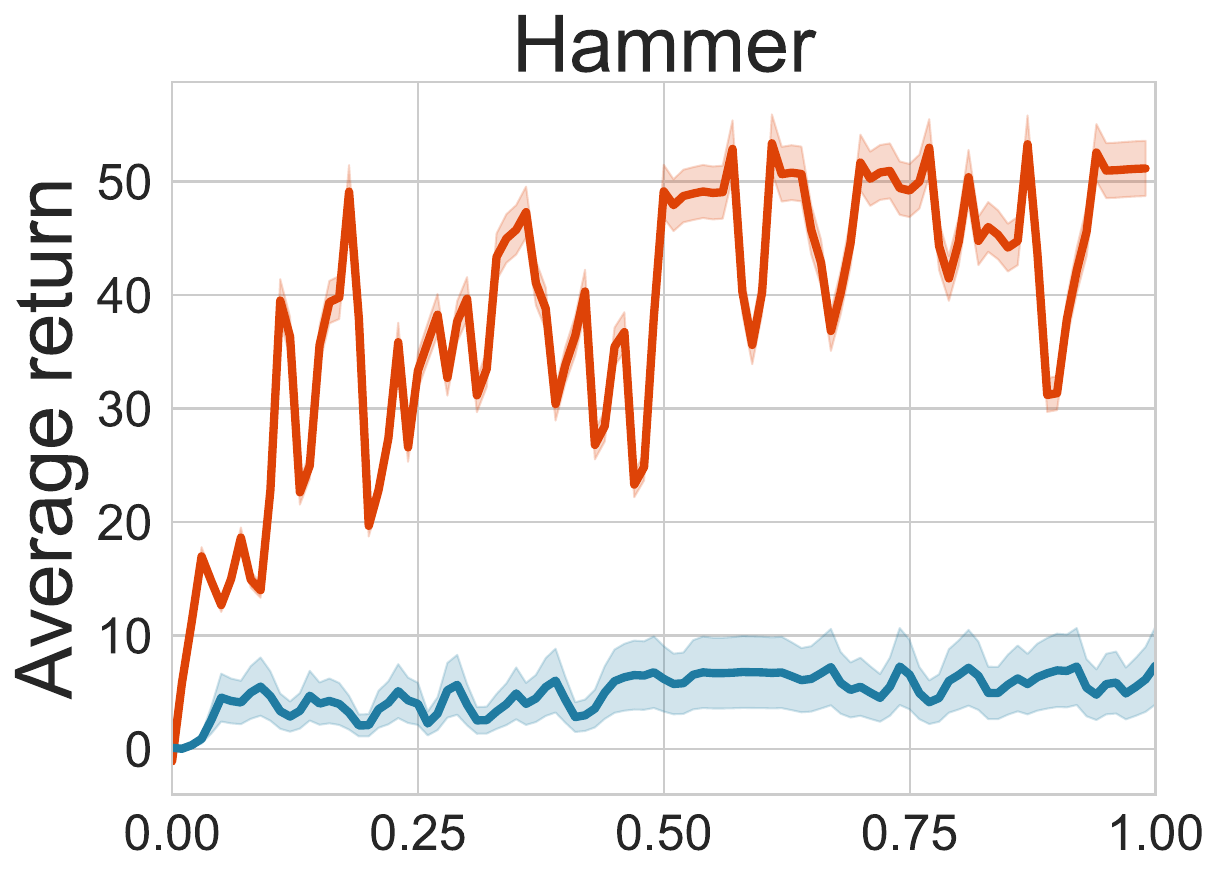}}
    \hspace{-4pt}
    \subfigure{
        \includegraphics[width=0.24\textwidth]{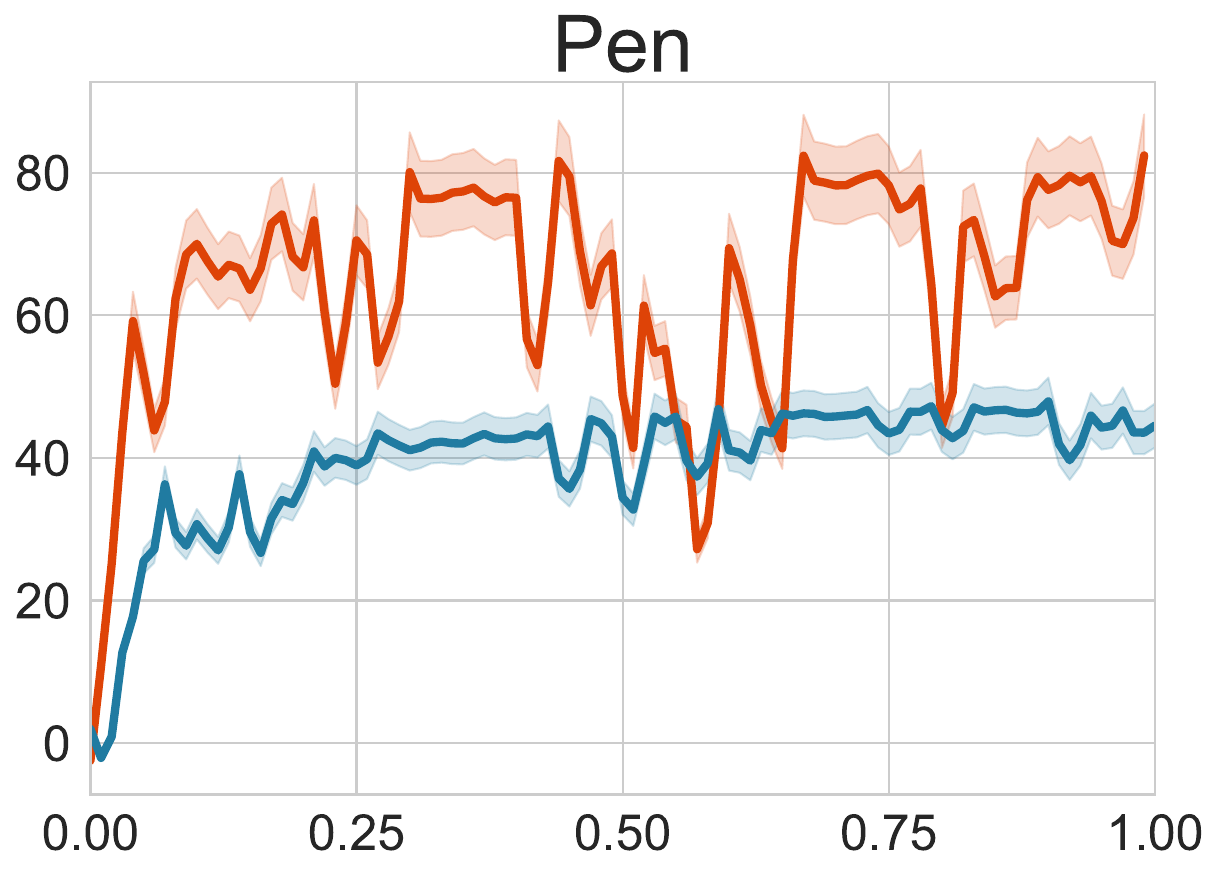}}
    \hspace{-4pt}
    \subfigure{
        \includegraphics[width=0.24\textwidth]{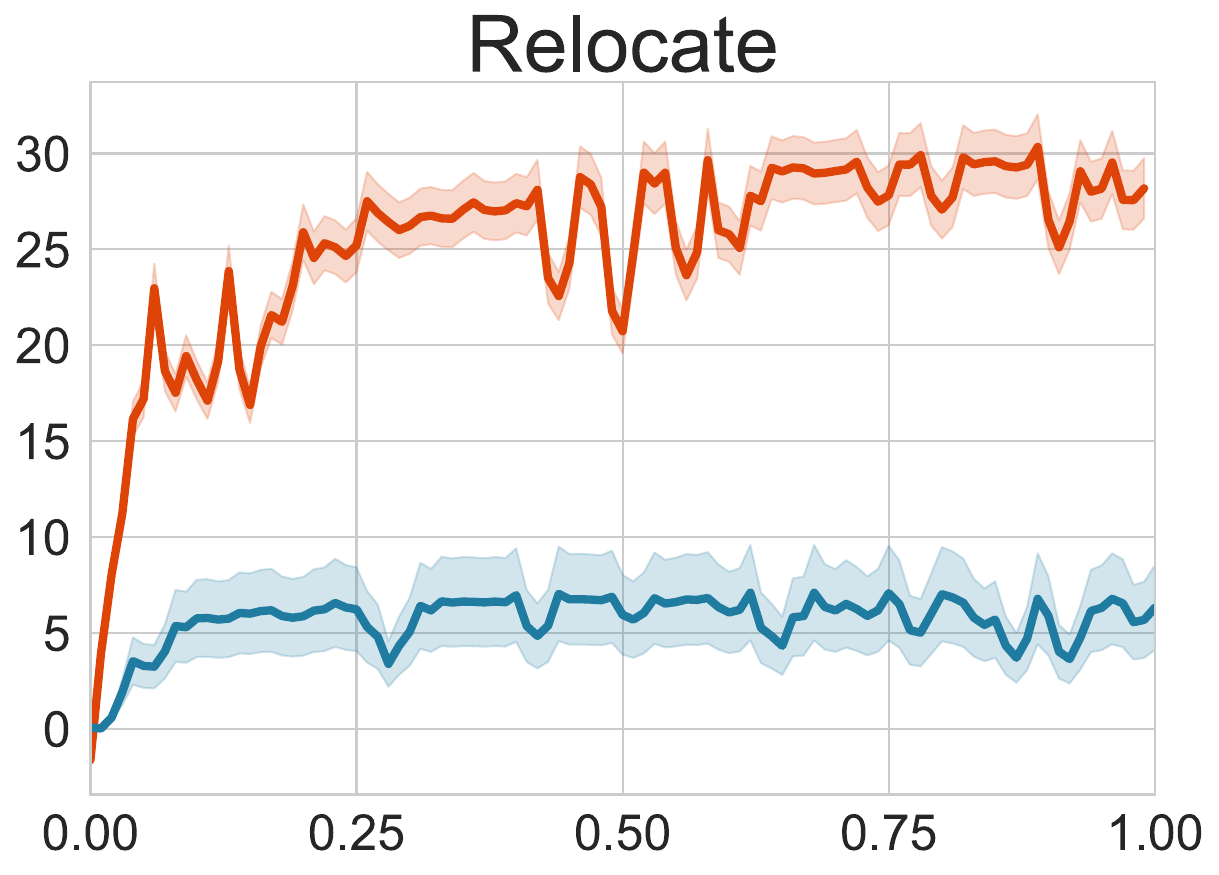}}
    \hspace{-4pt}
    \subfigure{
        \includegraphics[width=0.24\textwidth]{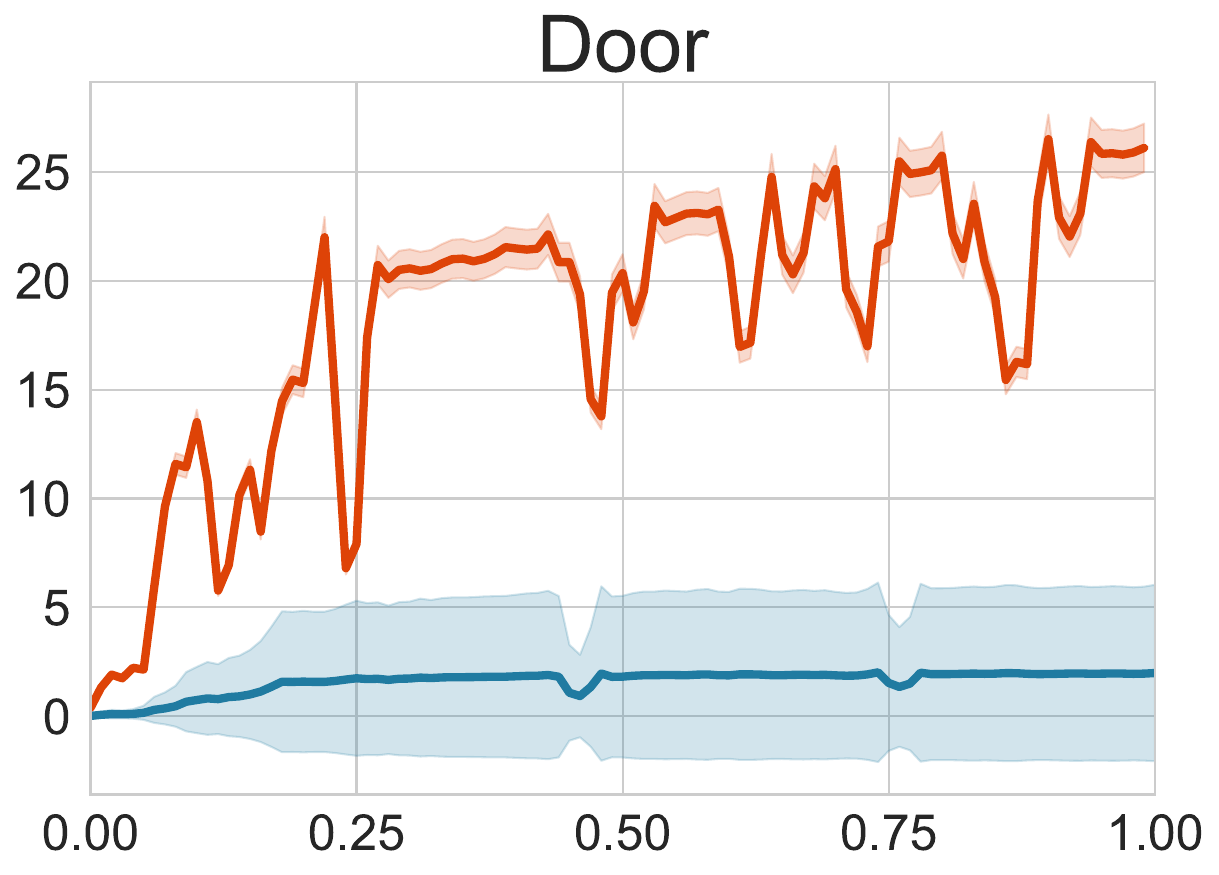}}
    
    \vspace{-10pt}
    \subfigure{
        \includegraphics[width=0.24\textwidth]{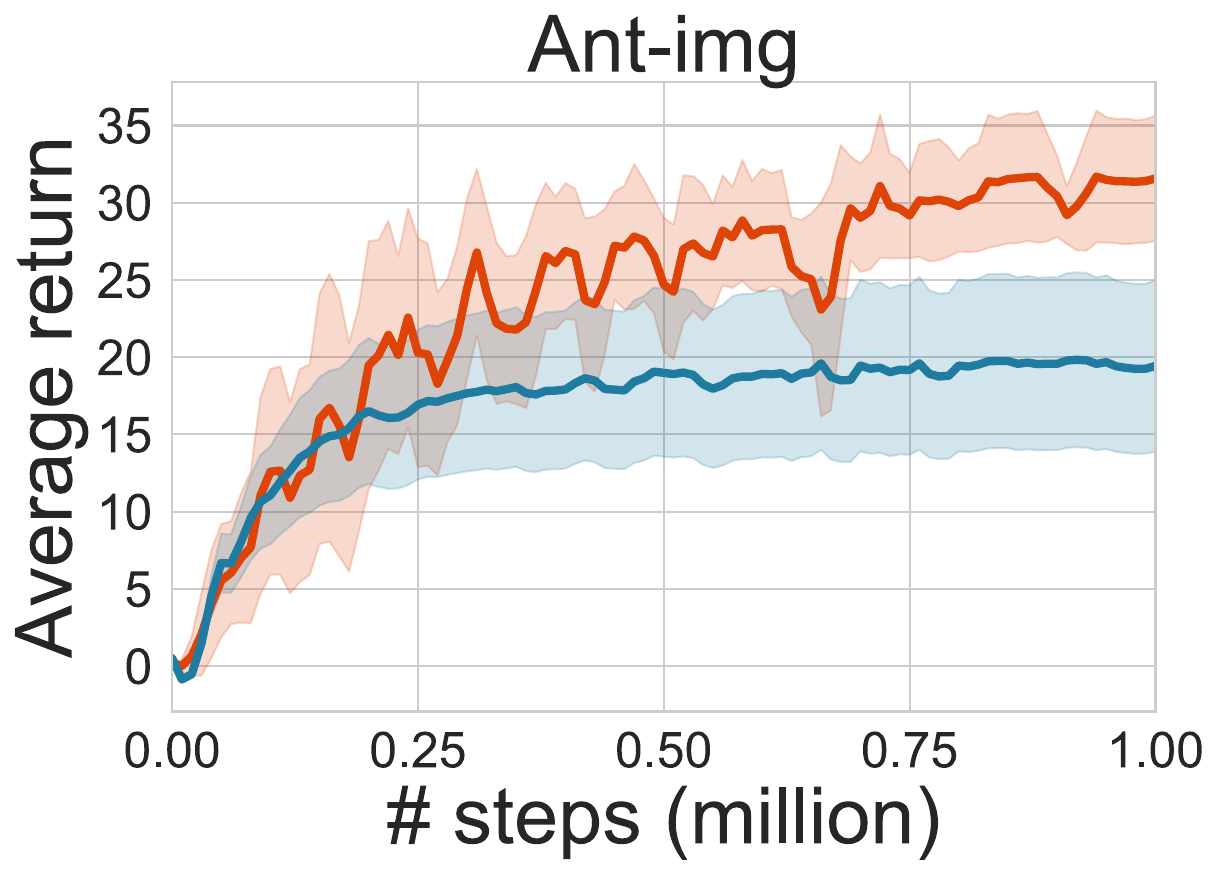}}
    \hspace{-4pt}
    \subfigure{
        \includegraphics[width=0.24\textwidth]{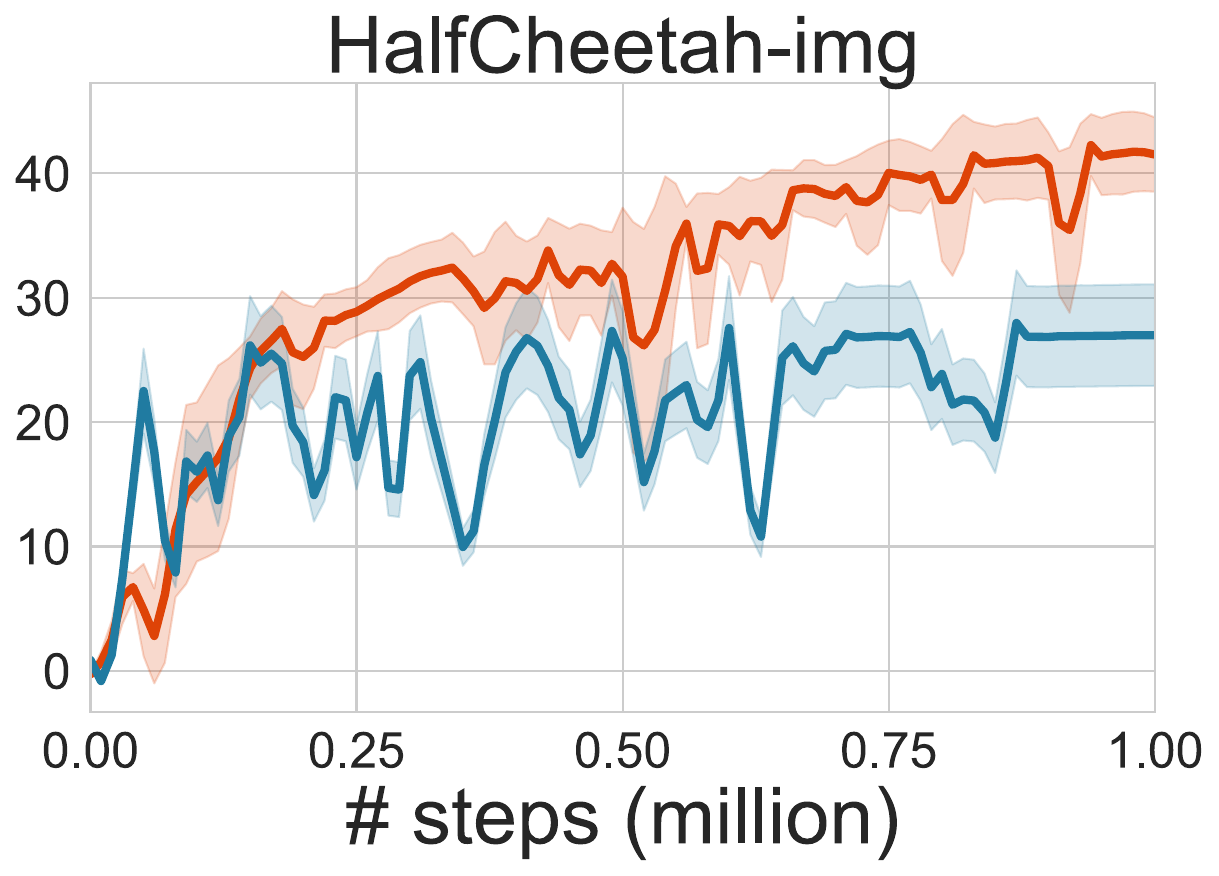}}
    \hspace{-4pt}
    \subfigure{
        \includegraphics[width=0.24\textwidth]{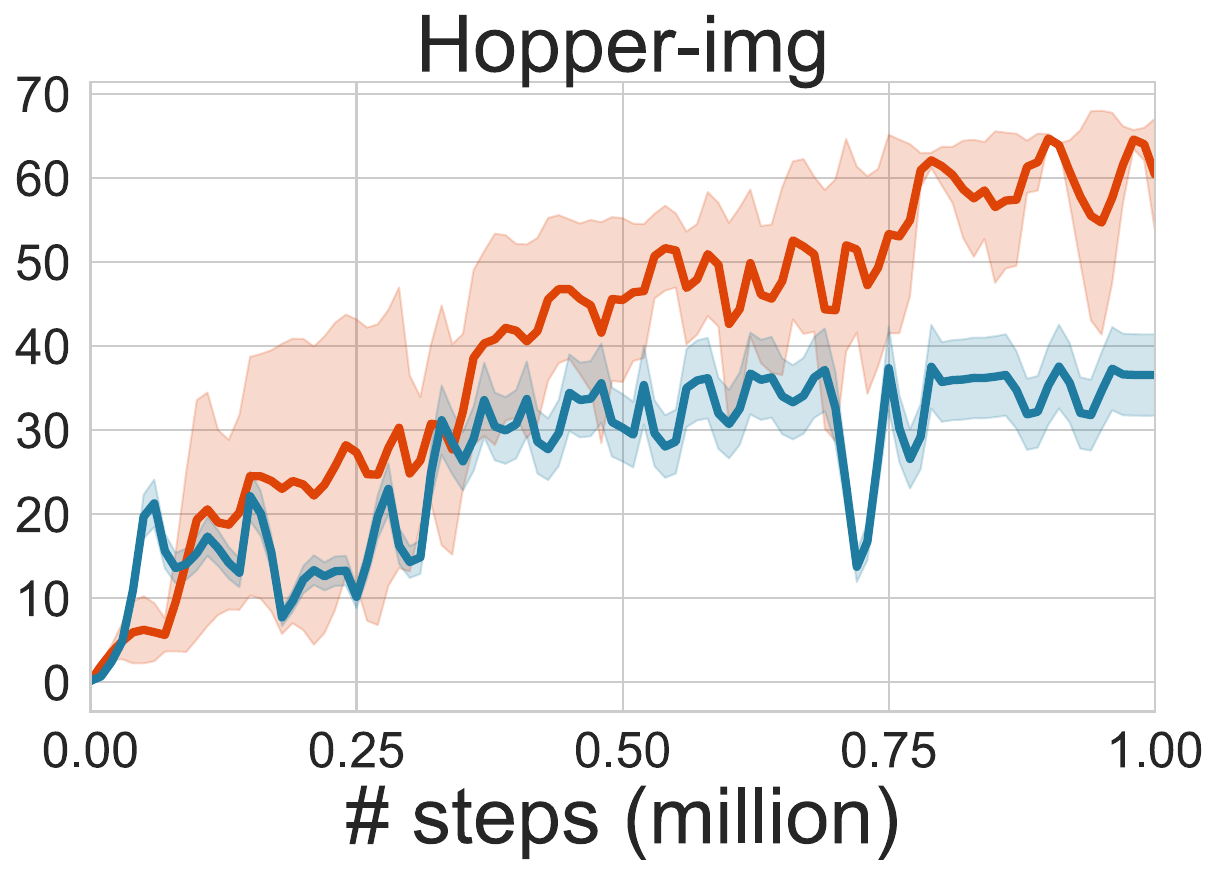}}
    \hspace{-4pt}
    \subfigure{
        \includegraphics[width=0.24\textwidth]{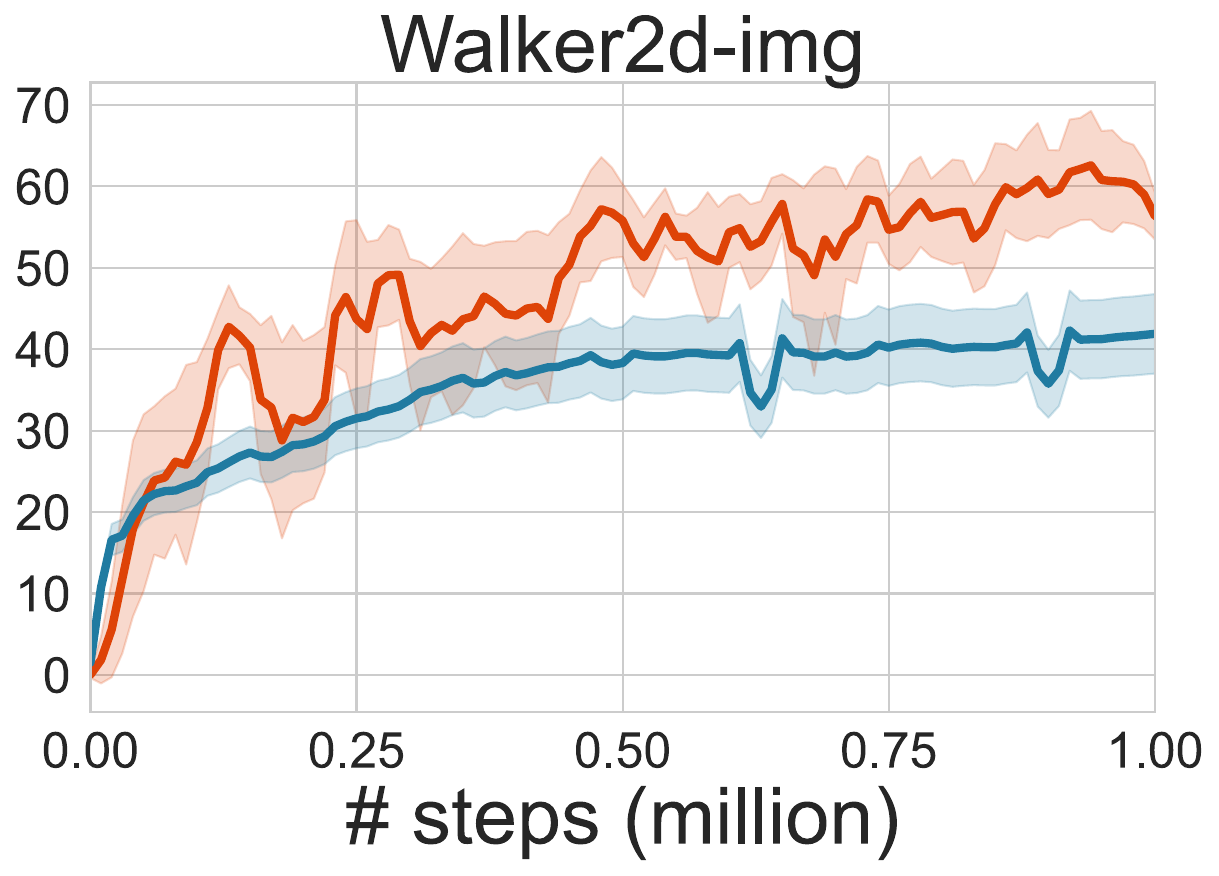}}
    \caption{Effect of data selection.}
    \label{fig:data_selection_all_1}
\end{figure}

\begin{figure}[ht]
    \centering
    \subfigure{
        \includegraphics[width=0.24\textwidth]{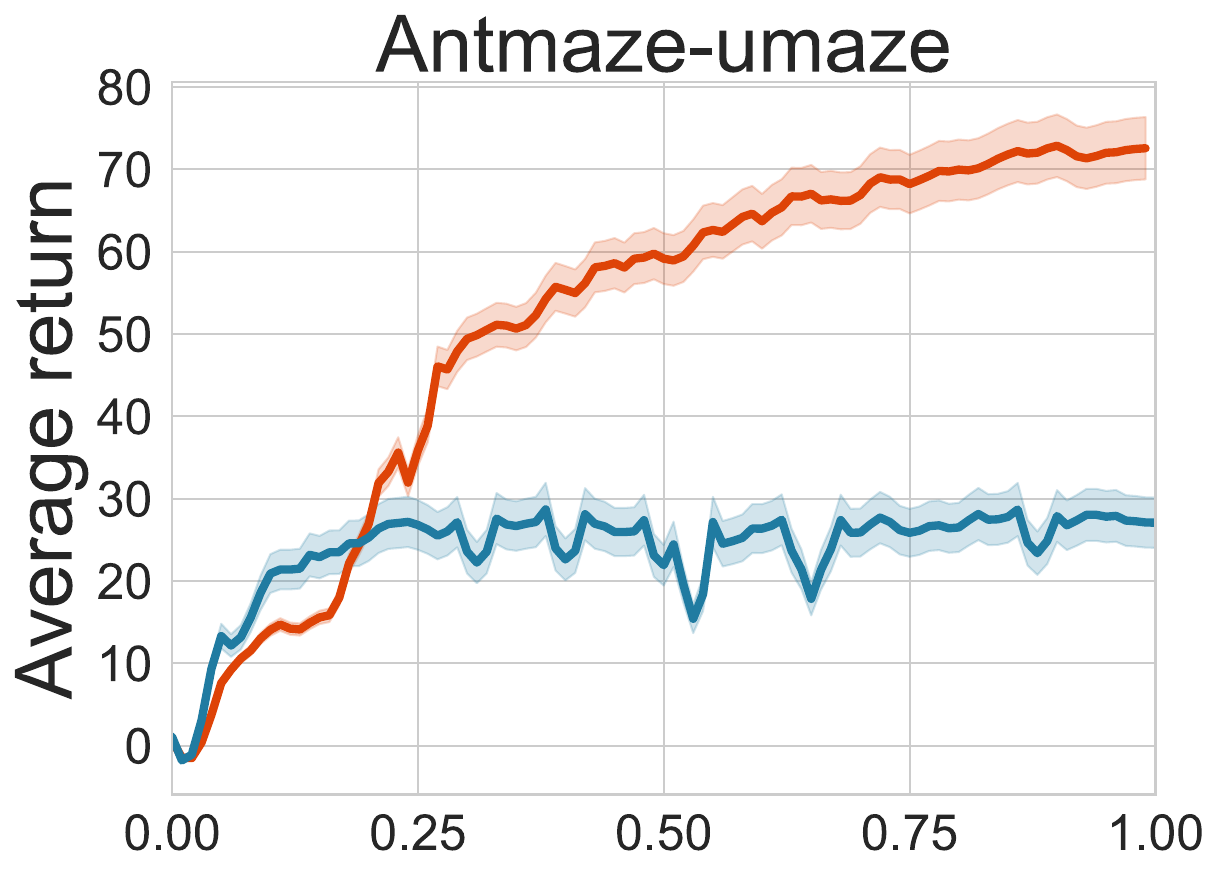}}
    \subfigure{
        \includegraphics[width=0.24\textwidth]{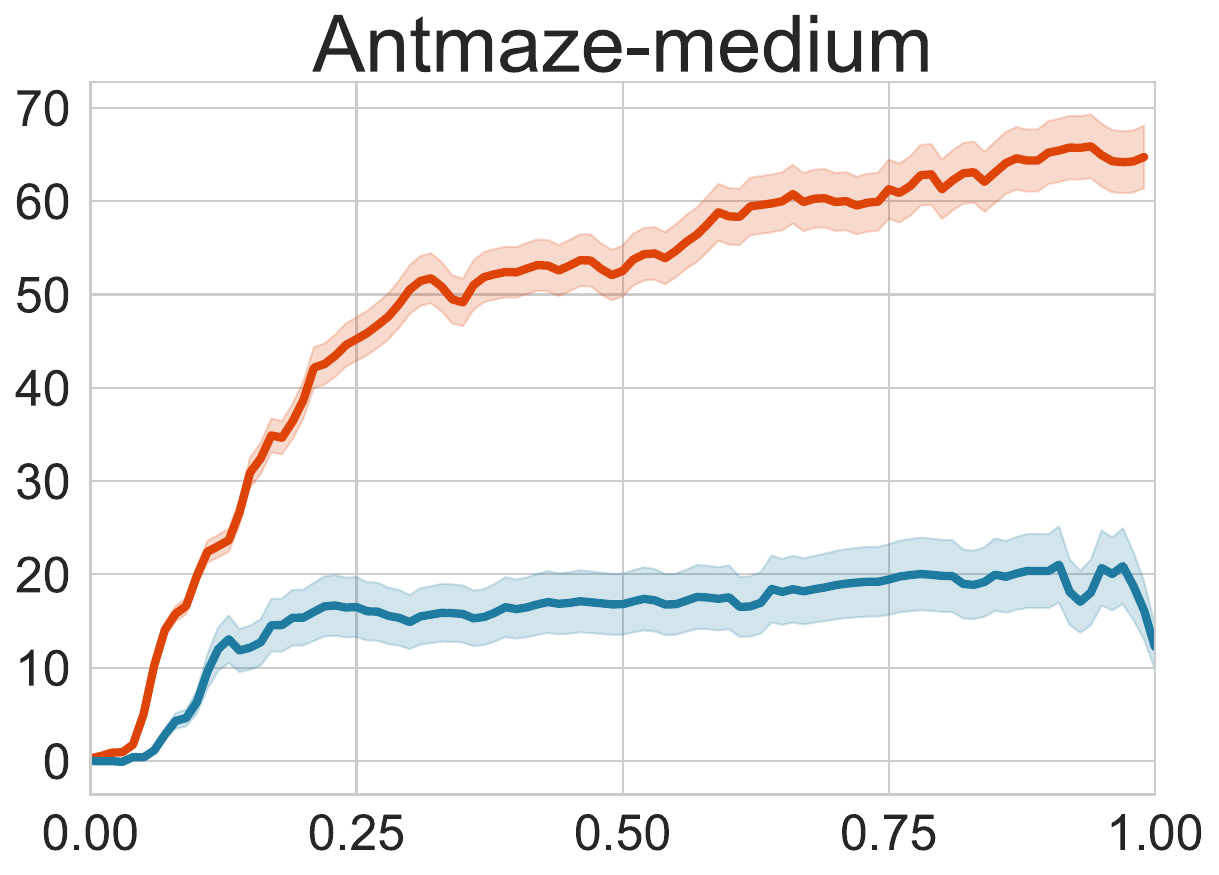}}
    \subfigure{
        \includegraphics[width=0.24\textwidth]{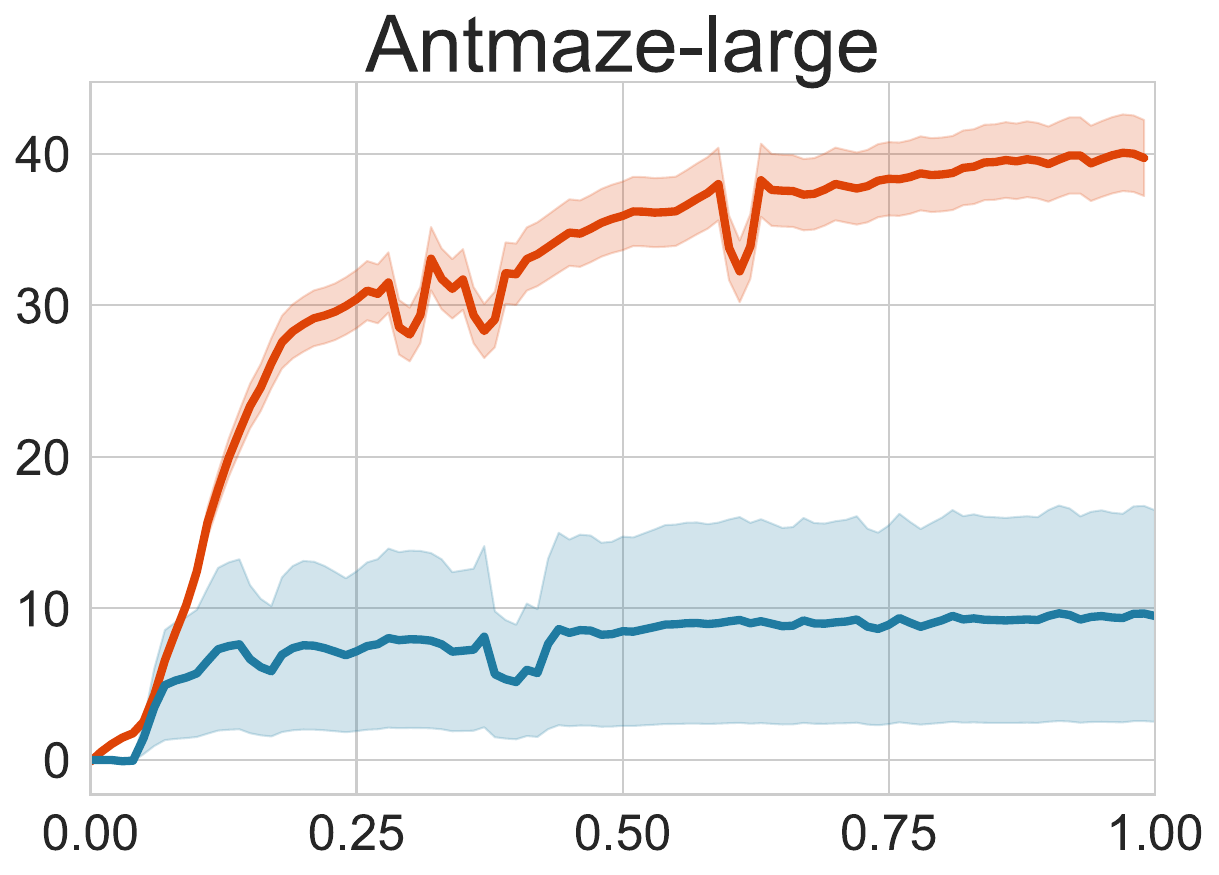}}
    
    \vspace{-10pt}
    \subfigure{
        \includegraphics[width=0.24\textwidth]{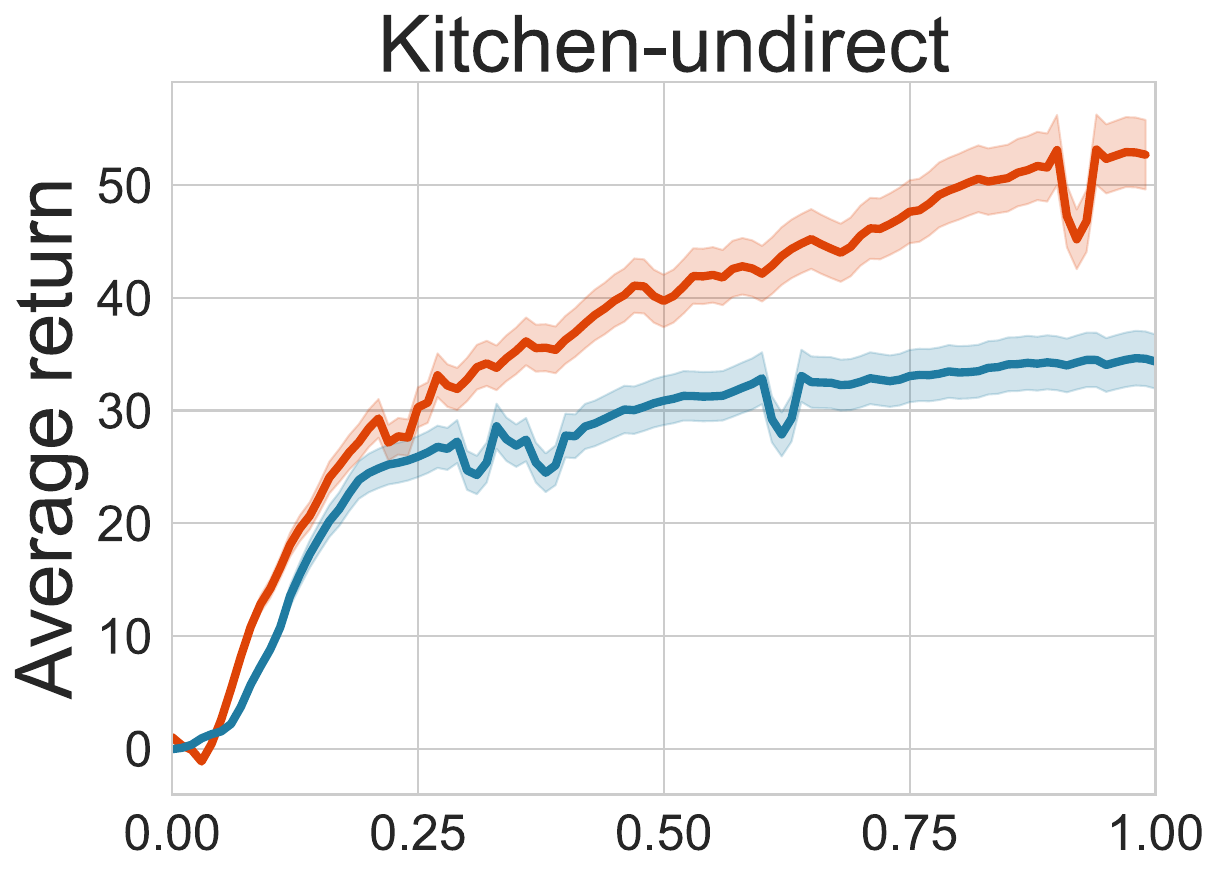}}
    \subfigure{
        \includegraphics[width=0.24\textwidth]{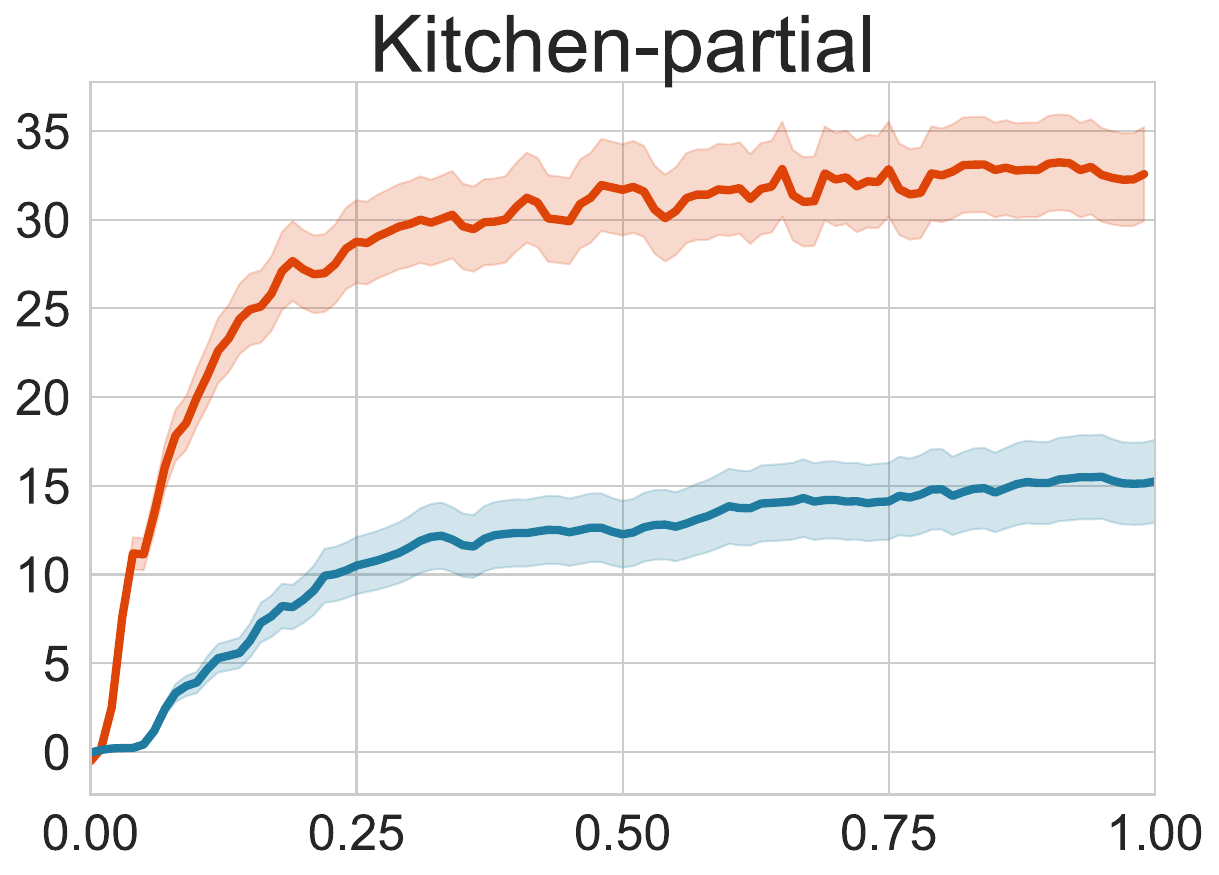}}
    \subfigure{
        \includegraphics[width=0.24\textwidth]{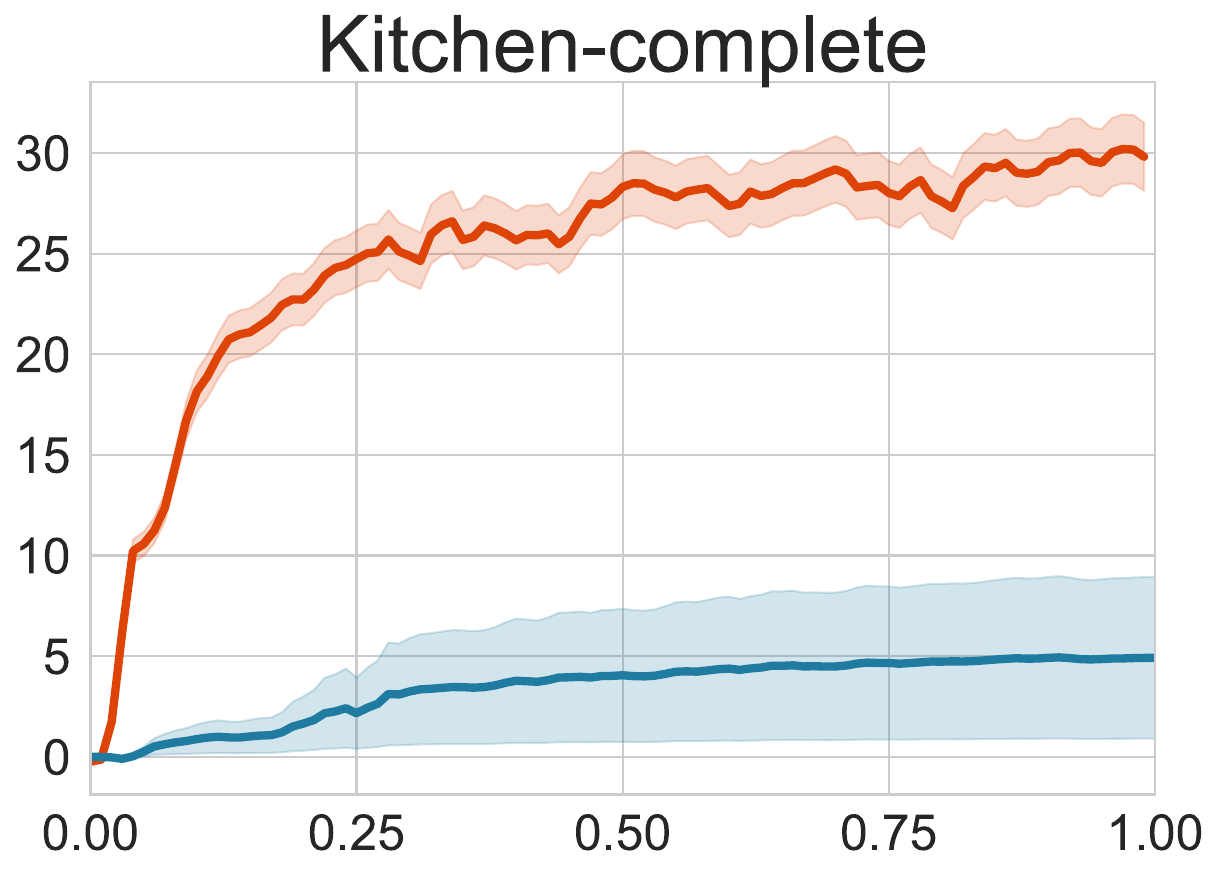}}

    \vspace{-10pt}
    \subfigure{
        \includegraphics[width=0.24\textwidth]{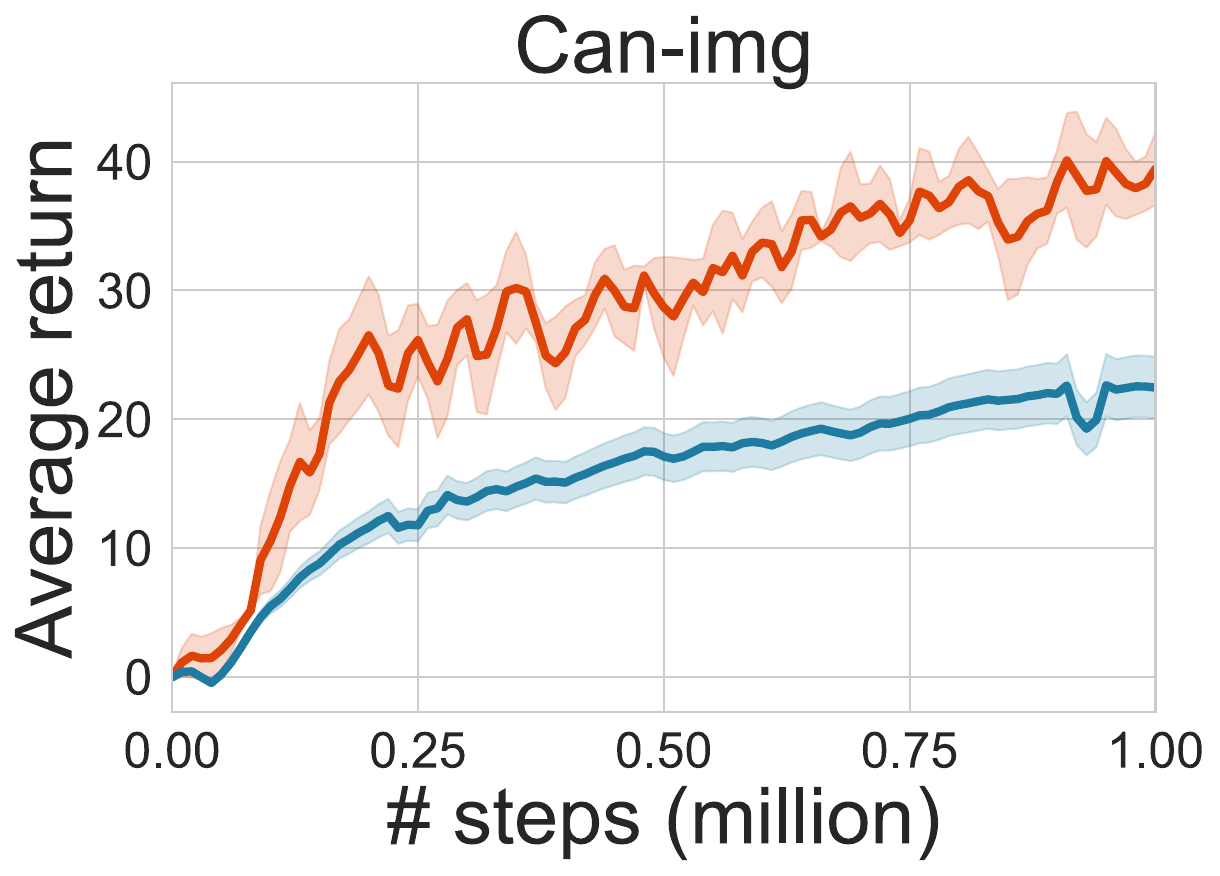}}
    \subfigure{
        \includegraphics[width=0.24\textwidth]{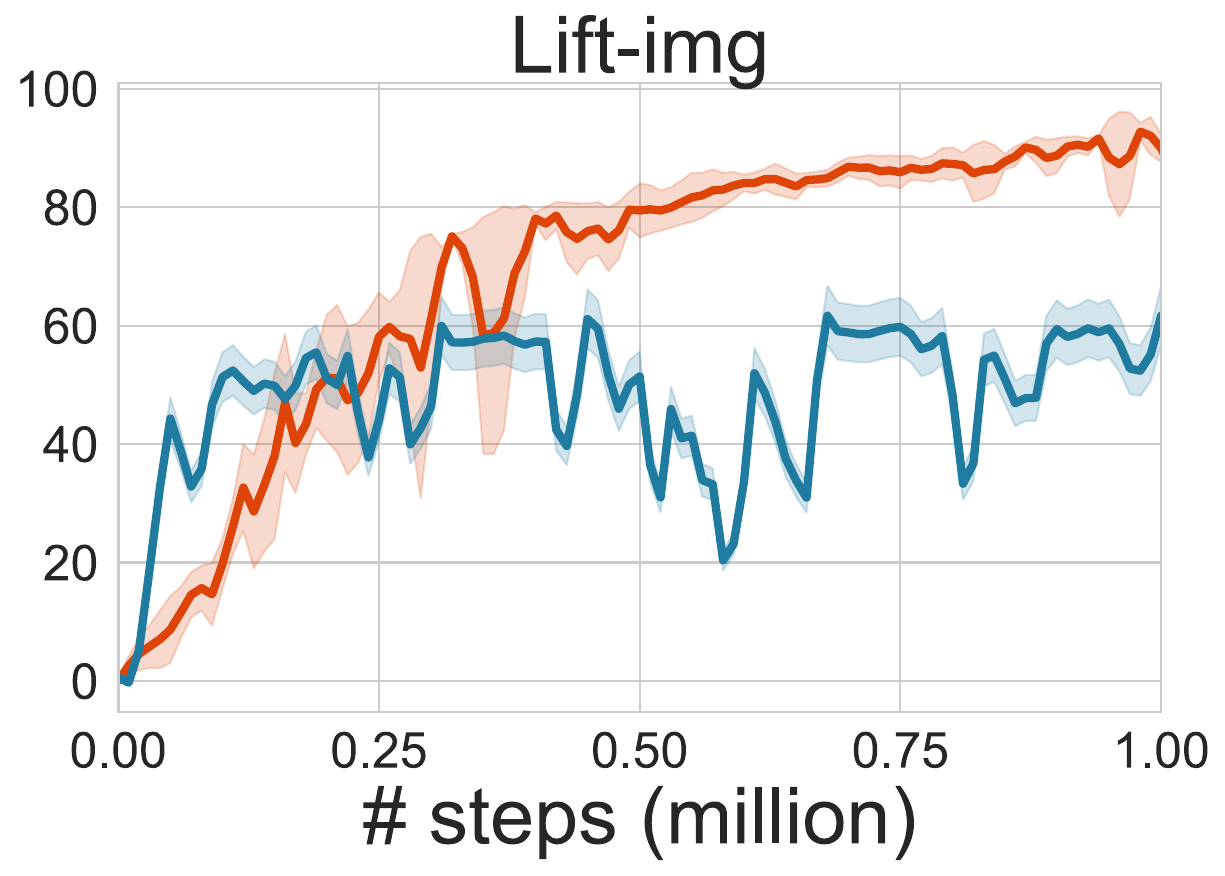}}
    \subfigure{
        \includegraphics[width=0.24\textwidth]{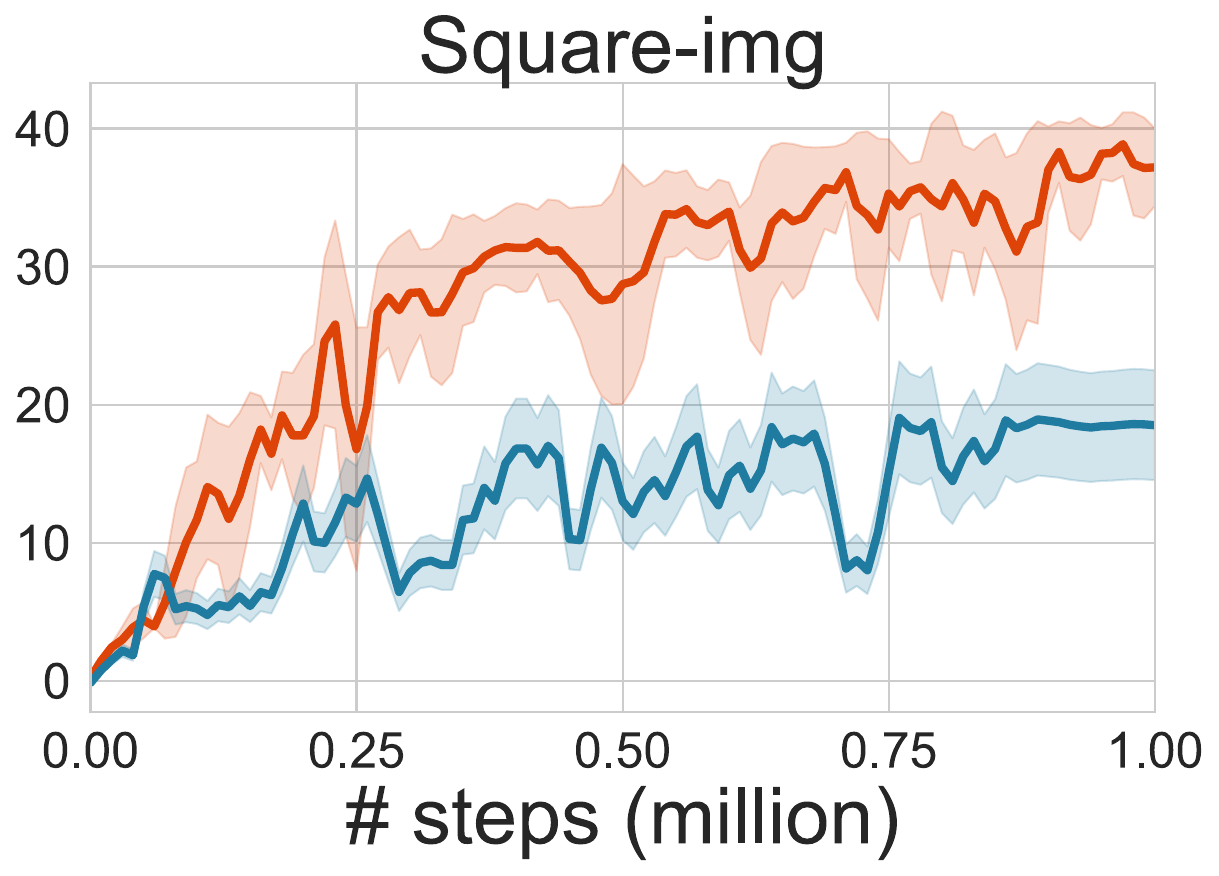}}
    \caption{Effect of data selection.}
    \label{fig:data_selection_all_2}
\end{figure}

\clearpage

\subsubsection{Effect of $\alpha(s,a)$ and $\beta(s,a)$}

In this section, we carry out ablation studies on $\alpha(s,a)$ and $\beta(s,a)$. The observed performance degradation in \cref{fig:weight_all_1,fig:weight_all_2} clearly demonstrates the benefits of $\alpha(s,a)$ and $\beta(s,a)$. The importance-sampling weights can enhance the expert data support for \texttt{BC}, particularly in continuous domains. The absence of $\beta(s,a)$ renders the training becomes ineffective and unstable, due to behavior interference.


\begin{figure}[ht]
    \centering
    \subfigure{
        \includegraphics[width=0.24\textwidth]{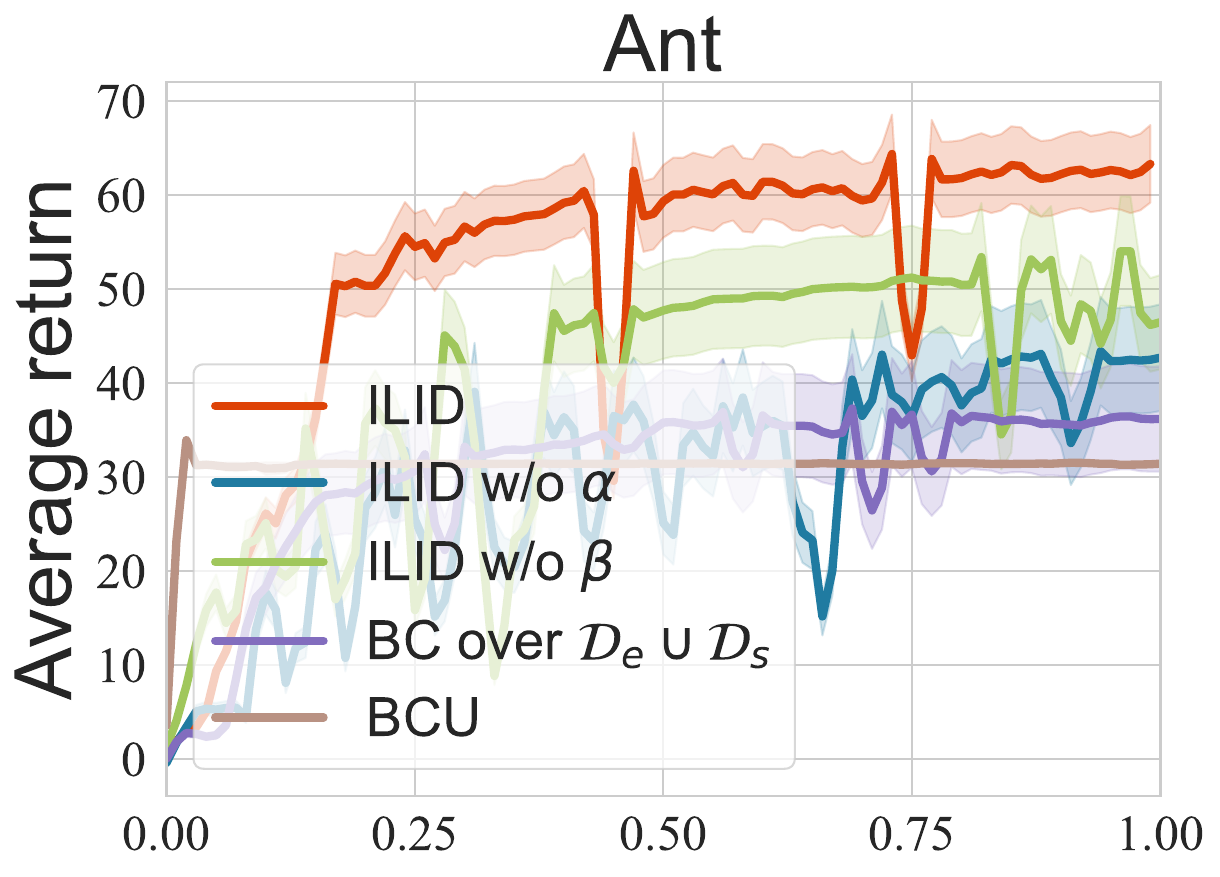}}
    \hspace{-4pt}
    \subfigure{
        \includegraphics[width=0.24\textwidth]{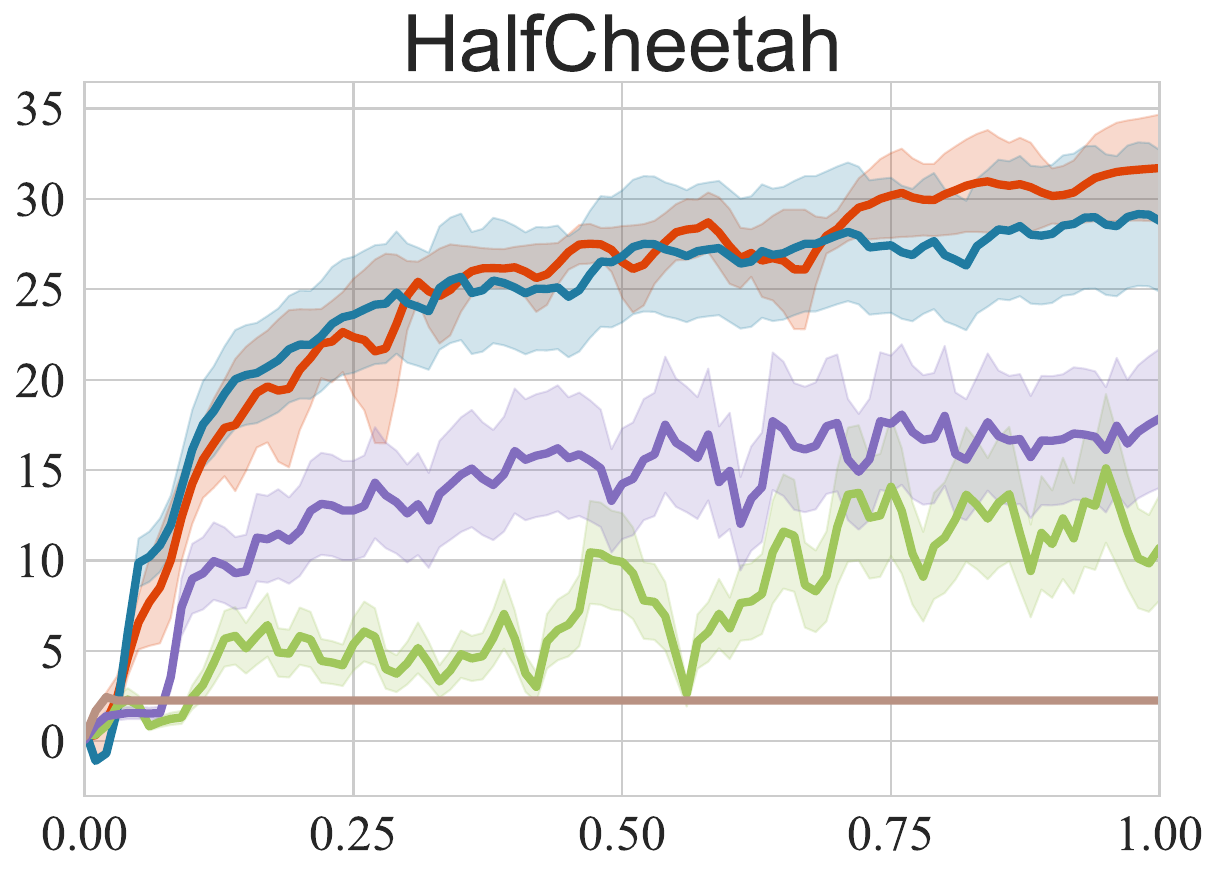}}
    \hspace{-4pt}
    \subfigure{
        \includegraphics[width=0.24\textwidth]{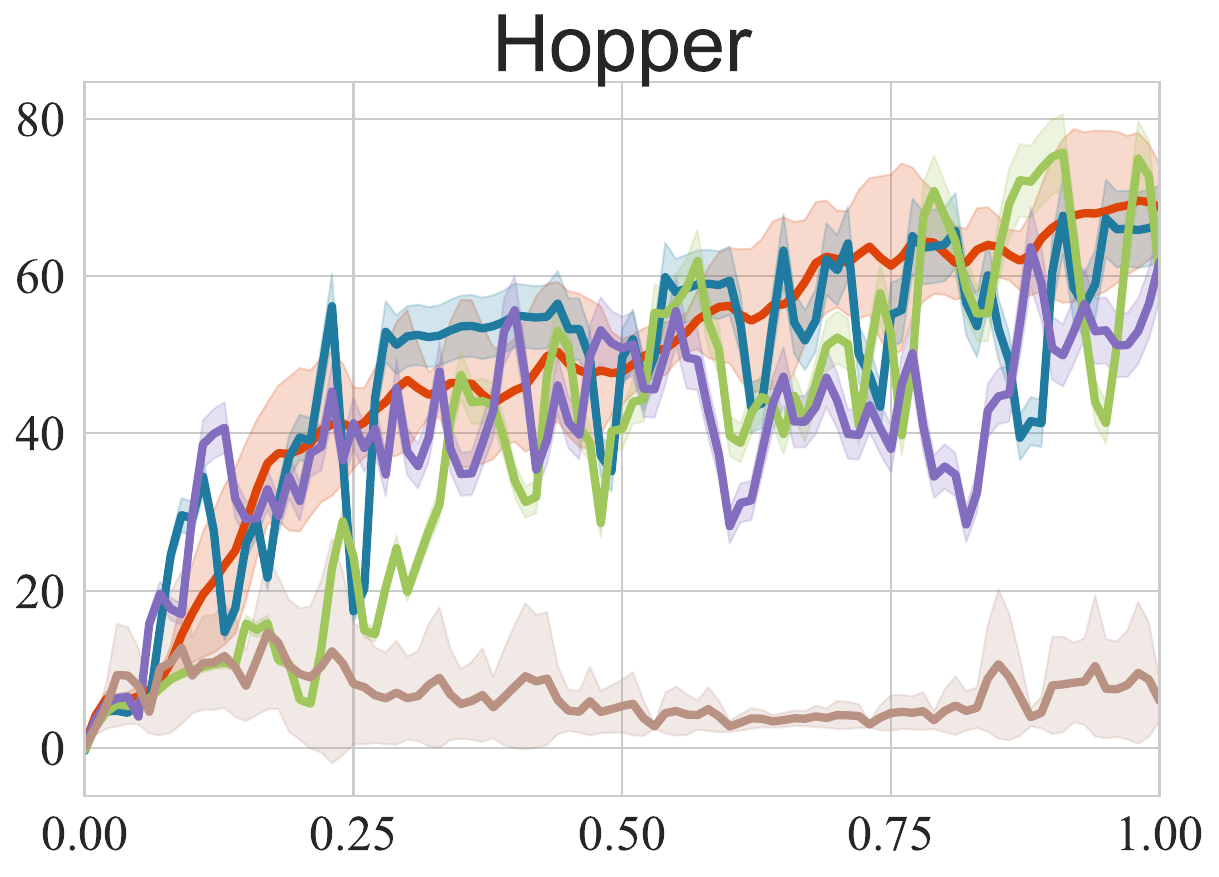}}
    \hspace{-4pt}
    \subfigure{
        \includegraphics[width=0.24\textwidth]{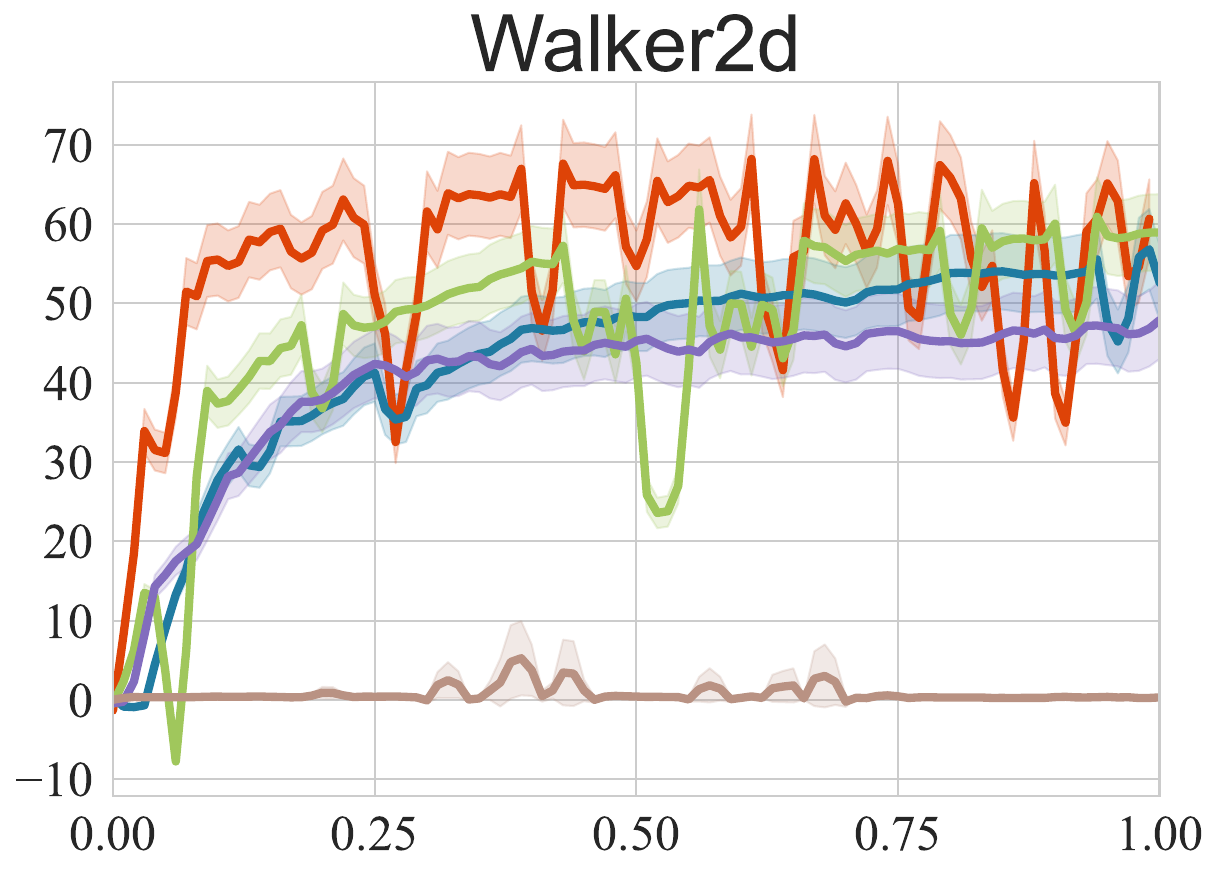}}
    
    \vspace{-10pt}
    \subfigure{
        \includegraphics[width=0.24\textwidth]{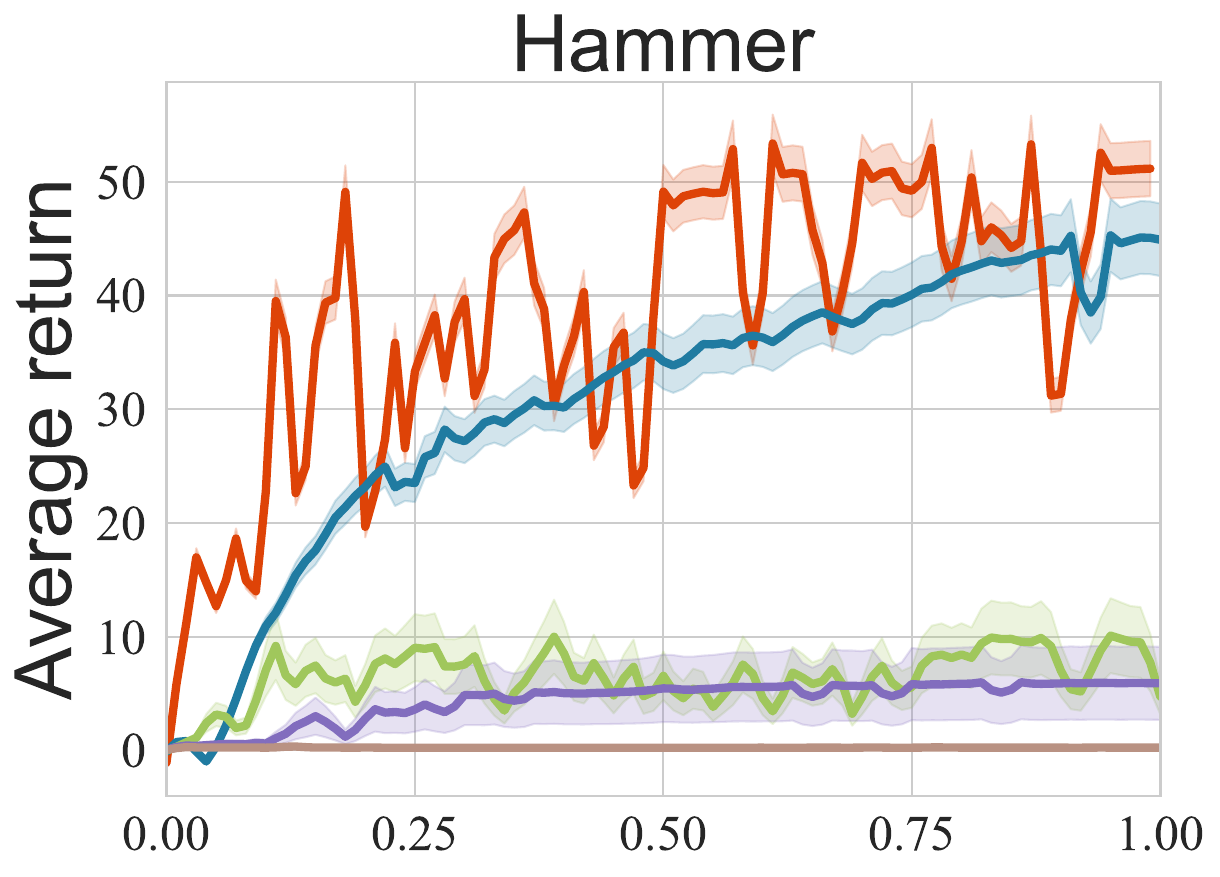}}
    \hspace{-4pt}
    \subfigure{
        \includegraphics[width=0.24\textwidth]{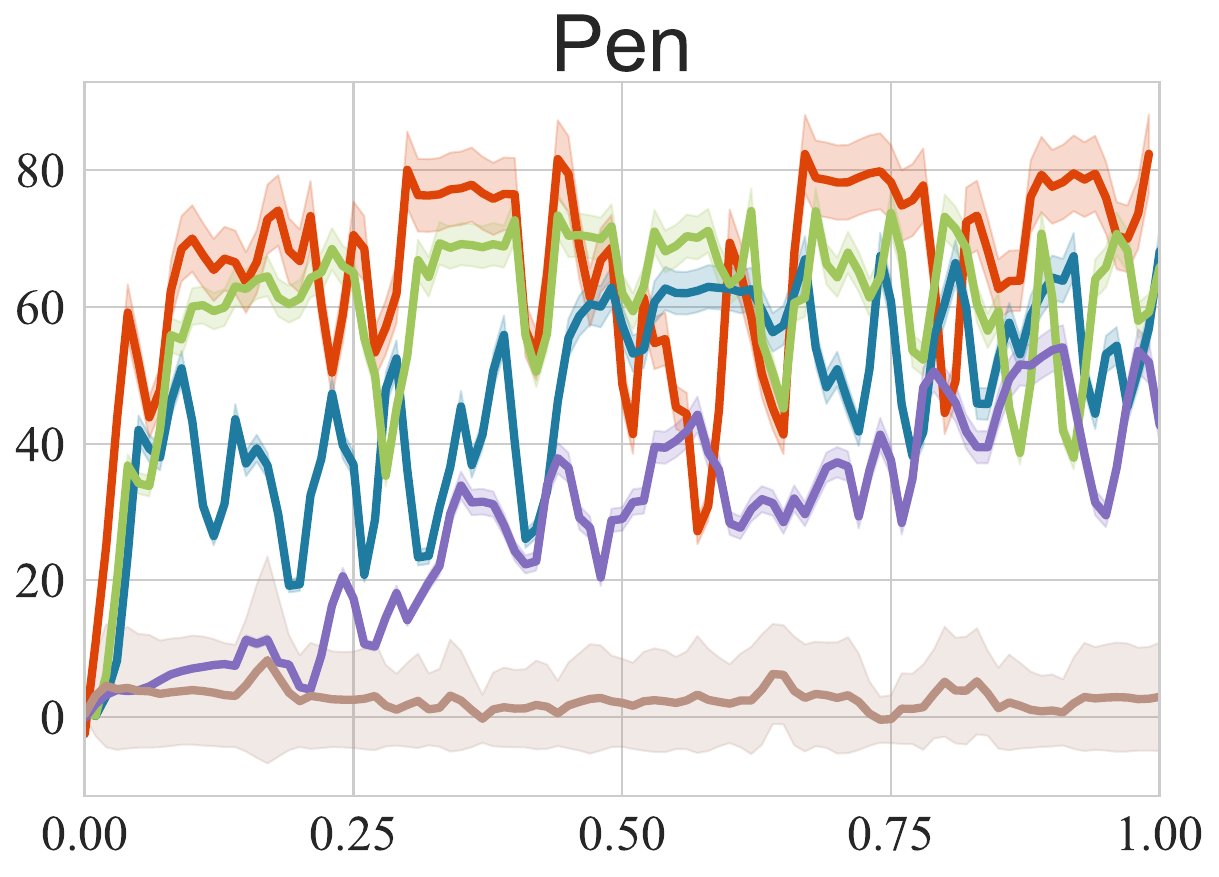}}
    \hspace{-4pt}
    \subfigure{
        \includegraphics[width=0.24\textwidth]{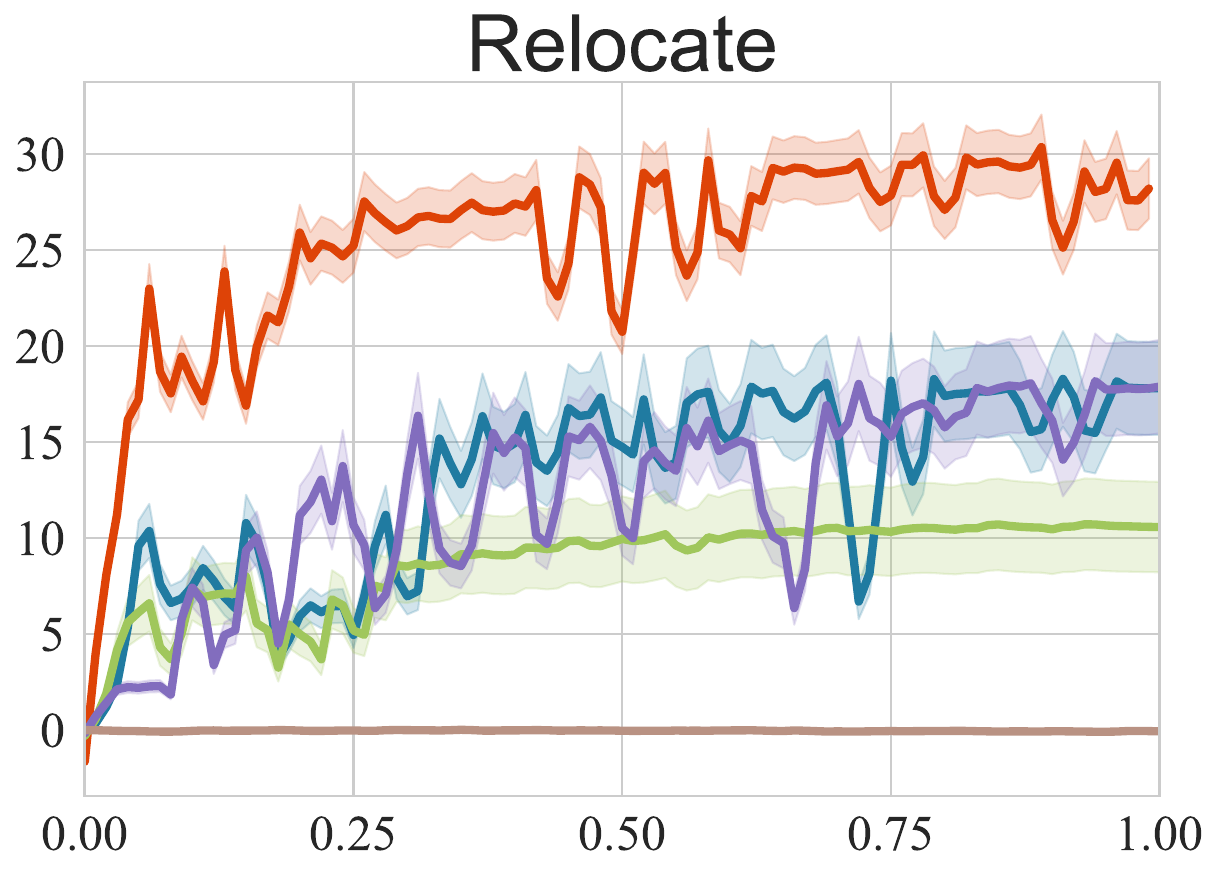}}
    \hspace{-4pt}
    \subfigure{
        \includegraphics[width=0.24\textwidth]{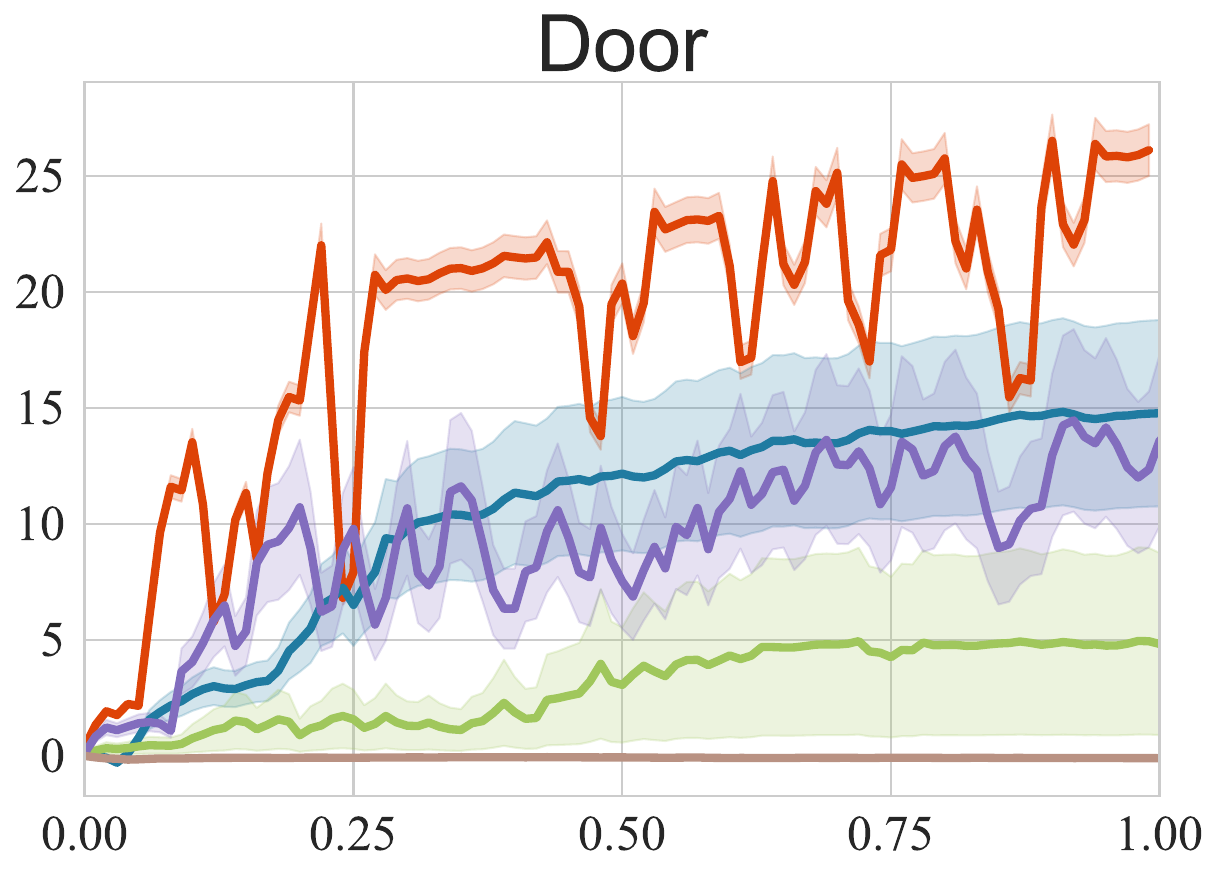}}
    
    \vspace{-10pt}
    \subfigure{
        \includegraphics[width=0.24\textwidth]{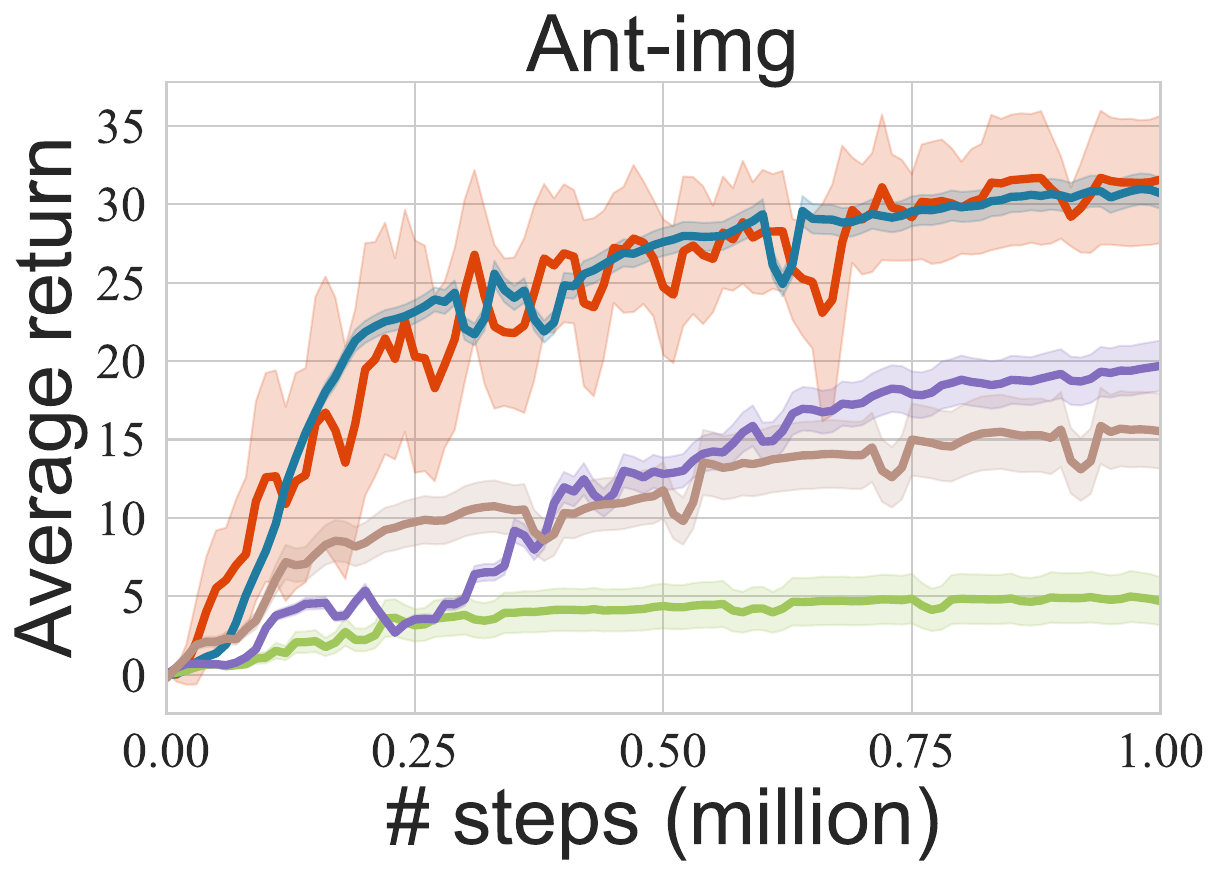}}
    \hspace{-4pt}
    \subfigure{
        \includegraphics[width=0.24\textwidth]{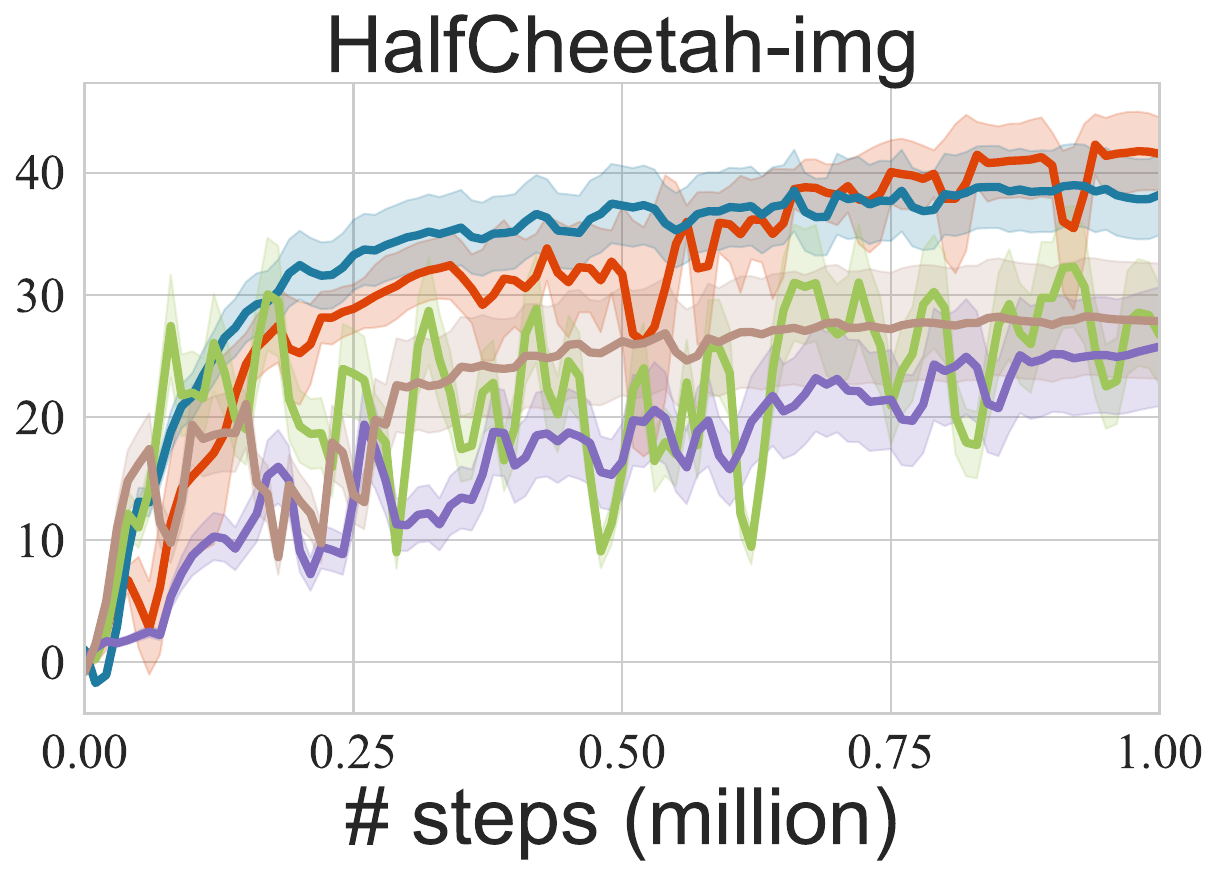}}
    \hspace{-4pt}
    \subfigure{
        \includegraphics[width=0.24\textwidth]{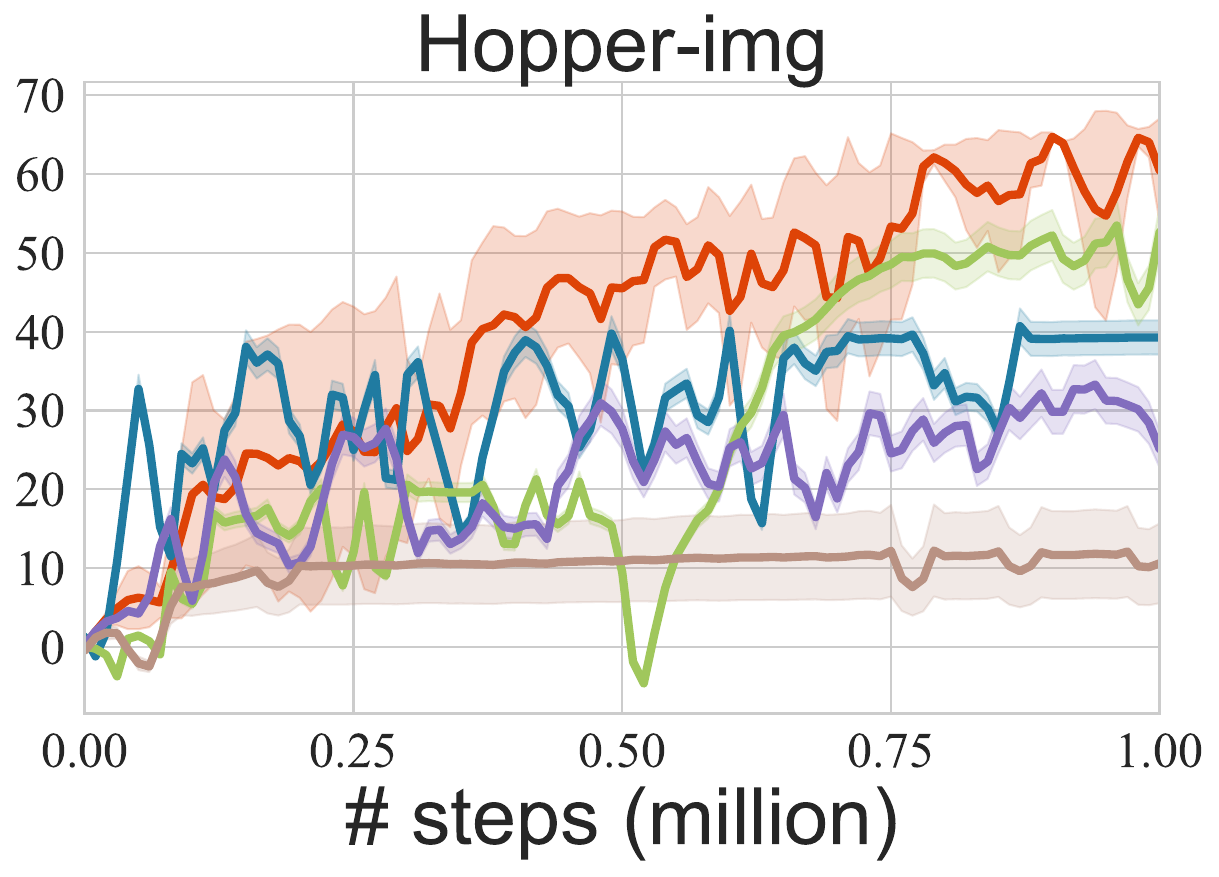}}
    \hspace{-4pt}
    \subfigure{
        \includegraphics[width=0.24\textwidth]{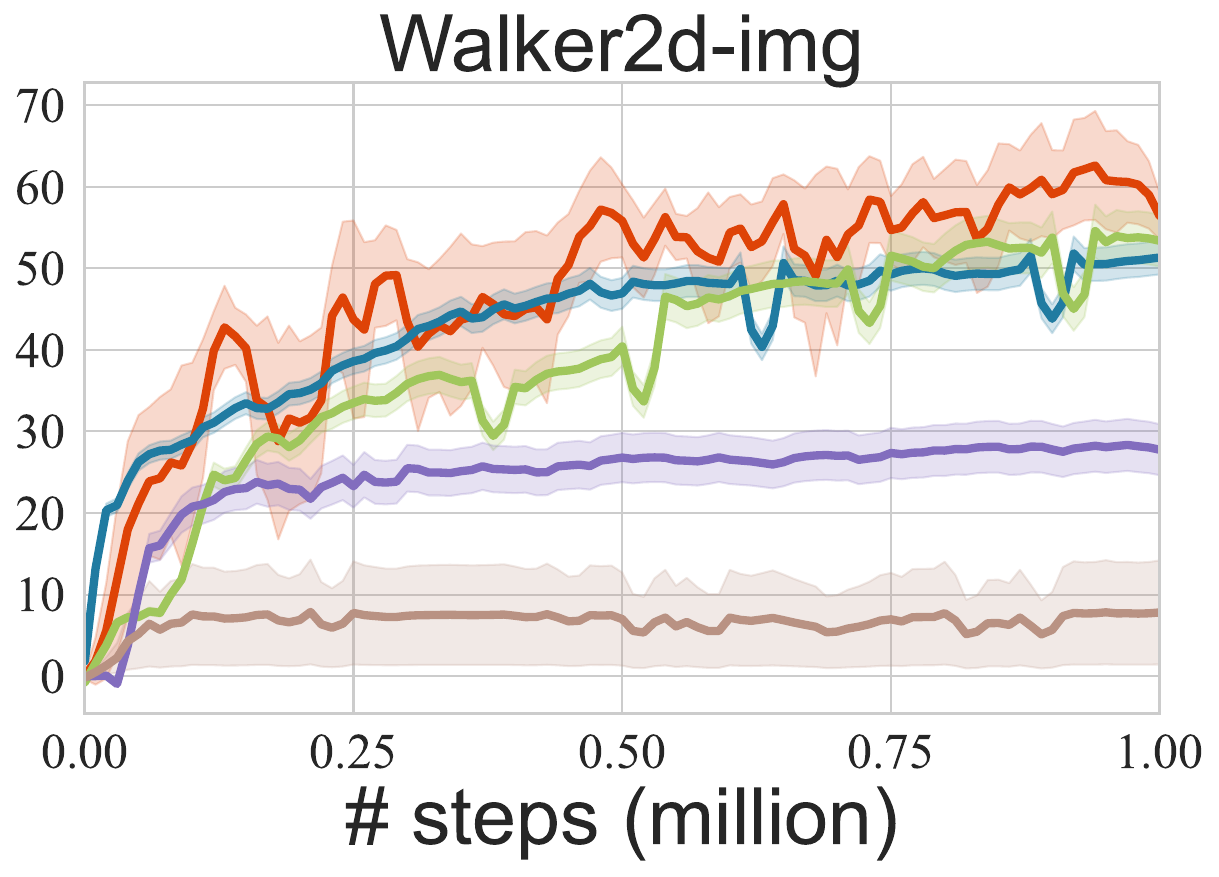}}
    \caption{Importance of $\alpha(s,a)$ and $\beta(s,a)$. `\texttt{ILID} w/o $\alpha$' refers to change the first term in Problem~(\ref{eq:ilid_objective}) to \texttt{BC}.} {`\texttt{ILID} w/o $\beta(s,a)$' refers to setting $\beta(s,a)\equiv1$. `\texttt{BC} over $\mathcal{D}_e\cup\mathcal{D}_s$' refers to running \texttt{BC} on the union of $\mathcal{D}_e$ and $\mathcal{D}_s$.}
    \label{fig:weight_all_1}
\end{figure}

\begin{figure}[ht]
    \centering
    \subfigure{
        \includegraphics[width=0.24\textwidth]{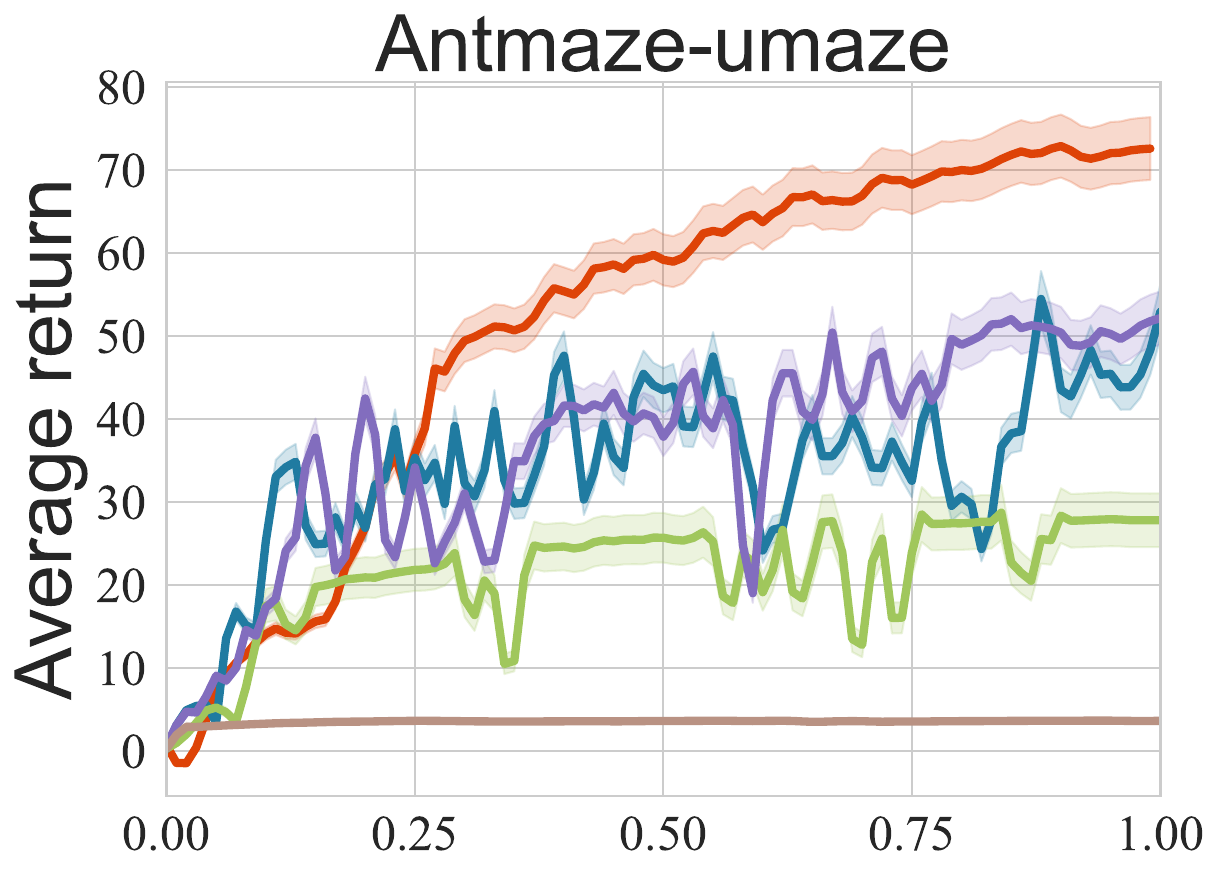}}
    \subfigure{
        \includegraphics[width=0.24\textwidth]{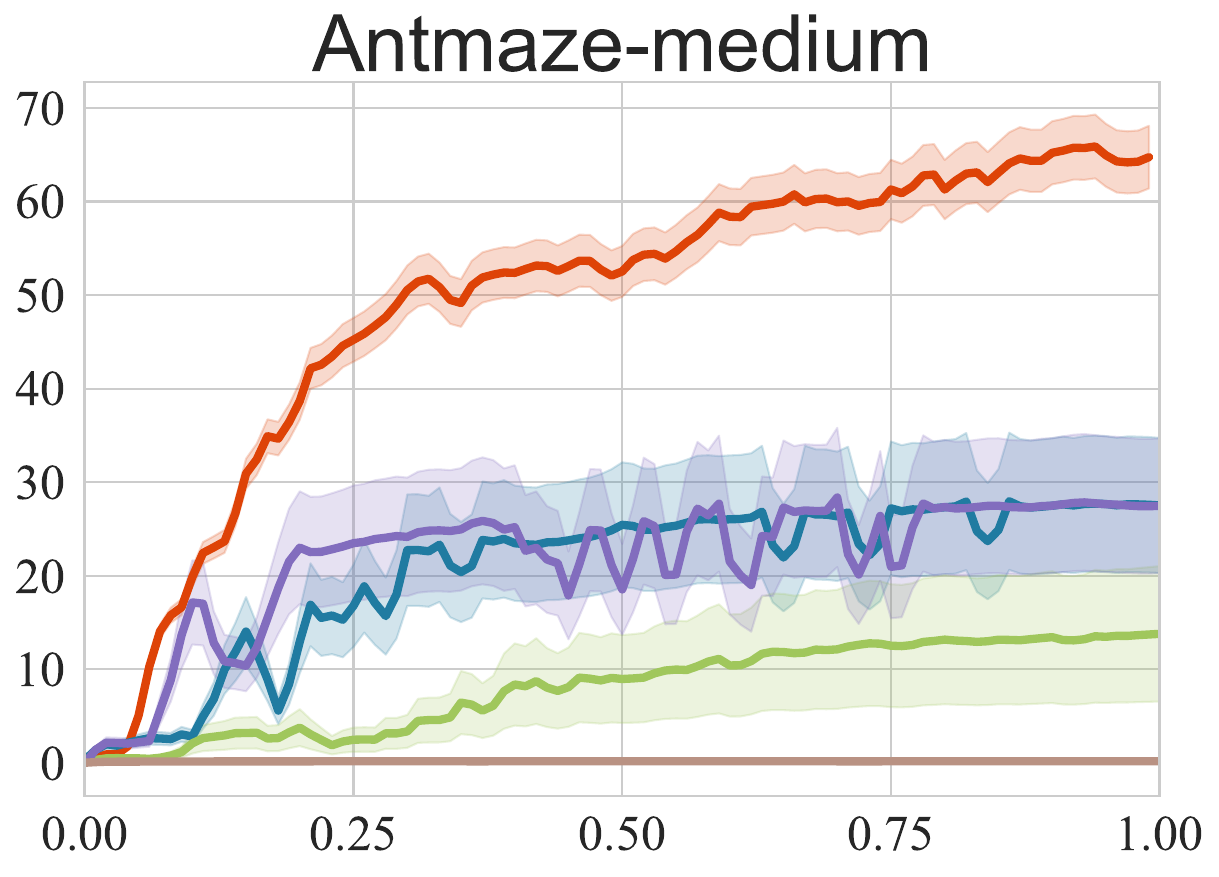}}
    \subfigure{
        \includegraphics[width=0.24\textwidth]{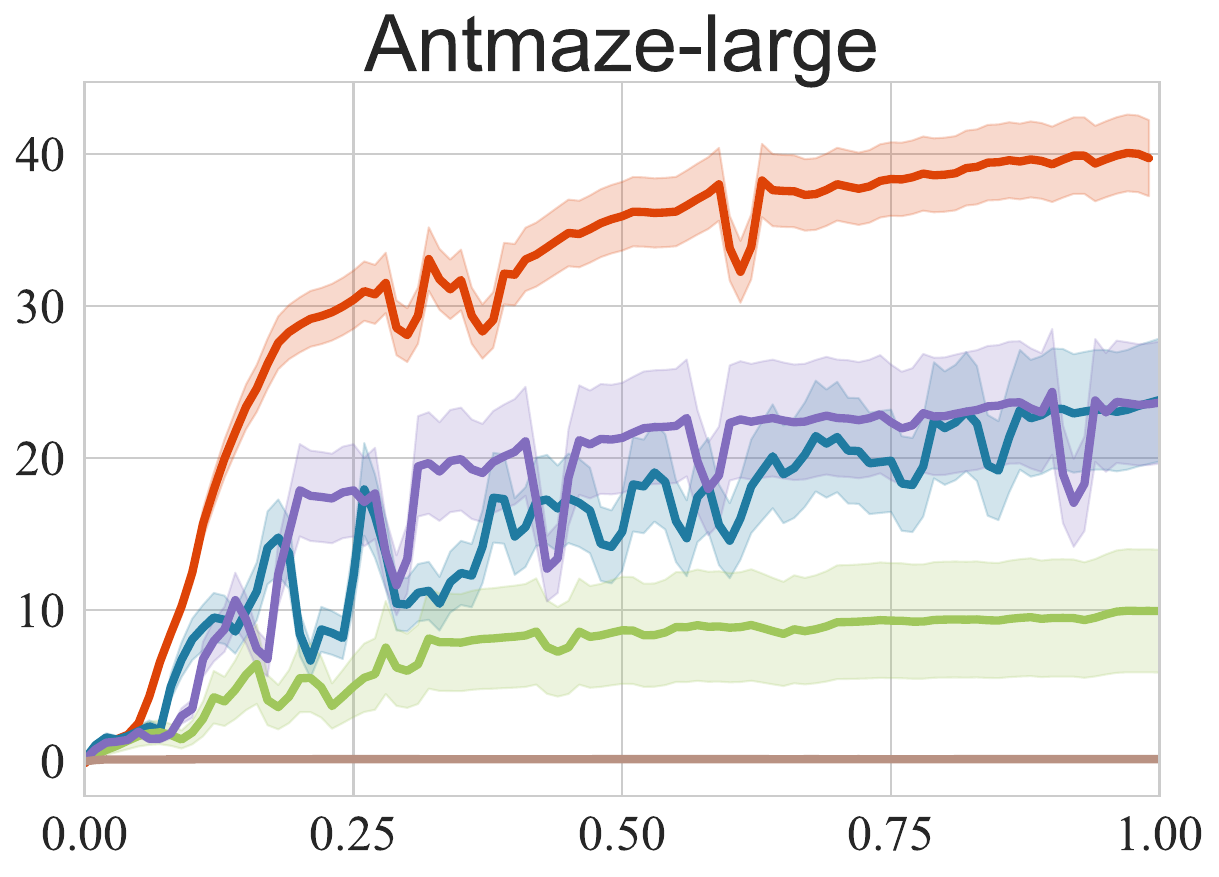}}
    
    \vspace{-10pt}
    \subfigure{
        \includegraphics[width=0.24\textwidth]{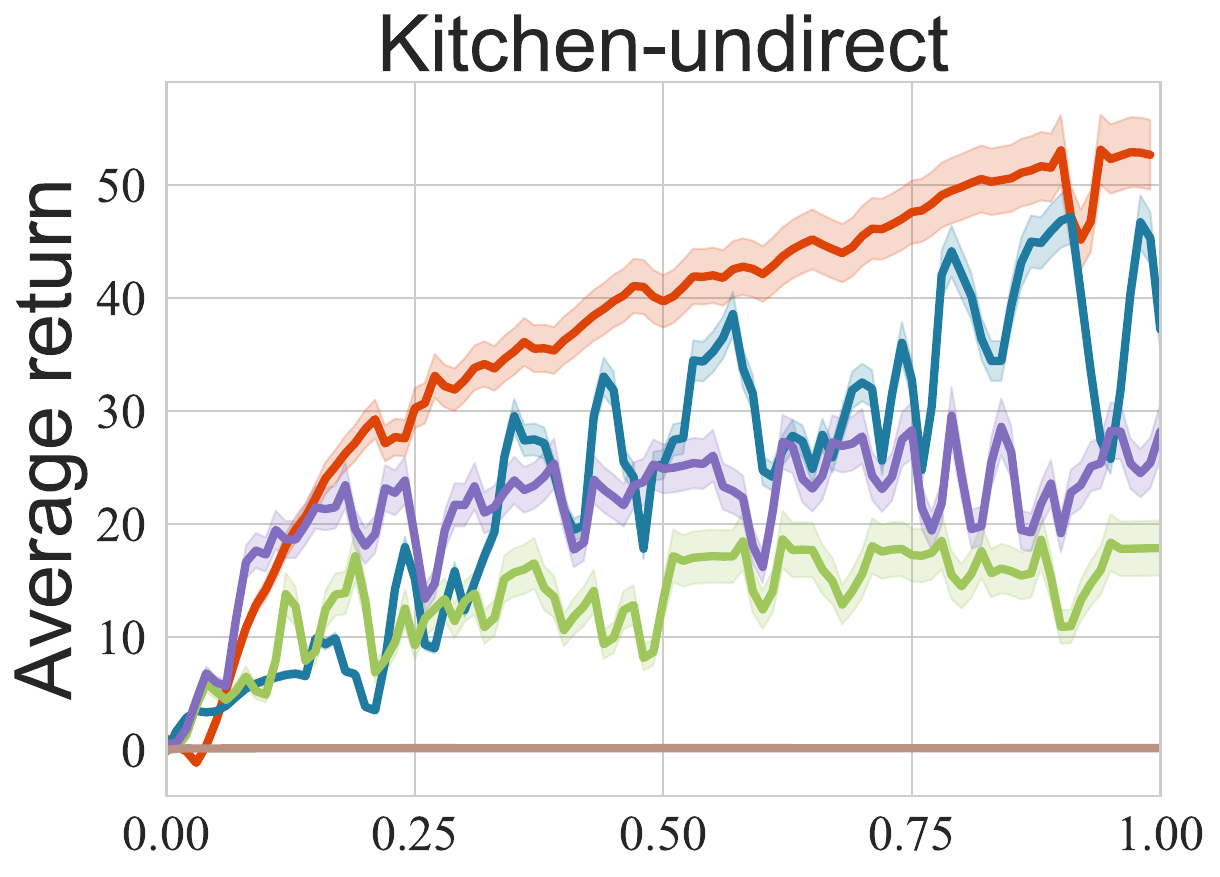}}
    \subfigure{
        \includegraphics[width=0.24\textwidth]{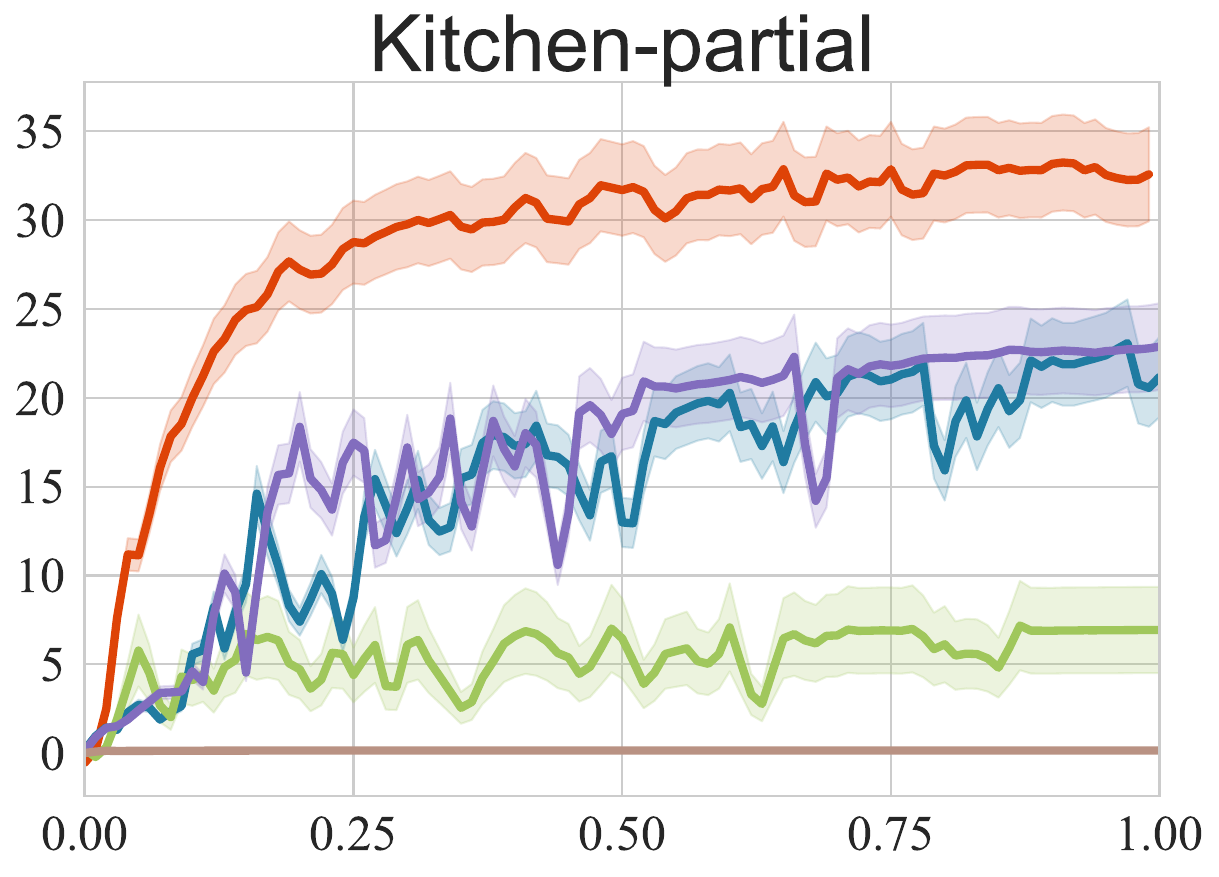}}
    \subfigure{
        \includegraphics[width=0.24\textwidth]{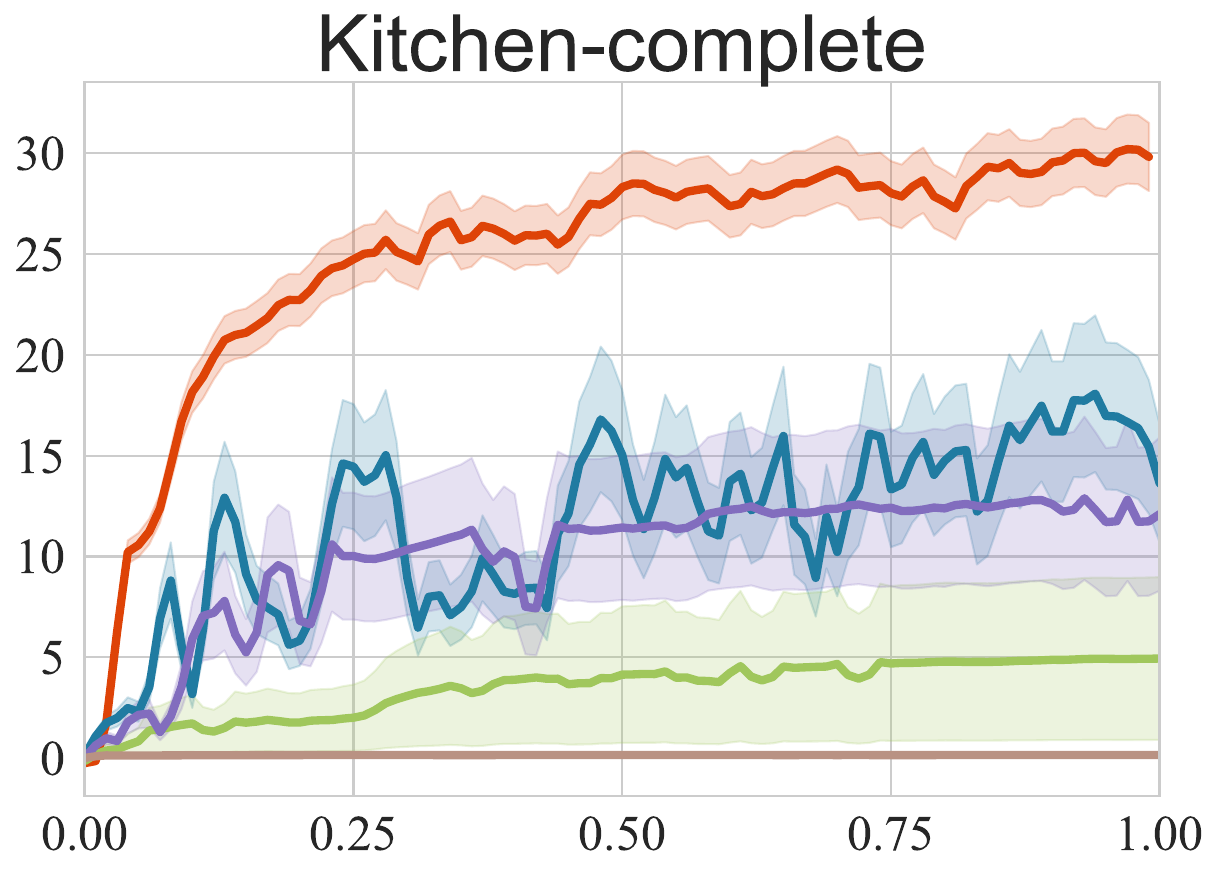}}

    \vspace{-10pt}
    \subfigure{
        \includegraphics[width=0.24\textwidth]{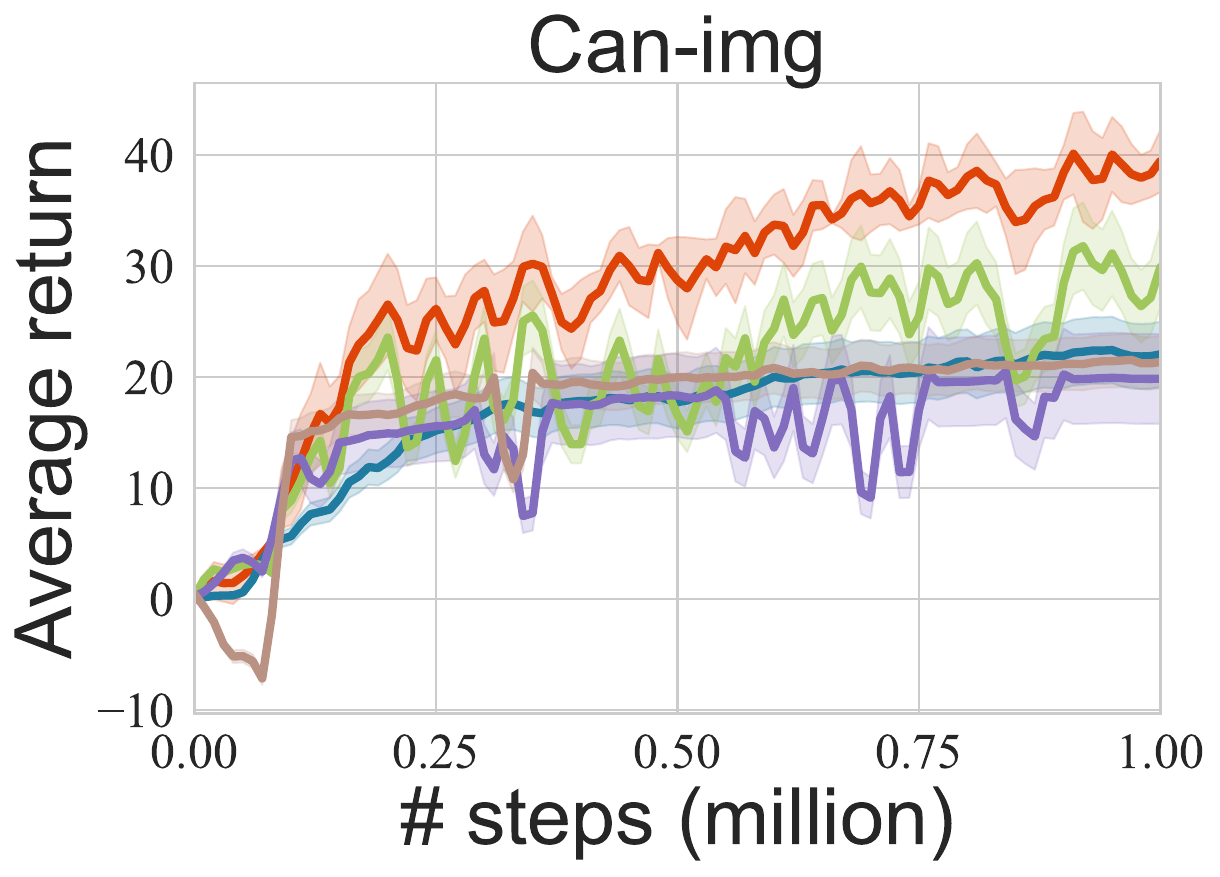}}
    \subfigure{
        \includegraphics[width=0.24\textwidth]{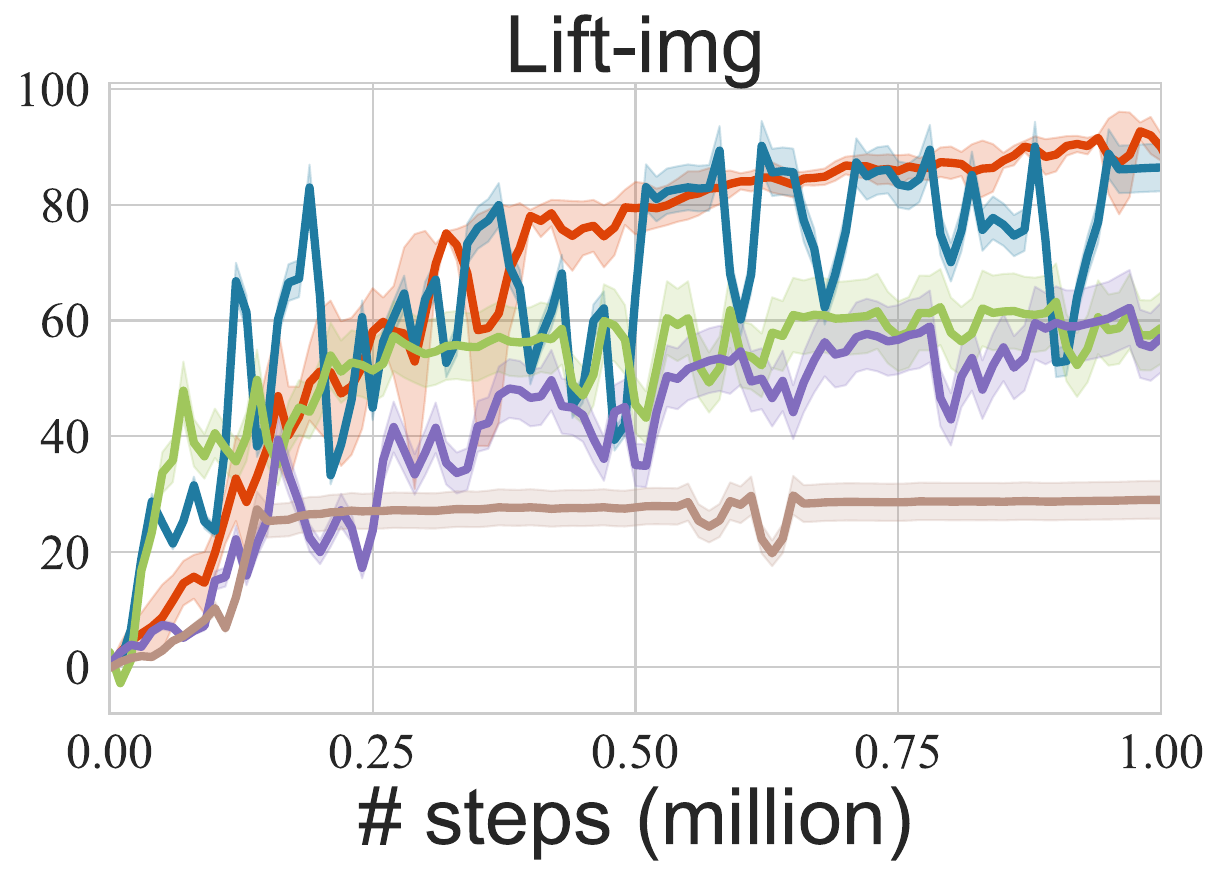}}
    \subfigure{
        \includegraphics[width=0.24\textwidth]{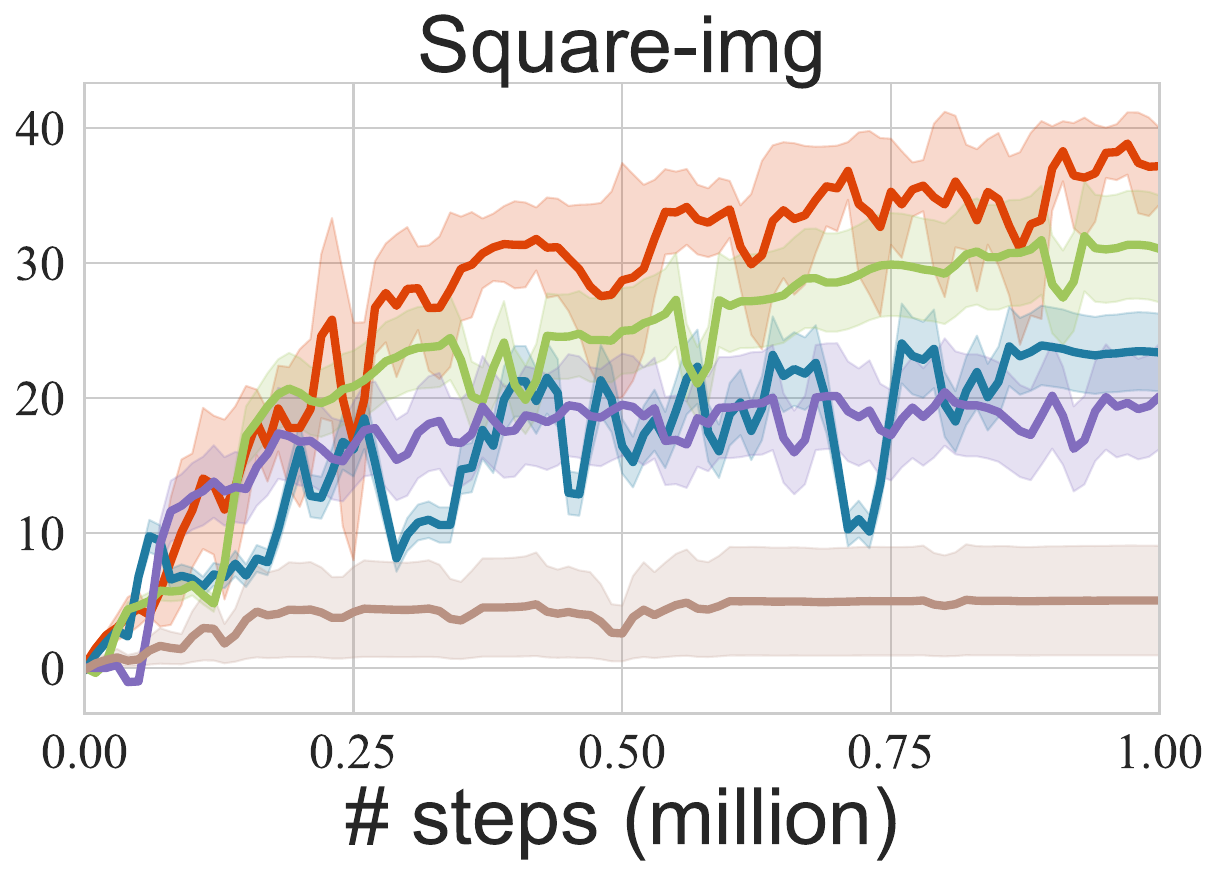}}
    \caption{Importance of $\alpha(s,a)$ and $\beta(s,a)$. `\texttt{ILID} w/o $\alpha$' refers to change the first term} 
    {in Problem~(\ref{eq:ilid_objective}) to \texttt{BC}. `\texttt{ILID} w/o $\beta(s,a)$' refers to setting $\beta(s,a)\equiv1$ in \texttt{ILID}.} 
    
    {`\texttt{BC} over $\mathcal{D}_e\cup\mathcal{D}_s$' refers to running \texttt{BC} on the union of $\mathcal{D}_e$ and $\mathcal{D}_s$.}
    \label{fig:weight_all_2}
\end{figure}


\clearpage
\section{Detailed Proofs}

\subsection{Proof of \cref{thm:det_dyna_gap}}
\label{sec:thm_proof}

In this section, we provide the proof details for \cref{thm:det_dyna_gap}, We use $(s,a,\dots,(s'),(a'))\in\mathcal{D}$ to denote that dataset $\mathcal{D}$ contains sub-trajectory $(s,a,\dots,(s'),(a'))$. When clear from the context, we omit the subscript and use $\mathbb{E}[\cdot]$ instead of $\mathbb{E}_{\mathcal{D}_e,\mathcal{D}_s}[\cdot]$ for conciseness.

First, recalling the definition of $V^\pi$ in \cref{sec:preliminaries}, we can write
\begin{align}
    &\;V^{\pi_e} - V^{\tilde{\pi}}\nonumber\\
    =&\;\mathbb{E}_{s\sim\mu}\big[V^{\pi_e}(s) - V^{\tilde{\pi}}(s)\big]\tag{where $V^\pi(s)=\mathbb{E}_\pi[\sum^H_{h=1}R(s_h,a_h)\mid s_1 = s]$}\\
    =&\;\mathbb{E}_{s\sim\mu}\Big[\1(s\notin\mathcal{S}_1(\mathcal{D}_e))\cdot\big( V^{\pi_e}(s) - V^{\tilde{\pi}}(s)\big)\Big] + \mathbb{E}_{s\sim\mu}\Big[\1(s\in\mathcal{S}_1(\mathcal{D}_e))\cdot\big( V^{\pi_e}(s) - V^{\tilde{\pi}}(s)\big)\Big]\nonumber\\
    =&\;\mathbb{E}_{s\sim\mu}\Big[\1(s\notin\mathcal{S}_1(\mathcal{D}_e))\cdot\big( V^{\pi_e}(s) - V^{\tilde{\pi}}(s)\big)\Big]\tag{due to determinism of expert policy and transition dynamics, detailed below}\\
    =&\;\underbrace{\mathbb{E}_{s\sim\mu}\Big[\1(s\notin\mathcal{S}_1(\mathcal{D}_e))\cdot\1(s\notin\mathcal{S}_1({\mathcal{D}_s}))\cdot\big( V^{\pi_e}(s) - V^{\tilde{\pi}}(s)\big)\Big]}_{(a)}\nonumber\\
    \label{eq:det_val_dif_1}
    &+\underbrace{\mathbb{E}_{s\sim\mu}\Big[\1(s\notin\mathcal{S}_1(\mathcal{D}_e))\cdot\1(s\in\mathcal{S}_1({\mathcal{D}_s}))\cdot\big( V^{\pi_e}(s) - V^{\tilde{\pi}}(s)\big)\Big]}_{(b)}.
\end{align}
More specifically, the third equality holds because: the trajectories, started with the visited initial states, are fully covered in the expert demonstrations; and deterministic dynamics enables $\tilde\pi$ to fully recover the expert trajectories. 

Note that once the policy enters the
states out of training distribution, it may keep making mistakes and remain out-of-distribution for the remainder of the time steps. Hence, we can bound term $\mathbb{E}[(a)]$ as follows:
\begin{align}
    \mathbb{E}[(a)]&=\mathbb{E}\Big[\mathbb{E}_{s\sim\mu}\Big[\1(s\notin\mathcal{S}_1(\mathcal{D}_e))\cdot\1(s\notin\mathcal{S}_1({\mathcal{D}_s}))\cdot\big( V^{\pi_e}(s) - V^{\tilde{\pi}}(s)\big)\Big]\Big]\nonumber\\
    &\le H\mathbb{E}\Big[\mathbb{E}_{s\sim\mu}\Big[\1(s\notin\mathcal{S}_1(\mathcal{D}_e))\cdot\1(s\notin\mathcal{S}_1({\mathcal{D}_s}))\Big]\Big]\tag{due to $V^\pi(s)\le H$}\\
    &=H\mathbb{E}\Big[\mathbb{E}_{s\sim\mu}\Big[\1(s\notin\mathcal{S}_1(\mathcal{D}_e)\cup\mathcal{S}_1({\mathcal{D}_s}))\Big]\Big]\nonumber\\
    \label{eq:term_a_bound}
    &=H\epsilon_o
\end{align} 
where $\epsilon_o=\mathbb{E}[\mathbb{E}_{s_1\sim\mu}
[\1(s_1\notin\mathcal{S}_1(\mathcal{D}_e)\cup\mathcal{S}_1(\mathcal{D}_s))]]$ is the missing mass defined in \cref{thm:det_dyna_gap}.

Regarding term $(b)$, we can write
\begin{align}
	\mathbb{E}[(b)]=\;&\mathbb{E}_{s\sim\mu}\Big[\1(s\notin\mathcal{S}_1(\mathcal{D}_e))\cdot\1(s\in\mathcal{S}_1({\mathcal{D}_s}))\cdot\big( V^{\pi_e}(s) - V^{\tilde{\pi}}(s)\big)\Big]\nonumber\\
	\label{eq:term_b}
	=\;&\mathbb{E}\Big[\mathbb{E}_{s\sim\mu}\Big[\1(s\notin\mathcal{S}_1(\mathcal{D}_e))\cdot\1(s\in\mathcal{S}_1({\mathcal{D}_s}))\cdot V^{\pi_e}(s)\Big]\Big]\nonumber\\
	&-\underbrace{\mathbb{E}\Big[\mathbb{E}_{s\sim\mu}\Big[\1(s\notin\mathcal{S}_1(\mathcal{D}_e))\cdot\1(s\in\mathcal{S}_1({\mathcal{D}_s}))\cdot  V^{\tilde{\pi}}(s)\Big]\Big]}_{(c)}.
\end{align}
For the second term in the last equality of \cref{eq:term_b}, we have
\begin{align}
	(c)&=\mathbb{E}\Big[\mathbb{E}_{s\sim\mu}\Big[\1(s\notin\mathcal{S}_1(\mathcal{D}_e))\cdot\1(s\in\mathcal{S}_1({\mathcal{D}_s}))\cdot  V^{\tilde{\pi}}(s)\Big]\Big]\nonumber\\
	&=\mathbb{E}\Bigg[\mathbb{E}_{s\sim\mu}\Bigg[\1(s\notin\mathcal{S}_1(\mathcal{D}_e))\cdot\1(s\in\mathcal{S}_1({\mathcal{D}_s}))\cdot  \mathbb{E}_{\tilde{\pi}}\Bigg[\sum^H_{h=1}R(s_{h},a_{h})\mid s_1=s\Bigg]\Bigg]\Bigg]\tag{using the definition of $V^{\tilde{\pi}}(s)$ where $s_{h+1}=T(s_{h},a_{h})$ and $a_{h}\sim\tilde{\pi}(\cdot|s_{h})$}\nonumber\\
	&\ge\mathbb{E}\Bigg[\mathbb{E}_{s\sim\mu}\Bigg[\1(s\notin\mathcal{S}_1(\mathcal{D}_e))\cdot\1(s\in\mathcal{S}_1({\mathcal{D}_s}))\cdot  \mathbb{E}_{\tilde{\pi}}\Bigg[\sum^H_{h=2}R(s_{h},a_{h})\mid s_1=s\Bigg]\Bigg]\Bigg]\tag{omitting $R(s_1,a_1)$ and using $R(s_1,a_1)\ge0$}\\
	&=\mathbb{E}\bigg[\mathbb{E}_{s\sim\mu}\bigg[\1(s\notin\mathcal{S}_1(\mathcal{D}_e))\cdot\1(s\in\mathcal{S}_1({\mathcal{D}_s}))\cdot \mathbb{E}_{a\sim\tilde{\pi}(\cdot|s),s'\sim T(s,a)}\big[V'(s')\big]\bigg]\bigg]\tag{denoting $V'(s')\doteq \sum^H_{h=2}r(s_{h},a_{h})$ where $s_2=s'$, $s_{h+1}=T(s_{h},a_{h})$ and $a_{h}\sim\tilde{\pi}(\cdot|s_{h})$}\\
	&=\mathbb{E}\bigg[\mathbb{E}_{s\sim\mu}\bigg[\1(s\notin\mathcal{S}_1(\mathcal{D}_e))\cdot\1(s\in\mathcal{S}_1({\mathcal{D}_s}))\cdot  \mathbb{E}_{s'\sim{\mathcal{D}_s}(\cdot|s)}\big[V'(s')\big]\bigg]\bigg]\label{eq:term_c}
\end{align}
where $\mathcal{D}_s(s'|s) \doteq \sum_a n((s,a,s')\in{\mathcal{D}_s})/n(s\in\mathcal{S}_1({\mathcal{D}_s}))$, and the last equality is obtained by
\begin{align}
	&\1(s\notin\mathcal{S}_1(\mathcal{D}_e))\1(s\in\mathcal{S}_1({\mathcal{D}_s}))\cdot \mathbb{E}_{a\sim\tilde{\pi}(\cdot|s),s'\sim T(s,a)}\big[V'(s')\big]\nonumber\\
	=\;&\1(s\notin\mathcal{S}_1(\mathcal{D}_e))\1(s\in\mathcal{S}_1({\mathcal{D}_s}))\cdot\sum_a  \frac{n((s,a)\in{\mathcal{D}_s})}{n(s\in\mathcal{S}_1({\mathcal{D}_s}))}V'(T(s,a))\tag{from the definition of $\tilde\pi$ in \cref{eq:ideal_policy} and the determinism of $T$}\nonumber\\
	=\;&\1(s\notin\mathcal{S}_1(\mathcal{D}_e))\1(s\in\mathcal{S}_1({\mathcal{D}_s}))\cdot\sum_a  \frac{n((s,a,T(s,a))\in{\mathcal{D}_s})}{n(s\in\mathcal{S}_1({\mathcal{D}_s}))}V'(T(s,a))\tag{due to $n((s,a)\in{\mathcal{D}_s})=n((s,a,T(s,a))\in{\mathcal{D}_s})$}\nonumber\\
	=\;&\1(s\notin\mathcal{S}_1(\mathcal{D}_e))\1(s\in\mathcal{S}_1({\mathcal{D}_s}))\cdot\sum_{s',a} \frac{n((s,a,s')\in{\mathcal{D}_s})}{n(s\in\mathcal{S}_1({\mathcal{D}_s}))}V'(s')\tag{due to the fact that $n((s,a,s')\in{\mathcal{D}_s})=0$ if $s'\neq T(s,a)$}\nonumber\\
	\label{eq:eq3}
	=\;&\1(s\notin\mathcal{S}_1(\mathcal{D}_e))\1(s\in\mathcal{S}_1({\mathcal{D}_s}))\cdot\mathbb{E}_{s'\sim\mathcal{D}_s(\cdot|s)}\big[V'(s')\big].
\end{align}
Substituting \cref{eq:term_c} to \cref{eq:term_b} yields
\begin{align}
	\mathbb{E}[(b)]&\le \mathbb{E}\Big[\mathbb{E}_{s\sim\mu}\Big[\1(s\notin\mathcal{S}_1(\mathcal{D}_e))\cdot\1(s\in\mathcal{S}_1({\mathcal{D}_s}))\cdot \mathbb{E}_{s'\sim{\mathcal{D}_s}(\cdot|s)}\big[V^{\pi_e}(s)-V'(s')\big]\Big]\Big]\nonumber\\
	&\le (\delta+1)\mathbb{E}\Big[\mathbb{E}_{s\sim\mu}\Big[\1(s\notin\mathcal{S}_1(\mathcal{D}_e))\cdot\1(s\in\mathcal{S}_1({\mathcal{D}_s}))\Big]\Big]\label{eq:eq1}\\
	&= (\delta+1)\mathbb{E}_{s\sim\mu}\Big[\mathbb{E}\Big[\1(s\notin\mathcal{S}_1(\mathcal{D}_e))\cdot\1(s\in\mathcal{S}_1({\mathcal{D}_s}))\Big]\Big]\nonumber\\
	&= (\delta+1)\mathbb{E}_{s\sim\mu}\Big[\mathbb{E}\big[\1(s\notin\mathcal{S}_1(\mathcal{D}_e))\big]\cdot\mathbb{E}\big[\1(s\in\mathcal{S}_1({\mathcal{D}_s}))\big]\Big]\tag{from the independence of $\mathcal{D}_e$ and $\mathcal{D}_s$}\\
	&\le (\delta+1)\sqrt{\mathbb{E}_{s\sim\mu}\big[\mathbb{E}\big[\1(s\notin\mathcal{S}_1(\mathcal{D}_e))\big]^2\big]\cdot\mathbb{E}_{s\sim\mu}\big[\mathbb{E}\big[\1(s\in\mathcal{S}_1(\mathcal{D}_s))\big]^2\big]}\tag{from the Cauchy-Schwarz inequality $\mathbb{E}[XY]\le\sqrt{\mathbb{E}[X^2]\mathbb{E}[Y^2]}$}\\
	&\le (\delta+1)\sqrt{\mathbb{E}_{s\sim\mu}\big[\mathbb{E}\big[\1(s\notin\mathcal{S}_1(\mathcal{D}_e))^2\big]\big]\cdot\mathbb{E}_{s\sim\mu}\big[\mathbb{E}\big[\1(s\in\mathcal{S}_1(\mathcal{D}_s))^2\big]\big]}\tag{from the fact $\mathbb{E}[X]^2\le\mathbb{E}[X^2]$}\\
	&= (\delta+1)\sqrt{\mathbb{E}_{s\sim\mu}\big[\mathbb{E}\big[\1(s\notin\mathcal{S}_1(\mathcal{D}_e))\big]\big]\cdot\mathbb{E}_{s\sim\mu}\big[\mathbb{E}\big[\1(s\in\mathcal{S}_1(\mathcal{D}_s))\big]\big]}\tag{from the fact that $\1(s\notin\mathcal{S}_1(\mathcal{D}))^2 = \1(s\notin\mathcal{S}_1(\mathcal{D}))$}\\
	&= (\delta+1)\sqrt{\mathbb{E}_{s\sim\mu}\big[\mathbb{E}\big[\1(s\notin\mathcal{S}_1(\mathcal{D}_e))\big]\big]\cdot\big(1-\mathbb{E}_{s\sim\mu}\big[\mathbb{E}\big[\1(s\notin\mathcal{S}_1(\mathcal{D}_s))\big]\big]\big)}\nonumber\\
	\label{eq:term_b_bound}
	&=(\delta + 1) \sqrt{\epsilon_e(1-\epsilon_s)}.
\end{align}
Regarding \cref{eq:eq1}, due to $s'\in\mathcal{S}_1(\mathcal{D}_e)$ (see the definition of $\mathcal{D}_s$) and the definition of $\tilde\pi$ (which takes expert actions at given expert states), the sub-trajectory started from $s'$ induced by $\tilde\pi$ follows the corresponding expert trajectory in $\mathcal{D}_e$. Based on the definition of $\delta$, 
\cref{eq:eq1} can be derived by
\begin{align}
	V^{\pi_e}(s) - V'(s') \le V^{\pi_e}(s) - (V^{\pi_e}(s') - 1) \le \delta + 1,
\end{align}
where we use $R(s,a)\le1$ and the definition of $V'(s')$ which sums up over just $H-1$ steps. Combining \cref{eq:term_a_bound,eq:term_b_bound}, we have
\begin{align}
	V^{\pi_e} - \mathbb{E}[V^{\tilde{\pi}}] \le H\epsilon_o + (\delta + 1) \sqrt{\epsilon_e(1-\epsilon_s)},
\end{align}
thereby yielding the result.

\subsection{Proof of \cref{coro:sample_complexity}}
\label{sec:coro_proof}

Analogouly to \cref{eq:det_val_dif_1,eq:term_a_bound}, $V^{\pi_e} - \mathbb{E}[V^{\tilde{\pi}}]$ is also bounded by
\begin{align}
	&\;V^{\pi_e} - \mathbb{E}[V^{\tilde{\pi}}]\nonumber\\
	=&\;\mathbb{E}\big[\mathbb{E}_{s\sim\mu}\big[V^{\pi_e}(s) - V^{\tilde{\pi}}(s)\big]\big]\nonumber\\
	=&\;\mathbb{E}\Big[\mathbb{E}_{s\sim\mu}\Big[\1(s\notin\mathcal{S}_1(\mathcal{D}_e))\cdot\big( V^{\pi_e}(s) - V^{\tilde{\pi}}(s)\big)\Big]\Big] + \mathbb{E}\Big[\mathbb{E}_{s\sim\mu}\Big[\1(s\in\mathcal{S}_1(\mathcal{D}_e))\cdot\big( V^{\pi_e}(s) - V^{\tilde{\pi}}(s)\big)\Big]\Big]\nonumber\\
	=&\;\mathbb{E}\Big[\mathbb{E}_{s\sim\mu}\Big[\1(s\notin\mathcal{S}_1(\mathcal{D}_e))\cdot\big( V^{\pi_e}(s) - V^{\tilde{\pi}}(s)\big)\Big]\Big]\nonumber\\
	\le&\;H\mathbb{E}\big[\mathbb{E}_{s\sim\mu}\big[\1(s\notin\mathcal{S}_1(\mathcal{D}_e))\big]\big].
	\label{eq:eq2}
\end{align}
Invoking \citet[Theorem 2]{xu2021generalization}, we can write
\begin{align}
	\mathbb{E}\big[\mathbb{E}_{s\sim\mu}\big[\1(s\notin\mathcal{S}_1(\mathcal{D}_e))\big]\big]&=\mathbb{E}_{s\sim\mu}\big[\mathbb{E}\big[\1(s\notin\mathcal{S}_1(\mathcal{D}_e))\big]\big]\nonumber\\
	&=\sum\nolimits_s \mu(s)\Pr(\1(s\notin\mathcal{S}_1(\mathcal{D}_e)))\nonumber\\
	&=\mu(s)(1-\mu(s))^{n_e}\nonumber\\
	&\le |\mathcal{S}|\max_{x\in[0,1]} x(1-x)^{n_e}\nonumber\\
	&\le\frac{|\mathcal{S}|}{en_e},
	\label{eq:eq4}
\end{align}
where $e$ is Euler's number, and the last inequality is obtained via solving the maximization. Specifically, denote $f(x)=x(1-x)^{n_e}$ and take its derivative to zero, yielding
\begin{align}
	f'(x) = (1-x)^{n_e-1}(1-(n_e+1)x) = 0\quad\Rightarrow\quad x^* = \frac{1}{n_e+1}.
\end{align}
Therefore, the following holds:
\begin{align}
	\max_{x\in[0,1]} x(1-x)^{n_e}=\frac{1}{n_e+1}\bigg(1-\frac{1}{n_e+1}\bigg)^{n_e}=\frac{1}{n_e}\bigg(1-\frac{1}{n_e+1}\bigg)^{n_e+1}\le\frac{1}{en_e}.
\end{align}
Substituting \cref{eq:eq4} in \cref{eq:eq2}, we obtain
\begin{align}
	V^{\pi_e} - \mathbb{E}[V^{\tilde{\pi}}] \le \frac{|\mathcal{S}|H}{en_e}.
	\label{eq:eq6}
\end{align}
Similarly, from \cref{thm:det_dyna_gap}, we have
\begin{align}
	V^{\pi_e} - \mathbb{E}[V^{\tilde{\pi}}] \le H\epsilon_o +  (\delta + 1) \sqrt{\epsilon_e(1-\epsilon_s)}\le  \frac{\vert\mathcal{S}\vert H}{e(n_e+n_s)} +  (\delta + 1) \sqrt{\frac{|\mathcal{S}|}{en_e}}.
	\label{eq:eq5}
\end{align}
Combining \cref{eq:eq6,eq:eq5}, we can write
\begin{align}
	V^{\pi_e} - \mathbb{E}[V^{\tilde{\pi}}] \le \min\Bigg\{\frac{|\mathcal{S}|H}{en_e},\frac{\vert\mathcal{S}\vert H}{e(n_e+n_s)} +  (\delta + 1) \sqrt{\frac{|\mathcal{S}|}{en_e}}\Bigg\}
\end{align}
Taking $n_s$ to infinity, $V^{\pi_e} - \mathbb{E}[V^{\tilde{\pi}}]\le\min\{(\delta+1)\sqrt{|\mathcal{S}|/(en_e)},|\mathcal{S}|H/(en_e)\}$. Thus, with a sufficiently large $n_s$, to obtain an $\varepsilon$-optimal policy, $\tilde{\pi}$ requires at most $\mathcal{O}(\min\{|\mathcal{S}|/\varepsilon^2,|\mathcal{S}|H/\varepsilon\})$ expert trajectories.

\end{document}